\patchcmd{\section}{\scshape}{\bfseries}{}{}
\renewcommand{\@secnumfont}{\bfseries}
\patchcmd{\section}{\normalfont}{\normalfont\color{MidnightBlue}}{}{}
\patchcmd{\subsection}{\normalfont}{\normalfont\color{MidnightBlue}}{}{}
\def\subsubsection{\@startsection{subsubsection}{3}%
\z@{.5\linespacing\@plus.7\linespacing}{-.5em}%
{\normalfont\bfseries}}
\newlength{\fixboxwidth}
\renewcommand{\algorithmiccomment}[1]{\bgroup\hfill//~#1\egroup}
\numberwithin{equation}{section}
\def\cl{{ \operatorname{\bf cl}}}
\def\R{\mathbb{R}}
\def\cN{\mathcal{N}}
\def\Y{{\bf\mathcal{Y}}}
\def\E{\mathbb{E}}
\def\Xf{\mathfrak{X}}
\def\Vk{\mathfrak{V}}
\def\Wk{\mathfrak{W}}
\def\Mk{\mathfrak{M}}
\def\X{{\bf\mathcal{X}}}
\def\F{\mathcal{F}}
\def\G{\mathcal{G}}
\def\L{\mathcal{L}}
\def\Lf{\mathfrak{L}}
\def\Hf{\mathfrak{H}}
\def\Fk{\mathfrak{F}}
\def\wK{{\Gamma}}
\def\A{\mathcal{A}}
\def\H{\mathcal{H}}
\def\restrict#1{\raise-.5ex\hbox{\ensuremath|}_{#1}}
\def\<{\big\langle}
\def\>{\big\rangle}
\def\diiv{\operatorname{div}}
\def\Tr{\operatorname{Tr}}
\def\Cov{\operatorname{Cov}}
\def\det{\operatorname{det}}
\def\dim{{\operatorname{dim}}}
\def\ba{{\bf a}}
\def\bvarphi{{\boldsymbol{\varphi}}}
\definecolor{red}{rgb}{0.9, 0, 0}
\newtheorem{Theorem}{Theorem}[section]
\newtheorem{Proposition}[Theorem]{Proposition}
\newtheorem{Lemma}[Theorem]{Lemma}
\newtheorem{Corollary}[Theorem]{Corollary}
\newtheorem{Remark}[Theorem]{Remark}
\newtheorem{Example}[Theorem]{Example}
\newtheorem{Definition}[Theorem]{Definition}
\newtheorem{Condition}[Theorem]{Condition}
\newtheorem{Problem}{Problem}
\newcommand{\oset}[3][0ex]{%
  \mathrel{\mathop{#3}\limits^{
    \vbox to#1{\kern-2\ex@
    \hbox{$\scriptstyle#2$}\vss}}}}
\newcommand{\uset}[3][0ex]{%
  \mathrel{\mathop{#3}\limits_{
    \vbox to#1{\kern-2\ex@
    \hbox{$\scriptstyle#2$}\vss}}}}
\def\Fp{{\Fontauri\bfseries F}}
\begin{document}
\title[Do ideas have shape?]{Do ideas have shape?  Idea registration as the continuous limit of artificial neural networks}

\date{\today}

\author{Houman Owhadi}

\thanks{Caltech,  MC 9-94, Pasadena, CA 91125, USA, owhadi@caltech.edu}

\maketitle

\begin{abstract}
We introduce  a Gaussian Process (GP) generalization  of ResNets (with unknown functions of the network  replaced by GPs and identified via MAP estimation), which includes  (ResNets trained with $L_2$ regularization on weights and biases) as a particular case (when employing particular kernels).
We show that ResNets (and their warping GP regression extension) converge, in the infinite depth limit, to a generalization of image registration variational algorithms. In this generalization, images are replaced by functions mapping input/output spaces to a space of unexpressed abstractions (ideas), and material points are replaced by data points. Whereas computational anatomy aligns images via warping of the material space, this generalization aligns ideas (or abstract shapes as in Plato's theory of forms) via the warping of the Reproducing Kernel Hilbert Space (RKHS) of functions mapping the input space to the output space.
 While the Hamiltonian interpretation of ResNets is not new, it was based on an Ansatz.
We do not rely on this Ansatz and present the first rigorous proof of convergence of ResNets with trained weights and biases towards a  Hamiltonian dynamics driven flow. Since our proof is constructive and based on discrete and continuous mechanics, it reveals several remarkable properties of ResNets and their GP generalization. ResNets regressors are kernel regressors with data-dependent warping kernels.
 Minimizers of $L_2$ regularized ResNets satisfy a discrete least action principle implying the near preservation of the norm of weights and biases across layers. The trained weights of ResNets with scaled/strong $L^2$ regularization can be identified by solving an autonomous Hamiltonian system. The trained ResNet parameters are unique up to (a function of) the initial momentum, and the initial momentum representation of those parameters is generally sparse.  The kernel (nugget) regularization strategy provides a provably robust alternative to Dropout for ANNs. We introduce a functional generalization of GPs and show that
 pointwise GP/RKHS error estimates lead to probabilistic and deterministic generalization error estimates for ResNets.
 When performed with feature maps, the proposed analysis identifies the (EPDiff) mean fields limit of trained ResNet parameters as the number of data points goes to infinity.
 The search for good architectures can be reduced to that of good kernels, and we show that the composition of warping regression blocks with reduced equivariant multichannel kernels (introduced here) recovers and generalizes CNNs to arbitrary spaces and groups of transformations.
\end{abstract}


\section{Introduction}
\subsection{Overview}
This paper introduces a Gaussian Process (GP) generalization of residual neural networks (ResNets) \cite{he2016deep} (in which the unknown functions of the network are replaced by GPs and identified via MAP estimation), which includes ResNets (with $L_2$ regularization on weights and biases) as a particular case (when employing particular kernels for the underlying GPs).
One of its main results is to show  that residual neural networks (ResNets) \cite{he2016deep}  (and their GP generalizations) are essentially
discretized solvers for a  generalization of image registration/computational anatomy variational problems.
This identification initiates a theoretical understanding of deep learning from the perspectives of (1) shape analysis with images replaced by abstractions, (2) Lagrangian/Hamiltonian mechanics, (3) GP/Kernel regression with data-dependent warping kernels.
While the discretized ODE interpretation of ResNet is not new \cite{weinan2017proposal, chen2018neural}, it was based on the Ansatz that ResNets with trained weights and biases can be approximated by training a discrete regular ODE. \cite{thorpe2018deep}  proved this
Ansatz by establishing the $\Gamma$-convergence of ResNets with trained weights and biases to an ODE limit defined by their activation function.
While the Hamiltonian \cite{haber2017stable} and optimal control \cite{LiChenetal17, han2019mean} perspectives are not new, they were also based on a similar Ansatz.
We do not rely on this Ansatz and present the first rigorous proof of convergence of ResNets with trained weights and biases towards a  Hamiltonian dynamics driven flow. Since our proof is constructive and based on discrete and continuous mechanics, it reveals several remarkable properties of ResNets and their GP generalization. (1) The $L^2$ norm of the trained weights and biases are nearly constant (across layers\footnote{This provides a (minimization success) criteria characterizing minimizers of the training loss.}) when the number of layers of the network is finite and exactly constant in the infinite depth limit, thereby providing a (minimization success) criteria characterizing minimizers of the training loss. (2) The trained parameters of the network are entirely determined by those of the first layer (associated with the initial momentum). Furthermore, the initial momentum is generality sparse\footnote{This is analogous to that of support vectors in support vector machines}, which provides a sparse representation of trained weights and biases. (3) Regressing the data with a ResNet is equivalent to kernel ridge regression with a data adapted warped kernel. (4) GP (probabilistic and deterministic) error estimates imply generalization error estimates. (5) The brittleness of Bayesian inference with respect to the prior implies that of ResNets, and we propose a  regularization strategy  (generalizing the concept of nuggets from kernels to networks), ensuring the rigorous stabilization of the underlying network. (6) When performed with feature maps, the proposed analysis identifies the (EPDiff) mean-field limit of the trained weights and biases of the network as the number of data points goes to infinity. (7) The kernel generalization of ResNets enables the generalization of convolutional neural networks (CNNs) architectures \cite{lecun1999object}  to networks that are equivariant with respect to arbitrary groups of transformations through the introduction of structured kernels.
The convergence of ResNet regression towards GP regression with data-dependent kernels and the techniques developed in this paper suggest that Deep Learning can be understood and analyzed as (1) kernel-based learning with data-dependent (adapted) structured kernels (as suggested in \cite{belkin2021fit}), (2) or as completing computational graphs with GPs \cite{owhadi2021computational}.
While kernel methods may be perceived as old and outdated due to unfavorable efficacy comparisons with ANN-based methods, these comparisons are oftentimes made with given/fixed kernels, whereas learning the kernel \cite{owhadi2019kernel, chen2020consistency, hamzi2021learning, akian2022learning} can improve accuracy by several orders of magnitude \cite{akian2022learning, hamzi2021learning} and outperform \cite{hamzi2021simple} ANN-based methods  both in terms of accuracy and complexity\footnote{
In the setting of forecasting time-series learning, learning the kernel improves accuracy by several orders of magnitude \cite{hamzi2021learning} and outperforms ANN-based, and PDE methods for weather/climate forecasting \cite{hamzi2021simple}  both in terms of accuracy and complexity.
While ANN-based methods are usually trained by minimizing training error  \cite{yoo2020deep, shirdel2021deep} show that one could achieve improved generalization errors by training ANNs as data-dependent kernels \cite{owhadi2019kernel, chen2020consistency}.} .

\subsection{Structure of the paper}
This paper is structured as follows. Sec.~\ref{secidreg} presents  the GP/kernel generalization of ResNets (warping regression, Sec.~\ref{secresnetb} and Sec.~\ref{subresnetpar}) and their infinite depth convergence towards solutions of a generalization of image registration problems (idea registration, Sec.~\ref{subinfdelim}) and towards kernel regression with data adapted warping kernels (Sec.~\ref{subsecgytref67f6}).
The underlying setting is that of operator-valued kernels, and Sec.~\ref{secovk} (of the appendix) provides a reminder on those kernels. For ease of presentation, we cover primary results in the main part of the paper and more technical results (along with reminders) in the appendix.
For instance, existence and uniqueness results are discussed in Sec.~\ref{subsedeses} and covered in Sec.~\ref{seclkjdekjdhjdex}.
Sec.~\ref{seclkjdekjdhjd} analyzes the discrete and limit variational problems (presented in Sec.~\ref{secidreg}) from the perspectives of Lagrangian and Hamiltonian mechanics (our convergence results are based on this analysis).
Sec.~\ref{secregular} introduces and analyzes a  regularization strategy for the underlying networks. This strategy is rigorous (it implies the continuity of the regressor with respect to the training/testing data), and it generalizes approaches commonly employed with kernel methods and in image registration.
Sec.~\ref{seckjhhehwd78w696d} introduces a functional (operator-valued kernel based) generalization of GPs, leading to probabilistic and deterministic generalization error estimates and deep residual GP interpretation of the methods discussed in Sec.~\ref{secidreg}.
Sec.~\ref{secfres} discusses further results. These include the feature-map representation/analysis of the proposed methods (Sec.~\ref{secmacfm}, unpacked in Sec.~\ref{secfm} of the appendix), numerical experiments (Sec.~\ref{subsecnumexmp}, unpacked in Sec.~\ref{secnumexp} of the appendix),
 the (EPDiff) mean-field/hydrodynamic limit of the underlying methods  (Sec.~\ref{subjhsecgyf67f6}), the multi-resolution generalization of the proposed analysis (Sec.~\ref{secspamulti}), generalizations obtained by composing warping regression/idea registration blocks (Sec.~\ref{subsecukygeuydd}, unpacked in Sec.~\ref{seckehddjhevd}), structured kernels enabling a generalization of  CNN equivariant architectures to arbitrary groups of transformations acting on arbitrary spaces (Sec.~\ref{secremint}, unpacked in Sec.~\ref{secrem}).
 Sec.~\ref{secrelwo} presents and discusses related papers and Sec.~\ref{seckejhdbeyudydgor} concludes this paper.
See also \cite{owhadiyoutube20} for an oral/visual presentation of the content of this paper.

\section{Idea registration}\label{secidreg}
This section
identifies the infinite depth limits of ResNets (Residual Neural Networks
\cite{he2016deep}) with trained weight and biases as solutions to a generalization of image registration problems (idea registration).
Results are obtained and presented in a generalized setting \cite{owhadi2021computational} (containing ResNets as a particular case) in which the classical ResNet layers are replaced with Gaussian Processes and trained by computing their MAP estimator given the data.
Sec.~\ref{subsett1} and \ref{subkermethsol} setup notations by describing the supervised learning problem and its classical kernel-based solutions. Subsec.~\ref{secresnetb} and \ref{subresnetpar} show how ResNets can be analyzed in a kernel setting via warping regression.
Subsec.~\ref{subinfdelim} identifies idea registration as the infinite depth limit of warping regression/ResNets.

\subsection{The supervised learning problem}\label{subsett1}

Let $\X$ and $\Y$ be separable Hilbert spaces\footnote{Although  $\X$ and $\Y$ are finite-dimensional in all practical applications,
and although we will restrict some of our proofs to the finite-dimensional setting to minimize technicalities, as demonstrated in
\cite{nelsen2020random}, it is useful to keep the infinite-dimensional viewpoint in the identification of discrete models with desirable attributes inherited from the infinite-dimensional setting.} endowed with the inner products $\<\cdot,\cdot\>_\X$ and $\<\cdot,\cdot\>_\Y$.
We employ the setting of supervised learning, which can be expressed as solving the following problem.
\begin{Problem}\label{pb828827hee}
Let $f^\dagger$ be an unknown continuous function mapping $\X$ to $\Y$.
 Let
$Z=(Z_1,\ldots,Z_N)$ be a random Gaussian vector, independent from $\xi$, with i.i.d. $\cN(0,\lambda I_\Y)$ entries\footnote{$\lambda\geq 0$ and $I_\Y$ is the identity map on $\Y$}.
Given the information\footnote{For a $N$-vector $X=(X_1,\ldots,X_N)\in \X^N$ and a function $f\,:\, \X\rightarrow \Y$, write $f(X)$ for the $N$ vector with entries
$\big(f(X_1),\ldots,f(X_N)\big)$ (we will keep using this generic notation).}
  $f^\dagger(X)+Z=Y$ with the data $(X,Y)\in \X^N\times \Y^N$ approximate $f^\dagger$.
\end{Problem}
Using red arrows to represent unknown functions, black arrows to represent known functions, dashed arrows to represent the data and blue squares to represent  random variables, we can represent the underlying problem (assuming to data to be noisy with centered $\cN(0,I_\Y)$ Gaussian noise where $I_\Y$ is the identity operator on $\Y$) as that of completing (identifying the unknown function $f$ in) the following computational graph \cite{owhadi2021computational}
\centerline{
\begin{tikzpicture}[->,>=stealth',shorten >=1pt,auto,node distance=3cm,
                    thick,main node/.style={rectangle,draw,font=\sffamily\Large\bfseries}]

\node[main node] (1) {$x$};
\node[main node] (2) [right of=1] {$y$};
\node[main node] (3) [above of=2,blue, node distance=1cm] {$z$};

\path[every node/.style={font=\sffamily\Large\bfseries},red]
    (1) edge node [above ] {$f$} (2);

\path[every node/.style={font=\sffamily\Large\bfseries}]
    (3) edge node [above ] {} (2);

\path[every node/.style={font=\sffamily\Large}]
    (1) edge  [bend right,dashed ] node[below ] {$(X,Y)$} (2);

\end{tikzpicture}}
which can be unpacked as $y=f(x)+z$ and $(X,Y)$ represents the data $Y_i=f(X_i)+Z_i$ where the $Z_i$ are independent copies of $z$.

\subsection{Kernel method solutions to the approximation problem \ref{pb828827hee}}\label{subkermethsol}
Write $\L(\Y)$ for the set of bounded linear operators mapping $\Y$ to $\Y$.
Let $K\,:\, \X\times \X\rightarrow \L(\Y)$ be an operator valued kernel\footnote{See Sec.~\ref{secovk} for a reminder on operator-valued kernels.} defining a reproducing kernel Hilbert space (RKHS) of functions
mapping $\X$ to $\Y$. Write $\mathcal{H}_K$ and $\|\cdot\|_K$ for the RKH space and norm defined by $K$.

\subsubsection{The optimal recovery solution}\label{subsecoptrecrr1}
Assume  $K$ to be non-degenerate.
Using the relative error in $\|\cdot\|_{K}$-norm as a loss, for $\lambda=0$, the minimax optimal recovery solution of Problem \eqref{pb828827hee}
 is \cite[Thm.~12.4,12.5]{owhadi2019operator} the minimizer (in $\H_K$) of
\begin{equation}\label{eqhgvygvgyv}
\ell(X,Y):=\begin{cases}
\text{Minimize }&\|f\|_{K}^2\\
\text{subject to }&f(X)=Y
\end{cases}
\end{equation}
 By the representer theorem \cite{micchelli2005kernels}, the minimizer of \eqref{eqhgvygvgyv} is
 \begin{equation}
 f(\cdot)=\sum_{j=1}^N K(\cdot,X_j) V_j\,,
 \end{equation}
  where the coefficients $V_j \in \Y$ are identified by solving the  system of linear equations
 \begin{equation}
 \sum_{j=1}^N K(X_i,X_j) V_j=Y_i\text{ for all }i\in \{1,\ldots,N\}\,,
 \end{equation}
 i.e.
 $K(X,X) V=Y$ where $V=(V_1,\ldots,V_N),\,Y=(Y_1,\ldots,Y_N)\in \Y^N$ and $K(X,X)$ is the $N\times N$ block-operator matrix\footnote{ For $N\geq 1$ let $\Y^N$ be the N-fold product space endowed with the inner-product $\<Y,V\>_{\Y^N}:=\sum_{i=1}^N \<Y_i,V_i\>_\Y$ for
 $Y=(Y_1,\ldots,Y_N), V=(V_1,\ldots,V_N) \in \Y^N$.
  ${\bf A}\in \L(\Y^N)$ given by
  $
  {\bf A}=\begin{pmatrix}A_{1,1}&\cdots & A_{1,N}\\ \vdots & & \vdots\\A_{N,1}& \cdots & A_{N,N} \end{pmatrix}
 $
 where $A_{i,j}\in \L(\Y)$, is called a block-operator matrix. Its adjoint  ${\bf A^T}$ with respect to  $\<\cdot,\cdot\>_{\Y^N}$ is the
 block-operator matrix with entries $(A^T)_{i,j}=(A_{j,i})^T$.} with entries $K(X_i,X_j)$. Therefore, writing $K(\cdot,X)$ for the vector $(K(\cdot,X_1),\ldots,K(\cdot,X_N))\in (\H_K)^N$,
 the minimizer of \eqref{eqhgvygvgyv} is
 \begin{equation}\label{eqhgvygvgyv2}
 f(\cdot)= K(\cdot,X)  K(X,X)^{-1} Y\,,
 \end{equation}
which implies
\begin{equation}\label{eqkjhejdehgdkdd}
\ell(X,Y)=\|f\|_K^2=Y^T K(X,X)^{-1} Y\,,
\end{equation}
 where $K(X,X)^{-1}$ is the inverse of $K(X,X)$ (whose existence is implied by the non-degeneracy of $K$ combined with
 $X_i\not=X_j$ for $i\not=j$).

\subsubsection{The ridge regression solution}\label{subsecoptrecrr2}
Let $\ell_\Y \,:\, \Y^N \times \Y^N \rightarrow [0,\infty]$ be an arbitrary  continuous positive loss.
A ridge regression solution (also known as Tikhonov regularizer) to Problem \ref{pb828827hee} (for $\lambda>0$) is a minimizer of
\begin{equation}\label{eqledhehdiudh}
\ell(X,Y):=\inf_{f\in \H_K}\lambda\,\|f\|_K^2+\ell_\Y(f(X),Y)\,.
\end{equation}
with the {\bf empirical squared error}
\begin{equation}\label{eqjhguyghfuvf4}
\ell_\Y(Y',Y)=\|Y'-Y\|_{\Y^N}^2:=\sum_{i=1}^N\|Y_i'-Y_i\|_\Y^2\,,
\end{equation}
as a prototypical example.  The solution obtained by minimizing \eqref{eqledhehdiudh} with $\ell_\Y$=\eqref{eqjhguyghfuvf4} is then equivalent to replacing $f$ (the graph shown in Subsec.~\ref{subsett1}) by a centered $\cN(0,K)$ Gaussian Process (GP)
and computing its MAP estimator given the noisy data $f(X)=Y+Z$ with  $Z\sim \cN(0,\lambda I_{\Y^N})$.

By the representer theorem, \eqref{eqledhehdiudh} admits a minimizer of the form $ f(\cdot)= K(\cdot,X) V$ where $V\in \Y^N$ is identified as the minimizer of
\begin{equation}\label{eqledhehdwswswededsw}
\ell(X,Y)=\inf_{V\in \Y^N}\lambda\,V^T K(X,X) V+\ell_\Y(K(X,X) V,Y)\,.
\end{equation}
 In particular, for $\ell_\Y$ defined as in \eqref{eqjhguyghfuvf4}, the minimizer of \eqref{eqledhehdiudh} is
\begin{equation}\label{eqajkjwdhjbdjeh}
f(x)=K(x,X)\big(K(X,X)+\lambda I\big)^{-1} Y\,,
\end{equation}
(writing $I$ for the identity matrix)
and the value of \eqref{eqledhehdiudh} at the minimum is
\begin{equation}\label{eqlkjdkjweedjkb}
\ell(X,Y)=\lambda Y^T \big(K(X,X)+\lambda I\big)^{-1} Y\,.
\end{equation}

\subsection{Warping regression}\label{secresnetb}
Motivated by the  structure of ResNets
\cite{he2016deep} we  seek to approximate $f^\dagger$ in Problem \ref{pb828827hee}
 by a function of the form
\begin{equation}\label{eqkjedjdiseu}
f^\ddagger=f\circ \phi_L\,,
\end{equation}
where (writing $I$ for the identity map on $\X$)
\begin{equation}\label{eqkjehdbehdhjbd}
\phi_L:=(I+v_L)\circ \cdots \circ (I+v_1)
\end{equation}
is a function (large deformation) mapping $\X$ to itself obtained from the unknown residuals (small deformations) $v_k\,:\, \X\rightarrow \X$
and $f\,:\, \X\rightarrow \Y$ is an unknown function mapping $\phi_L(X)$ (the image of the data $X$ under the deformation $\phi_L$) to $Y$.

Using the computational graph representation of Sec.~\ref{subsett1},  this problem can be represented, for $L=3$, as that of identifying the unknown functions $v_1,v_2,v_3,f$ in the following computational graph,\\
\centerline{
\begin{tikzpicture}[->,>=stealth',shorten >=1pt,auto,node distance=2.5cm,
                    thick,main node/.style={rectangle,draw,font=\sffamily\Large\bfseries}]

\node[main node] (1) {$x$};
\node[main node] (2) [right of=1]  {$q_2$};
\node[main node] (3) [right of=2] {$q_3$};
\node[main node] (4) [right of=3] {$q_4$};
\node[main node] (5) [right of=4] {$y$};
\node[main node] (6) [above of=5,blue,node distance=1.5cm] {$z$};

\path[every node/.style={font=\sffamily\Large\bfseries}]
    (1) edge node [below ] {} (2)
    (1) edge  [bend left,red] node[above ] {$v_1$} (2);

\path[every node/.style={font=\sffamily\Large\bfseries}]
    (2) edge node [below ] {} (3)
    (2) edge  [bend left,red] node[above ] {$v_2$} (3);

\path[every node/.style={font=\sffamily\Large\bfseries}]
    (3) edge node [below ] {} (4)
    (3) edge  [bend left,red] node[above ] {$v_3$} (4);

\path[every node/.style={font=\sffamily\Large\bfseries},red]
    (4) edge node [above ] {$f$} (5);

\path[every node/.style={font=\sffamily\Large\bfseries}]
    (6) edge node [right ] {} (5);


\tikzstyle{every to}=[draw,dashed]
\draw[dashed] (1) to[out=-90,in=-90,looseness=0.3,style={font=\sffamily\Large,dashed}] node[above] {$(X,Y)$} (5);

\end{tikzpicture}}
which can be unpacked as $q_2=v_1(x)+x$, $q_3=v_2(q_2)+q_2$, $q_4=v_3(q_3)+q_3$, $y=f(q_4)+z$ where $z$ is a $\cN(0,\lambda I_\Y)$ random variable.

The ResNets \cite{he2016deep} approach to completing this graph is to replace the unknown functions $(v_1,\ldots,v_L,f)$ by one or two layers neural networks whose parameters are identified by minimizing the data mismatch loss $\ell_\Y(f\circ \phi_L(X),Y)$. In this paper, we will analyze the Gaussian Process (GP) approach to identifying these unknown functions. This approach
(generalized in \cite{owhadi2021computational} to arbitrary computational graphs) can be summarized as approximating $v_1,\ldots,v_L, f$ with a MAP estimator of independent GPs   given the (noisy) data. Letting $\Gamma$ be a kernel (associated with the randomization of the $v_i$) defining an RKHS $\H_\Gamma$ of functions mapping $\X$ to $\X$ (we write $\|\cdot\|_\Gamma$ for the corresponding RKHS norm), this GP approach is equivalent to
identifying $(v_1,\ldots,v_L,f)$ with a minimizer of
\begin{equation}\label{eqlktddsyeytdsedhjd}
\begin{cases}
\text{Minimize } &\frac{\nu}{2}\,L\sum_{s=1}^L \|v_s\|_{\Gamma}^2+\lambda\,\|f\|_{K}^2+\ell_\Y\big(f\circ \phi_L(X),Y\big)\\
\text{over }&v_1,\ldots,v_L \in \H_\Gamma \text{ and } f\in \H_K\,,
\end{cases}
\end{equation}
where $\nu$ is a strictly positive parameter balancing the regularity of $\phi_L$ with that of $f$ (the scaling $\nu L/2$ will be shown to ensure a nontrivial limit as $L\rightarrow \infty$ for $\nu>0$) and
$\lambda>0$ balances the regularity of $f$ with the loss $\ell_\Y$=\eqref{eqjhguyghfuvf4}.
Note that \eqref{eqlktddsyeytdsedhjd} addresses overparameterization ($\H_\Gamma$ and $\H_K$ may be infinite-dimensional) by
penalizing the lack of regularity of the $v_k$ and $f$ with respect to the RKHS norms defined by   $K$ and  $\Gamma$.

\begin{Remark}
As $\nu\rightarrow \infty$, the regressor obtained by minimizing \eqref{eqlktddsyeytdsedhjd} converges towards \eqref{eqajkjwdhjbdjeh}.
The quadratic regularization in $\|v_s\|_{\Gamma}^2$ in \eqref{eqlktddsyeytdsedhjd} is equivalent to choosing Gaussian priors on the unknown functions $v_s$ (or equivalently, on the weights and biases of the network).
This choice impacts the generalization of the network. In the CGC setting \cite{owhadi2021computational}, other choices of priors can be implemented by writing representing the $v_s$ as nonlinear deterministic functions of GPs (i.e., $v_s=h(\bar{v}_s)$ where $h$ is deterministic and $\bar{v}_s$ is a GP). We also refer to \cite{cohen2021scaling} for a numerical analysis of the scaling properties of ResNets under a regularization that is weaker than that used in \eqref{eqlktddsyeytdsedhjd}.
\end{Remark}

\subsection{ResNets as a particular case}\label{subresnetpar}
Using the setting (Sec.~\ref{secovk}) of operator-valued kernels \cite{alvarez2012kernels}, we show (Subsec.~\ref{subsecekuwege62}) that
if  $\Gamma(x,x')=\bvarphi^T(x) \bvarphi(x') I_\X$ and $K(x,x')=\bvarphi^T(x) \bvarphi(x') I_\Y$ where
$I_\X$ ($I_\Y$) is the identity operator on $\X$ ($\Y$) and
$\bvarphi\,:\, \X \rightarrow \X\oplus \R$ is a nonlinear map $\bvarphi(x)=\big(\ba(x),1\big)$ defined by  an activation function
$\ba\,:\, \X \rightarrow \X$ (e.g., an elementwise nonlinearity)
  then
  minimizers of \eqref{eqlktddsyeytdsedhjd} are of the form $f(x)=\tilde{w} \bvarphi(x)$ and $v_s(x)=w_s \bvarphi(x)$ where\footnote{Write $\L(\F,\X)$ for the set of linear maps from $\F$ to $\X$ and $\|w_s\|_{\L(\F,\X)}$ for the Frobenius norm of $w_s$.} $\tilde{w}\in \L(\X\oplus \R,\Y)$ and the $w_s \in \L(\X\oplus \R,\X)$
are   minimizers of
\begin{equation}\label{eqjkehbwjhebdhdjq}
\min_{\tilde{w},w_1,\ldots,w_L}\frac{\nu L}{2}\sum_{s=1}^L \|w_s\|_{\L(\X\oplus \R,\X)}^2+ \lambda \|\tilde{w}\|_{\L(\X\oplus \R,\Y)}^2+\ell_\Y\big(f\circ \phi_L(X),Y\big)\,,
\end{equation}
with
\begin{equation}\label{eqkjhebkjhedbd}
f\circ \phi_L (x)=(\tilde{w}\bvarphi)\circ (I+w_L \bvarphi)\circ \cdots \circ (I+w_1 \bvarphi)\,.
\end{equation}
\eqref{eqkjhebkjhedbd} has the structure of one  ResNet block \cite{he2016deep}   and minimizing \eqref{eqjkehbwjhebdhdjq} is equivalent to training the network with scaled/strong $L_2$ regularization\footnote{$L_2$ regularization is
often understood as the sum (or the average) of squared weights over the layers.
The factor $L$ in front of the regularization term, makes it a  stronger type of regularization as $L\rightarrow \infty$. } on weights and biases\footnote{Writing $\bvarphi(x)=\big(\ba(x),1\big)$ has the same effect as using a bias neuron (an always active neuron), therefore $\tilde{w}$ and the $w_s$ incorporate both weights and biases.}. Composing \eqref{eqkjhebkjhedbd} over a hierarchy of spaces (layered in between $\X$ and $\Y$, as described in Sec.~\ref{seckehddjhevd} and \ref{secjhgwyde6dga}) produces input-output functions that have the functional form of artificial neural networks (ANNs) \cite{lecun2015deep} and ResNets.
 If $K$ and $\Gamma$  are reduced equivariant multichannel (REM) kernels (introduced in Sec.~\ref{secrem}) then the input-output functions obtained by composition blocks of the form \eqref{eqkjhebkjhedbd} are  convolutional neural networks (CNNs) \cite{lecun1999object} and their generalization.

\subsection{Idea registration and the continuous limit of warping regression/ResNets}\label{subinfdelim}
Let $C([0,1],\H_\Gamma)$ be the space of continuous functions $v\,:\, \X \times [0,1]\rightarrow \X$ such that $x\rightarrow v(x,t)$ belongs to $\H_\Gamma$ (for all $t\in [0,1]$) and is uniformly (in $t$ and $x$) Lipschitz continuous. For $v\in C([0,1],\H_\Gamma)$ write $\phi^v\,:\, \X\times [0,1]\rightarrow \X$ for the solution of
 \begin{equation}\label{eqflmp}
 \begin{cases}
 \dot{\phi}(x,t)=v\big( \phi(x,t),t\big)&\text{ for }(x,t)\in \X\times [0,1]\\
 \phi(x,0)=x&\text{ for }x\in \X\,.
 \end{cases}
 \end{equation}
 We show (Cor.~\ref{corwjkdkdb736d}) that, in the (infinite-depth/continuous-time limit) limit $L\rightarrow \infty$,  the adherence values (accumulation points) of the minimizers \eqref{eqkjedjdiseu} of \eqref{eqlktddsyeytdsedhjd} are
of the form
\begin{equation}\label{eqkjelkdjendkjdn}
f^\ddagger(\cdot)=f\circ \phi^v(\cdot,1)
\end{equation}
 where $(v,f)$ are minimizers of
\begin{equation}\label{eqlkjgehgddjedhjdB}
\begin{cases}
\text{Minimize } &\frac{\nu}{2}\,\int_0^1 \|v\|_{\Gamma}^2\,dt+ \lambda\,\|f\|_K^2+\ell_\Y\big(f\circ \phi^v(X,1),Y\big)\\
\text{over }&v \in C([0,1],\H_\Gamma)\text{ and }f\in \H_K\,.
\end{cases}
\end{equation}
To prove this we work under  the following regularity conditions\footnote{Note that Cond.~\ref{condeqlkedjehd7d}.(1) is equivalent to the non singularity of $\wK(X,X)$ and (2) implies that
 $(x,x')\rightarrow \wK(x,x')$ and its first and second order partial derivatives are continuous and uniformly bounded.} \ref{condnugget} and \ref{condeqlkedjehd7d}  on the kernel $K$ and $\wK$.
\begin{Condition}\label{condnugget}
Assume that (1)
$x\rightarrow K(x,x')$ is continuous and for all $x'$ (2)
$\X$ and $\Y$ are finite-dimensional.
\end{Condition}
\begin{Condition}\label{condeqlkedjehd7d}
Assume that (1) there exists
 $r>0$ such that  $Z^T \wK(X,X) Z\geq r Z^T Z$ for all $Z \in \X^N$, (2) $\wK$ admits $\F$ and $\psi$ as feature space/map, $\F$ is finite-dimensional, $\psi$ and its first and second order partial derivatives are continuous and uniformly bounded, and (3)
 $\X$ is finite-dimensional.
\end{Condition}
By the Picard-Lindel\"{o}f theorem \cite[Thm.~1.2.3]{arino2006fundamental} the solution of \eqref{eqflmp} exists and is unique if $\X$ is finite-dimensional\footnote{
The simplicity of the proof of existence and uniqueness of solutions for \eqref{eqflmp} is the main reason why we work under Cond.~\ref{condeqlkedjehd7d}. Although \cite[Thm.~3.3]{teixeira2005strong}  could be used when $\dim(\X)=\infty$, the existence and uniqueness of solutions for ODEs  can be quite delicate in general infinite-dimensional spaces \cite{li1975existence}.}, which is ensured by  Cond.~\ref{condeqlkedjehd7d}.

\begin{Corollary}\label{corwjkdkdb736d}
As $L\rightarrow \infty$,
  (1) the minimum value of \eqref{eqlktddsyeytdsedhjd}  converges towards the minimum value of   \eqref{eqlkjgehgddjedhjdB}.
If $(v_1,\ldots,v_L, f)$ is a sequence of minimizers of  \eqref{eqlktddsyeytdsedhjd}
then the set of adherence values
  of $f\circ  (I+v_L)\circ \cdots \circ (I+v_1)$ is
  \begin{equation}
 \big\{f\circ \phi^v(\cdot,1)\mid (v,f) \text{ is a minimizer of }  \eqref{eqlkjgehgddjedhjdB} \big\}\,,
  \end{equation}
 i.e., the sequence $(v_1,\ldots,v_L, f)$ can be partitioned into subsequences such that, along each  subsequence,
   $f\circ  (I+v_L)\circ \cdots \circ (I+v_1)(x)$ converges (for all $x\in \X$) towards $f\circ \phi^v(x,1)$
  where $(v,f)$ is a minimizer of \eqref{eqlkjgehgddjedhjdB}.
\end{Corollary}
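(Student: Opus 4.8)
The plan is to reduce everything to Theorem~\ref{thmhgw7gfdd} by eliminating $f$. The starting observation, already recorded in the remark preceding the statement, is that for a fixed deformation $(v_1,\dots,v_L)$ the function $f$ enters \eqref{eqlktddsyeytdsedhjd} only through $\lambda\|f\|_\H^2+\ell_\Y\big(f(\phi_L(X)),Y\big)$; minimizing this over $f\in\H$ returns, by the very definition \eqref{eqledhehdiudh} of the ridge-regression loss $\ell$, exactly the objective of \eqref{eqlkjeddjedhjd}, and the optimal $f$ is the (unique) ridge-regression solution \eqref{eqajkjwdhjbdjeh} with data sites $\phi_L(X)$. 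The identical reduction carries \eqref{eqlkjgehgddjedhjdB} to \eqref{eqlkjgehgddjedhjd}. Hence the minimal value of \eqref{eqlktddsyeytdsedhjd} equals that of \eqref{eqlkjeddjedhjd}, and the minimal value of \eqref{eqlkjgehgddjedhjdB} equals that of \eqref{eqlkjgehgddjedhjd}, so part~(1) is immediate from the convergence of minimal values in Theorem~\ref{thmhgw7gfdd}.

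For the second assertion I would start from a sequence of minimizers $(v_1^L,\dots,v_L^L,f_L)$ of \eqref{eqlktddsyeytdsedhjd}. By the reduction above, the deformation part $(v_1^L,\dots,v_L^L)$ is a minimizer of \eqref{eqlkjeddjedhjd}, hence is parameterized (Thm.~\ref{thmksbsahskd2g}) by an initial momentum $p^1_L$ in $\Mk_L(X,Y)$, and all such momenta lie in a fixed compact ball $B_\rho$. Given any subsequence, compactness lets me extract a further subsequence along which $p^1_L\to p(0)\in\Mk(X,Y)$; Theorem~\ref{thmhgw7gfdd} then gives, along this subsequence and for every $x\in\X$, that $\phi_L(x)=(I+v_L^L)\circ\cdots\circ(I+v_1^L)(x)\to\phi^v(x,1)$, where $v$ is the minimizer of \eqref{eqlkjgehgddjedhjd} with initial momentum $p(0)$. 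In particular $\phi_L(X)\to\phi^v(X,1)$ in $\X^N$.

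It remains to pass this to $f_L\circ\phi_L$. Since $f_L$ is the ridge-regression minimizer attached to the sites $\phi_L(X)$, formula \eqref{eqajkjwdhjbdjeh} gives $f_L(y)=K\big(y,\phi_L(X)\big)\big(K(\phi_L(X),\phi_L(X))+\lambda I\big)^{-1}Y$. Under Cond.~\ref{condnugget} the kernel $K$ is continuous, and $K(\phi^v(X,1),\phi^v(X,1))+\lambda I$ is invertible for $\lambda>0$, so the Gram matrix and its regularized inverse converge; evaluating at the (also convergent) argument $\phi_L(x)$ and using continuity of $K$ in both slots yields $f_L(\phi_L(x))\to K\big(\phi^v(x,1),\phi^v(X,1)\big)\big(K(\phi^v(X,1),\phi^v(X,1))+\lambda I\big)^{-1}Y=f\circ\phi^v(x,1)$, where $f$ is precisely the ridge-regression solution paired with $v$ in \eqref{eqlkjgehgddjedhjdB}. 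This shows every adherence value of $f_L\circ\phi_L$ has the claimed form. For the converse inclusion I would run the argument backward: given a minimizer $(v,f)$ of \eqref{eqlkjgehgddjedhjdB} with momentum $p(0)\in\Mk(X,Y)$, the converse part of Theorem~\ref{thmhgw7gfdd} supplies $p^1_L\in\Mk_L(X,Y)$ with $p^1_L\to p(0)$ and $\phi_L\to\phi^v(\cdot,1)$, and the same ridge-regression continuity step gives $f_L\circ\phi_L\to f\circ\phi^v(\cdot,1)$.

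The bookkeeping (values, existence, momentum parameterization, and the convergence $\phi_L\to\phi^v$) is entirely inherited from Theorems~\ref{thmhgw7gfdd}, \ref{thmksbsahskd2g} and \ref{thmksbsahshkd}; the only genuinely new ingredient is the last step, and the point requiring care there is the \emph{double} limit $f_L(\phi_L(x))$, in which both the regressor $f_L$ and the point at which it is evaluated move with $L$. Controlling this rests on the joint (not merely separate) continuity of the Mercer kernel $K$ together with the uniform invertibility of $K(\cdot,\cdot)+\lambda I$; these are exactly what make the map from data sites to the ridge-regression regressor continuous, so I expect this to be the main, though modest, obstacle.
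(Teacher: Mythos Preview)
Your proposal is correct and follows the same approach as the paper's proof, which is a one-line appeal to Theorem~\ref{thmhgw7gfdd} together with the observation that the minimizing $f$ is uniquely determined by the deformation part (in both the discrete and continuous problems). You have simply unpacked this: you make the uniqueness explicit via the ridge-regression formula \eqref{eqajkjwdhjbdjeh} and spell out the continuity argument by which $f_L\circ\phi_L(x)\to f\circ\phi^v(x,1)$ follows from $\phi_L(X)\to\phi^v(X,1)$ and $\phi_L(x)\to\phi^v(x,1)$, which is precisely the detail the paper leaves implicit.
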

\begin{proof}
The proof is a direct consequence\footnote{Observe that we are using the fact that a minimizer $f$ of \eqref{eqlktddsyeytdsedhjd} is unique given $(v_1,\ldots,v_L)$ and a minimizer
$f$ of  \eqref{eqlkjgehgddjedhjdB}  is unique given $v$.} of Thm.~\ref{thmhgw7gfdd}.
\end{proof}

 \eqref{eqlkjgehgddjedhjdB} has the structure of variational formulations used in computational anatomy \cite{grenander1998computational}, image registration \cite{brown1992survey} and shape analysis \cite{younes2010shapes}.
  \begin{figure}[h!]
	\begin{center}
			\includegraphics[width= \textwidth]{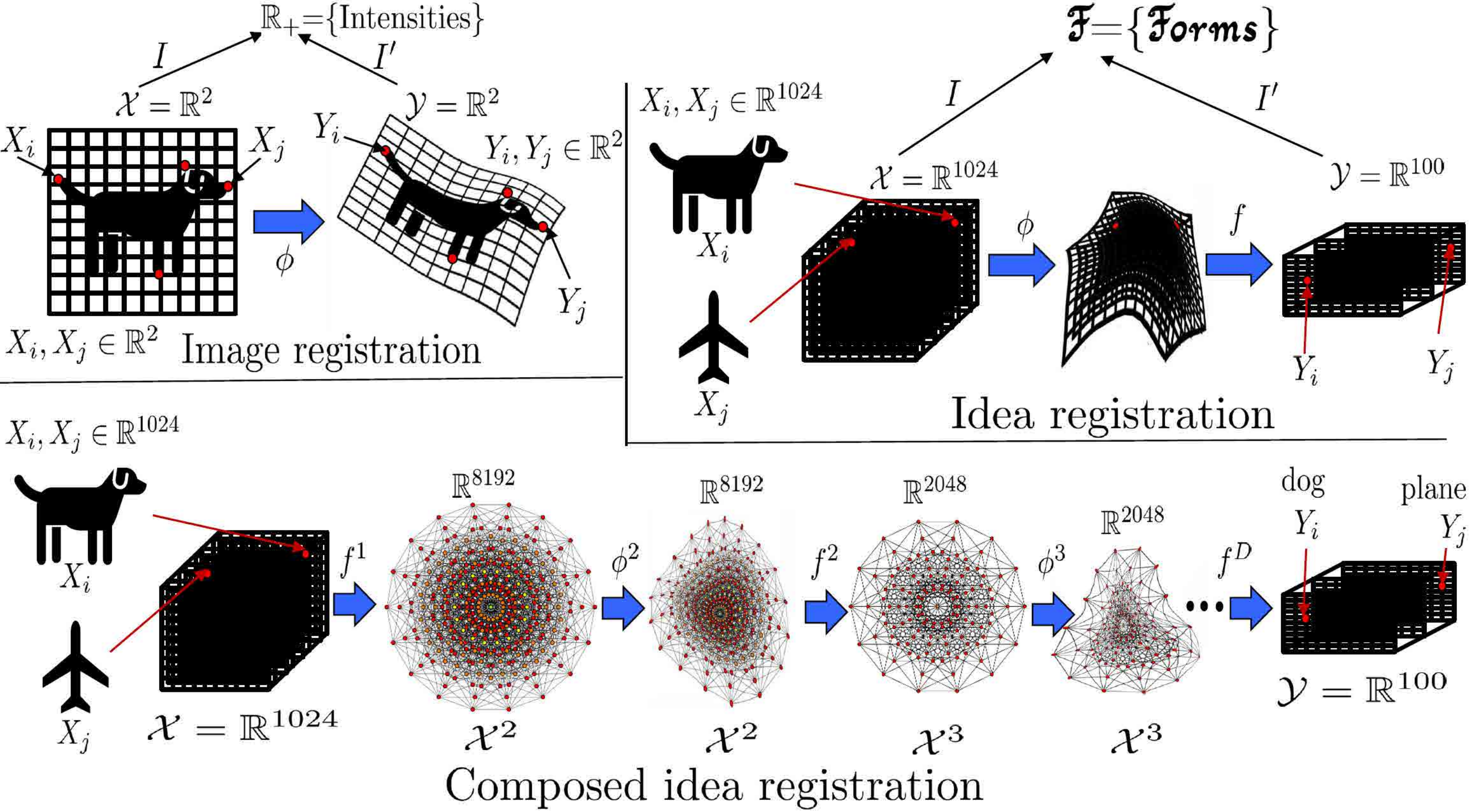}
		\caption{Image registration. Idea registration. Composed idea registration.}\label{figideaformation3}
	\end{center}
\end{figure}
Recall that the core idea of image registration is to represent the image
 of an anatomical structure  as a function $I$ mapping material points in $\X=\Y=\R^2$ to intensities in $\R_+$ (see Fig.~\ref{figideaformation3}).
The distance between an image $I$ and a template $I'$ is then defined by minimizing
\begin{equation}\label{eqlkwjdejhd}
\min_v \nu \int_0^1 \|\Delta v(\cdot,t)\|_{L^2([0,1]^2)}^2\,dt+\|I(\phi^v(\cdot,1))-I'\|_{L^2([0,1]^2)}\,,
\end{equation}
 over diffeomorphisms $\phi^v$ of $\R^2$ driven by the vector field $v$ ($\dot{\phi}^v=v(\phi,t)$) such that $\phi^v(x,0)=x$ \cite{younes1998computable, trouve1998diffeomorphisms}.  The regularizer $\|\Delta v\|_{L^2}$ can  be replaced by higher order Sobolev norms\footnote{\cite{dupuis1998variational} shows that if $\Gamma$ is defined by the Green's function of a differential operator of sufficiently high order in a Sobolev space, then $\phi^v$ is a diffeomorphism (a differentiable bijection). Although the bijectivity of $\phi^v$ is a natural requirement in image registration, it is not needed for idea registration  since two inputs may share the same label.}
 \cite{dupuis1998variational} or the $L^2$ norm of differential operators adapted to the underlying problem
 \cite{ miller2002metrics}.
\emph{Landmark matching} \cite{joshi2000landmark} simplifies the loss \eqref{eqlkwjdejhd} to
\begin{equation}\label{eqkejdgggkejddj}
\min_v \nu \int_0^1 \|\Delta  v\|_{L^2([0,1]^2)}^2\,dt+\sum_i |\phi^v(X_i,1)-Y_i|^2\,,
\end{equation}
where the $X_i$ and $Y_i$ are a finite number of landmark/control (material) points on the two images $I$ and $I'$ (e.g., in Fig.~\ref{figideaformation3}, $X_i$ is the tip of the tail of the first dog and $Y_i$ is the tip of the tail of the second dog).
The variational problem \eqref{eqlkjgehgddjedhjdB} looks like the image registration with landmark matching variational problem \eqref{eqkejdgggkejddj} with a few differences.
The  matching material/landmark  points $(X_i,Y_i)\in \R^2\times \R^2$ are replaced by matching data points $(X_i,Y_i)\in \X\times \Y$.
The deformation $\phi$ is not acting on $\R^2$ but on $\X$, which could be high dimensional. The images $I\,:\, \R^2 \rightarrow \R_+$ and $I'\,:\, \R^2 \rightarrow \R_+ $   are replaced  (see Fig.~\ref{figideaformation3})
by functions $I\,:\, \X \rightarrow \text{\Fp}$ and $I'\,:\, \Y \rightarrow \text{\Fp}$,    which we  call ideas\footnote{The etymology of ``idea'' is (\url{https://www.etymonline.com/word/idea}) {\it  ``mental image or picture''\ldots from Greek idea ``form''\ldots  In Platonic philosophy, ``an archetype, or pure immaterial pattern, of which the individual objects in any one natural class are but the imperfect copies.''}}.
The space of grayscale intensities $\R_+$ is replaced by an abstract space \Fp, which we will call \emph{space of forms} in reference to Plato's theory of forms\footnote{According to Plato's theory of forms the reason why we know that a  particular dog is a dog is that there exists an ideal form (a universal intelligible archetype known as a dog)  and the particular dog is a shadow (as in Plato's cave) or an imperfect copy/projection of that ideal form.
}
 \cite{Platoforms}.
Since the spaces $\X$ and $\Y$ may be distinct,  \eqref{eqlkjgehgddjedhjdB}  composes  the deformation $\phi^v(\cdot,1)\,:\, \X\rightarrow \X$ with the map $f\,:\, \X\rightarrow \Y$ to align the ideas $I\,:\, \X \rightarrow \text{\Fp}$ and $I'\,:\, \Y \rightarrow \text{\Fp}$.
In that sense, \eqref{eqlkjgehgddjedhjdB} (which we call idea registration) compares ideas by creating alignments via  deformations/transformations of  RKHS  spaces\footnote{Credit to \url{https://en.wikipedia.org/wiki/User:Tomruen} for the $N$-cube images in Fig.~\ref{figideaformation3}.}.
Since \eqref{eqjkehbwjhebdhdjq} is a particular case of \eqref{eqlktddsyeytdsedhjd}, the convergence of
\eqref{eqlktddsyeytdsedhjd} towards \eqref{eqlkjgehgddjedhjdB} implies that  ResNets are discretized image/idea registration algorithms (they converge towards \eqref{eqlkjgehgddjedhjdB} in the continuous/infinite-depth limit) with material/landmark points replaced by data points, and images replaced by functions mapping the input/output spaces to an abstract space of forms/shapes which Plato would have called \emph{ideas}\footnote{Plato introduced the intriguing notion that ideas have an actual shape  \cite{Platoforms}.}.
The kernel representation of ResNet blocks as \eqref{eqlktddsyeytdsedhjd} and the identification of ResNets as discretized idea registration problems have several remarkable consequences, which we will highlight in the following sections.

\subsection{Warping kernels}\label{subsecgytref67f6}

The following proposition shows  that solving Problem \ref{pb828827hee} by minimizing \eqref{eqlktddsyeytdsedhjd} (warping regression) or \eqref{eqlkjgehgddjedhjdB} (idea registration)  is equivalent to approximating $f^\dagger$ with a (ridge regression) minimizer of \eqref{eqledhehdiudh} with the kernel $K(x,x')$ replaced by the learned kernel $K^\phi:=K(\phi(x),\phi(x'))$ with $\phi=\phi_L$ or $\phi=\phi^v(\cdot,1)$.   Therefore warping regression and idea registration are equivalent to performing ridge regression in an RKHS  that is learned from the data $(X,Y)$ (the $\nu$ penalty avoids overfitting that RKHS to the data).
Furthermore, for $\ell_\Y$=\eqref{eqjhguyghfuvf4}, warping regression and idea registration are equivalent to estimating $f^\dagger$ with the GP regressor
\begin{equation}\label{eqlkhewkhdehduudhui}
\E_{\xi \sim \cN(0,K^\phi)}\big[\xi(x)\mid\xi(X)+Z=Y\big]=\E_{\xi \sim \cN(0,K)}\big[\xi(\phi^v(x,1))\mid\xi(\phi^v(X,1))+Z=Y\big]\,,
\end{equation}
where $\cN(0,K^\phi)$ is the centered Gaussian process prior with covariance function $K^\phi$ (see Sec.~\ref{subsecuideydiueyd} for presentation of GPs defined by operator-valued kernels).

\begin{Proposition}\label{propdkjehbjdhb}
Let $\phi$ be an arbitrary function mapping $\X$ to $\X$. Let $K^\phi$ be the warped kernel
\begin{equation}
K^\phi(x,x'):=K\big(\phi(x),\phi(x')\big)\,.
\end{equation}
If $f$ is a minimizer of
\begin{equation}\label{eqv1}
\lambda\,\|f'\|_{K}^2+\ell_\Y\big(f'\circ \phi(X),Y\big)
\end{equation}
 over $f'\in \H_K$, then
\begin{equation}
f \circ \phi(\cdot)=f^\phi(\cdot)
\end{equation}
where  $f^\phi$ is a minimizer of
\begin{equation}\label{eqlkedkjehkd}
\lambda\,\|f'\|_{K^\phi}^2+\ell_\Y(f'(X),Y)
\end{equation}
 over $f'\in \H_{K^\phi}$. Furthermore,
\begin{equation}\label{eqnlkdekjddnjd}
\inf_{f'\in \H_K} \lambda\,\|f'\|_{K}^2+\ell_\Y\big(f'\circ \phi(X),Y\big)=\inf_{f'\in \H_{K^\phi}} \lambda\,\|f'\|_{K^\phi}^2+\ell_\Y(f'(X),Y) \,.
\end{equation}
In particular, (1) if $(v_1,\ldots,v_L,f)$  is a minimizer of \eqref{eqlktddsyeytdsedhjd} then $f\circ \phi_L=f^{\phi}$ where $f^{\phi}$ is a minimizer of \eqref{eqlkedkjehkd} with $\phi=\phi_L$, (2) if $(v,f)$  is a minimizer of \eqref{eqlkjgehgddjedhjdB} then $f\circ \phi^v(\cdot,1)=f^{\phi}$ where $f^\phi$ is a minimizer of \eqref{eqlkedkjehkd} with $\phi=\phi^v(\cdot,1)$.
 \end{Proposition}
 \begin{proof}
By the representer theorem minimizers of \eqref{eqv1} and \eqref{eqlkedkjehkd} must be of the form $f=K(\cdot,\phi(X))V$ and $f^\phi=K(\phi(\cdot),\phi(X))W$. Observing that $\|f\|_K^2=V^T K(\phi(X),\phi(X))V$ and $\|f^\phi\|_{K^\phi}^2=W^T K(\phi(X),\phi(X))W$ concludes the proof.
 \end{proof}

 \begin{Remark} {\bf Warping kernels} of
 the form $K(\phi(x),\phi(x'))$ defined by a warping of the space $\phi$ can be traced back to  spatial statistics \cite{sampson1992nonparametric,perrin1999modelling, schmidt2003bayesian, zammit2019deep} where they enable the nonparametric  estimation of  nonstationary and anisotropic spatial covariance structures, and to  numerical homogenization \cite{OwZh:2007a} (where they enable upscaling with non separated scales).
\end{Remark}

\subsection{Existence and uniqueness of minimizers}\label{subsedeses}
We defer existence and uniqueness results on the minimizers of \eqref{eqlktddsyeytdsedhjd} (and therefore \eqref{eqjkehbwjhebdhdjq}) and \eqref{eqlkjgehgddjedhjdB} to Sec.~\ref{seclkjdekjdhjdex} (Thm.~\ref{thmksbsahshkd} and \ref{thmksbsahskd2g}).
Although these variational problems have minimizers, they may not be unique (Sec.~\ref{subseckjjhgdejwedh}), which is why we can only describe convergence in the sense of adherence values.
However, these minimizers will be shown to be unique up to the value of an initial momentum (\eqref{eqkjelkbejdhbd} \eqref{eqlhedjkekdkj}) entering in the kernel representation of $v^1$ and $v(0)$.
For $L_2$-regularized ResNets these results imply (1) that all trained weights and biases are uniquely determined by those of the first layer, (2) the possibility of training with  geodesic shooting (Sec.~\ref{subsecgyf6hv7f6}, \ref{seckdejhdjkdw} and \ref{secfr}) as done in image registration \cite{allassonniere2005geodesic}.

\section{Analysis through Lagrangian/Hamiltonian mechanics}\label{seclkjdekjdhjd}

The rigorous identification of the continuous limit of ResNets/warping regression is based on a  discrete and continuous Lagrangian/Hamiltonian mechanics analysis of warping regression and idea regression. This analysis leads to quantitative and representation results on the trained weights and biases of ResNets and on solutions to warping regression/idea registration problems.
The main steps and results of this analysis are articulated in this section.

\subsection{Ridge regression loss}\label{subseckeljjdh}
The variational problem  \eqref{eqlktddsyeytdsedhjd} can be written
\begin{equation}\label{eqlkjeddjedhjd}
\begin{cases}
\text{Minimize } &\frac{\nu}{2}\,L\sum_{s=1}^L \|v_s\|_{\Gamma}^2+\ell\big((I+v_L)\circ \cdots \circ (I+v_1)(X),Y\big)\\
\text{over }&v_1,\ldots,v_L \in \H_\Gamma\,,
\end{cases}
\end{equation}
where $\ell\,:\, \X^N \times \Y^N \rightarrow [0,\infty]$ is  the
   {\bf ridge regression loss} \eqref{eqledhehdiudh}=\eqref{eqlkjdkjweedjkb}.
 Condition \ref{condnugget} (which we will from now on assume to be satisfied) ensures the continuity of $\ell$=\eqref{eqlkjdkjweedjkb}.
We will now focus on the reduction of \eqref{eqlkjeddjedhjd} and only assume
$\ell\,:\, \X^N \times \Y^N \rightarrow [0,\infty]$  to be continuous and positive.

\subsection{Discrete least action principle}\label{subsecgyf67f6}
Write $q^1:=X$ and for $s\in \{2,\ldots,L\}$ write
\begin{equation}
q^{s+1}:=\phi_s(q^1)
\end{equation}
for the image of the input data $X$ under the discrete flow
\begin{equation}\label{eqkjehdbehdhjbdphis}
\phi_s:=(I+v_s)\circ \cdots \circ (I+v_1)\,.
\end{equation}
  Although $\ell$ may not be convex, the first part of \eqref{eqlkjeddjedhjd} is quadratic and can be reduced a discrete least action principle on
 $q^1,\ldots,q^{L+1} \in \X^N$.
To prove this, we will
from now on, work under Condition \ref{condeqlkedjehd7d}.

\begin{Theorem}\label{thmeqkljedjnedjd}
 $v_1,\ldots,v_L \in \H_\Gamma$ is  a minimizer of \eqref{eqlkjeddjedhjd} if and only if\footnote{Write $\Gamma(q^s,q^s)$ for the $N\times N$ block matrix with blocks $\Gamma(q^s_i,q^s_j)$, and $\Gamma(\cdot,q^s)$ for the $1\times N$ block vector with blocks $\Gamma(\cdot,q^s_i)$.}
\begin{equation}\label{eqkjdkedkjnd}
v_s(x)= \wK( x, q^s) \wK(q^s,q^s)^{-1} (q^{s+1}-q^s) \text{ for } x\in \X, s\in \{1,\ldots,L\}\,,
\end{equation}
where   $q^1,\ldots,q^{L+1} \in \X^N$ is a   minimizer  of   ($\Delta t:=1/L$)
\begin{equation}\label{eqlsedhjd}
\begin{cases}
\text{Minimize } &\frac{\nu}{2}  \sum_{s=1}^L  (\frac{q^{s+1}-q^s}{\Delta t})^T \wK(q^s,q^s)^{-1} (\frac{q^{s+1}-q^s}{\Delta t})\, \Delta t+
\ell\big(q^{L+1},Y\big)\\
\text{over } &q^2,\ldots,q^{L+1} \in \X^N
\text{ with }q^1=X\,.
\end{cases}
\end{equation}
\end{Theorem}
\begin{proof}
Introduce the variables $q^{s+1}_i=(I+v_s)\circ \cdots \circ (I+v_1)(X_i)$ for $2 \leq s \leq L$, and
$q^1_i=X_i$. \eqref{eqlkjeddjedhjd} is then equivalent to
\begin{equation}\label{eqkljedjnedjd}
\begin{cases}
\text{Minimize } &\nu\frac{L}{2}\sum_{s=1}^L \|v_s\|_{\Gamma}^2+\ell\big(q^{L+1},Y\big)\\
\text{over }&v_1,\ldots,v_s\in \H_\Gamma,\quad q^1,\ldots,q^{L+1} \in \X^N \\
\text{subject to }& q^1=X\text{ and } v_s(q^{s})=q^{s+1}-q^{s}\text{ for all }s
\end{cases}
\end{equation}
Minimizing with respect to the $v_s$ first we obtain
$\|v_s\|_{\Gamma}^2=(q^{s+1}-q^s)^T \wK(q^s,q^s)^{-1} (q^{s+1}-q^s)$ and \eqref{eqkjdkedkjnd}.
\eqref{eqkljedjnedjd} can then be reduced to \eqref{eqlsedhjd}.
\end{proof}

\begin{Remark}
The introduction of the intermediate variables $q^s$ (tracking the propagation of the input data $X$ across layers of the network in the proof Thm.~\ref{thmeqkljedjnedjd}) is generic. Similar intermediate variables are also introduced in  \cite{chen2021solving} to generalize GP methods to the solving and learning of arbitrary nonlinear PDEs (with guaranteed convergence) and in \cite{owhadi2021computational} to introduce a computational graph completion (CGC) framework\footnote{The CGC framework includes ANNs as a particular case and generalizes solving linear systems of equations to that of solving undetermined nonlinear systems of equations with a computational graph encoding imperfectly known dependencies between variables and functions.} for generating, organizing and reasoning with computational knowledge.
\end{Remark}

\subsection{Continuous limit and neural least action principle}\label{subsecgeyf67f6}
Interpreting $\Delta t=1/L$ as the time step, \eqref{eqlsedhjd} is the discrete least action principle  \cite{marsden2001discrete} obtained by using the approximation
$$(\frac{q^{s+1}-q^s}{\Delta t})^T \wK(q^s,q^s)^{-1} (\frac{q^{s+1}-q^s}{\Delta t}) \approx \dot{q}_{\frac{s}{L}}^T \wK(q_s,q_s) \dot{q}_{\frac{s}{L}}$$ in the continuous least action principle
\begin{equation}\label{eqlsweedhsejd}
\begin{cases}
\text{Minimize } &\nu\,\A[q]+\ell\big(q(1),Y\big)\\
\text{over } &q \in C^1([0,1],\X^N) \text{ subject to } q(0)=X\,.
\end{cases}
\end{equation}
where $\A[q]$ is the action
\begin{equation}\label{eqactq}
\A[q]:=\int_0^1 \Lf(q,\dot{q}) \,dt\,
\end{equation}
defined by the Lagrangian
\begin{equation}\label{eqejkhekffkjefkfj}
\Lf(q,\dot{q}):= \frac{1}{2}\dot{q}^T \wK(q, q)^{-1} \dot{q}\,,
\end{equation}
and $C^1([0,1],\X^N)$ is the set of continuously differentiable functions $q\,: \, [0,1]\rightarrow \X^N$ mapping
$s\in [0,1]$ to $q_s\in \X^N$.
Consequently, minimizing \eqref{eqkljedjnedjd} corresponds to
using a first-order variational symplectic integrator (simulating a nearby mechanical system \cite{hairer2006geometric}) to approximate
\eqref{eqlsweedhsejd}. We will present convergence results in Thm.~\ref{thmhgw7gfdd}.

\subsection{Euler-Lagrange equations and geodesic motion.}\label{subsecgreeyf67f6}
Following classical Lagrangian mechanics \cite{marsden2013introduction}, a minimizer of
\eqref{eqlsweedhsejd} follows the Euler-Lagrange equations $\frac{d}{dt} \frac{\partial \Lf }{\partial \dot{q}}-\frac{\partial \Lf}{\partial q}=0$, i.e.
\begin{equation}\label{eqklkedjdd}
 \frac{d}{dt}\big(\wK(q,q)^{-1} \dot{q}\big)= \partial_q \big(\frac{1}{2}\dot{q}^T \wK(q, q)^{-1} \dot{q}\big)
\end{equation}
Furthermore,
$\wK^{-1}(q,q)$ can be interpreted as a mass matrix or metric tensor \cite[p.~3]{marsden2013introduction} and the Euler-Lagrange equations
are equivalent to the equations of geodesic motion \cite[Sec.~7.5]{marsden2013introduction} corresponding to minimizing the length $\int_0^1 \sqrt{\dot{q}^T \wK(q, q)^{-1} \dot{q}}\,ds$ of the curve $q$ connecting $X$ to $q(1)$ (which, using the equivalence between minimizing length and length squared, can also be recovered as a limit by replacing $\frac{L}{2}\sum_{s=1}^L \|v_s\|_{\Gamma}^2$ by $\sum_{s=1}^L \|v_s\|_{\Gamma}$ in \eqref{eqlkjeddjedhjd}).

\subsection{Hamiltonian mechanics.}\label{subsewecgyf67f6}
Introduce the momentum variable
\begin{equation}\label{eqlhedjkekdkj}
p=\frac{\partial \Lf}{\partial \dot{q}}= \wK(q,q)^{-1} \dot{q}\,,
\end{equation}
and the Hamiltonian ($\Hf(q,p)= p^T \dot{q} -\Lf(q,\dot{q})=\frac{1}{2}\dot{q}^T \wK(q, q)^{-1} \dot{q}$).
\begin{equation}\label{eqkjbdejhdbeyudbs}
\Hf(q,p)=  \frac{1}{2}p^T \wK(q, q) p\,.
\end{equation}

The following theorem summarizes the classical \cite{marsden2013introduction} correspondence between the Lagrangian and Hamiltonian viewpoints.
\begin{Theorem}\label{thmlkndjd}
If $q$ is a minimizer of the least action principle \eqref{eqlsweedhsejd} then $(q,p)$ follows the Hamiltonian dynamic \begin{equation}\label{eqkedmdledkemdl}
\begin{cases}
&\dot{q}=\frac{\partial \Hf(q,p)}{\partial p}=\wK(q,q) p\\
&\dot{p}=-\frac{\partial \Hf(q,p)}{\partial q}=- \partial_q ( \frac{1}{2} p^T  \wK(q, q) p)\,,
\end{cases} \text{with initial value }(q(0)=X,p(0))\,.
\end{equation}
The energy $\Hf(q,p)$ is conserved by this dynamic and any function  $F$ of $(q,p)$ evolves according to the Lie derivative
\begin{equation}
\frac{d}{dt} F(q,p)=\{F,\Hf\}=\partial_q F \partial_p \Hf -\partial_p F \partial_q \Hf=\partial_q F \wK(q,q) p -\partial_p F \partial_q ( \frac{1}{2} p^T  \wK(q, q) p) \,.
\end{equation}
\end{Theorem}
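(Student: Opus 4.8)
The plan is to obtain the Hamiltonian system as the Legendre transform of the variational problem \eqref{eqlsweedhsejd}, whose minimizers we already know (Subsec.~\ref{subsecgreeyf67f6}) satisfy the Euler--Lagrange equations \eqref{eqklkedjdd}. First I would record that the Legendre map $\dot{q}\mapsto p=\partial \Lf/\partial \dot{q}=\wK(q,q)^{-1}\dot{q}$ of \eqref{eqlhedjkekdkj} is, along the trajectory, a linear isomorphism of $\X^N$: by Condition~\ref{condeqlkedjehd7d}.(1) (equivalently the non-singularity of $\wK$ on distinct points) the block matrix $\wK(q,q)$ is invertible so long as the components $q_i(t)$ remain distinct, hence $\dot{q}=\wK(q,q)p$ inverts the relation. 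This invertibility---keeping the Legendre transform a genuine change of variables along the whole flow---is the one point requiring care, and is where I expect the main (if mild) obstacle to sit; it is guaranteed here by the uniform positivity $Z^T\wK(X,X)Z\ge r Z^T Z$ together with the continuity and non-degeneracy assumptions on $\wK$ in Condition~\ref{condeqlkedjehd7d}.

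Next I would carry out the standard differential identity for the Legendre transform. Writing $\Hf(q,p)=p^T\dot{q}-\Lf(q,\dot{q})$ with $\dot{q}$ viewed as the function of $(q,p)$ determined above, and taking the total differential, the contributions $p\,d\dot{q}$ and $-(\partial \Lf/\partial \dot{q})\,d\dot{q}$ cancel by the very definition of $p$, leaving $d\Hf=\dot{q}\,dp-(\partial \Lf/\partial q)\,dq$. Reading off the partials gives the two structural identities $\partial \Hf/\partial p=\dot{q}$ and $\partial \Hf/\partial q=-\partial \Lf/\partial q$. Combining the second identity with the Euler--Lagrange equation $\dot{p}=\frac{d}{dt}(\partial \Lf/\partial \dot{q})=\partial \Lf/\partial q$ yields $\dot{p}=-\partial \Hf/\partial q$, while the first identity is exactly $\dot{q}=\partial \Hf/\partial p$. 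To obtain the explicit right-hand sides of \eqref{eqkedmdledkemdl} I would substitute $\Hf(q,p)=\frac{1}{2}p^T\wK(q,q)p=$\eqref{eqkjbdejhdbeyudbs}, using the symmetry of the block matrix $\wK(q,q)$ to get $\partial_p\Hf=\wK(q,q)p$, and keeping $\partial_q\Hf=\partial_q(\frac{1}{2}p^T\wK(q,q)p)$ for the momentum equation; the initial condition $q(0)=X$ is inherited from \eqref{eqlsweedhsejd}.

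Finally, the two remaining assertions follow directly from the Hamiltonian form. For any differentiable $F(q,p)$ the chain rule gives $\frac{d}{dt}F=\partial_q F\,\dot{q}+\partial_p F\,\dot{p}=\partial_q F\,\partial_p\Hf-\partial_p F\,\partial_q\Hf=\{F,\Hf\}$, and inserting $\partial_p\Hf=\wK(q,q)p$ and $\partial_q\Hf=\partial_q(\frac{1}{2}p^T\wK(q,q)p)$ reproduces the stated formula. Taking $F=\Hf$ specializes this to $\frac{d}{dt}\Hf=\{\Hf,\Hf\}=0$, i.e.\ the energy is conserved, since the Hamiltonian is autonomous (no explicit $t$ dependence). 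No further estimates are needed; the whole argument is the classical Legendre correspondence, the only genuinely analytic input being the invertibility of $\wK(q,q)$ along the trajectory noted at the outset.
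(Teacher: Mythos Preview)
Your proposal is correct and is precisely the classical Legendre-transform argument. The paper does not actually supply a proof of this theorem: it introduces $p=\partial\Lf/\partial\dot q$ and $\Hf=p^T\dot q-\Lf$ in the text preceding the statement and then simply says the theorem ``summarizes the classical \cite{marsden2013introduction} correspondence between the Lagrangian and Hamiltonian viewpoints.'' Your write-up unpacks exactly that citation.

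One small remark on the point you flagged as the only analytic obstacle. Condition~\ref{condeqlkedjehd7d}.(1) literally asserts $Z^T\wK(X,X)Z\ge rZ^TZ$ only at the initial data $X$, not uniformly along the trajectory, so it does not by itself guarantee that $\wK(q(t),q(t))$ stays invertible for all $t\in[0,1]$. However, the Lagrangian $\Lf(q,\dot q)=\tfrac12\dot q^T\wK(q,q)^{-1}\dot q$ is only defined where $\wK(q,q)$ is invertible, so any $q\in C^1([0,1],\X^N)$ that is admissible for \eqref{eqlsweedhsejd} already lives in this set, and your Legendre argument goes through on it. The paper sidesteps this issue entirely by treating the result as classical and by later working directly with the Hamiltonian system \eqref{eqkedmdledkemdl} (whose well-posedness, Theorem~\ref{thml2kjej2ke}, does not require invertibility of $\wK(q,q)$).
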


\subsection{Near energy conservation and near $L_2$-norm conservation of trained weights and biases across layers}\label{subsnearenerpres}
If $q^1,\ldots,q^{L+1} \in \X^N$ is a   minimizer  of \eqref{eqlsedhjd}, then,  introducing the momentum variables
 \begin{equation}\label{eqkjelkbejdhbd}
p^s=\wK(q^s,q^s)^{-1} \frac{q^{s+1}-q^s}{\Delta t}\,,
\end{equation}
 $(q^s,p^s)$ follows the discrete Hamiltonian dynamics
 \begin{equation}\label{ejkhdbejhdbd}
\begin{cases}
q^{s+1}&= q^s+\Delta t\, \wK(q^s,q^s) p^s\\
p^{s+1}&=p^s-\frac{\Delta t}{ 2} \partial_{q^{s+1}}\big((p^{s+1})^T \wK(q^{s+1},q^{s+1}) p^{s+1}\big)\,,
\end{cases}
\end{equation}
  and
  the near energy preservation of variational integrators  \cite{marsden2001discrete,hairer2006geometric} implies (Thm.~\ref{thmksbsahskd2g}) that the norms $\|w_s\|_{\L(\F,\X)}^2$ of minimizers of \eqref{eqjkehbwjhebdhdjq} (of weights and biases of ResNet blocks after training with scaled/strong $L_2$ regularization) are nearly constant (fluctuate by at most $\mathcal{O}(1/L)$) across $i\in \{1,\ldots,L\}$.

\subsection{Hamiltonian mechanics in feature space}\label{subgfcsecgyf67f6} Let $\F$ and $\psi$ be a feature space/map of $\wK$ as  in Cond.~\ref{condeqlkedjehd7d}. Using the identity $\Gamma(x,x')=\psi^T(x)\psi(x')$, the Hamiltonian system \eqref{eqkedmdledkemdl}
  can be written
 \begin{equation}\label{eqkedmdledkemdlN}
\begin{cases}
\dot{q}_i =&  \psi^T(q_i) \alpha \\
\dot{p}_i=&-  \partial_x \big(  p_i^T  \psi^T(x) \alpha \big)\Big|_{x=q_i}\,,
\end{cases}
\end{equation}
where $\alpha$ is the time dependent element of $\F$ defined by
\begin{equation}\label{eqkedmdbhledkemdlN3}
\alpha:=\sum_{j=1}^N \psi(q_j) p_j\,.
\end{equation}
Energy preservation and the identity
$
\|\alpha\|_\F^2=p^T \Gamma(q,q)p\,,
$
implies the following.
\begin{Proposition}\label{propkjhdegd7}
$t\rightarrow \|\alpha(t)\|_\F$ is  constant.
\end{Proposition}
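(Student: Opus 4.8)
The plan is to reduce the statement to the energy conservation already established in Theorem~\ref{thmlkndjd}. The key is the identity $\|\alpha\|_\F^2 = p^T \wK(q,q) p$ recorded just above the proposition, which identifies $\|\alpha\|_\F^2$ with twice the Hamiltonian \eqref{eqkjbdejhdbeyudbs}.

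First I would verify that identity directly. Starting from the definition $\alpha = \sum_{j=1}^N \psi(q_j) p_j$ in \eqref{eqkedmdbhledkemdlN3}, I expand the squared feature-space norm bilinearly,
\[
\|\alpha\|_\F^2 = \Big\langle \sum_{i=1}^N \psi(q_i) p_i, \sum_{j=1}^N \psi(q_j) p_j \Big\rangle_\F = \sum_{i,j=1}^N \big\langle \psi(q_i) p_i, \psi(q_j) p_j \big\rangle_\F .
\]
Using the defining property of the feature map, $\langle \psi(x) y, \psi(x') y'\rangle_\F = y^T \wK(x,x') y'$, each summand becomes $p_i^T \wK(q_i, q_j) p_j$, and reassembling into block form yields $\|\alpha\|_\F^2 = p^T \wK(q,q) p = 2\,\Hf(q,p)$.

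Next I would invoke energy conservation. By Theorem~\ref{thmlkndjd}, along the Hamiltonian flow \eqref{eqkedmdledkemdl} the energy $\Hf(q,p)$ is conserved, i.e.\ $\frac{d}{dt}\Hf(q(t),p(t)) = 0$. Combined with the identity above, $\|\alpha(t)\|_\F^2 = 2\,\Hf(q(t),p(t))$ is therefore constant in $t$; since it is nonnegative, $t\mapsto \|\alpha(t)\|_\F$ is itself constant.

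There is no substantive obstacle here: the proposition is a direct corollary of the Lagrangian-to-Hamiltonian correspondence and its conservation law. The only point requiring a modicum of care is applying the feature-map identity in the block-operator form so that the double sum collapses to the quadratic form $p^T \wK(q,q) p$; once this bookkeeping is in place the conclusion is immediate from Theorem~\ref{thmlkndjd}.
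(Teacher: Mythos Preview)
Your proposal is correct and follows essentially the same approach as the paper: the paper simply appeals to the identity $\|\alpha\|_\F^2 = p^T\wK(q,q)p$ together with energy preservation from Theorem~\ref{thmlkndjd}. Your added verification of the identity via the feature-map expansion is a welcome elaboration but does not differ in substance.
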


\begin{Remark}
\eqref{eqkedmdledkemdlN} suggests that $p(t)$ is the adjoint of $q(t)$ as defined in the Neural ODE literature \cite[Equ.~4]{chen2018neural}.
The numerical experiments of Sec.~\ref{subsefcfcdcgyf67f6} show that the vectors $p(1)$ and $p(0)$ are dominated  by a few of their entries
and support the suggestion that momentum variables promote sparsity in the representation of the regressor.
This observation suggests that the adjoint introduced in the Neural ODE literature may not only promote memory efficiency by avoiding the storage of ``any intermediate quantities of the forward pass'' \cite[p.~1]{chen2018neural} but also through its sparsity.
This sparsity of the momentum map representation (and of the adjoint) is very similar to the sparsity of image deformations in momentum map (adjoint) representation \cite{bruveris2011momentum} observed and discussed in  \cite{vialard2012diffeomorphic, fishbaugh2013geodesic}.
\end{Remark}

\subsection{Existence and uniqueness}\label{sugygbsecgyf67f6}
 Cond.~\ref{condeqlkedjehd7d} provides sufficient regularity on $\wK$ for the existence and uniqueness of a solution to \eqref{eqkedmdledkemdl} in $C^2([0,1],\X^N)\times C^1([0,1],\X^N)$.

\begin{Theorem}\label{thml2kjej2ke}
 \eqref{eqkedmdledkemdl}  admits a unique solution in $C^2([0,1],\X^N)\times C^1([0,1],\X^N)$.
\end{Theorem}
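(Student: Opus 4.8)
The plan is to treat \eqref{eqkedmdledkemdl} as a first-order autonomous system of ordinary differential equations on the finite-dimensional phase space $\X^N\times\X^N$, and to exploit the fact that, unlike the Lagrangian form \eqref{eqklkedjdd}, the Hamiltonian vector field involves only $\wK(q,q)$ and never its inverse. The right-hand side is therefore globally defined on $\X^N\times\X^N$, and the argument splits into three steps: (i) local existence and uniqueness by Cauchy--Lipschitz; (ii) an a priori bound ruling out finite-time blow-up, which upgrades the local solution to a global one on $[0,1]$; and (iii) a bootstrap recovering the claimed $C^2\times C^1$ regularity.

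For step (i), I would check that the map $(q,p)\mapsto\big(\wK(q,q)p,\,-\partial_q(\tfrac12 p^T\wK(q,q)p)\big)$ is continuously differentiable, hence locally Lipschitz. Writing $\wK(x,x')=\psi^T(x)\psi(x')$ as in Cond.~\ref{condeqlkedjehd7d}.(2), the uniform boundedness and continuity of $\psi$ together with its first and second partial derivatives make $\wK$, $\partial_q\wK$ and $\partial_q^2\wK$ continuous and bounded. The first component is then $C^1$ (linear in $p$, with $q$-dependence governed by $\partial_q\wK$), and the second, being quadratic in $p$ with coefficient $\partial_q\wK$, is $C^1$ as well (its $q$-derivative requiring exactly $\partial_q^2\wK$). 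Cauchy--Lipschitz yields a unique maximal solution $(q,p)\in C^1$ on some $[0,T_{\max})$.

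For step (ii), the subtlety is that energy conservation controls only $p^T\wK(q,q)p=\|\alpha\|_\F^2$, which does \emph{not} bound $p$ without a lower bound on $\wK(q,q)$ along the trajectory, something we do not have. Instead I would argue in the feature-space form \eqref{eqkedmdledkemdlN}, with $\alpha=\sum_j\psi(q_j)p_j$ as in \eqref{eqkedmdbhledkemdlN3}. By Prop.~\ref{propkjhdegd7}, $\|\alpha(t)\|_\F=\|\alpha(0)\|_\F$ is constant. The equation $\dot q_i=\psi^T(q_i)\alpha$ then gives $\|\dot q_i\|\le C\,\|\alpha(0)\|_\F$ since $\psi$ is bounded, so $q$ moves at bounded speed and remains in a fixed compact set on $[0,1]$. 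Crucially, the equation $\dot p_i=-\partial_x(p_i^T\psi^T(x)\alpha)\big|_{x=q_i}$ is \emph{linear} in $p_i$ with coefficient bounded by $C\,\|\alpha(0)\|_\F$ (using boundedness of $\partial_x\psi$); hence $\tfrac{d}{dt}\|p\|^2\le C\,\|\alpha(0)\|_\F\,\|p\|^2$, and Gr\"onwall gives $\|p(t)\|\le\|p(0)\|\,e^{C\|\alpha(0)\|_\F\,t}$. Both $q$ and $p$ thus stay bounded on any finite interval.

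For step (iii), this a priori bound forces $T_{\max}>1$ by the standard continuation criterion, so the unique solution exists on all of $[0,1]$. Regularity then follows by bootstrap: since the vector field is continuous and $(q,p)\in C^1$, the identity $\dot p=-\partial_q\Hf(q,p)$ shows $\dot p$ is continuous, whence $p\in C^1$; differentiating $\dot q=\wK(q,q)p$ and using $\wK\in C^2$ together with $\dot q,\dot p\in C^0$ makes $\ddot q$ continuous, so $q\in C^2$. The genuinely delicate point is step (ii): the conserved Hamiltonian by itself does not bound the momenta, and the crux is to notice that passing to feature space linearizes the $\dot p$ equation once $\|\alpha\|_\F$ is known to be constant, which is precisely what closes the Gr\"onwall estimate.
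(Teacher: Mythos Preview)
Your proposal is correct and follows essentially the same route as the paper: both arguments pass to the feature-space form \eqref{eqkedmdledkemdlN}, invoke Prop.~\ref{propkjhdegd7} to freeze $\|\alpha\|_\F$, observe that $\dot p_i$ is linear in $p_i$ with a coefficient bounded by $\|\alpha\|_\F\sup_x\|\nabla\psi(x)\|$, and conclude via Gr\"onwall and Picard--Lindel\"of. Your write-up is in fact more complete than the paper's, which omits the explicit Gr\"onwall step, the remark that $\Hf$ alone does not control $\|p\|$, and the regularity bootstrap to $C^2\times C^1$.
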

\begin{proof}
\eqref{eqkedmdledkemdlN} implies that $\|\dot{p}_i\|_\Y \leq  \|p_i\|_\Y \|\alpha\|_\F \sup_x \|\nabla \psi(x)\|$.
Therefore Prop.~\ref{propkjhdegd7} implies that $p(t)$ remains in a bounded domain $B$ (for $t\in [0,1]$).
 The regularity of $\Gamma$ (Cond.~\ref{condeqlkedjehd7d})  implies that the vector field of \eqref{eqkedmdledkemdl} is uniformly Lipschitz for $p\in B$.  We conclude from the global version of the Picard-Lindel\"{o}f theorem \cite[Thm.~1.2.3]{arino2006fundamental}.
\end{proof}

\subsection{Geodesic shooting}\label{subsecgyf6hv7f6}
The Hamiltonian representation of  minimizers of \eqref{eqlsweedhsejd} enables its reduction to the search for an initial momentum $p(0)$. This method, known as geodesic shooting in image registration \cite{allassonniere2005geodesic}, is summarized in the following theorem.

\begin{Theorem}\label{thmlllakbhhd}
Write $p=\wK(q,q)^{-1}\dot{q}=$\eqref{eqlhedjkekdkj} for   $q\in  C^1([0,1],\X^N)$.
$q$ is a minimizer of
 \eqref{eqlsweedhsejd} if and only if $(q,p)$  follows the Hamiltonian dynamic  \eqref{eqkedmdledkemdl}, $q(0)=X$ and $p(0)$ is  a minimizer of
 \begin{equation}\label{eqkjlkjkejwkdj}
\Vk\big(p(0),X,Y\big):= \frac{\nu}{2}p^T(0) \wK\big(X,X\big) p(0)+ \ell(q(1),Y)\,.
 \end{equation}
Furthermore, $p(1)$ satisfies
\begin{equation}\label{eqkjdjhebdjhd}
\nu\, p(1) + \partial_{q(1)}\ell(q(1),Y)=0\,.
\end{equation}
\end{Theorem}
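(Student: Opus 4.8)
The plan is to reduce the infinite-dimensional least action problem \eqref{eqlsweedhsejd} to the finite-dimensional minimization of $\Vk$ over the single vector $p(0)\in\X^N$, using one calibration identity: along any solution of the Hamiltonian dynamic \eqref{eqkedmdledkemdl}, the running Lagrangian equals the conserved Hamiltonian. Indeed, substituting $\dot q=\wK(q,q)p$ into $\Lf$=\eqref{eqejkhekffkjefkfj} gives $\Lf(q,\dot q)=\tfrac12\dot q^T\wK(q,q)^{-1}\dot q=\tfrac12 p^T\wK(q,q)p=\Hf(q,p)$, and Thm.~\ref{thmlkndjd} asserts that $\Hf$ is conserved along the flow. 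Hence the action \eqref{eqactq} satisfies $\A[q]=\int_0^1\Hf\,dt=\Hf(q(0),p(0))=\tfrac12 p(0)^T\wK(X,X)p(0)$, so that for every Hamiltonian trajectory
\[
\nu\,\A[q]+\ell(q(1),Y)=\frac{\nu}{2}p(0)^T\wK(X,X)p(0)+\ell(q(1),Y)=\Vk\big(p(0),X,Y\big).
\]

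For the forward implication I would invoke Thm.~\ref{thmlkndjd}: a minimizer $q$ of \eqref{eqlsweedhsejd} necessarily solves \eqref{eqkedmdledkemdl} with $q(0)=X$ and some $p(0)$. By Thm.~\ref{thml2kjej2ke}, every choice of initial momentum $\tilde p(0)\in\X^N$ produces, through \eqref{eqkedmdledkemdl}, an admissible $C^1$ competitor $\tilde q$; applying the calibration identity to both $q$ and $\tilde q$ and using minimality of $q$ gives $\Vk(p(0),X,Y)=\nu\A[q]+\ell(q(1),Y)\le\nu\A[\tilde q]+\ell(\tilde q(1),Y)=\Vk(\tilde p(0),X,Y)$ for all $\tilde p(0)$, so $p(0)$ minimizes $\Vk$. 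For the reverse implication I would first note that $\Vk$ is continuous in $p(0)$ (continuous dependence of the flow \eqref{eqkedmdledkemdl} on its initial data) and coercive, since $\wK(X,X)\succeq rI$ by Cond.~\ref{condeqlkedjehd7d}; hence $\Vk$ attains its minimum on the finite-dimensional $\X^N$. Granting existence of a minimizer $q^\ast$ of \eqref{eqlsweedhsejd}, the forward implication shows $q^\ast$ is a Hamiltonian trajectory with value $J[q^\ast]=\min_{p(0)}\Vk(p(0),X,Y)$, so the minimal values of \eqref{eqlsweedhsejd} and of $\Vk$ coincide; consequently any Hamiltonian trajectory whose $p(0)$ minimizes $\Vk$ attains this common value and is a minimizer.

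For the transversality condition \eqref{eqkjdjhebdjhd} I would compute the first variation of $J[q]:=\nu\int_0^1\Lf(q,\dot q)\,dt+\ell(q(1),Y)$ as a free-endpoint problem, admitting variations $q+\epsilon h$ with $h(0)=0$ but $h(1)$ arbitrary. Integrating $\nu\int_0^1\partial_{\dot q}\Lf\cdot\dot h\,dt$ by parts produces the interior Euler--Lagrange term, which vanishes because $q$ solves \eqref{eqklkedjdd}, together with the boundary term $\big(\nu\,\partial_{\dot q}\Lf\big|_{t=1}+\partial_{q(1)}\ell(q(1),Y)\big)\cdot h(1)$. Since $\partial_{\dot q}\Lf=\wK(q,q)^{-1}\dot q=p$ by \eqref{eqlhedjkekdkj}, requiring the first variation to vanish for all $h(1)$ yields exactly $\nu\,p(1)+\partial_{q(1)}\ell(q(1),Y)=0$; here $\partial_{q(1)}\ell$ exists because $\ell$=\eqref{eqlkjdkjweedjkb} depends smoothly on its first argument through $K(q(1),q(1))$ under Cond.~\ref{condnugget}.

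The step I expect to be the main obstacle is the reverse implication, namely the passage from ``$p(0)$ minimizes the finite-dimensional $\Vk$'' to ``$q$ minimizes the infinite-dimensional \eqref{eqlsweedhsejd}'': this requires that \eqref{eqlsweedhsejd} admit a minimizer at all, so that Thm.~\ref{thmlkndjd} can be used to match $\min\eqref{eqlsweedhsejd}$ with $\min_{p(0)}\Vk$. Existence for this action can be obtained by the direct method---coercivity in $H^1$ from the upper bound $\wK(q,q)\preceq MI$ (boundedness of $\psi$ in Cond.~\ref{condeqlkedjehd7d}), weak lower semicontinuity from the continuity and positive-definiteness of $\wK$, and $C^1/C^2$ regularity from the Euler--Lagrange equation---but this is the one place where the argument goes beyond a formal manipulation of the calibration identity.
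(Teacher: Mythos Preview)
Your argument is correct and follows the same route as the paper: the first variation yields the Euler--Lagrange/Hamiltonian equations together with the natural boundary condition \eqref{eqkjdjhebdjhd}, and energy conservation along Hamiltonian trajectories collapses the action to $\tfrac12 p(0)^T\wK(X,X)p(0)$, reducing \eqref{eqlsweedhsejd} to the minimization of $\Vk$. The paper's proof says exactly this in two sentences and no more.

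Where you are more careful than the paper is the sufficiency (``if'') direction: you correctly observe that showing a Hamiltonian trajectory with minimizing $p(0)$ actually beats \emph{every} $C^1$ competitor---not just other Hamiltonian trajectories---requires knowing that $\inf J=\inf\Vk$, which in turn hinges on existence of a minimizer for $J$. Your direct-method sketch (coercivity from the uniform upper bound on $\wK$, weak lower semicontinuity, then bootstrap regularity via Euler--Lagrange) is a valid self-contained way to close this. The paper instead defers existence to Thm.~\ref{thmksbsahshkd}, where it proves $\Vk$ attains its minimum by compactness and then invokes the present theorem; an alternative route you might mention is that Thm.~\ref{thml2kjej2ke} gives global existence of geodesics, so by Hopf--Rinow every endpoint $q(1)$ is reachable by an action-minimizing Hamiltonian trajectory from $X$, which directly gives $\inf J\ge\inf\Vk$ without appealing to existence for $J$ itself. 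Either closure works; your identification of this as the nontrivial step is on point.
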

\begin{proof}
 Zeroing the Fr\'{e}chet derivative of \eqref{eqlsweedhsejd} with respect to the trajectory $q(t)$ implies  that a minimizer of \eqref{eqlsweedhsejd} must satisfy the Hamiltonian dynamic  \eqref{eqkedmdledkemdl} and the boundary condition \eqref{eqkjdjhebdjhd} (which is analogous to the one obtained in image registration \cite[Eq.~7]{allassonniere2005geodesic}).
 Since the energy $p^T \wK(q,q) p/2$ is preserved along the Hamiltonian flow, the minimization of \eqref{eqlsweedhsejd} can be reduced to that of \eqref{eqkjlkjkejwkdj} with respect to $p(0)$.
\end{proof}

\subsection{Idea registration.}\label{secjhghgyuygy}

Instead of reducing \eqref{eqlkjeddjedhjd} to \eqref{eqkljedjnedjd}, consider its infinite depth limit
 and observe that, in the limit $L\rightarrow \infty$, $(I+v_k)\circ \cdots \circ (I+v_1)$ approximates (at time $t_k:=\frac{k}{L}$) the flow map
   $\phi^v$
 where $v$ is a
    minimizer of\footnote{Observe that, with $\ell=$\eqref{eqledhehdiudh}, \eqref{eqlkjgehgddjedhjd} is equivalent to minimizing \eqref{eqlkjgehgddjedhjdB}.}
\begin{equation}\label{eqlkjgehgddjedhjd}
\begin{cases}
\text{Minimize } &\frac{\nu}{2}\,\int_0^1 \|v\|_{\Gamma}^2\,dt+\ell\big(\phi^v(X,1),Y\big)\\
\text{over }&v \in C([0,1],\H_\Gamma)\,.
\end{cases}
\end{equation}
The  proof of this convergence, stated  in  Thm.~\ref{thmhgw7gfdd}, is based on the following reduction theorem which establishes that,  minimizers of \eqref{eqlkjgehgddjedhjdB} have, as in landmark matching \cite{joshi2000landmark},
 the representation
  \begin{equation}\label{eqttjjkjtc}
\dot{\phi}^v(x,t)= \wK(\phi^v(x,t),q) p\,,
\end{equation}
where the position and momentum  variables $(q,p)$ are  in $\X^N\times \X^N$, started from $q(0)=X$, and following the dynamic \eqref{eqkedmdledkemdl} defined by the  Hamiltonian \eqref{eqkjbdejhdbeyudbs}.
Therefore (Thm.~\ref{thmksbsahshkd}), the norm $\|v(\cdot,t)\|^2_\Gamma$ (of the weights and biases in the continuous infinite depth limit) must be a constant over $t\in [0,1]$. Furthermore \eqref{ejkhdbejhdbd} is a first-order variational/symplectic integrator
 for approximating the Hamiltonian flow of \eqref{eqkjbdejhdbeyudbs}.

\begin{Theorem}\label{thmkljedjnedesdjd}
  $v$ is a minimizer of \eqref{eqlkjgehgddjedhjd} if and only if
\begin{equation}\label{eqtttc}
\dot{\phi}^v(x,t)= \wK(\phi^v(x,t),q_t) \wK(q_t,q_t)^{-1}\dot{q}_t \text{ with } \phi^v(x,0)=x\in \X\,
\end{equation}
where $q$ is a minimizer of the least action principle \eqref{eqlsweedhsejd}. Furthermore (defining $\A[q]$ as in \eqref{eqactq}),
for $q \in C^1([0,1],\X^N)$,
\begin{equation}\label{eqkjkjhdjhe}
\A[q]=\inf_{v\in C([0,1],\H_\Gamma)\,:\, \phi^v(q(0),t)=q(t)\,\forall t\in [0,1]} \int_0^1 \frac{1}{2}\|v\|_{\Gamma}^2\,dt\,,
\end{equation}
and the representer PDE \eqref{eqtttc} can be written as \eqref{eqttjjkjtc},
where $(q,p)$ is the solution of the Hamiltonian system \eqref{eqkedmdledkemdl} with initial condition $q(0)=X$ and $p(0)$ identified as a minimizer of \eqref{eqkjlkjkejwkdj}.
\end{Theorem}
\begin{proof}
The proof of \eqref{eqtttc} and \eqref{eqkjkjhdjhe} is identical to that of  Thm.~\ref{thmeqkljedjnedjd}. \eqref{eqttjjkjtc} follows from
theorems \ref{thmlkndjd} and \ref{thmlllakbhhd}.
\end{proof}

Figure \ref{figmr} summarizes the correspondence between the least  action principles obtained from
\eqref{eqlkjeddjedhjd} under reduction and/or infinite depth limit.

 \begin{figure}[h!]
	\begin{center}
			\includegraphics[width= \textwidth]{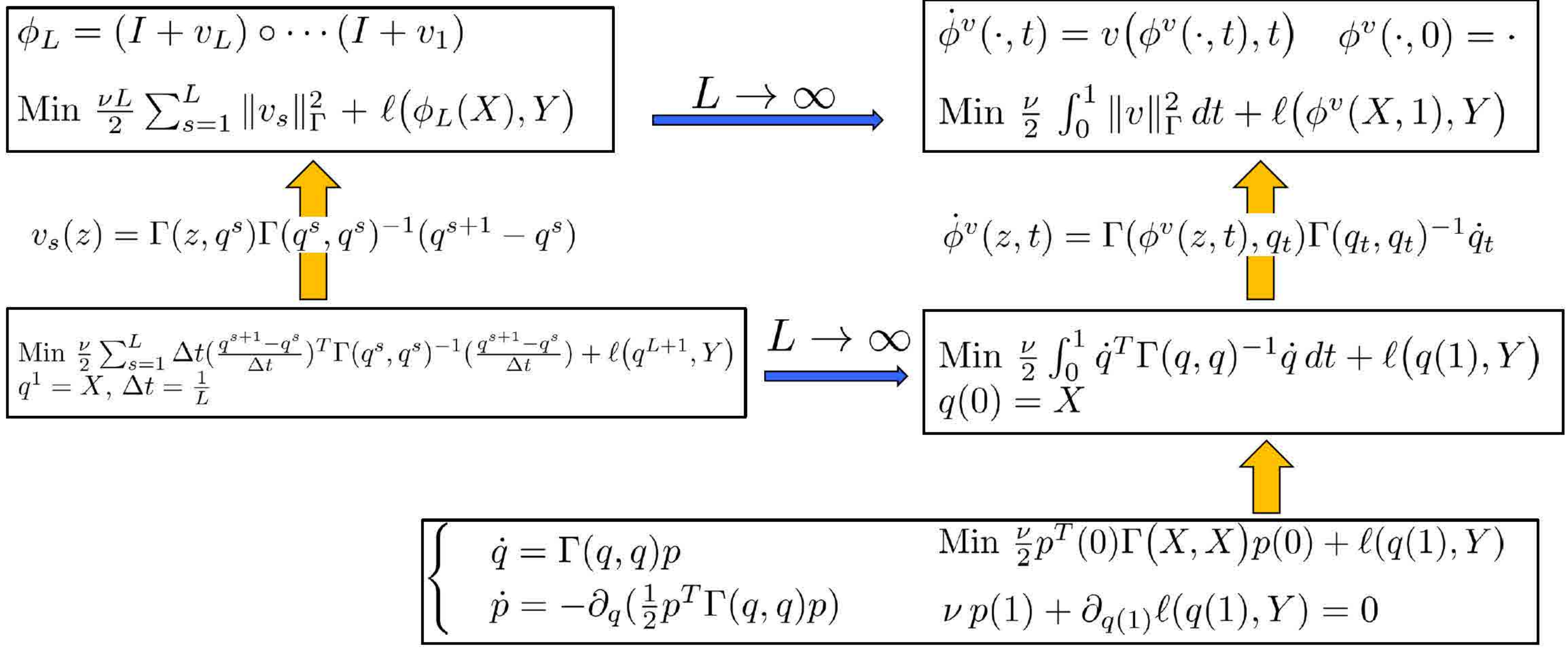}
		\caption{Least action principles  after reduction and/or infinite depth limit.}\label{figmr}
	\end{center}
\end{figure}

\subsection{Information in momentum variables and sparsity}\label{secspa}

Consider the Hamiltonian system \eqref{eqkedmdledkemdl}. While  $q(t)$ has a clear interpretation as the displacement of the input data $X$ (at layer $t$ of the continuous limit network), that of the momentum variable $p$ is less transparent.
  The following theorem shows that the entry $p_i$ of $p$ is zero if $q_i(1)$ does not contribute to the loss. Therefore, as with support vector machines \cite{steinwart2008support}, if $\ell_\Y$ is the
 {\bf hinge loss}
\begin{equation}\label{eqlklnkjndejd}
\ell_\Y(Y',Y)=\sum_{i=1}^N  \big(Y_{i,\text{class}(Y_i)}'-\max_{j\not=\text{class}(Y_i)}Y_{i,j}'-1\big)_+\,,
\end{equation}
 used for classification problems\footnote{\eqref{eqlklnkjndejd} seeks to maximize the margin between correct and incorrect labels and is defined for $\Y=\R^{d_\Y}$ by writing $Y_{i,j}'$ for the entries of $Y_i'$, using $\text{argmax}_j Y_{i,j}'$ for the predicted label for the data $i$,
 setting $\text{class}(Y_i)=j$ if the label/class of $X_i$ is $j$ and writing  $a_+:=\max(a,0)$.}, then the only points $X_i$ with non zero momentum are those for which
  $\phi^v(X_i,1)$ is included in the (hinge loss) margin.
  In that sense $p_i$ represents the contribution of the data point $(X_i,Y_i)$ to the predictor $\phi^v(\cdot,1)$ obtained from
  \eqref{eqlkjgehgddjedhjd} and \eqref{eqttjjkjtc}.
 This phenomenon, clearly illustrated in Fig.~\ref{figspiral2}, is analogous to the sparse representations obtained with support vector machines \cite{steinwart2008support} where
the predictor is represented with the subset of the training points (the support vectors) within the safety margin of the hinge loss.

\begin{Theorem}\label{themkjehdbeddh}
Let $(q,p)$ be  the solution of the Hamiltonian system \eqref{eqkedmdledkemdl} with initial state $q(0)=X$ and $p(0)$ minimizing \eqref{eqkjlkjkejwkdj}.
For $i\in \{1,\ldots,N\}$, it holds true that $p_i(t)=0$ for all $t\in [0,1]$ if and only if $\partial_{q_i(1)} \ell(Y,q(1))=0$.
\end{Theorem}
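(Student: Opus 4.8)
The plan is to reduce the statement to a backward-uniqueness question for the momentum equation, feeding in the geodesic-shooting boundary condition of Theorem~\ref{thmlllakbhhd}. Since $p(0)$ minimizes \eqref{eqkjlkjkejwkdj}, that theorem supplies the terminal identity $\nu\,p(1)+\partial_{q(1)}\ell(q(1),Y)=0$, i.e.\ componentwise $\nu\,p_i(1)=-\partial_{q_i(1)}\ell(q(1),Y)$. Hence $p_i(1)=0$ if and only if $\partial_{q_i(1)}\ell=0$, and the theorem becomes equivalent to the assertion that $p_i(t)=0$ for all $t\in[0,1]$ if and only if $p_i(1)=0$. The forward implication is immediate (restrict to $t=1$); the content is that $p_i(1)=0$ forces $p_i\equiv 0$ on $[0,1]$.

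The key structural observation is that the $i$-th momentum equation is \emph{linear and homogeneous} in $p_i$. Indeed, the feature-space form \eqref{eqkedmdledkemdlN} gives $\dot p_i=-\partial_x\big(p_i^T\psi^T(x)\alpha\big)\big|_{x=q_i}$, and since $\alpha=\sum_j \psi(q_j)p_j$ does not depend on $x$, the right-hand side is an explicitly linear function of $p_i$ whose coefficient is built from $q_i(t)$ and $\alpha(t)$. Equivalently, with $\Hf=\tfrac12\|\alpha\|_\F^2$, the only dependence of $\Hf$ on $q_i$ enters through the single summand $\psi(q_i)p_i$ of $\alpha$, so differentiating in $q_i$ brings down a factor $p_i$. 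Along the now-fixed minimizing trajectory $(q(\cdot),p(\cdot))$ I would therefore define a continuous operator-valued coefficient $M_i(t)$ (continuous by the uniform regularity of $\psi$ and its first derivative in Cond.~\ref{condeqlkedjehd7d}, together with the $C^2\times C^1$ regularity of Thm.~\ref{thml2kjej2ke}) such that $\dot p_i(t)=M_i(t)\,p_i(t)$ holds as an identity on $[0,1]$.

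The argument then closes by backward uniqueness. The function $p_i(\cdot)$ solves the linear homogeneous ODE $\dot y=M_i(t)y$, and so does $y\equiv 0$; if $p_i(1)=0$ the two solutions share a terminal value, so by the Gronwall/Picard-Lindel\"{o}f uniqueness theorem, integrated backward from $t=1$, they coincide, giving $p_i(t)=0$ for all $t\in[0,1]$. Combined with the first paragraph this yields both implications.

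The main obstacle is the middle step: establishing cleanly that every term of $\partial_{q_i}\Hf$ carries an explicit factor $p_i$ — so that the momentum equation is homogeneous linear in $p_i$, not merely vanishing when $p_i=0$ — and that the resulting coefficient $M_i(t)$ is genuinely continuous, so that backward uniqueness applies verbatim. The cleanest route is the feature-space rewriting $\Hf=\tfrac12\|\alpha\|_\F^2$, where the $q_i$-dependence is visibly confined to the term $\psi(q_i)p_i$; the only care needed is in handling the operator (rather than matrix) structure of $\partial_x\psi$ and in invoking the uniform bounds on $\psi$ and its derivatives from Cond.~\ref{condeqlkedjehd7d}.
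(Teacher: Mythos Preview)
Your proof is correct and follows the same two-step structure as the paper: the terminal condition \eqref{eqkjdjhebdjhd} from Thm.~\ref{thmlllakbhhd} reduces the claim to showing $p_i(1)=0\Rightarrow p_i\equiv 0$, which the paper isolates as Lem.~\ref{lemkjhedbhdbdd}. The only minor difference is in the propagation argument: where you freeze the trajectory, read off that $\dot p_i=M_i(t)p_i$ is linear homogeneous, and invoke backward uniqueness for this linear ODE directly, the paper instead observes that the submanifold $\{p_i=0\}$ is invariant under the full Hamiltonian flow (since $\dot p_i=0$ there) and then uses the time-reversal symmetry $t\mapsto(q,-p)(1-t)$ of the Hamiltonian system to carry the vanishing from $t=1$ back to $t=0$. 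Both arguments are equivalent and equally short; yours avoids the time-reversal step at the cost of spelling out the linear structure.
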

\begin{proof}
Combine   \eqref{eqkjdjhebdjhd} with  Lem.~\ref{lemkjhedbhdbdd}.
\end{proof}
\begin{Lemma}\label{lemkjhedbhdbdd}
Let $(q,p)$ be a solution of the Hamiltonian system \eqref{eqkedmdledkemdl}. If $p_i(t_0)=0$ for some $t_0\in [0,1]$ then $p_i(t)=0$ for all $t\in [0,1]$.
\end{Lemma}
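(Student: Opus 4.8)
The plan is to show that each momentum block $p_i$ obeys a homogeneous linear differential inequality, so that vanishing at a single instant propagates to the whole interval $[0,1]$. The structural fact making this work is that, in feature coordinates, $\dot p_i$ is proportional (degree one) to $p_i$, despite the global coupling of the momenta.

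First I would pass to the feature-space form \eqref{eqkedmdledkemdlN} of the Hamiltonian dynamics \eqref{eqkedmdledkemdl}, in which the $i$-th momentum evolves by
\[
\dot{p}_i=-\,\partial_x\big(p_i^T\psi^T(x)\,\alpha\big)\big|_{x=q_i},\qquad \alpha=\sum_{j=1}^N\psi(q_j)p_j
\]
(with $\alpha$ as in \eqref{eqkedmdbhledkemdlN3}). The essential observation is that, although $\alpha$ couples all the momenta together, the factor $p_i$ enters \emph{explicitly} on the left and is untouched by the differentiation $\partial_x$, which acts only on $\psi^T(x)$. Hence every term of $\dot p_i$ carries a factor $p_i$, and one recovers precisely the estimate already derived in the proof of Thm.~\ref{thml2kjej2ke}, namely
\[
\|\dot{p}_i\|_\X\le \|p_i\|_\X\,\|\alpha\|_\F\,\sup_x\|\nabla\psi(x)\|.
\]

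Next I would note that the coefficient $c:=\|\alpha\|_\F\,\sup_x\|\nabla\psi(x)\|$ is a finite constant on $[0,1]$: the norm $\|\alpha(t)\|_\F$ is constant in $t$ by Prop.~\ref{propkjhdegd7}, and $\sup_x\|\nabla\psi(x)\|<\infty$ under Cond.~\ref{condeqlkedjehd7d} (the solution being well defined and bounded by Thm.~\ref{thml2kjej2ke}). Setting $h(t):=\|p_i(t)\|_\X^2$, I would then estimate $|\dot h|=2\,|\langle p_i,\dot p_i\rangle_\X|\le 2\|p_i\|_\X\|\dot p_i\|_\X\le 2c\,h$, a homogeneous linear differential inequality valid in both time directions. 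Grönwall's inequality gives $h(t)\le h(t_0)\,e^{2c|t-t_0|}$ for every $t\in[0,1]$; since $h(t_0)=\|p_i(t_0)\|_\X^2=0$, this forces $h\equiv 0$, i.e. $p_i(t)=0$ for all $t\in[0,1]$, which is the claim.

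The only delicate point is the homogeneity of $\dot p_i$ in $p_i$: because $\alpha$ itself contains the summand $\psi(q_i)p_i$, one might worry about an extra, non-factorable dependence on $p_i$. This is harmless, since in evaluating $\dot p_i$ at a fixed time one treats $\alpha=\alpha(t)$ as a frozen element of $\F$ (the $x$-derivative does not differentiate it), so the explicit leading factor $p_i^T$ is never consumed and the displayed bound holds with $\|\alpha(t)\|_\F$ \emph{constant}. With this established, the Grönwall argument is immediate and the lemma follows.
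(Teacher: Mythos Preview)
Your proof is correct and rests on the same structural observation as the paper's: in the $i$-th momentum equation every term carries an explicit factor of $p_i$, so $\dot p_i$ is homogeneous of degree one in $p_i$. The paper phrases this as ``$\dot p_i(t)=0$ whenever $p_i(t)=0$'' and concludes ``by integration'' (i.e., ODE uniqueness), then invokes the time-reversal symmetry $t\mapsto(q,-p)(1-t)$ to handle $t<t_0$; you instead make the linear bound $\|\dot p_i\|\le c\|p_i\|$ explicit and apply Gr\"onwall in both directions, which is arguably the cleaner way to make the paper's ``by integration'' rigorous.
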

\begin{proof}
$\dot{p}=- \partial_q ( \frac{1}{2} p^T  \wK(q, q) p)$ implies that $\dot{p}_i(t)=0$ if $p_i(t)=0$ and $p_i(t)=0$ for  $t\geq t_0$ follows by integration. Since the time reversed trajectory $t \rightarrow \big(q,-p\big)(1-t)$ also satisfies the Hamiltonian system \eqref{eqkedmdledkemdl} the result also follows by integration for $t\in [0,t_0]$.
\end{proof}

\section{Regularization}\label{secregular}

The landmark matching \cite{joshi2000landmark} setting of Sec.~\ref{seclkjdekjdhjd}  requires  non-overlapping  data, and minimizers, and minimal values obtained from that setting may depend non-continuously  on the input $X$ (since $\Gamma(X,X)$ will become singular as $X_i \rightarrow X_j$ for some $i\not=j$). To ensure continuity and avoid singularities, idea registration must be regularized as it is commonly done in image registration \cite{micheli2008differential}. The proposed regularization, introduced and analyzed in this section, provides an alternative to Dropout for ANNs \cite{srivastava2014dropout}.

\subsection{ResNets/ANNs are brittle because Bayesian inference is brittle}
Minimal values and minimizers of  \eqref{eqlktddsyeytdsedhjd} and \eqref{eqlkjgehgddjedhjdB} may not be continuous in the data $X$. Furthermore,
the observation of Sec.~\ref{subsecgytref67f6} that warping regression can be interpreted as performing ridge regression with a data-dependent prior suggests Bayesian brittleness (the extreme lack of robustness of Bayesian posterior values with respect to the prior \cite{owhadi2015brittlenessa, owhadi2015brittlenessb, owhadi2013brittleness}) as a cause for the high sensitivity of ANNs with respect to the testing data $x$ or the training data $X$ reported in \cite{szegedy2013intriguing} (this lack of stability was predicted in \cite{brittlnessmachinelearning} based on \cite{owhadi2015brittlenessa}).
This fragility endures even if the training data $X$ is randomized \cite{owhadi2017qualitative} and may not be resolved without loss of accuracy since robustness and accuracy/consistency are conflicting requirements \cite{owhadi2015brittlenessb, owhadi2017qualitative}.
The Hamiltonian representation \eqref{eqkjdkedkjnd} of minimizers of warping regression/idea registration  variational problems suggests Hamiltonian chaos \cite{casetti1996riemannian} as another cause of the instability of ResNets/ANNs (from this dynamical  perspective, the instability of ResNets is related to the curvature
fluctuations of the metric defined by $\Gamma(q,q)$ \cite{casetti1996riemannian}) and that  Lyapunov characteristic exponents could also be used as a measure of instability for ANNs.

\subsection{A simple  rigorous regularization strategy}\label{subsecreg1}
To ensure continuity,  \eqref{eqlktddsyeytdsedhjd} and \eqref{eqlkjgehgddjedhjdB} must be regularized  and we now generalize  an image registration regularization  strategy \cite{micheli2008differential}
 to idea registration.
The proposed regularization strategy can be summarized as approximating $f^\dagger$
with $f^\ddagger=f\circ \phi_L=$\eqref{eqkjedjdiseu} where $\phi_L=(I+v_L)\circ \cdots \circ (I+v_1)=\eqref{eqkjehdbehdhjbd}$, and  $(v_1,\ldots,v_L,f)$ are identified by minimizing the following regularized version of \eqref{eqlktddsyeytdsedhjd}.
  \begin{equation}\label{eqlktddsyeytdsedhjdreg}
\begin{cases}
\text{Minimize } &\frac{\nu}{2}\,L\sum_{s=1}^L \big(\|v_s\|_{\Gamma}^2+ \frac{1}{r} \|q^{s+1}-(I+v_s)(q^s)\|_{\X^N}^2\big)
\\&+\lambda\,\big(\|f\|_{K}^2+\frac{1}{\rho} \|f(q^{L+1})-Y'\|_{\Y^N}^2 \big)+\ell_\Y\big(Y',Y\big)\\
\text{over }&v_1,\ldots,v_L \in \H_\Gamma,\, f\in \H_K,\, q^1,\ldots,q^{L+1} \in \X^N,\, q^1=X,\, Y'\in \X^N\,,
\end{cases}
\end{equation}
where  $r,\rho>0$ are regularization parameters (akin to the nuggets employed in Kriging/spatial statistics \cite{schafer2021sparse}) and
$\|X\|_{\X^N}:=\sum_{i=1}^N \|X_i\|_\X^2$, $\|Y\|_{\Y^N}:=\sum_{i=1}^N \|Y_i\|_\Y^2$.
Note that for $\rho\downarrow 0$ and $\ell_\Y=\eqref{eqjhguyghfuvf4}$, \eqref{eqlktddsyeytdsedhjdreg} reduces to
 \begin{equation}\label{eqlktddsyeytdsedhjdregintro}
\begin{cases}
\text{Minimize } &\frac{\nu}{2}\,L\sum_{s=1}^L \big(\|v_s\|_{\Gamma}^2+ \frac{1}{r} \|q^{s+1}-(I+v_s)(q^s)\|_{\X^N}^2\big)
\\&+\lambda\,\|f\|_{K}^2+ \|f(q^{L+1})-Y\|_{\Y^N}^2 \\
\text{over }&v_1,\ldots,v_L \in \H_\Gamma,\, f\in \H_K,\, q^2,\ldots,q^{L+1} \in \X^N,\, q^1=X\,,
\end{cases}
\end{equation}
and $r>0$ relaxes the constraint that the
input data $X$ must propagate without error through each layer of the network. Note that as $r\downarrow 0$,  the trajectory defined by $q^s$ satisfies  $q^{s+1}=q^s+v_s(q_s)$ (with $q^1=X$).
Using the computational graph representation of Sec.~\ref{secresnetb}, for $L=3$, the solution obtained by minimizing \eqref{eqlktddsyeytdsedhjdregintro} can be identified by replacing the $v_i$ with $\cN(0,\Gamma)$ GPs, $f$ with a $\cN(0,K)$ GP and computing their MAP estimators  given the structure/data represented by the following computational graph,\\
\centerline{
\begin{tikzpicture}[->,>=stealth',shorten >=1pt,auto,node distance=2.5cm,
                    thick,main node/.style={rectangle,draw,font=\sffamily\Large\bfseries}]

\node[main node] (1) {$x$};
\node[main node] (2) [right of=1]  {$q_2$};
\node[main node] (3) [right of=2] {$q_3$};
\node[main node] (4) [right of=3] {$q_4$};
\node[main node] (5) [right of=4] {$y$};
\node[main node] (6) [above of=2,blue,node distance=1.5cm] {$z_1$};
\node[main node] (7) [above of=3,blue,node distance=1.5cm] {$z_2$};
\node[main node] (8) [above of=4,blue,node distance=1.5cm] {$z_3$};
\node[main node] (9) [above of=5,blue,node distance=1.5cm] {$z$};

\path[every node/.style={font=\sffamily\Large\bfseries}]
    (1) edge node [below ] {} (2)
    (1) edge  [bend left,red] node[above ] {$v_1$} (2);

\path[every node/.style={font=\sffamily\Large\bfseries}]
    (2) edge node [below ] {} (3)
    (2) edge  [bend left,red] node[above ] {$v_2$} (3);

\path[every node/.style={font=\sffamily\Large\bfseries}]
    (3) edge node [below ] {} (4)
    (3) edge  [bend left,red] node[above ] {$v_3$} (4);

\path[every node/.style={font=\sffamily\Large\bfseries},red]
    (4) edge node [below ] {$f$} (5);

\path[every node/.style={font=\sffamily\Large\bfseries}]
    (6) edge node [right ] {} (2);

\path[every node/.style={font=\sffamily\Large\bfseries}]
    (7) edge node [right ] {} (3);

\path[every node/.style={font=\sffamily\Large\bfseries}]
    (8) edge node [right ] {} (4);

\path[every node/.style={font=\sffamily\Large\bfseries}]
    (9) edge node [right ] {} (5);


\tikzstyle{every to}=[draw,dashed]
\draw[dashed] (1) to[out=-90,in=-90,looseness=0.3,style={font=\sffamily\Large,dashed}] node[above] {$(X,Y)$} (5);

\end{tikzpicture}
}
which can be unpacked as $q_2=v_1(x)+x+z_1$, $q_3=v_2(q_2)+q_2+z_2$, $q_4=v_3(q_3)+q_3+z_3$, $y=f(q_4)+z$ where $z_1,z_2,z_3,z$ are $\cN(0,r I_\Y)$ Gaussian random variables.

\subsection{Regularized ResNets}

When performed with activation functions as in the setting of Sec.~\ref{subresnetpar} ($\Gamma(x,x')=\bvarphi^T(x) \bvarphi(x') I_\X$ and $K(x,x')=\bvarphi^T(x) \bvarphi(x') I_\Y$ with $\bvarphi(x)=\big(\ba(x),1\big)$), the proposed regularization provides a principled alternative to Dropout\footnote{Although dropout does not appear to change the training loss function, the  stochasticity introduced in the network edges implies that the network is trained with an effective loss in which the output values (at all layers) are, as with our proposed approach, a stochastic perturbation of those of the testing map.} for ANNs \cite{srivastava2014dropout}.
In the setting of one ResNet block, this regularization does not change the functional form \eqref{eqkjhebkjhedbd} of the block but replaces (Thm.~\ref{thmidjheyd88h2ddde}) the training \eqref{eqjkehbwjhebdhdjq} of the weights and biases by the minimization of
\begin{equation}\label{eqlkjgedehgddjededdeddddsdddsshjssdBN2intro}
\begin{split}
\min_{w^s, \tilde{w}, q^s}
&\frac{\nu L}{2} \sum_{s=1}^L \big(\|w^s\|_{\L(\X\oplus \R,\X)}^2
+\frac{1}{r} \|q^{s+1}-q^s -
w^s \bvarphi(q^s)\|_{\X^N}^2
\big)
\\&+
\lambda\,\|\tilde{w}\|_{\L(\X\oplus \R,\Y)}^2+\|\tilde{w}\bvarphi(q^{L+1})-Y\|_{\Y^N}^2\,.
\end{split}
\end{equation}
 Note that training with regularization is equivalent to replacing the exact propagation $q^{s+1}=q^s+w^s \bvarphi(q^s)$   of the input data by $q^{s+1}=q^s+w^s \bvarphi(q^s) +Z^s$  ($Y'=\tilde{w}\bvarphi(q^{L+1})+Z$) where the $Z^s$ and $Z$ are propagation error variables ($Z^s\in \X^N$, $Z\in \Y^N$) whose norms are added to the total loss at the training stage.
 Indeed minimizing \eqref{eqlkjgedehgddjededdeddddsdddsshjssdBN2intro} is equivalent to minimizing
\begin{equation}\label{eqlkjgedehgddjededdeddddsdddsshjssdBN2intro2}
\begin{cases}
&\underset{w^s, \tilde{w}, q^s, Z^s, Z}{\min}
\frac{\nu L}{2} \sum_{s=1}^L \big(\|w^s\|_{\L(\X\oplus \R,\X)}^2
+\frac{1}{r} \|Z^s\|_{\X^N}^2
\big)
+
\lambda\,\|\tilde{w}\|_{\L(\X\oplus \R,\Y)}^2+\|Z\|_{\Y^N}^2\,, \\
&\text{s.t. } q^1=X\,,\quad q^{s+1}=q^s +w^s \bvarphi(q^s)+Z^s\, \text{ and } \tilde{w}\bvarphi(q^{L+1})+Z=Y
w^s \bvarphi(q^s)
\end{cases}
\end{equation}
These slack variables have, as in Tikhonov regularization, a natural interpretation as Gaussian noise added to the output of each layer at the training stage.
In particular $r$  plays the same role as $\lambda^{-1}$ in the ridge regression \eqref{eqledhehdiudh} with $\ell_\Y=$\eqref{eqjhguyghfuvf4} (it can be interpreted as variances of propagation errors) and
\eqref{eqlkjgedehgddjededdeddddsdddsshjssdBN2intro} converges to \eqref{eqjkehbwjhebdhdjq} as $r \downarrow 0$.
While reducing overfitting by training with noise seems to have been exhaustively explored
(variants include adding noise to the input data \cite{holmstrom1992using}, to the weights and biases \cite{an1996effects} and to the activation functions
\cite{gulcehre2016noisy}), the proposed approach seems to be distinct in the sense that the $Z^s$ and $Z$ are variables to be trained alongside the weights and biases of the network.
Furthermore, since
the proposed strategy is equivalent to
adding the nugget $r$  to the kernel $\Gamma$  in \eqref{eqlktddsyeytdsedhjd} ($\Gamma \rightarrow \Gamma + r I$ where $I$ is the identity operator), it
is the natural generalization of the regularization strategy employed in kriging. In particular, as in kriging,  the noise variables $z_s$ and $z$ are set to zero at the testing stage.
To summarize, all inference methods are characterized by a tradeoff between accuracy and robustness \cite{owhadi2017qualitative, bajgiran2021uncertainty}. Adding a propagation error moves this tradeoff towards robustness and ensures the robustness/stability of the network with respect to both training and testing data. When implementing this approach, the constraint $q^{s+1}=q^s +w^s \bvarphi(q^s)+Z^s$ should be used to eliminate the noise variables, which reduces
 \eqref{eqlkjgedehgddjededdeddddsdddsshjssdBN2intro2}  to \eqref{eqlkjgedehgddjededdeddddsdddsshjssdBN2intro} (\eqref{eqlkjgedehgddjededdeddddsdddsshjssdBN2intro} is the loss to be implemented).

\subsection{Simple and rigorous regularization strategy for ANNs}
The regularization strategy presented in Sec.~\ref{subsecreg1} is generic. It naturally extends to arbitrary ANNs, and computational graph completion problems \cite{owhadi2021computational} and can be employed to make them robust as nuggets are employed to make kriging robust.
In particular, for a vanilla ANN of the form $f_1\circ \cdots \circ f_L$, where the $f_i$ are layers of the network, while the traditional approach is to minimize $\|f_1\circ \cdots \circ f_L(X)-Y\|_{\Y^N}$ with possibly a small weights (e.g. $L^2$) regularization $+\sum_{s=1}^N \|w_s\|^2_{L^2}$, the proposed approach (leading to rigorous stabilization of the underlying ANN) is to minimize
\begin{equation}\label{eqhjwgdhgedy}
\|q^L-Y\|_{\Y^N}+\lambda \sum_{s=1}^L \|w_s\|^2_{L^2}+\sum_{s=1}^L \rho^{-1}\|q^{s+1}-f_s(q^s)\|_{L^2}^2
\end{equation}
with respect to weights and the trajectory $q^s$ (initiated at $q^1=X$).
While \eqref{eqhjwgdhgedy} has  been proposed \cite{carreira2014distributed, choromanska2019beyond} as a distribution optimization approach to
relax backpropagation in deeply nested networks; our following analysis suggests that it should also be employed to ensure the robustness of the underlying ANN.

\begin{Remark}
Note that the loss $\mathcal{A}(\rho)=\eqref{eqhjwgdhgedy}$ is a decreasing function of $\rho$ (the strength of the regularization). Since this loss is equal to
$\|q^L-Y\|_{\Y^N}+\lambda \sum_{s=1}^L \|f_s\|^2_{K_s}+\sum_{s=1}^L \rho^{-1}\|q^{s+1}-f_s(q^s)\|_{L^2}^2$ (where the kernels $K_s$ are as in
 Sec.~\ref{subresnetpar}) it follows that the value of $\|q^L-Y\|_{\Y^N}+\lambda \sum_{s=1}^L \|f_s\|^2_{K_s}$ when training with $\rho>0$ is smaller than that of $\|q^L-Y\|_{\Y^N}+\lambda \sum_{s=1}^L \|f_s\|^2_{K_s}$ when training with $\rho=0$. This implies that the underlying RKHS norms of the functions associated with the layers of the network are smaller when training with regularization $\rho>0$. In that sense $\rho>0$ avoids overfitting at all layers of the network by balancing the RKHS norms of the $f_s$ with their propagation errors.
It is well understood in numerical approximation/Krigging that interpolation (generalization) errors are small if the target function has a small RKHS norm with respect to the underlying kernel \cite{OwhScobook2018}. Although we have not quantified the impact of regularization on generalization, the error bounds presented in Sec.~\ref{subsecprobahoeuwgediu} suggest that a similar mechanism could lead to improved generalization for ANNs.
\end{Remark}

\subsection{Reduction to a discrete least action principle}
We will now analyze minimizers of \eqref{eqlktddsyeytdsedhjdreg}. Given the added regularization, Cond.~\ref{condnugget} and \ref{condeqlkedjehd7d} can, as described below, be relaxed to the following conditions, which we assume to hold true in this section.
\begin{Condition}\label{condnuggetreg}
Assume that (1) $\X$ and $\Y$ are finite-dimensional
 (2) $x\rightarrow K(x,x')$ is continuous for all $x'$, and (3)
 $(x,x')\rightarrow \wK(x,x')$ and its first and second order partial derivatives are continuous and uniformly bounded.
\end{Condition}

Minimizing in $f$ and $Y'$ first, \eqref{eqlktddsyeytdsedhjdreg} is equivalent to
 \begin{equation}\label{eqlktddsyeytdsefsreg}
 \begin{cases}
\text{Minimize } &\frac{\nu}{2}\,L\sum_{s=1}^L \big(\|v_s\|_{\Gamma}^2+ \frac{1}{r} \|q^{s+1}-(I+v_s)(q^s)\|_{\X^N}^2\big)+\ell\big(q^{L+1},Y\big)\\
\text{over }&v_1,\ldots,v_L \in \H_\Gamma,\, \, q^1,\ldots,q^{L+1} \in \X^N,\, q^1=X\,.
\end{cases}
\end{equation}
where $\ell\,:\, \X^N \times \Y^N \rightarrow [0,\infty]$  is the loss defined by
 \begin{equation}\label{eqlktdelldreg}
 \ell(X',Y):=
\begin{cases}
\text{Minimize } &\lambda\,\big(\|f\|_{K}^2+\frac{1}{\rho} \|f(X')-Y'\|_{\Y^N}^2 \big)+\ell_\Y\big(Y',Y\big)\\
\text{over }&f\in \H_K,\, Y'\in \Y^N\,.
\end{cases}
\end{equation}

For $q\in \X^N$, write $\wK_r(q,q):=\wK(q,q)+r I$ and $K_\rho(q,q):=K(q,q)+\rho I$
for the $N\times N$ block operator
matrices with blocks $\wK(q_i,q_j)+ r \delta_{i,j}  I_\X $ and
$K(X_i,X_j)+\rho  \delta_{i,j} I_\Y$ (writing $I_\X$ ($I_\Y$) for the identity operator on $\X$ ($\Y$)).

\begin{Theorem}\label{thmeqkljedjnedjdreg}
 $(v_1,\ldots,v_L ,f)$,  is  a minimizer of \eqref{eqlktddsyeytdsedhjdreg} if and only if
\begin{equation}\label{eqkjdkedkjndreg}
v_s(x)= \wK( x, q^s) \wK_r(q^s,q^s)^{-1} (q^{s+1}-q^s) \text{ for } x\in \X, s\in \{1,\ldots,L\}\,,
\end{equation}
where $q^1,\ldots,q^{L+1} \in \X^N$ is a minimizer of (write $\Delta t:=1/L$)
\begin{equation}\label{eqlsedhjdreg}
\begin{cases}
\text{Minimize } &\frac{\nu}{2}  \sum_{s=1}^L  (\frac{q^{s+1}-q^s}{\Delta t})^T \wK_r(q^s,q^s)^{-1} (\frac{q^{s+1}-q^s}{\Delta t})\, \Delta t+
\ell\big(q^{L+1},Y\big)\\
\text{over } &q^2,\ldots,q^{L+1} \in \X^N
\text{ with }q^1=X\,,
\end{cases}
\end{equation}
and  $ f$ is a minimizer of \eqref{eqlktdelldreg} defining $\ell(q^{L+1},Y)$. Furthermore $f$ is a minimizer of \eqref{eqlktdelldreg} defining $\ell(X',Y)$ if and only if
\begin{equation}\label{eqkjhebdjehbdhd}
f(\cdot)= K(\cdot,X') V
\end{equation}
where $V$ is a minimizer of
\begin{equation}\label{eqledhewededswreg}
\ell(X',Y)=\inf_{V\in \Y^N}\lambda\,V^T K_\rho(X',X') V+\ell_\Y(K(X,X) V,Y)=\eqref{eqlktdelldreg}\,.
\end{equation}
Finally, under Cond.~\ref{condnuggetreg}, $\ell=$\eqref{eqlktdelldreg}=\eqref{eqledhewededswreg} is continuous (in both arguments), positive and admits a minimizer $f\in \H_K$ that is unique if $Y'\rightarrow \ell_\Y(Y',Y)$ is  convex.
\end{Theorem}
\begin{proof}
\eqref{eqajkjwdhjbdjeh} implies \eqref{eqkjdkedkjndreg}. The representer theorem implies \eqref{eqkjhebdjehbdhd}.
 Using \eqref{eqlkjdkjweedjkb} we get \eqref{eqlsedhjdreg} and \eqref{eqledhewededswreg} from $\min_{f\in \H_K} \|f\|_{K}^2+\frac{1}{\rho} \|f(X')-Y'\|_{\Y^N}^2 = (Y')^T K_\rho(X', X')^{-1}  Y'$ and
$
\min_{v_s\in {\H_\Gamma}} \big(\|v_s\|_{\Gamma}^2+ \frac{1}{r} \|q^{s+1}-(I+v_s)(q^s)\|_{\X^N}^2\big)=(q^{s+1}-q^s)^T
\wK_r(q^s,q^s)^{-1} (q^{s+1}-q^s)
$.\\ $Z^T K_\rho(X',X') Z\geq \rho Z^T Z$ ensures that the variable $Z$ in  \eqref{eqledhewededswreg} can be restricted to live in a compact set.
The continuity of $\ell$ then follows from \cite[Lem.~5.3,5.4]{still2018lectures} and
uniqueness follows from the (strict) convexity of \eqref{eqledhewededswreg} in $Z$.
\end{proof}

\subsection{Continuous least action principle and Hamiltonian system}
The continuous limit of the discrete least action principle \eqref{eqlsedhjdreg} is
\begin{equation}\label{eqlsweedhsejdreg}
\begin{cases}
\text{Minimize } &\nu\,\A_r[q]+\ell\big(q(1),Y\big)\\
\text{over } &q \in C^1([0,1],\X^N) \text{ subject to } q(0)=X\,,
\end{cases}
\end{equation}
where $\A_r$ is the continuous action defined by
\begin{equation}\label{eqactqreg}
\A_r[q]:=\int_0^1 \frac{1}{2}\dot{q}^T \wK_r(q, q)^{-1} \dot{q}  \,dt\,,
\end{equation}
whose regularized Lagrangian $\Lf_r(q,\dot{q})= \frac{1}{2}\dot{q}^T \wK_r(q, q)^{-1} \dot{q}$
 is identical to \eqref{eqejkhekffkjefkfj}  with
$\wK(q,q)$ replaced by $\wK_r(q,q)$.
We will now show that all the results of Sec.~\ref{seclkjdekjdhjd} remain true  under regularization and the relaxed conditions \ref{condnuggetreg} with $\wK(q,q)$ replaced by $\wK_r(q,q)$.

\begin{Theorem}\label{thmlllakbhhdreg}
For   $q\in  C^1([0,1],\X^N)$ introduce the momentum
\begin{equation}\label{eqlhedjkekdkjreg}
p=\wK_r(q,q)^{-1} \dot{q}\,.
\end{equation}
$q$ is minimizer of \eqref{eqlsweedhsejdreg} if and only if $q(0)=X$, $(q,p)$  follows the Hamiltonian dynamic
\begin{equation}\label{eqkedmdledkemdlreg}
\begin{cases}
&\dot{q}=\wK_r(q,q) p\\
&\dot{p}=- \partial_q ( \frac{1}{2} p^T  \wK_r(q, q) p)\,,
\end{cases}
\end{equation}
defined by the regularized Hamiltonian $\Hf_r(q,p)= \frac{1}{2}p^T \wK_r(q, q) p$, and $p(0)$ is  a minimizer of
 \begin{equation}\label{eqkjlkjkejwkdjreg}
\Vk^r\big(p(0),X,Y\big):= \frac{\nu}{2}p^T(0) \wK_r\big(X,X\big) p(0)+ \ell(q(1),Y)\,.
 \end{equation}
Furthermore,  \eqref{eqkedmdledkemdlreg} admits a unique solution in $C^2([0,1],\X^N)\times C^1([0,1],\X^N)$,
the energy $\Hf_r(q,p)= \frac{1}{2}p^T \wK_r(q, q) p$ is an invariant of the dynamic,
 and
$p(1)$ satisfies \eqref{eqkjdjhebdjhd}.
\end{Theorem}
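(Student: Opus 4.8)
The plan is to observe that the entire regularized statement is obtained from the unregularized results of Sec.~\ref{seclkjdekjdhjd} (Thm.~\ref{thmlkndjd}, \ref{thml2kjej2ke}, \ref{thmlllakbhhd}) by the single substitution $\wK(q,q)\mapsto \wK_r(q,q)=\wK(q,q)+rI$. The strategy is therefore to verify that, under the \emph{relaxed} Cond.~\ref{condnuggetreg}, the matrix $\wK_r$ supplies all the structural properties that Cond.~\ref{condeqlkedjehd7d} previously furnished for $\wK$, and then to rerun the earlier arguments verbatim with $\wK$ replaced by $\wK_r$.

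First I would record the two facts about $\wK_r$ that drive everything. Since $\wK$ is a non-negative kernel the block matrix $\wK(q,q)$ is positive semidefinite, hence $\wK_r(q,q)=\wK(q,q)+rI\ge rI>0$; this yields both the invertibility of $\wK_r(q,q)$ (so the Legendre transform \eqref{eqlhedjkekdkjreg} is well defined) and the coercivity $Z^T\wK_r(q,q)Z\ge r\,Z^T Z$ that previously had to be postulated as Cond.~\ref{condeqlkedjehd7d}.(1). Moreover the $q$-dependence of $\wK_r(q,q)$ differs from that of $\wK(q,q)$ only by the additive constant $rI$, so its first and second partial derivatives coincide with those of $q\mapsto\wK(q,q)$ and are continuous and uniformly bounded by Cond.~\ref{condnuggetreg}.(3). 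With these in hand the Lagrangian-to-Hamiltonian passage of Thm.~\ref{thmlkndjd} applies word for word to $\Lf_r$ and $\Hf_r=\frac12 p^T\wK_r(q,q)p$, producing the Hamilton equations \eqref{eqkedmdledkemdlreg} for any minimizer of \eqref{eqlsweedhsejdreg}, while autonomy of $\Hf_r$ gives its conservation.

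Next I would establish global existence and uniqueness of the solution of \eqref{eqkedmdledkemdlreg} in $C^2([0,1],\X^N)\times C^1([0,1],\X^N)$, following the scheme of Thm.~\ref{thml2kjej2ke} (the $C^2$ regularity of $q$ being a routine bootstrap from $\dot q=\wK_r(q,q)p$). The one place where the relaxed hypotheses must be handled with a new argument is the a priori bound on $p$: in the unregularized proof this came from the feature map via Prop.~\ref{propkjhdegd7}, but here it follows more directly, since conservation of $\Hf_r$ gives $r\,p(t)^T p(t)\le p(t)^T\wK_r(q(t),q(t))p(t)=p(0)^T\wK_r(X,X)p(0)$ for all $t$, so $p$ stays in a fixed bounded ball. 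On that ball the vector field of \eqref{eqkedmdledkemdlreg} is uniformly Lipschitz---its $\dot q$ component is linear in $p$ with $\wK_r$ of class $C^1$ and bounded, and its $\dot p$ component is quadratic in $p$ with $\partial_q\wK_r$ Lipschitz by the bounded second derivatives---so the global Picard--Lindel\"of theorem \cite[Thm.~1.2.3]{arino2006fundamental} applies exactly as before, dispensing with the finite-dimensional feature-space hypothesis Cond.~\ref{condeqlkedjehd7d}.(2).

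Finally I would recover the variational characterization and the endpoint relation. Zeroing the Fr\'echet derivative of \eqref{eqlsweedhsejdreg} in $q$ reproduces, as in Thm.~\ref{thmlllakbhhd}, both the dynamic \eqref{eqkedmdledkemdlreg} and the transversality condition $\nu\,p(1)+\partial_{q(1)}\ell(q(1),Y)=0$, i.e.\ \eqref{eqkjdjhebdjhd}; and conservation of $\Hf_r$ reduces the action along a solution to $\frac{\nu}{2}p(0)^T\wK_r(X,X)p(0)$, so that minimizing \eqref{eqlsweedhsejdreg} over trajectories is equivalent to minimizing $\Vk^r(p(0),X,Y)=$\eqref{eqkjlkjkejwkdjreg} over initial momenta. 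The main obstacle is essentially bookkeeping: confirming that every invocation of Cond.~\ref{condeqlkedjehd7d} in Sec.~\ref{seclkjdekjdhjd} was used only through the coercivity $\wK_r(q,q)\ge rI$ and the $C^2$ regularity of $\wK_r$---both now automatic---so that no appeal to a feature map is actually required under Cond.~\ref{condnuggetreg}.
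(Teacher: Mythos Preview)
Your proposal is correct and follows essentially the same approach as the paper: the paper's proof says only that the argument is identical to Sec.~\ref{seclkjdekjdhjd}, singling out the one new ingredient $r\,p^Tp\le p^T\wK_r(q,q)p$ (together with energy preservation) to confine $p(t)$ to a compact set. You have made explicit precisely the points the paper leaves implicit---in particular, that this coercivity bound replaces the feature-map argument of Prop.~\ref{propkjhdegd7} and thereby dispenses with Cond.~\ref{condeqlkedjehd7d}.(2)---but the route is the same.
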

\begin{proof}
The proof is identical to those presented in sec.~\ref{seclkjdekjdhjd} and \ref{seclkjdekjdhjdex}. Since
$ r p^T p \leq p^T \wK_r(q, q) p$, \eqref{eqkjlkjkejwkdjreg} and energy preservation imply that  $p(t)$ is confined to a compact set.
\end{proof}

\subsection{Regularized idea registration.}\label{secjhghgyuygyreg}
In the limit $L\rightarrow \infty$, $(I+v_k)\circ \cdots \circ (I+v_1)$ (obtained from \eqref{eqlktddsyeytdsefsreg}) approximates (at time $t_k:=\frac{k}{L}$) the flow map
   $\phi^v$ (defined as the solution of \eqref{eqflmp})
 where $v$ is a
    minimizer of
\begin{equation}\label{eqlkjgehgddjedhjdreg}
\begin{cases}
\text{Minimize } &\frac{\nu}{2}\,\int_0^1 \big(\|v\|_{\Gamma}^2+ \frac{1}{r}\|\dot{q}-v(q,t)\|_{\X^N}^2\big)\,dt+\ell\big(q(1),Y\big)\\
\text{over }&v \in C([0,1],\H_\Gamma),\,q\in C^1([0,1],\X^N),\,q(0)=X\, .
\end{cases}
\end{equation}

The proof of the following theorem is identical to that of Thm.~\ref{thmkljedjnedesdjd}.
\begin{Theorem}\label{thmkljedjnedesdjdreg}
  $v$ is a minimizer of \eqref{eqlkjgehgddjedhjdreg} if and only if
\begin{equation}\label{eqtttcreg}
v(x,t)= \wK\big(x,q(t)\big) \wK_r\big(q(t),q(t)\big)^{-1}\dot{q}(t)
\end{equation}
where $q$ is a minimizer of the least action principle \eqref{eqlsweedhsejdreg}. Furthermore the value of \eqref{eqlkjgehgddjedhjdreg} after minimization over $v$ is \eqref{eqlsweedhsejdreg} and
 the representer ODE \eqref{eqtttc} can be written as in \eqref{eqttjjkjtc}
where $(q,p)$ is the solution of the Hamiltonian system \eqref{eqkedmdledkemdlreg} with initial condition $q(0)=X$ and $p(0)$ identified as a minimizer of \eqref{eqkjlkjkejwkdjreg}.
\end{Theorem}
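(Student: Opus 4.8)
The plan is to follow the blueprint of Thm.~\ref{thmkljedjnedesdjd} and its discrete analogue Thm.~\ref{thmeqkljedjnedjd}, with the sole modification that the hard interpolation constraint $v_s(q^s)=q^{s+1}-q^s$ is replaced by the soft, nugget-penalized matching term $\frac{1}{r}\|\dot{q}-v(q,t)\|_{\X^N}^2$. First I would fix an arbitrary admissible trajectory $q\in C^1([0,1],\X^N)$ with $q(0)=X$ and carry out the minimization of \eqref{eqlkjgehgddjedhjdreg} over $v\in C([0,1],\wH)$ alone. Since $v$ enters the functional only through the localized evaluation $v(q(t),t)$, this inner minimization decouples across time: at each fixed $t$ one minimizes
\begin{equation*}
\|v(\cdot,t)\|_\wH^2+\frac{1}{r}\|\dot{q}(t)-v(q(t),t)\|_{\X^N}^2
\end{equation*}
over $v(\cdot,t)\in\wH$, which is precisely a vector-valued ridge regression problem with data locations $q(t)$, targets $\dot{q}(t)$, and regularization $\lambda=r$.

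By the representer theorem and \eqref{eqajkjwdhjbdjeh}, the pointwise minimizer is exactly \eqref{eqtttcreg}, namely $v(x,t)=\wK(x,q(t))\wK_r(q(t),q(t))^{-1}\dot{q}(t)$, and by \eqref{eqlkjdkjweedjkb} the optimal value equals $\dot{q}(t)^T\wK_r(q(t),q(t))^{-1}\dot{q}(t)$. Integrating these minima in $t$ shows that, after eliminating $v$, the problem \eqref{eqlkjgehgddjedhjdreg} collapses to $\min_q \nu\,\A_r[q]+\ell(q(1),Y)$, i.e.\ the regularized least action principle \eqref{eqlsweedhsejdreg} with action \eqref{eqactqreg}. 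This establishes both the representer formula \eqref{eqtttcreg} (in the ``if and only if'' form, since the optimal $v$ is uniquely determined by the trajectory $q$, and the optimal $q$ must minimize the reduced functional) and the claim that the value of \eqref{eqlkjgehgddjedhjdreg} after minimization over $v$ equals \eqref{eqlsweedhsejdreg}.

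The Hamiltonian rewriting is then immediate from Thm.~\ref{thmlllakbhhdreg}: a minimizer $q$ of \eqref{eqlsweedhsejdreg} follows the regularized dynamic \eqref{eqkedmdledkemdlreg} with momentum $p=\wK_r(q,q)^{-1}\dot{q}=$\eqref{eqlhedjkekdkjreg} and $p(0)$ minimizing \eqref{eqkjlkjkejwkdjreg}. Substituting this $p$ into \eqref{eqtttcreg} gives $v(x,t)=\wK(x,q(t))\,p(t)$, and since $\dot{\phi}^v=v(\phi^v,t)$ by \eqref{eqflmp}, this is exactly the form \eqref{eqttjjkjtc}.

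I expect the main obstacle to be justifying the decoupling/interchange of minimization and integration, i.e.\ that $\inf_v\int_0^1(\cdots)\,dt=\int_0^1\inf_{v(\cdot,t)}(\cdots)\,dt$. The inequality $\geq$ is trivial; for $\leq$ I must verify that the pointwise minimizer \eqref{eqtttcreg} assembles into an admissible element of $C([0,1],\wH)$. Under Cond.~\ref{condnuggetreg} the kernel $\wK$ is continuous and bounded with bounded derivatives, and $\wK_r(q(t),q(t))=\wK(q(t),q(t))+rI$ is invertible for every $t$ by virtue of the nugget $r>0$ (this is precisely where the non-degeneracy/non-overlap requirement of Cond.~\ref{condeqlkedjehd7d} can be dropped); since $q\in C^1$, the maps $t\mapsto q(t)$ and $t\mapsto\dot{q}(t)$ are continuous, so $t\mapsto v(\cdot,t)$ is continuous into $\wH$ and uniformly Lipschitz in $x$, hence lies in $C([0,1],\wH)$. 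This confirms that the candidate is admissible and attains the lower bound, closing the argument.
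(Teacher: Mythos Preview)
Your proposal is correct and follows essentially the same approach as the paper: fix $q$, recognize the inner minimization over $v(\cdot,t)$ as a ridge regression problem solved by \eqref{eqajkjwdhjbdjeh} and \eqref{eqlkjdkjweedjkb}, reduce to the least action principle \eqref{eqlsweedhsejdreg}, and invoke the regularized Hamiltonian characterization (Thm.~\ref{thmlllakbhhdreg}) for the final rewriting. The paper's own proof is a one-line pointer to Thm.~\ref{thmkljedjnedesdjd} (and, by transitivity, to Thm.~\ref{thmeqkljedjnedjdreg}), whose argument is exactly the ridge-regression reduction you spell out; your added verification that the pointwise minimizer \eqref{eqtttcreg} lies in $C([0,1],\wH)$ under Cond.~\ref{condnuggetreg} is a welcome detail the paper leaves implicit.
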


\subsection{Existence/identification of minimizers and  energy preservation}\label{subseckjjhgdejwedh2}
As in Subsec.~\ref{subseckjjhgdejwedh}, minimizers of \eqref{eqlkjgehgddjedhjdreg} may not be unique.
The proof of the following theorem is identical to that of Thm.~\ref{thmksbsahshkd}.

\begin{Theorem}\label{thmksbsahshkdreg}
The minimum values of \eqref{eqlsweedhsejdreg}, \eqref{eqkjlkjkejwkdjreg} and \eqref{eqlkjgehgddjedhjdreg} are identical.
\eqref{eqlsweedhsejdreg}, \eqref{eqkjlkjkejwkdjreg} and \eqref{eqlkjgehgddjedhjdreg} have minimizers.
$q$ is a minimizer of \eqref{eqlsweedhsejdreg} if and only if $(q,p)$ ($p=\wK_r(q,q)^{-1}\dot{q}$) follows the Hamiltonian dynamic  \eqref{eqkedmdledkemdlreg} (with $q(0)=X$) and $p(0)=\wK_r\big(q(0),q(0)\big)^{-1} \dot{q}(0)$ is a minimizer of $\Vk^r\big(p(0),X,Y\big)=$\eqref{eqkjlkjkejwkdjreg}.
$v$ is a minimizer of \eqref{eqlkjgehgddjedhjdreg} if and only if
\begin{equation}\label{eqkjhwbkhjbjewhbd}
v(x,t)= \wK\big(x,q(t)\big) p(t)
\end{equation}
 with $(q,p)$ following the Hamiltonian dynamic  \eqref{eqkedmdledkemdlreg} (with $q(0)=X$) and $p(0)$ being a minimizer of $\Vk^r\big(p(0),X,Y\big)=$\eqref{eqkjlkjkejwkdjreg}.
Therefore the minimizers of \eqref{eqlsweedhsejdreg} and \eqref{eqlkjgehgddjedhjdreg} can be parameterized by their initial momentum $p(0)$, identified as a minimizer of $\Vk^r\big(p(0),X,Y\big)=$\eqref{eqkjlkjkejwkdjreg}.
 Furthermore, at those minima, the energies $\frac{1}{2}p \wK_r(q, q) p=\frac{1}{2}\dot{q}^T \wK_r(q, q)^{-1} \dot{q}=\frac{1}{2} (\|v\|_{\Gamma}^2+\frac{1}{r}\|\dot{q}-v(q,t)\|_{\X^N}^2) =\frac{1}{2} \|v\|_{{\H_{\Gamma_r}}}^2$ (writing $\|\cdot\|_{\H_{\Gamma_r}}$ for the RKHS norm defined by the kernel $\wK(x,x')+r \updelta(x-x')I$) are constant over $t\in [0,1]$ and equal to $\frac{\nu}{2}p^T(0) \wK_r\big(X,X\big) p(0)$.
\end{Theorem}

As in Sec.~\ref{seclkjdekjdhjd} the trajectory $q^1,\ldots,q^{L+1}$ of a minimizer of \eqref{eqlsedhjdreg} follows
\begin{equation}\label{ejkhdbejhdbdreg}
\begin{cases}
q^{s+1}&= q^s+\Delta t\, \wK_r(q^s,q^s) p^s\\
p^{s+1}&=p^s-\frac{\Delta t}{ 2} \partial_{q^{s+1}}\big((p^{s+1})^T \wK_r(q^{s+1},q^{s+1}) p^{s+1}\big)\,,
\end{cases}
\end{equation}
where $p^s$ is the momentum
\begin{equation}\label{eqkjelkbejdhbdreg}
p^s=\wK_r(q^s,q^s)^{-1} \frac{q^{s+1}-q^s}{\Delta t}\,.
\end{equation}
Write
\begin{equation}\label{eqkjhwbkdwehjvbdyreg}
\mathfrak{V}_L^r(p^1,X,Y):=\begin{cases}&\frac{\nu}{2}  \sum_{s=1}^L  (p^s)^T \wK_r(q^s,q^s) p^s\, \Delta t+
\ell\big(q^{L+1},Y\big)\\
&p^s=\eqref{eqkjelkbejdhbdreg}\text{ and }(q^s,p^s)\text{ follow }\eqref{ejkhdbejhdbdreg} \text{ with } q^1=X\,.
\end{cases}
\end{equation}

The proof of the following theorem is identical to that of Thm.~\ref{thmksbsahskd2g}.

\begin{Theorem}\label{thmksbsahskd2greg}
The minimum values of  \eqref{eqlktddsyeytdsefsreg}, \eqref{eqlsedhjdreg} and $\mathfrak{V}_L^r(p^1,X,Y)$ (in $p^1$) are identical.
\eqref{eqlktddsyeytdsefsreg}, \eqref{eqlsedhjdreg} and \eqref{eqkjhwbkdwehjvbdyreg} have minimizers.
$q^1,\ldots,q^{L+1}$ is a minimizer of \eqref{eqlsedhjdreg} if and only if $(q^s,p^s)$  (with $p^s$=\eqref{eqkjelkbejdhbdreg}) follows the discrete Hamiltonian map \eqref{ejkhdbejhdbdreg}, $q^1=X$ and $p^1$ is a minimizer of $\mathfrak{V}_L^r(p^1,X,Y)=$\eqref{eqkjhwbkdwehjvbdyreg}.
$v_1,\ldots,v_L$ is a minimizer of \eqref{eqlktddsyeytdsefsreg} if and only if
\begin{equation}\label{eqkjhbdejdhbdh}
v_s(x)=\wK( x, q^s)p^s=\eqref{eqkjdkedkjndreg}\,,
\end{equation}
 where $(q^s,p^s)$
 follows the discrete Hamiltonian map \eqref{ejkhdbejhdbdreg} with $q^1=X$ and $p^1$ is a minimizer of $\mathfrak{V}_L^r(p^1,X,Y)=$\eqref{eqkjhwbkdwehjvbdyreg}.
 Therefore the minimizers of \eqref{eqlktddsyeytdsefsreg} and \eqref{eqlsedhjdreg} can be parameterized by their initial momentum identified as a minimizer of $\mathfrak{V}_L^r(p^1,X,Y)=$\eqref{eqkjhwbkdwehjvbdyreg}.
 At those minima, the energies  $\frac{1}{2}(p^s)^T \wK_r(q^s,q^s) p^s$ and $\frac{1}{2}\|v_s\|_{\H_{\Gamma_r}}^2=\frac{1}{2}\big(\|v_s\|_{\Gamma}^2+\frac{1}{r} \|q^{s+1}-(I+v_s)(q^s)\|_{\X^N}^2\big)$  are equal and fluctuate by at most $\mathcal{O}(1/L)$ over $s\in \{1,\ldots,L\}$.
\end{Theorem}

\subsection{Continuity of minimal values}\label{subsecjkhgg6}
The following theorem does not have an equivalent in sec.~\ref{seclkjdekjdhjd} and \ref{seclkjdekjdhjdex} since it does not hold true without regularization.

\begin{Theorem}\label{thmwksjhgw76sgw}
The minimal values of \eqref{eqlsweedhsejdreg}, \eqref{eqkjlkjkejwkdjreg}, \eqref{eqlkjgehgddjedhjdreg}, \eqref{eqlktddsyeytdsefsreg}, \eqref{eqlsedhjdreg} and \eqref{eqkjhwbkdwehjvbdyreg} are continuous in $(X,Y)$.
\end{Theorem}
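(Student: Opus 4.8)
The plan is to first collapse the six minimal values down to two, and then prove continuity of each surviving value function by a parametric-minimization argument whose crux is confining the minimizers to a compact set that can be chosen \emph{uniformly} over the data in a neighborhood. By Theorem~\ref{thmksbsahshkdreg} the minimal values of \eqref{eqlsweedhsejdreg}, \eqref{eqkjlkjkejwkdjreg} and \eqref{eqlkjgehgddjedhjdreg} all coincide, and by Theorem~\ref{thmksbsahskd2greg} those of \eqref{eqlktddsyeytdsefsreg}, \eqref{eqlsedhjdreg} and \eqref{eqkjhwbkdwehjvbdyreg} all coincide. Hence it suffices to show that the common continuous value $V(X,Y):=\min_{p(0)}\Vk^r(p(0),X,Y)$ and the common discrete value $V_L(X,Y):=$\eqref{eqlsedhjdreg} are continuous in $(X,Y)$.

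For the continuous value I would use the shooting representation \eqref{eqkjlkjkejwkdjreg}. Fix $(X_0,Y_0)$ and a compact neighborhood $\mathcal N$ of it. The endpoint map $(p(0),X)\mapsto q(1)$ of the Hamiltonian flow \eqref{eqkedmdledkemdlreg} started at $q(0)=X$ is jointly continuous by continuous dependence of ODE solutions on their initial condition \cite[Thm.~1.4.1]{arino2006fundamental}; combined with the continuity of $\ell$ (Theorem~\ref{thmeqkljedjnedjdreg}) and of $(X,p(0))\mapsto p^T(0)\wK_r(X,X)p(0)$ (Cond.~\ref{condnuggetreg}), the map $\Vk^r(p(0),X,Y)$ is jointly continuous. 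The nugget furnishes the \emph{uniform} coercivity $\Vk^r(p(0),X,Y)\ge\frac{\nu r}{2}\,p^T(0)p(0)$, using $\wK_r(X,X)\succeq rI$ and $\ell\ge 0$, while the choice $p(0)=0$ (for which $q\equiv X$) gives $V(X,Y)\le\ell(X,Y)\le M$ for some finite $M$ on the compact set $\mathcal N$. Thus every minimizer obeys $p^T(0)p(0)\le 2M/(\nu r)=:\rho^2$ for all $(X,Y)\in\mathcal N$, so the minimization may be restricted to the fixed compact ball $B_\rho$ uniformly over $\mathcal N$. Continuity of $V$ at $(X_0,Y_0)$ then follows from the parametric-minimum continuity lemma \cite[Lem.~5.3,5.4]{still2018lectures} applied to $\Vk^r$ on $B_\rho\times\mathcal N$, and since $(X_0,Y_0)$ is arbitrary, $V$ is continuous.

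For the discrete value I would run the identical argument directly on \eqref{eqlsedhjdreg}, now a genuinely finite-dimensional optimization over $(q^2,\dots,q^{L+1})$ with $q^1=X$. The regularization is precisely what makes the objective well defined and continuous: $\wK_r(q^s,q^s)=\wK(q^s,q^s)+rI\succeq rI$ is invertible for every configuration, so $q\mapsto\wK_r(q,q)^{-1}$ is continuous and the objective is jointly continuous in $(q^2,\dots,q^{L+1},X,Y)$. Coercivity again stems from the nugget: since $\wK$ is uniformly bounded (Cond.~\ref{condnuggetreg}) there is $C$ with $\|\wK_r(q,q)\|\le C$, whence $\wK_r(q,q)^{-1}\succeq C^{-1}I$ and the objective dominates $\frac{\nu}{2C\Delta t}\sum_{s=1}^L\|q^{s+1}-q^s\|^2$; a value $\le M$ thus controls each $\|q^{s+1}-q^s\|$ and, telescoping from $q^1=X$, confines all $q^s$ to a compact set depending only on $\|X\|$, $M$ and $L$. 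Bounding $V_L(X,Y)\le\ell(X,Y)\le M$ on $\mathcal N$ via the trivial trajectory $q^s\equiv X$ yields a uniform compact feasible region, and \cite[Lem.~5.3,5.4]{still2018lectures} gives continuity of $V_L$.

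The step I expect to be the main obstacle is this \emph{uniform} confinement of minimizers: one must guarantee that the compact set carrying the minimizers can be taken independent of $(X,Y)$ throughout a neighborhood. This is exactly where the nugget $r$ is indispensable, since it supplies the data-independent lower bounds $\frac{\nu r}{2}p^T(0)p(0)$ and $C^{-1}I$ that break down as $X_i\to X_j$ in the unregularized setting; this is also why Theorem~\ref{thmwksjhgw76sgw} has no counterpart in Section~\ref{seclkjdekjdhjd}.
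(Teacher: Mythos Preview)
Your proof is correct and follows essentially the same strategy as the paper: reduce via Theorems~\ref{thmksbsahshkdreg} and~\ref{thmksbsahskd2greg} to a single representative in each group, establish joint continuity of the objective (ODE dependence on initial data, continuity of $\ell$), use the nugget for coercivity to confine minimizers to a compact set, and invoke \cite[Lem.~5.3,5.4]{still2018lectures}. The only cosmetic difference is that for the discrete family the paper works with the shooting form $\mathfrak{V}_L^r(p^1,X,Y)$ rather than \eqref{eqlsedhjdreg} directly, but your position-variable argument is equally valid and your explicit attention to the \emph{uniformity} of the compact set over a neighborhood $\mathcal N$ is in fact more careful than the paper's sketch.
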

\begin{proof}
 By Thm.~\ref{thmksbsahshkdreg} it is then sufficient (for the discrete setting) to prove the continuity of the minimum value of \eqref{eqkjlkjkejwkdjreg} with respect to $(X,Y)$. By \cite[Thm.~1.4.1]{arino2006fundamental} if  $(q,p)$  follows the Hamiltonian dynamic  \eqref{eqkedmdledkemdlreg}
then, under Cond.~\ref{condnuggetreg}, $q(1)$ is continuous with respect to $p(0)$. The continuity of $\ell$ then implies that
of \eqref{eqkjlkjkejwkdjreg} with respect to $p(0)$ (with $q(1)$ being a function of $p(0)$).
Under Cond.~\ref{condnuggetreg}  $p(0)$ can be restricted to a compact set  in the  minimization of \eqref{eqkjlkjkejwkdjreg}. \cite[Lem.~5.3,5.4]{still2018lectures} then implies the continuity of the minimum value of \eqref{eqkjlkjkejwkdjreg} with respect to $(X,Y)$.
By Thm.~\ref{thmksbsahskd2greg} the  continuity of the minimum value of \eqref{eqlktddsyeytdsedhjdreg} follows from that
of $\mathfrak{V}_L^r(p^1,X,Y)=$\eqref{eqkjhwbkdwehjvbdyreg} which is continuous in all variables.  Under Cond.~\ref{condnuggetreg},  $p^1$ can be restricted to a compact set in the  minimization of \eqref{eqkjhwbkdwehjvbdyreg}. We conclude
by using \cite[Lem.~5.3,5.4]{still2018lectures}.
\end{proof}

\subsection{Convergence of minimal values and minimizers}\label{subskjkecgyf67f6reg}
Write $\Mk_L^r(X,Y)$ for the set of minimizers $p^1$ of $\Vk_L^r(p^1,X,Y)=$\eqref{eqkjhwbkdwehjvbdyreg}.
Write  $\Mk^r(X,Y)$ for the set of minimizers $p(0)$ of   $\Vk^r\big(p(0),X,Y\big)=$\eqref{eqkjlkjkejwkdjreg}.
The proof of the following theorem is identical to that of Thm.~\ref{thmhgw7gfdd}.

\begin{Theorem}\label{thmhgw7gfddreg}
The common minimal value of \eqref{eqlktddsyeytdsefsreg}, \eqref{eqlsedhjdreg} and \eqref{eqkjhwbkdwehjvbdyreg} converge, as $L\rightarrow \infty$,
towards the common minimal value of  \eqref{eqlsweedhsejdreg}, \eqref{eqkjlkjkejwkdjreg}, \eqref{eqlkjgehgddjedhjdreg}.
As $L\rightarrow \infty$, the set of adherence values of $\Mk_L^r(X,Y)$ is  $\Mk^r(X,Y)$.
Let $v_s^L$, $q^s_L$ and $p^s_L$ be sequences of minimizers of \eqref{eqlktddsyeytdsefsreg}, \eqref{eqlsedhjdreg} and \eqref{eqkjhwbkdwehjvbdyreg} indexed by the same sequence $p^1_L$ of initial momentum in $\Mk_L^r(X,Y)$ (as described in Thm.~\ref{thmksbsahskd2greg}). Then, the adherence points of the sequence $p^1_L$ are in $\Mk^r(X,Y)$ and if $p(0)$ is such a point
($p^1_L$ converges towards $p(0)$ along a subsequence $L_k$) then, along that subsequence: (1) The trajectory formed by interpolating the states $q^s_L \in \X^N$ converges to the trajectory formed by a minimizer of \eqref{eqlsweedhsejdreg} with initial momentum $p(0)$.
(2) For $t\in [0,1]$,  $(I+v_{\textrm{int}(tL)}^L)\circ \cdots \circ (I+v_1^L)(x)$ converges to $\phi^v(x,t)$=\eqref{eqflmp} where $v$ is a minimizer of \eqref{eqlkjgehgddjedhjdreg} with initial momentum $p(0)$.
Conversely if $p(0)\in \Mk^r(X,Y)$ then it is the limit of a sequence $p^{1}_L \in \Mk_L^r(X,Y)$ and the minimizers of \eqref{eqlktddsyeytdsefsreg}, \eqref{eqlsedhjdreg} and \eqref{eqkjhwbkdwehjvbdyreg} with initial momentum $p^1_L$ converge (in the sense given above) to the minimizers of \eqref{eqlsweedhsejdreg}, \eqref{eqkjlkjkejwkdjreg}, \eqref{eqlkjgehgddjedhjdreg} with initial momentum $p(0)$ (as described in Thm.~\ref{thmksbsahshkdreg}).
\end{Theorem}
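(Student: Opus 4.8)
The plan is to reproduce the argument of Thm.~\ref{thmhgw7gfdd} line by line, verifying at each stage that the regularization (the replacement $\wK \to \wK_r = \wK + rI$) leaves intact the three structural facts on which that proof rests: the reduction of every minimization to one over the initial momentum, the confinement of all minimizing momenta to a single compact ball independent of $L$, and the uniform convergence of the discrete symplectic integrator to the continuous Hamiltonian flow. First I would invoke the regularized reduction theorems Thm.~\ref{thmksbsahskd2greg} and Thm.~\ref{thmksbsahshkdreg}, which show that the minimal values of the three discrete problems \eqref{eqlktddsyeytdsefsreg}, \eqref{eqlsedhjdreg} and \eqref{eqkjhwbkdwehjvbdyreg} all coincide with $\min_{p^1} \Vk_L^r(p^1,X,Y)$, and that the minimal values of the three continuous problems \eqref{eqlsweedhsejdreg}, \eqref{eqkjlkjkejwkdjreg} and \eqref{eqlkjgehgddjedhjdreg} all coincide with $\min_{p(0)} \Vk^r(p(0),X,Y)$. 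This collapses the entire statement to the convergence of $\Vk_L^r$ towards $\Vk^r$ as functions of the initial momentum.

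Second I would fix a common compact set. The nugget bound $r\,p^T p \le p^T \wK_r(q,q)\,p$ used already in Thm.~\ref{thmlllakbhhdreg} and Thm.~\ref{thmksbsahskd2greg}, together with $\ell\ge 0$, makes both $\Vk_L^r$ and $\Vk^r$ coercive in their momentum argument with a coercivity constant $\tfrac{\nu r}{2}$ independent of $L$; hence there is a ball $B_\rho=\{p(0)\in \X^N : p(0)^Tp(0)\le \rho^2\}$, independent of $L$, that contains all global minimizers of $\Vk_L^r$ for every $L$ as well as all global minimizers of $\Vk^r$. Third, the analytic core is that for initial data $(q^1=X,\,p^1=p(0))$ ranging over $B_\rho$, the one-step map \eqref{ejkhdbejhdbdreg} is a stable, first-order consistent integrator for the flow \eqref{eqkedmdledkemdlreg}. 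Under Cond.~\ref{condnuggetreg} the vector field of \eqref{eqkedmdledkemdlreg} is uniformly Lipschitz on the relevant bounded region, since $\wK_r$ inherits from $\wK$ the uniform boundedness of itself and its first two derivatives (they differ only by the constant operator $rI$); the discrete trajectories therefore converge uniformly over $B_\rho$ to the continuous ones. This gives $\Vk_L^r \to \Vk^r$ uniformly on $B_\rho$, which yields simultaneously the convergence of minimum values (via $\lim_L \min_{B_\rho}\Vk_L^r = \min_{B_\rho}\Vk^r$) and the identification $\cap_{L'\ge 1}\cl\cup_{L\ge L'}\Mk_L^r(X,Y)=\Mk^r(X,Y)$, exactly as in Thm.~\ref{thmhgw7gfdd}.

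Finally, for the convergence of minimizers along a subsequence, I would fix an adherence point $p(0)\in \Mk^r(X,Y)$ of a sequence $p^1_L\in \Mk_L^r(X,Y)$. Uniform convergence of \eqref{ejkhdbejhdbdreg} to \eqref{eqkedmdledkemdlreg} along the subsequence gives assertion (1), the convergence of the interpolated states $q^s_L$ to the continuous minimizing trajectory $q$. Substituting the representation $v_s(x)=\wK(x,q^s)p^s$ of Thm.~\ref{thmksbsahskd2greg} into $(I+v_{\textrm{int}(tL)}^L)\circ\cdots\circ(I+v_1^L)(x)$ and passing to the limit identifies the limit as the solution $z$ of $\dot z=\wK(z,q)p$ with $z(0)=x$, which is a characteristic curve of the representer ODE \eqref{eqttjjkjtc}; combined with the representation $v(x,t)=\wK(x,q(t))p(t)$ of Thm.~\ref{thmksbsahshkdreg} this limit equals $\phi^v(x,t)$, giving assertion (2). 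The converse direction, starting from $p(0)\in\Mk^r(X,Y)$ and producing a recovering sequence in $\Mk_L^r(X,Y)$, is symmetric. The only step demanding genuine rather than notational care is the third one: upgrading pointwise to uniform convergence of the integrator over $B_\rho$ and then transferring it, through a Gronwall-type estimate, to uniform convergence of the composed displacement maps to the flow map. I expect this to be the main obstacle, but it is dispatched exactly as in the unregularized proof, the decisive observation being that $\wK_r$ supplies the same $L$-independent Lipschitz and consistency constants as $\wK$ under Cond.~\ref{condnuggetreg}.
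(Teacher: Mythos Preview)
Your proposal is correct and follows essentially the same approach as the paper, which simply states that the proof is identical to that of Thm.~\ref{thmhgw7gfdd}. You have accurately identified and verified the three structural ingredients (reduction to initial momentum via Thm.~\ref{thmksbsahskd2greg} and Thm.~\ref{thmksbsahshkdreg}, $L$-independent compactness from the nugget bound $r\,p^Tp\le p^T\wK_r(q,q)p$, and uniform convergence of the integrator \eqref{ejkhdbejhdbdreg} to the flow \eqref{eqkedmdledkemdlreg}) that make the unregularized argument go through verbatim.
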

It follows from Thm.~\ref{thmhgw7gfddreg} that, in the limit $L\rightarrow \infty$, a regularized warping regression solution to Problem \ref{pb828827hee}  approximates $f^\dagger$ with
$f^\ddagger=f\circ \phi^v(\cdot,1)$ where $f$ and $v$ are minimizers of
\begin{equation}\label{eqlkjgjhdbjehreg}
\begin{cases}
\text{Minimize } &\frac{\nu}{2}\,\int_0^1 \big(\|v\|_{\Gamma}^2+ \frac{1}{r}\|\dot{q}-v(q,t)\|_{\X^N}^2\big)\,dt\\&+
\lambda\,\big(\|f\|_{K}^2+\frac{1}{\rho} \|f(q(1))-Y'\|_{\Y^N}^2 \big)+\ell_\Y\big(Y',Y\big)\\
\text{over }&v \in C([0,1],\H_\Gamma),\,q\in C^1([0,1],\X^N),\,q(0)=X,\,f\in \H_K,\, Y'\in \Y^N\, .
\end{cases}
\end{equation}

\section{Deep residual Gaussian processes and generalization error estimates}\label{seckjhhehwd78w696d}

This section presents a natural \cite[Sec.~7\&17]{owhadi2019operator} extension of scalar-valued Gaussian processes to function-valued Gaussian processes (Subsec.~\ref{subsecuideydiueyd}).
This extension leads to deterministic (Cor.~\ref{coralkjhdedsbahkjedb}) and probabilistic (Subsec.~\ref{subsecprobahoeuwgediu}) error estimates\footnote{These error estimates are analogous to classical GPR error estimates and distinct from the usual ones found in
 the statistical learning literature based on a data distribution.
} for warping regression and idea registration. Minimizers of \eqref{eqlkjgehgddjedhjdB} (and its regularized variant \eqref{eqlkjgedehgddjededdeddddsdddsshjssdBN2intro}) have (as suggested for image registration in \cite[p.~4]{dupuis1998variational}) natural interpretations as MAP estimators of Brownian flows of diffeomorphisms \cite{baxendale1984brownian, kunita1997stochastic}, which we  extend  (Subsec.~\ref{subseckjhewg76}) as deep residual Gaussian processes (that can be interpreted as a continuous variant of deep Gaussian processes \cite{damianou2013deep}).

\subsection{Function-valued Gaussian processes}\label{subsecuideydiueyd}
The following definition of function-valued Gaussian processes is a natural extension of scalar-valued Gaussian fields as presented in \cite[Sec.~7\&17]{owhadi2019operator}.
\begin{Definition}\label{dejheekfklfhrf}
Let $K\,:\, \X\times \X\rightarrow \L(\Y)$ be an operator-valued kernel as in Sec. \ref{secovk}. Let $m$ be a function mapping $\X$ to $\Y$.
We call
$\xi\,:\,  \X\rightarrow \L(\Y,{\bf H})$ a function-valued Gaussian process if
 $\xi$ is a function mapping
$x \in \X$ to $\xi(x)\in \L(\Y,{\bf H})$ where ${\bf H}$ is a Gaussian space\footnote{That is a Hilbert space of centered Gaussian random variables, see \cite[Sec.~7\&17]{owhadi2019operator}.} and $\L(\Y,{\bf H})$ is the space of bounded linear operators from $\Y$ to ${\bf H}$. Abusing notations we write $\<\xi(x),y\>_\Y$ for $\xi(x)y$. We say that $\xi$ has mean $m$ and covariance kernel $K$ and write $\xi \sim \cN(m,K)$ if $\<\xi(x),y\>_\Y \sim \cN\big(m(x),y^T K(x,x)y\big)$ and
\begin{equation}
\Cov\big(\<\xi(x),y\>_\Y, \<\xi(x'),y'\>_\Y\big)=y^T K(x,x')y'\,.
\end{equation}
We say that $\xi$ is centered if it is of zero mean.
\end{Definition}
If $K(x,x)$ is trace class ($\Tr[K(x,x)]<\infty$) then $\xi(x)$  defines a measure  on $\Y$ (i.e. a $\Y$-valued random variable), otherwise it only defines a (weak) cylinder-measure in the sense of Gaussian fields (see \cite[Sec.~17]{owhadi2019operator}).
\begin{Theorem}
The distribution of a function-valued Gaussian process is uniquely determined by its mean and covariance kernel $K$. Conversely given  $m$ and $K$ there exists a function-valued Gaussian process having mean $m$ and covariance kernel $K$. In particular if $K$ has feature space $\F$ and map $\psi$, the $e_i$ form an orthonormal basis of $\F$,  and the $Z_i$ are i.i.d. $\cN(0,1)$ random variables, then
\begin{equation}
\xi=m +\sum_i Z_i \psi^T e_i
\end{equation}
is a function-valued GP with mean $m$ and covariance kernel $K$.
\end{Theorem}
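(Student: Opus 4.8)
The plan is to dispatch the three assertions in turn: uniqueness by the standard Gaussian-family argument, and existence together with the final ``in particular'' claim by the single feature-map construction exhibited in the statement. First I would prove \textbf{uniqueness}. By Definition~\ref{dejheekfklfhrf} the scalar random variables $\<\xi(x),y\>_\Y$ all lie, after subtracting their deterministic means $y^T m(x)$, in one Gaussian space $\mathbf{H}$; since any finite linear combination of elements of $\mathbf{H}$ is again Gaussian, every finite subfamily $\<\xi(x_1),y_1\>_\Y,\ldots,\<\xi(x_n),y_n\>_\Y$ is jointly Gaussian. A jointly Gaussian vector is determined in law by its mean vector and covariance matrix, whose entries are $y_k^T m(x_k)$ and $y_k^T K(x_k,x_{k'})y_{k'}$, i.e. functions of $m$ and $K$ alone. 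As the distribution of $\xi$ is fixed by these finite-dimensional laws, two function-valued GPs sharing the same $m$ and $K$ are equal in distribution.

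For \textbf{existence} I would simply verify the construction of the final display, which also settles the explicit claim; a feature space and map always exist (e.g. the canonical RKHS feature map $\psi(x)y=K(\cdot,x)y$), so no generality is lost. Fix an orthonormal basis $\{e_i\}$ of the separable feature space $\F$, i.i.d.\ $Z_i\sim\cN(0,1)$, and let $\mathbf{H}$ be the $L^2(\Omega)$-closure of $\Span\{Z_i\}$, a Gaussian space. Set $\xi:=m+\sum_i Z_i\,\psi^T e_i$, so that for each $(x,y)$ the centered part is $\<\xi(x)-m(x),y\>_\Y=\sum_i Z_i\<\psi^T(x)e_i,y\>_\Y=\sum_i Z_i\<e_i,\psi(x)y\>_\F$ by the adjoint relation defining $\psi^T$. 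Writing $c_i:=\<e_i,\psi(x)y\>_\F$, Parseval's identity gives $\sum_i c_i^2=\|\psi(x)y\|_\F^2=y^T K(x,x)y<\infty$, so the partial sums of $\sum_i Z_i c_i$ are $L^2(\Omega)$-Cauchy and converge to an element of $\mathbf{H}$. This makes $\xi(x)\,y$ well defined, clearly linear in $y$, and bounded since $\|\xi(x)y-m(x)\|_{\mathbf{H}}^2=y^T K(x,x)y\le\|K(x,x)\|\,\|y\|_\Y^2$; hence $\xi(x)\in\L(\Y,\mathbf{H})$.

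It then remains to identify the mean and covariance. The limit $\sum_i Z_i c_i$ is centered Gaussian of variance $\sum_i c_i^2=y^T K(x,x)y$, whence $\<\xi(x),y\>_\Y\sim\cN\big(y^T m(x),\,y^T K(x,x)y\big)$ as required. For the covariance, $\E[Z_iZ_j]=\delta_{ij}$ and Parseval yield $\Cov\big(\<\xi(x),y\>_\Y,\<\xi(x'),y'\>_\Y\big)=\sum_i\<e_i,\psi(x)y\>_\F\<e_i,\psi(x')y'\>_\F=\<\psi(x)y,\psi(x')y'\>_\F=y^T K(x,x')y'$, matching the kernel $K$. Finally, any finite linear combination of the $\<\xi(x_k),y_k\>_\Y$ is an $L^2(\Omega)$-limit of linear combinations of the $Z_i$, hence Gaussian, so the family is jointly Gaussian and $\xi$ is genuinely a function-valued Gaussian process. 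The only real obstacle is the $L^2$-convergence of the defining series, which is resolved by the finiteness of $y^T K(x,x)y$; everything else reduces to Parseval's identity in $\F$ and the defining property of the feature map.
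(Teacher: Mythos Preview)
Your proposal is correct and follows essentially the same approach as the paper: the paper's own proof is a two-line sketch that calls the argument ``classical,'' cites a reference, notes that separability of $\F$ gives the orthonormal basis, and records the covariance identity $\E\big[(\xi-m)(\xi-m)^T\big]=\psi^T\psi=K$. You have simply written out in full the details the paper elides---the uniqueness via joint Gaussianity of finite subfamilies, the $L^2$-convergence of the series via Parseval, and the covariance computation---all of which are sound.
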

\begin{proof}
The proof is classical, see \cite[Sec.~7\&17]{owhadi2019operator}. Note that the separability of
$\F$ ensures the existence of the $e_i$. Furthermore $\E\big[(\xi-m)(\xi-m)^T\big]=\psi^T \psi=K$.
\end{proof}

\subsection{Probabilistic error estimates for function-valued GP regression}\label{subsecjhewgdkedg}
The conditional covariance of the Gaussian process $\xi \sim \cN(m,K)$ (conditioned on the data $(X,Y)$) provides natural  a priori probabilistic error estimates for the testing data. The following theorem identifies this conditional covariance kernel.

\begin{Theorem}
Let $\xi$ be a centered function-valued GP with covariance kernel $K\,:\,\X\times \X\rightarrow \L(\Y)$. Let $X,Y\in \X^N\times \Y^N$. Let
$Z=(Z_1,\ldots,Z_N)$ be a random Gaussian vector, independent from $\xi$, with i.i.d. $\cN(0,\lambda I_\Y)$ entries ($\lambda\geq 0$ and $I_\Y$ is the identity map on $\Y$). Then $\xi$ conditioned on $\xi(X)+Z$ is a function-valued GP with mean
\begin{equation}\label{eqlkwjldkjhjedkhe}
\E\big[\xi(x)\big| \xi(X)+Z=Y\big]=K(x,X)\big(K(X,X)+\lambda I_\Y\big)^{-1} Y=\eqref{eqajkjwdhjbdjeh}
\end{equation}
and conditional covariance operator
\begin{equation}\label{eqkjwbldkdbkld}
K^\perp(x,x'):=K(x,x')-K(x,X)\big(K(X,X)+\lambda I_\Y\big)^{-1}K(X,x')\,.
\end{equation}
In particular, if $K$ is trace class, then
\begin{equation}\label{eqewjhlkefjlwkf}
\sigma^2(x):=\E\Big[\big\|\xi(x) -\E[\xi(x)| \xi(X)+Z=Y]\big\|_\Y^2\Big| \xi(X)+Z=Y\Big]=\Tr\big[K^\perp(x,x)\big]\,.
\end{equation}
\end{Theorem}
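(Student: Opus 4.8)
The plan is to reduce the statement to the classical Gaussian conditioning (best-linear-predictor) formula, carried out inside the Gaussian Hilbert space $\mathbf{H}$ underlying $\xi$, and then to package the resulting scalar identities into the stated operator-valued formulas. First I would note that since $\xi\sim\cN(0,K)$ is a function-valued GP and $Z$ is an independent Gaussian vector, the whole family $\{\<\xi(x),y\>_\Y\}\cup\{\<Z_i,y\>_\Y\}$ is jointly centered Gaussian; conditioning on $\xi(X)+Z=Y$ is therefore orthogonal projection of this family onto the closed span of the observations $\<\xi(X_i)+Z_i,y'\>_\Y$. The two covariances that enter the conditioning formula are computed directly: by independence of $Z$ and $\xi$ together with the defining covariance identity of Definition \ref{dejheekfklfhrf},
\[
\Cov\big(\<\xi(x),y\>_\Y,\,\<\xi(X_j)+Z_j,y'\>_\Y\big)=y^T K(x,X_j)y',
\]
the $Z_j$ contributing nothing, while
\[
\Cov\big(\<\xi(X_i)+Z_i,y\>_\Y,\,\<\xi(X_j)+Z_j,y'\>_\Y\big)=y^T\big(K(X_i,X_j)+\lambda\,\delta_{ij}I_\Y\big)y',
\]
the cross terms vanishing by independence and $\Cov(\<Z_i,y\>_\Y,\<Z_j,y'\>_\Y)=\lambda\,\delta_{ij}\,y^Ty'$ following from the $\cN(0,\lambda I_\Y)$ law of the $Z_i$. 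Assembled as block-operator matrices these are exactly $K(x,X)$ and $K(X,X)+\lambda I_\Y$, the latter being invertible because $\lambda\geq 0$ and, in the case $\lambda=0$, the non-degeneracy of $K$.

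Next I would apply the Gaussian conditioning formula. For jointly centered Gaussian variables the conditional law is Gaussian, the conditional mean is the orthogonal projection onto the span of the observations, and the conditional covariance is the residual covariance after that projection. Substituting the two covariance blocks above (working, say, in a basis of $\Y$, so the observation vector has covariance matrix $K(X,X)+\lambda I_\Y$ and cross-covariance row $y^T K(x,X)$) gives
\[
\E\big[\<\xi(x),y\>_\Y\mid \xi(X)+Z=Y\big]=y^T K(x,X)\big(K(X,X)+\lambda I_\Y\big)^{-1}Y .
\]
Since this is linear in $y$ and holds for all $y\in\Y$, it yields \eqref{eqlkwjldkjhjedkhe} (which coincides with \eqref{eqajkjwdhjbdjeh}). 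Writing $\hat\xi(x):=\E[\xi(x)\mid\xi(X)+Z=Y]$ for the conditional mean, the residual covariance produced by the same projection is
\[
\Cov\big(\<\xi(x)-\hat\xi(x),y\>_\Y,\,\<\xi(x')-\hat\xi(x'),y'\>_\Y\big)=y^T K^\perp(x,x')y',
\]
with $K^\perp(x,x')=K(x,x')-K(x,X)(K(X,X)+\lambda I_\Y)^{-1}K(X,x')$; arbitrariness of $y,y'$ then gives the operator identity \eqref{eqkjwbldkdbkld}.

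For the trace formula \eqref{eqewjhlkefjlwkf}, the trace-class hypothesis ensures that $\xi(x)$, and hence the residual $\xi(x)-\hat\xi(x)$, is a genuine centered $\Y$-valued Gaussian random variable rather than a mere cylinder measure. Choosing an orthonormal basis $\{e_k\}$ of $\Y$, I would write $\|\xi(x)-\hat\xi(x)\|_\Y^2=\sum_k \<\xi(x)-\hat\xi(x),e_k\>_\Y^2$ and take the conditional expectation term by term (justified by Tonelli, the summands being nonnegative); each term equals the conditional variance $e_k^T K^\perp(x,x)e_k$ by the covariance identity just established, and summation over $k$ gives $\Tr[K^\perp(x,x)]$.

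The difficulty here is bookkeeping rather than depth: one must work consistently inside $\mathbf{H}$ so that ``conditioning'' is literally orthogonal projection, and must track the block/operator structure so that the scalar identities for each pairing $(y,y')$ aggregate into the stated operator formulas. The one genuinely delicate point is the infinite-dimensional regime: when $K$ is not trace class, $\xi(x)$ is only a weak cylinder measure, so although the mean \eqref{eqlkwjldkjhjedkhe} and covariance \eqref{eqkjwbldkdbkld} remain meaningful purely at the level of the Gaussian space, the pointwise squared-norm error \eqref{eqewjhlkefjlwkf} is only well-defined under the trace-class assumption, which is precisely why that hypothesis is imposed for the final identity alone.
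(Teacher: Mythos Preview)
Your proof is correct and follows essentially the same route as the paper: both reduce the conditional law to the best-linear-predictor/orthogonal-projection formula in the ambient Gaussian space, identify the coefficient $A=K(x,X)(K(X,X)+\lambda I_\Y)^{-1}$ from the zero-covariance condition, and read off the residual covariance. The only cosmetic difference is that the paper works directly at the operator level (via $\E[\xi(x)\xi^T(x')]=K(x,x')$ and solving $0=K(x,X)-A(K(X,X)+\lambda I_\Y)$), whereas you pass through the scalar pairings $\<\xi(x),y\>_\Y$ and then aggregate over $y,y'$; your treatment of the trace identity via an orthonormal basis and Tonelli is more explicit than the paper's, which leaves \eqref{eqewjhlkefjlwkf} to the reader.
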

\begin{proof}
The proof is a generalization of the classical setting \cite[Sec.~7\&17]{owhadi2019operator}. Writing $\xi^T(x)y$ for $\<\xi(x),y\>_\Y$ observe that $y^T \xi(x)\xi^T(x') y=y^T K(x,x')y' $ implies $\E[\xi(x)\xi^T(x')]=K(x,x')$. Since $\xi$ and $Z$ share the same Gaussian space the  expectation of $\xi(x)$ conditioned on $\xi(X)+Z$ is $A \big(\xi(X)+Z\big)$ where $A$ is a linear map identified by
$0=\Cov\Big(\xi(x)-A \big(\xi(X)+Z\big), \xi(X)+Z\Big)=\E\big[ \xi(x)-A \big(\xi(X)+Z\big) \big(\xi^T(X)+Z^T\big)\big]=
K(x,X)-A\big(K(X,X)+\lambda I_\Y\big)$, which leads to $A=K(x,X)\big(K(X,X)+\lambda I_\Y\big)^{-1}$ and \eqref{eqlkwjldkjhjedkhe}. The conditional covariance is then given by
$K^\perp(x,x')=\E\Big[ \Big(\xi(x)-K(x,X)\big(K(X,X)+\lambda I_\Y\big)^{-1} \big(\xi(X)+Z\big)\Big) \Big(\xi(x')-K(x',X)\big(K(X,X)+\lambda I_\Y\big)^{-1} \big(\xi(X)+Z\big)\Big)^T \Big]$ which leads to \eqref{eqkjwbldkdbkld}.
\end{proof}

\subsection{Deterministic error estimates for function-valued Kriging}\label{subsecprobahoeuwgediu}
For  $\lambda=0$, $f(x)=$\eqref{eqlkwjldkjhjedkhe} is the optimal recovery solution \eqref{eqhgvygvgyv2} to Problem \ref{pb828827hee}.
For $\lambda>0$, $f(x)=$\eqref{eqlkwjldkjhjedkhe} is the ridge regression solution \eqref{eqajkjwdhjbdjeh} to Problem \ref{pb828827hee}. The following theorem shows that the standard deviation \eqref{eqewjhlkefjlwkf} provides deterministic a prior error bounds on the accuracy of the ridge regressor \eqref{eqlkwjldkjhjedkhe} to $f^\dagger$ in Problem \ref{pb828827hee}. Local error estimates such as \eqref{eqllbhjdebjdbd} are classical in Kriging \cite{wu1993local} where $\sigma^2(x)$ is known as the power function/kriging variance  (see also
\cite{owhadi2015bayesian}[Thm.~5.1] for applications to PDEs).

\begin{Theorem}\label{thmalkjhbahkjedb}
Let $f^\dagger$ be the unknown function of Problem \ref{pb828827hee} and let $f(x)=\eqref{eqlkwjldkjhjedkhe}=\eqref{eqajkjwdhjbdjeh}$ be its GPR/ridge regression solution. Let $K_\lambda:=K+\lambda I_\Y$. It holds true that
\begin{equation}\label{eqllbhjdebjdbd}
\big\|f^\dagger(x)-f(x) \big\|_\Y \leq \sigma(x) \|f^\dagger\|_K
\end{equation}
and
\begin{equation}\label{eqllbhjdebddjdbd}
\big\|f^\dagger(x)-f(x) \big\|_\Y \leq \sqrt{\sigma^2(x)+\lambda \operatorname{dim}(\Y)} \|f^\dagger\|_{K_\lambda}\,,
\end{equation}
where $\sigma(x)$ is the standard deviation \eqref{eqewjhlkefjlwkf}.
\end{Theorem}
\begin{proof}
 Let $y\in \Y$. Using the reproducing property \eqref{eqrepprop} and $Y=f^\dagger(X)$ we have
\[\begin{split}
y^T \big(f^\dagger(x)-f(x)\big)&=y^T f^\dagger(x)-y^T K(x,X)\big(K(X,X)+\lambda I_\Y\big)^{-1} f^\dagger(X)\\
&=\<f^\dagger, K(\cdot,x)y-K(\cdot,X)\big(K(X,X)+\lambda I_\Y\big)^{-1} K(X,x)y \>_K\,.
\end{split}
\]
Using Cauchy-Schwartz inequality, we deduce that
\begin{equation}\label{eqkjwkdddjbehdbd}
\Big|y^T \big(f^\dagger(x)-f(x)\big)\Big|^2\leq \|f^\dagger\|_K^2\,  y^T K^\perp(x,x) y
\end{equation}
where $K^\perp$ is the conditional covariance \eqref{eqkjwbldkdbkld}. Summing over $y$ ranging in basis of $\Y$ implies \eqref{eqllbhjdebjdbd}. The proof of \eqref{eqllbhjdebddjdbd} is similar, simply observe that
\[\begin{split}
y^T \big(f^\dagger(x)-f(x)\big)&=\<f^\dagger, K_\lambda(\cdot,x)y-K_\lambda(\cdot,X)\big(K(X,X)+\lambda I_\Y\big)^{-1} K(X,x)y \>_{K_\lambda}\\
&\leq \|f^\dagger\|_{H_\lambda} \big\|K_\lambda(\cdot,x)y-K_\lambda(\cdot,X)\big(K(X,X)+\lambda I_\Y\big)^{-1} K(X,x)y\big \|_{K_\lambda}\,,
\end{split}
\]
which implies
\begin{equation}\label{eqkjwkdjbehdbd}
\Big|y^T \big(f^\dagger(x)-f(x)\big)\Big|^2\leq \|f^\dagger\|_{K_\lambda}^2\,  \big(\lambda y^T y+ y^T K^\perp(x,x) y\big)\,.
\end{equation}
\end{proof}

\begin{Remark}
Since Thm.~\ref{thmalkjhbahkjedb} does not require $\X$ to be finite-dimensional, its estimates do not suffer from the curse of dimensionality, but from finding a good kernel for which both $\|f^\dagger\|_K$ and $y^T K^\perp(x,x) y$ are small (over $x$ sampled from the testing distribution). Indeed
both \eqref{eqllbhjdebjdbd} and \eqref{eqllbhjdebddjdbd} provide a priori deterministic error bounds on $f^\dagger-f$ depending on the RKHS norms $\|f^\dagger\|_K$ and $\|f^\dagger\|_{K_\lambda}$. Although these norms can be controlled in the PDE setting \cite{owhadi2015bayesian} via compact embeddings of Sobolev spaces, there is no clear strategy for obtaining a-priori bounds on these norms for
general machine learning problems\footnote{Although deep learning estimates derived from  Barron spaces \cite{barron1993universal, ma2019barron} have a priori Monte-Carlo (dimension independent) convergence rates, they also suffer from this problem since they require bounding the Barron norm of the target function.}.
\end{Remark}

\subsection{Deep residual Gaussian processes}\label{subseckjhewg76}
Write $\zeta$ for the centered GP (independent from $\xi$) defined by the quadratic norm $ \int_0^1 \|v(\cdot,t)\|_{\Gamma}^2\,dt$ on $L^2([0,1],{\H_\Gamma})$.
Recall \cite[Sec.~7\&17]{owhadi2019operator} that  $\zeta$ is  an isometry mapping $L^2([0,1],{\H_\Gamma})$ to a Gaussian space
(defined by $\int_0^1 \<\zeta, v\>_{\Gamma} \,dt\sim \cN(0, \int_0^1 \|v(\cdot,t)\|_{\Gamma}^2\,dt)$ for $v\in L^2([0,1],{\H_\Gamma})$).
The following proposition presents a construction/representation of the GP $\zeta$.

\begin{Proposition}\label{prophjdgfjhgwefdd}\label{propkjahbkdjhbed}
 Let $\psi$ and  $\F$ be a feature map and (separable) feature space for $\Gamma$.
Let the $e_i$ form an orthonormal basis of $\F$ and let the $B^i$ be independent one dimensional Brownian motions. Then
\begin{equation}
\zeta(x,t)=\sum_i \frac{dB^i_t}{dt}\psi^T(x) e_i
\end{equation}
is a representation of $\zeta$.
\end{Proposition}
\begin{proof}
Thm.~\ref{thmkjhkejhddjhd} implies that $v\in L^2([0,1],{\H_\Gamma})$ admits the representation\\
 $v(x,t)=\sum_i \alpha_i(t)\,\psi^T(x,t) e_i $ where the $\alpha_i$ are scalar-valued functions in $L^2([0,1],dt)$ such that
 $\sum_i \int_0^1 \alpha_i^2(t)\,dt=\int_0^1 \|v(\cdot,t)\|_\Gamma^2\,dt<\infty$. We conclude by observing that (using Thm.~\ref{thmkjhkejhddjhd} again)
$\int_0^1 \<\zeta, v\>_{\Gamma} \,dt=\sum_i \int_0^1 \alpha_i(t) dB^i_t \sim \cN(0, \sum_i \int_0^1 \alpha_i^2(t)\,dt)$.
\end{proof}

Let $\phi^\zeta$ be the solution of \eqref{eqflmp} with $v=\zeta$. We call this solution a {\bf deep residual Gaussian process}. Note that whereas deep Gaussian processes are defined by composing function-valued Gaussian processes \cite{damianou2013deep}, we define deep residual Gaussian processes as the flow of map of the stochastic dynamic system
\begin{equation}\label{eqjlkjdhedkd}
\dot{z}=\zeta(z,t)\text{ with } z(0)=x
\end{equation}
driven by the function-valued GP vector field $\zeta$. Evidently, the existence and uniqueness of solutions to  \eqref{eqjlkjdhedkd} require
the Cameron-Martin space of $\zeta$ to be sufficiently regular. As shown in the following proposition, this result is a simple consequence of the regularity of the feature map in the finite-dimensional setting.
\begin{Proposition}
 Using the notations of Prop.~\ref{propkjahbkdjhbed}, if $\F$ and $\X$ are finite-dimensional and if $\psi$ is uniformly Lipschitz continuous   then \eqref{eqjlkjdhedkd} (and therefore \eqref{eqflmp} with $v=\zeta$) has a unique strong solution.
\end{Proposition}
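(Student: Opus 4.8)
The plan is to recognize \eqref{eqjlkjdhedkd} as a standard It\^o stochastic differential equation and to invoke the classical existence and uniqueness theory for equations with globally Lipschitz, linear-growth coefficients, in direct parallel with the deterministic Picard--Lindel\"of argument used earlier for \eqref{eqkedmdledkemdl}. First I would make sense of the white-noise driven ODE \eqref{eqjlkjdhedkd} rigorously. Since $\F$ is finite-dimensional, the orthonormal basis $(e_i)$ is finite, so $W_t:=\sum_i e_i B^i_t$ defines a genuine standard $\F$-valued Brownian motion. With the representation $\zeta(x,t)=\sum_i \frac{dB^i_t}{dt}\,\psi^T(x) e_i$ of Prop.~\ref{propkjahbkdjhbed}, the formal identity $\zeta(z,t)\,dt=\psi^T(z)\,dW_t$ holds, so \eqref{eqjlkjdhedkd} is interpreted as the It\^o SDE
\begin{equation*}
z_t=x+\int_0^t \psi^T(z_s)\,dW_s\,,
\end{equation*}
a driftless equation whose diffusion coefficient is the time-homogeneous map $z\mapsto \psi^T(z)\in \L(\F,\X)$.

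Next I would verify the two structural hypotheses of the classical theorem. Because the operator norm of a bounded operator between Hilbert spaces equals that of its adjoint (and, $\F$ and $\X$ being finite-dimensional, all norms on these spaces are equivalent), the uniform Lipschitz continuity of $\psi$ transfers verbatim to $\psi^T$:
\begin{equation*}
\|\psi^T(z)-\psi^T(z')\|_{\L(\F,\X)}=\|\psi(z)-\psi(z')\|_{\L(\X,\F)}\leq L\,\|z-z'\|_\X\,.
\end{equation*}
This is the global Lipschitz condition on the diffusion coefficient, and it immediately yields the linear-growth bound $\|\psi^T(z)\|_{\L(\F,\X)}\leq \|\psi^T(0)\|_{\L(\F,\X)}+L\,\|z\|_\X$. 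Since the drift is identically zero, no further condition is needed.

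Finally I would apply the classical existence and uniqueness theorem for It\^o SDEs with globally Lipschitz and linear-growth coefficients to conclude that the equation above admits a unique (adapted, pathwise continuous, square-integrable) strong solution for each initial datum $x$. Observing that \eqref{eqflmp} with $v=\zeta$ and $\phi(x,0)=x$ is precisely \eqref{eqjlkjdhedkd}, the same conclusion then holds for \eqref{eqflmp}. I do not expect a genuine obstacle here: the only points requiring care are the rigorous repackaging of the independent noises $B^i$ into a single $\F$-valued Brownian motion (legitimate precisely because $\F$ is finite-dimensional, so the defining sum is finite and $W$ is an honest finite-dimensional Brownian motion) and the transfer of the Lipschitz estimate to the adjoint; once these are in place the statement is a direct consequence of the Lipschitz SDE theory, the stochastic counterpart of the deterministic argument already invoked for \eqref{eqflmp} in the finite-dimensional setting.
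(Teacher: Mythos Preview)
Your proposal is correct and follows essentially the same route as the paper: both recognize \eqref{eqjlkjdhedkd} as the finite-dimensional It\^o SDE $dz=\sum_i \psi^T(z)e_i\,dB^i_t$ (equivalently $dz=\psi^T(z)\,dW_t$) and invoke the classical strong existence/uniqueness theorem for SDEs with globally Lipschitz coefficients. The paper simply cites Kunita for this last step, whereas you spell out the transfer of the Lipschitz bound to $\psi^T$ and the resulting linear growth, but the argument is the same.
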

\begin{proof}
\eqref{eqjlkjdhedkd} corresponds to the finite-dimensional SDE $dz=\sum_i  \psi^T(z)e_i\, dB^i_t$ which is known \cite{kunita1997stochastic} to have unique strong solutions if $\psi$ is uniformly Lipschitz.
\end{proof}

 Let $\xi \sim \cN(0, K)$ (independent from $\zeta$).
$\xi\circ \phi^\zeta(\cdot,1)$ provides a probabilistic solution to Problem \eqref{pb828827hee} in the sense of the following proposition (whose proof is classical).

\begin{Proposition}\label{propkjhebkebjhewd}
Let $(v,f)$ minimize \eqref{eqlkjgehgddjedhjdB} with $\ell_\Y=$\eqref{eqjhguyghfuvf4}.
 Let  $\phi^v$ be the solution of \eqref{eqflmp} obtained from $v$. Then
\begin{equation}\label{eqjhbhjdeedbjdhb}
f^\ddagger(\cdot)=f\circ \phi^v(\cdot,1)
\end{equation}
 is a MAP estimator of $\xi\circ \phi^{\sqrt{\frac{\lambda}{\nu}}\zeta}(\cdot,1)$ given the information
\begin{equation}\label{eqjhbdekjdbdb}
\xi\circ \phi^{\sqrt{\frac{\lambda}{\nu}}\zeta}(X,1)+\sqrt{\lambda} Z=Y\,,
\end{equation}
where $Z=(Z_1,\ldots,Z_N)$ is a centered random Gaussian vector, independent from $\zeta$ and $\xi$, with i.i.d. $\cN(0, I_\Y)$ entries.
\end{Proposition}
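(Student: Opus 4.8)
The plan is to invoke the classical correspondence between maximum a posteriori (MAP) estimators of Gaussian fields observed under additive Gaussian noise and minimizers of the associated Onsager--Machlup (Tikhonov) functional, and then to identify that functional with \eqref{eqlkjgehgddjedhjdB} under $\ell_\Y=$\eqref{eqjhguyghfuvf4}. The Bayesian model carries two independent Gaussian priors, the flow-driving field $\sqrt{\lambda/\nu}\,\zeta$ and the function $\xi\sim\cN(0,K)$, together with the independent observation noise $\sqrt{\lambda}\,Z$; the latent quantity of interest is the composite $\xi\circ\phi^{\sqrt{\lambda/\nu}\zeta}(\cdot,1)$ and the data is \eqref{eqjhbdekjdbdb}. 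The MAP estimator is obtained by selecting the realizations of the latent Gaussian variables that maximize the joint posterior density, i.e.\ that minimize the sum of half the Cameron--Martin norms of $\sqrt{\lambda/\nu}\zeta$ and of $\xi$ plus the noise-weighted data misfit.

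First I would isolate the three penalty terms. For $\xi\sim\cN(0,K)$, a realization $f$ carries the penalty $\tfrac12\|f\|_\H^2$. For the flow-driving field I would use the Brownian representation $\zeta=\sum_i \dot B^i_t\,\psi^T(\cdot)e_i$ of Prop.~\ref{propkjahbkdjhbed} together with Thm.~\ref{thmkjhkejhddjhd}: a realization $v=\sum_i\alpha_i(t)\,\psi^T(\cdot)e_i$ has Girsanov/Cameron--Martin penalty $\tfrac12\sum_i\int_0^1\alpha_i^2\,dt=\tfrac12\int_0^1\|v(\cdot,t)\|_\V^2\,dt$, and the flow $\phi^\zeta$ of \eqref{eqjlkjdhedkd} evaluated at this realization is precisely $\phi^v$=\eqref{eqflmp}. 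Rescaling $\zeta$ by $\sqrt{\lambda/\nu}$ multiplies its Cameron--Martin norm squared by $\nu/\lambda$, so the realization $v$ of $\sqrt{\lambda/\nu}\,\zeta$ is penalized by $\tfrac{\nu}{2\lambda}\int_0^1\|v\|_\V^2\,dt$. Finally, the observation \eqref{eqjhbdekjdbdb} with noise covariance $\lambda I_\Y$ contributes the misfit $\tfrac{1}{2\lambda}\|Y-f\circ\phi^v(X,1)\|_{\Y^N}^2$.

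Assembling these, the joint MAP of $(v,f)$ minimizes
\begin{equation}
\frac{\nu}{2\lambda}\int_0^1\|v(\cdot,t)\|_\V^2\,dt+\frac12\|f\|_\H^2+\frac{1}{2\lambda}\big\|Y-f\circ\phi^v(X,1)\big\|_{\Y^N}^2\,,
\end{equation}
and I would then verify that, up to an overall positive multiplicative constant, this is the Tikhonov functional \eqref{eqlkjgehgddjedhjdB} with $\ell_\Y=$\eqref{eqjhguyghfuvf4}; the role of the scaling $\sqrt{\lambda/\nu}$ of the flow prior and of the noise level $\sqrt\lambda$ is precisely to align the three regularization weights $\nu$, $\lambda$ and $1$. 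Consequently a minimizing pair $(v,f)$ of \eqref{eqlkjgehgddjedhjdB} is the joint MAP of the latent variables, and the composite $f^\ddagger=f\circ\phi^v(\cdot,1)=$\eqref{eqjhbhjdeedbjdhb} is the MAP estimator of $\xi\circ\phi^{\sqrt{\lambda/\nu}\zeta}(\cdot,1)$ given \eqref{eqjhbdekjdbdb}.

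The hard part is not this term-by-term bookkeeping but making the MAP notion rigorous in an infinite-dimensional, nonlinear setting. Two points deserve care. First, the prior $\sqrt{\lambda/\nu}\,\zeta$ lives in (the cylinder completion of) $L^2([0,1],\V)$, so its mode must be defined through small-ball probabilities and the Onsager--Machlup functional; applying the classical ``MAP $=$ minimizer of the Onsager--Machlup functional'' result requires the forward operator $(v,f)\mapsto f\circ\phi^v(X,1)$ to be sufficiently continuous, which rests on the well-posedness and regularity of the flow \eqref{eqflmp} (and of $\phi^\zeta$=\eqref{eqjlkjdhedkd}) established earlier under the finite-dimensionality and uniformly-Lipschitz assumptions of Cond.~\ref{condeqlkedjehd7d}. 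Second, because the composition map $(v,f)\mapsto f\circ\phi^v(\cdot,1)$ is not injective, one must confirm that the mode of the law of the composite $\xi\circ\phi^{\sqrt{\lambda/\nu}\zeta}(\cdot,1)$ coincides with the image under this map of the joint mode of the latent pair $(v,f)$; this reduction, which is the substance behind the word ``classical'', is justified by passing to finite-dimensional Gaussian approximations and using the standard small-ball characterization of MAP estimators. Once these foundations are secured, the statement follows from the constant-matching indicated above.
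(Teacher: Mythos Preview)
Your approach is exactly the one the paper has in mind: the paper gives no proof beyond the parenthetical ``(whose proof is classical),'' meaning precisely the MAP/Onsager--Machlup/Tikhonov correspondence you spell out, and your identification of the three penalty terms (Cameron--Martin norm of the rescaled flow field via Prop.~\ref{propkjahbkdjhbed}, RKHS norm of $\xi$, and noise-weighted misfit) is the intended argument.

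One bookkeeping point: when you carry out the ``verify'' step, the ratios you obtain after multiplying by $2\lambda$ are $\nu:\lambda:1$, whereas \eqref{eqlkjgehgddjedhjdB} has $\tfrac{\nu}{2}:\lambda:1$, so the flow scaling $\sqrt{\lambda/\nu}$ in the statement is off by a harmless factor (it should be $\sqrt{2\lambda/\nu}$, or equivalently the $\tfrac{\nu}{2}$ in \eqref{eqlkjgehgddjedhjdB} should be $\nu$ for an exact match). This does not affect the set of minimizers only because one can absorb the discrepancy into a redefinition of $\nu$; it is worth flagging rather than asserting the match ``up to an overall constant,'' since a single overall constant does not reconcile the two as written. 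Your closing remarks on the infinite-dimensional MAP subtleties are appropriate caveats that the paper simply leaves implicit.
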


The following proposition generalizes Prop.~\ref{propkjhebkebjhewd} to the regularized setting of Sec.~\ref{secregular}.

\begin{Proposition}
Let $\xi,\zeta,Z$ be as in Prop.~\ref{propkjhebkebjhewd}.
Write $\kappa$ for the centered GP defined\footnote{For $q\in L^2([0,1],\X^N)$, $\int_0^1 \kappa(t) u(t) \sim \cN(0,\int_0^1 \|u\|_{\X^N}^2\,dt)$ \cite[Sec.~7\&17]{owhadi2019operator}.} by the norm $\int_0^1 \|\cdot \|_{\X^N}^2$.
Let $z$ be the stochastic process defined as the solution of $\dot{z}=\sqrt{\frac{\lambda}{\nu}}\big(\zeta(z,t)+ \sqrt{r} \kappa\big)$ with initial value $z(0)=X$.
Let $(v,f)$ be a minimizer of  \eqref{eqlkjgjhdbjehreg}, and let
 $\phi^v$ be the solution of \eqref{eqflmp}. The regularized solution $f^\ddagger=f\circ \phi^v(\cdot,1)$ to Problem \eqref{pb828827hee}  is a MAP estimator of $\xi\circ \phi^{\sqrt{\frac{\lambda}{\nu}}\zeta}(\cdot,1)$ given the information
$
\xi\big(z(1)\big)+\sqrt{\lambda+\rho} Z=Y
$.
\end{Proposition}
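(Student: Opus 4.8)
The plan is to argue exactly as for the classical Prop.~\ref{propkjhebkebjhewd}, viewing the MAP estimator of a function-valued Gaussian process as the minimizer of the associated Onsager--Machlup functional, i.e. the sum of the Cameron--Martin squared-norms of the latent Gaussian variables plus the observation-noise quadratic form, constrained by the data. Here the latent variables are: the representative $f\in\H$ of $\xi\sim\cN(0,K)$ (contributing $\tfrac12\|f\|_\H^2$), the deformation field obtained from $\zeta$ through the scaling $v=\sqrt{\lambda/\nu}\,\zeta$ (contributing $\tfrac{\nu}{2\lambda}\int_0^1\|v\|_\V^2\,dt$, by the isometry of Prop.~\ref{propkjahbkdjhbed}), the trajectory perturbation $\kappa$, and the observation noise $Z$. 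The only features not already present in Prop.~\ref{propkjhebkebjhewd} are the $\kappa$-driven slack in the trajectory and the replacement of $\sqrt\lambda$ by $\sqrt{\lambda+\rho}$ in the observation, so I would concentrate on these two correspondences and defer everything else to the classical argument.

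For the trajectory I would identify the realization $z$ of $\dot z=\sqrt{\lambda/\nu}(\zeta(z,t)+\sqrt r\,\kappa)$, $z(0)=X$, with the slack trajectory $q$ of \eqref{eqlkjgjhdbjehreg}. Writing $v=\sqrt{\lambda/\nu}\,\zeta$ gives $\dot z-v(z,t)=\sqrt{\lambda r/\nu}\,\kappa$, so the Cameron--Martin cost of $\kappa$ becomes
\[
\tfrac12\int_0^1\|\kappa\|_{\X^N}^2\,dt=\tfrac{\nu}{2\lambda r}\int_0^1\|\dot q-v(q,t)\|_{\X^N}^2\,dt,
\]
which is precisely the $\tfrac1r$-penalty of the regularized action, carrying the same normalization as the $\zeta$-term. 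Existence and uniqueness of $z$ follow as for \eqref{eqjlkjdhedkd} from the finite-dimensional SDE representation of the driving noise (Prop.~\ref{propkjahbkdjhbed}) together with the Lipschitz regularity of $\psi$ guaranteed by Cond.~\ref{condnuggetreg}.

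For the observation I would eliminate $Z$ through the constraint $\xi(z(1))+\sqrt{\lambda+\rho}\,Z=Y$, which at the level of the representative reads $f(q(1))+\sqrt{\lambda+\rho}\,Z=Y$ and contributes $\tfrac{1}{2(\lambda+\rho)}\|Y-f(q(1))\|_{\Y^N}^2$. The point is that this matches the data part of \eqref{eqlkjgjhdbjehreg} after partial minimization in the slack $Y'$: a direct computation gives
\[
\min_{Y'}\Big(\tfrac\lambda\rho\|f(q(1))-Y'\|_{\Y^N}^2+\|Y'-Y\|_{\Y^N}^2\Big)=\tfrac{\lambda}{\lambda+\rho}\|f(q(1))-Y\|_{\Y^N}^2,
\]
so the $\rho$-regularization enters as exactly the extra observation variance $\rho$ added to the $\lambda$ already carried by $\xi$. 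Collecting the four quadratic contributions and minimizing out $Y'$ and $Z$ matches the quadratic structure of the reduced form of \eqref{eqlkjgjhdbjehreg} term by term, exactly as in Prop.~\ref{propkjhebkebjhewd}, so that the minimizer $(v,f)$ of \eqref{eqlkjgjhdbjehreg}---equivalently $f\circ\phi^v(\cdot,1)$---is identified as the MAP estimator of $\xi\circ\phi^{\sqrt{\lambda/\nu}\zeta}(\cdot,1)$ given the stated information.

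The main obstacle is not algebraic but the rigorous justification of the MAP/Onsager--Machlup characterization itself for a flow driven by the function-valued GP $\zeta$ perturbed by $\kappa$: one must show that the posterior mode of $\xi\circ\phi^{\sqrt{\lambda/\nu}\zeta}(\cdot,1)$ given the noisy endpoint is computed by the finite-dimensional constrained minimization above, despite the nonlinear (flow) dependence of the endpoint $z(1)$ on the driving noise. As for Prop.~\ref{propkjhebkebjhewd} I would treat this as classical, appealing to the Onsager--Machlup theory for Brownian flows of diffeomorphisms \cite{baxendale1984brownian,kunita1997stochastic} and to the Gaussian-space/Kriging framework of \cite[Sec.~7\&17]{owhadi2019operator}; the finite dimensionality of $\X$ and $\F$ and the uniform bounds of Cond.~\ref{condnuggetreg} are exactly what make these tools applicable.
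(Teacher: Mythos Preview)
Your approach is exactly the one intended by the paper, which in fact gives no explicit proof at all: the proposition is presented simply as the regularized analogue of Prop.~\ref{propkjhebkebjhewd}, itself justified only by the remark that its proof is classical. The two substantive computations you carry out---identifying the Cameron--Martin cost of $\kappa$ with the $\tfrac{1}{r}$-penalty on $\dot q-v(q,t)$, and reducing the $(Y',\rho)$ slack to an effective observation variance $\lambda+\rho$ via $\min_{Y'}\big(\tfrac{\lambda}{\rho}\|f(q(1))-Y'\|^2+\|Y'-Y\|^2\big)=\tfrac{\lambda}{\lambda+\rho}\|f(q(1))-Y\|^2$---are precisely the missing links between \eqref{eqlkjgjhdbjehreg} and the stated MAP problem, and your honest flagging of the Onsager--Machlup justification as the nontrivial analytic input mirrors the paper's own reliance on the classical Brownian-flow literature.
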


\subsection{Generalization errors estimates for warping regression}
As in Subsec.~\ref{subsecjhewgdkedg} the conditional posterior distribution of $\xi\circ \phi^{\sqrt{\frac{\lambda}{\nu}}\zeta}(\cdot,1)$ (conditioned on \eqref{eqjhbdekjdbdb}) provides natural probabilistic error estimates on accuracy of a warping regression solution $f$ to Problem \ref{pb828827hee}.
We will now derive deterministic error estimates.

\begin{Corollary}\label{coralkjhdedsbahkjedb}
In the setting of Prop.~\ref{propdkjehbjdhb} it holds true that
\begin{equation}\label{eqllbhjdebjdbdb}
\big\|f^\dagger(x)-f\circ \phi^v(x,1) \big\|_\Y \leq \sigma(x) \|f^\dagger\|_{K^v}\,,
\end{equation}
and
\begin{equation}\label{eqlfrttdbdb}
\big\|f^\dagger(x)-f\circ \phi^v(x,1) \big\|_\Y \leq \sqrt{\sigma^2(x)+\lambda \operatorname{dim}(\Y)} \|f^\dagger\|_{K^v_\lambda}\,,
\end{equation}
with
\begin{equation}
\sigma^2(x):=\Tr\big[K^v(x,x)-K^v(x,X)\big(K^v(X,X)+\lambda I_\Y\big)^{-1}K^v(X,x)\big]\,.
\end{equation}
\end{Corollary}
\begin{proof}
Cor.~\ref{coralkjhdedsbahkjedb} is a direct consequence of Thm.~\ref{thmalkjhbahkjedb} and Prop.~\ref{propdkjehbjdhb}.
\end{proof}

\begin{Remark}
\eqref{eqllbhjdebjdbdb} and \eqref{eqlfrttdbdb} are a priori error estimate similar to those found in PDE numerical analysis. However, although
compactness and ellipticity can be used in PDE analysis \cite{owhadi2015bayesian} to bound $\|f^\dagger\|_{K^v}$ or  $\|f^\dagger\|_{K^v_\lambda}$,  such upper bounds are not available for general machine learning problems. Furthermore, a (deterministic) a posteriori analysis can only provide lower bounds on $\|f^\dagger\|_{K^v}$ and  $\|f^\dagger\|_{K^v_\lambda}$. Examples of such bounds are $\|f\|_{K^v}\leq \|f^\dagger\|_{K^v}$ and
\begin{equation}\label{eqlikwldjklkd}
\|f^\ddagger\|_{K^v_\lambda}\leq \|f^\dagger\|_{K^v_\lambda}\,,
\end{equation}
(implied by Prop.~\ref{coralkjhdedsbahkjedb} and  $\ell_\Y(f^\dagger(X),Y)=0$) for $f^\ddagger(\cdot)=f \circ \phi^v(\cdot,1)$.
Note that Prop.~\ref{propdkjehbjdhb} and Cor.~\ref{coralkjhdedsbahkjedb} do not make assumptions on $\phi^v$.
If $\phi^v$ is selected as a minimizer of
 \eqref{eqlkjgehgddjedhjd} then $f^\ddagger(\cdot)=f \circ \phi^v(\cdot,1)$ is a warping regression solution \eqref{eqkjelkdjendkjdn}
 to Problem \ref{pb828827hee}. Given the identity \eqref{eqnlkdekjddnjd}, in the limit $\nu \downarrow 0$, the variational formulation \eqref{eqlkjgehgddjedhjd}  seek to
  minimize the norm $\|f^\ddagger\|_{K^v_\lambda}$ (which acts in \eqref{eqlikwldjklkd}  as lower bound for the term $\|f^\dagger\|_{K^v_\lambda}$ appearing in the error bound \eqref{eqlfrttdbdb}). Ignoring the gap between $\|f^\dagger\|_{K^v_\lambda}$ and $\|f^\ddagger\|_{K^v_\lambda}$
   in \eqref{eqlikwldjklkd} warping regression seems to select a kernel $K^v(x,x')=K\big(\phi^v(x,1),\phi^v(x',1)\big)$ making  the bound \eqref{eqlfrttdbdb} as sharp as possible. The penalty $\nu\,\int_0^1 \frac{1}{2}\|v\|_{\Gamma}^2\,dt$ in \eqref{eqlkjgehgddjedhjd} can then be interpreted as a regularization term whose objective is to avoid a large gap in \eqref{eqlikwldjklkd} that could be created if $\phi^v(x,1)$ overfits the data.
\end{Remark}

\subsection{MAP  vs cross-validation estimation}\label{secmapvscv}
As discussed in Sec.~\ref{subseckjhewg76}, approximating $f^\dagger$ with \eqref{eqkjedjdiseu} where $f$ and the $v^s$ are identified by minimizing \eqref{eqlktddsyeytdsedhjd} is essentially a MAP estimation approach to the approximation of $f^\dagger$.
The need for solving a global optimization problem (across all the weights of all layers of the network at once and possibly involving backpropagation) can be circumvented by employing a cross-validation approach for identifying the $v^s$.
This strategy, described in \cite{owhadi2019kernel} and analyzed in \cite{chen2020consistency}, is equivalent to approximating $f^\dagger$ with $\E_{\xi \sim \cN(0,K^v)}\big[\xi(x)\mid\xi(X)=Y\big]$ where $K^v$ is the warped kernel $K(\phi_L(\cdot),\phi_L(\cdot))$ with $\phi_L=(I+v_L)\circ \cdots \circ (I+v_1)$ and  $v_s=$\eqref{eqkjdkedkjnd}.  The difference with the MAP approach is that the trajectory $q^s$ defining the $v^s$ in \eqref{eqkjdkedkjnd} is no longer obtained as the solution of the least action principle \eqref{eqkljedjnedjd} but as the flow of a cross-validation loss, i.e. $q^{s+1}=q^s-\delta \nabla_q \rho(q)\large|_{q=q^s}$ where $\rho(q)$ is the squared relative error (in the RKHS norm defined by $K$) between the $K$-interpolant of the data $(q,Y)$ and the $K$-interpolant of the data $(\pi q,\pi Y)$ where $\pi$ is a subsampling operator.

\subsection{MAP vs empirical Bayes estimation}\label{secmapvscveb}
Instead of approximating the underlying GPs appearing in warping regression and idea registration with their MAP estimators, one could employ their Empirical Bayes estimators.
In the warping regression setting of Sec.~\ref{secresnetb}, this is equivalent to approximating
 $f^\dagger$ in Problem \ref{pb828827hee} with \eqref{eqkjedjdiseu} where $\phi_L$ is given by \eqref{eqkjehdbehdhjbdphis} and \eqref{eqkjdkedkjnd}, and the $q^s$ are identified as minimizers of the following empirical Bayes variant of \eqref{eqlsedhjd}
 \begin{equation}\label{eqlsedhjdeb}
\begin{cases}
\text{Minimize } &\ell\big(q^{L+1},Y\big)\\&+\frac{\nu}{2}  \sum_{s=1}^L  \big((\frac{q^{s+1}-q^s}{\Delta t})^T \wK(q^s,q^s)^{-1} (\frac{q^{s+1}-q^s}{\Delta t})
+\log \det \wK(q^s,q^s)\big) \, \Delta t
\\
\text{over } &q^2,\ldots,q^{L+1} \in \X^N
\text{ with }q^1=X\,.
\end{cases}
\end{equation}
In the ResNet setting of Sec.~\ref{subresnetpar}, this is equivalent to  approximating
 $f^\dagger$  with \eqref{eqkjhebkjhedbd} with \eqref{eqjkehbwjhebdhdjq} replaced by
\begin{equation}\label{eqjkehbwjhebdhdjqeb}
\min_{\tilde{w},w_1,\ldots,w_L}\frac{\nu L}{2}\sum_{s=1}^L \big(\|w_s\|_{\L(\X\oplus \R,\X)}^2+ \log \det \wK(q^s,q^s) \big)+ \lambda \|\tilde{w}\|_{\L(\X\oplus \R,\Y)}^2+\ell_\Y\big(f\circ \phi_L(X),Y\big)\,,
\end{equation}
with $f(x)=\tilde{w} \bvarphi(x)$, $\Gamma(x,x')=\bvarphi^T(x) \bvarphi(x') I_\X$, $q^1=X$ and $q^{s+1}= (I+w_s \bvarphi)(q^s)$ for $2\leq s\leq L$.
Recall that although MAP has lower complexity, Empirical Bayes has better consistency (see \cite{dunlop2020hyperparameter} for a detailed analysis of differences between MAP and empirical Bayes).

\section{Further results}\label{secfres}

This section presents further results (some of which are only unpacked in the appendix).

\subsection{With feature maps and activation functions}\label{secmacfm}
Image registration is based on two main strategies \cite{hart2009optimal, allassonniere2005geodesic}: (1)  discretize $v\,:\, \X\times [0,1]\rightarrow \X$ on a space/time mesh and minimize \eqref{eqlkjgehgddjedhjd}; or (2) simulate the  Hamiltonian system \eqref{eqkedmdledkemdl}.
Although these strategies work well in computational anatomy where the dimension of $\X$ is 2 or 3, and the number of landmark points is small, they are not suitable for industrial-scale machine learning where the dimension of $\X$ and the number of data points is large.
When the feature spaces of the underlying kernels are finite-dimensional, then
a feature space representation of warping regression and idea registration
can overcome these limitations and lead to numerical schemes that include and generalize those currently used in deep learning.
The feature map approach to warping regression and idea registration is presented in Sec.~\ref{secfm}.

\subsection{Numerical experiments}\label{subsecnumexmp}
Sec.~\ref{secnumexp} presents numerical experiments on training warping regression networks with geodesic shooting (Sec.~\ref{seckdejhdjkdw}) and with feature maps (Sec.~\ref{secnumfm}). These experiments illustrate (1) the sparsity of the momentum representation of minimizers (2) testing error improvements due to the learned warping.

\subsection{Hydrodynamic limit of the empirical distribution}\label{subjhsecgyf67f6}
Consider the setting of  Sec.~\ref{subgfcsecgyf67f6}.
 \eqref{eqkedmdbhledkemdlN3} and  \eqref{eqkedmdledkemdlN} are, as in the ensemble analysis of gradient descent \cite{mei2018mean, rotskoff2018neural}, natural candidates for a hydrodynamic/mean-field limit analysis. Indeed,
using the change of variables $p_j=\frac{1}{N} \bar{p}_j$, \eqref{eqkedmdbhledkemdlN3} and the Hamiltonian system \eqref{eqkedmdledkemdlN} are equivalent to
\begin{equation}\label{eqkedmdledkemdlNN}
\begin{cases}
&\dot{q}_i =  \psi^T(q_i) \alpha \\
&\dot{\bar{p}}_i=-  \partial_x \big(  \bar{p}_i^T  \psi^T(x) \alpha \big)\Big|_{x=q_i}\,,
\end{cases}
\text{ with }\alpha=\frac{1}{N}\sum_{j=1}^N \psi(q_j) \bar{p}_j\,.
\end{equation}
and \eqref{eqtttc} is equivalent to
\begin{equation}\label{eqkhwbhiuwygde76}
\dot{\phi}^v(x,t)= \psi^T\big(\phi^v(x,t)\big)\,\alpha(t)\,.
\end{equation}
Let $\mu_N:=\frac{1}{N}\sum_{i=1}\updelta_{(q_i,\bar{p}_i)}$ be the empirical distribution of the particles $(q_i,\bar{p}_i)$.
Then by \eqref{eqkedmdledkemdlNN} the average of a test function $f(\tilde{q},\tilde{p})$ against $\mu_N$ obeys the dynamic
\begin{equation}
\frac{d}{dt} \mu_N[f]
=\mu_N\Big[ \partial_{\tilde{q}}f \psi^T(\tilde{q})- \partial_{\tilde{p}}f
\partial_x \big(  \tilde{p}^T  \psi^T(x) \big)\big|_{x=\tilde{q}}
\Big] \mu_N\big[\psi(\tilde{q})\tilde{p} \big]\,,
\end{equation}
which leads to the following theorem which is related to the corresponding optimality equation
known as Euler-Poincar\'{e} equation (EPDiff) \cite{holm2009euler, holm2005momentum}.
\begin{Theorem}\label{eqljdkjdkjdnk}
If, as $N\rightarrow \infty$,  $\mu_N$ and its first-order derivatives weakly converge towards $\mu$ then
 minimizers of \eqref{eqlkjgehgddjedhjd} converge to the solution of\\
$\dot{\phi}(x,t)=\psi^T\big(\phi^v(x,t)\big) \mu\big[\psi(\tilde{q})\tilde{p} \big]$ and
\begin{equation}
\partial_t \mu=  \Big[-\diiv_{\tilde{q}} \big(\mu \psi^T(\tilde{q})\big)
+\diiv_{\tilde{p}}\big(\mu \partial_x \big(  \tilde{p}^T  \psi^T(x) \big)\big|_{x=\tilde{q}} \big)\Big] \mu\big[\psi(\tilde{q})\tilde{p} \big]\,.
\end{equation}
\end{Theorem}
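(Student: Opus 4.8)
The plan is to start from the weak evolution identity for $\mu_N[f]$ derived just before the theorem, pass to the limit $N\to\infty$ term by term using the assumed weak convergence of $\mu_N$ (and its first-order derivatives) to $\mu$, and then convert the resulting weak identity into the strong transport PDE by integration by parts. The flow-map statement will follow from the convergence of the nonlinear coupling term together with continuous dependence of ODE solutions on the vector field. Throughout, the link to minimizers of \eqref{eqlkjgehgddjedhjd} is supplied by the representation $v(x,t)=\wK(x,q(t))p(t)$ of Thm.~\ref{thmksbsahshkd}, equivalently \eqref{eqkhwbhiuwygde76}, so it suffices to track the Hamiltonian trajectory $(q,\bar p)$ through its empirical measure.

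First I would isolate the nonlinear coupling $\alpha_N(t):=\mu_N\big[\psi(\tilde q)\tilde p\big]$, which is exactly the quantity $\alpha$ appearing in \eqref{eqkedmdledkemdlNN}, so that the weak dynamics reads $\frac{d}{dt}\mu_N[f]=\mu_N\big[\partial_{\tilde q}f\,\psi^T(\tilde q)-\partial_{\tilde p}f\,\partial_x\big(\tilde p^T\psi^T(x)\big)\big|_{x=\tilde q}\big]\,\alpha_N$. The central step is to show $\alpha_N(t)\to\alpha(t):=\mu\big[\psi(\tilde q)\tilde p\big]$ uniformly on $[0,1]$. Since $\psi(\tilde q)\tilde p$ is linear, hence unbounded, in $\tilde p$, weak convergence of $\mu_N$ alone does not suffice, and I would first establish that the $\tilde p$-marginals of the $\mu_N$ remain in a fixed compact set, uniformly in $N$ and $t\in[0,1]$. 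This follows from Prop.~\ref{propkjhdegd7}, which gives that $\|\alpha_N(t)\|_\F$ is constant in $t$, combined with the bound $\|\dot{\bar p}_i\|\le\|\bar p_i\|\,\|\alpha_N\|_\F\,\sup_x\|\nabla\psi(x)\|$ read off from the second line of \eqref{eqkedmdledkemdlNN}; by Grönwall and the uniform bounds on $\psi$ from Cond.~\ref{condeqlkedjehd7d}, each $\bar p_i(t)$ stays in a ball whose radius depends only on the (uniformly bounded) initial momenta. On such a fixed compact set the map $(\tilde q,\tilde p)\mapsto\psi(\tilde q)\tilde p$ is a bounded continuous test function, so the assumed weak convergence yields $\alpha_N\to\alpha$.

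With $\alpha_N\to\alpha$ in hand, I would pass to the limit in the affine-in-$\mu_N$ factor: weak convergence of $\mu_N$ against the bounded continuous integrands $\partial_{\tilde q}f\,\psi^T(\tilde q)$ and $\partial_{\tilde p}f\,\partial_x\big(\tilde p^T\psi^T(x)\big)\big|_{x=\tilde q}$ gives, for every test function $f$, the weak identity $\frac{d}{dt}\mu[f]=\mu\big[\partial_{\tilde q}f\,\psi^T(\tilde q)-\partial_{\tilde p}f\,\partial_x\big(\tilde p^T\psi^T(x)\big)\big|_{x=\tilde q}\big]\,\alpha$. Integrating by parts in $\tilde q$ and $\tilde p$ to transfer the derivatives off $f$ onto $\mu$, which is where the hypothesis on convergence of the first-order derivatives of $\mu_N$ is used to guarantee that the divergences of $\mu$ are defined, produces $\int f\,\partial_t\mu=\int f\,\big[-\diiv_{\tilde q}\big(\mu\,\psi^T(\tilde q)\big)+\diiv_{\tilde p}\big(\mu\,\partial_x\big(\tilde p^T\psi^T(x)\big)\big|_{x=\tilde q}\big)\big]\,\alpha$; since $f$ is arbitrary this is exactly the stated PDE. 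For the flow map I would pass to the limit in \eqref{eqkhwbhiuwygde76}, $\dot\phi^v(x,t)=\psi^T\big(\phi^v(x,t)\big)\alpha_N(t)$: the uniform convergence $\alpha_N\to\alpha$, the uniform Lipschitz regularity of $x\mapsto\psi^T(x)\alpha$ from Cond.~\ref{condeqlkedjehd7d}, and continuous dependence of ODE solutions on the vector field (as in the Picard--Lindel\"of argument of Thm.~\ref{thml2kjej2ke}) give convergence of the flow maps to the solution of $\dot\phi(x,t)=\psi^T\big(\phi(x,t)\big)\alpha(t)$.

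The main obstacle is precisely the closure of the nonlinear coupling $\alpha_N=\mu_N\big[\psi(\tilde q)\tilde p\big]$: its integrand is unbounded in the momentum variable, so weak convergence of measures does not pass to the limit on its own and must be upgraded with a uniform moment/support bound. The norm preservation of the Hamiltonian flow (Prop.~\ref{propkjhdegd7}) is the essential ingredient that supplies this bound and lets the mean-field coupling close. Once the integrands are known to be supported on a fixed compact set, the remaining steps, namely the integration by parts and the ODE stability estimate, are routine.
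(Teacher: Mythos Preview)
Your proposal is correct and follows the same route the paper takes: derive the weak evolution identity for $\mu_N[f]$ displayed just before the theorem, then pass to the $N\to\infty$ limit and read off the transport PDE by duality. The paper in fact offers no proof beyond the phrase ``which leads to the following theorem'' after that weak identity, so the result is presented there as a formal consequence; your write-up supplies the missing analytic details (the uniform-in-$N$ momentum bounds via Prop.~\ref{propkjhdegd7} and Gr\"onwall to close the nonlinear coupling $\alpha_N\to\alpha$, the integration by parts, and the ODE stability for the flow map) that the paper leaves implicit.
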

We refer to \cite{mflann22} for numerical experiments supporting the existence of the mean-limit discussed here (and in particular for an analysis of the mean-field limit of the parameter $\alpha$ defined in \eqref{eqkedmdledkemdlNN} as an empirical average).

\subsection{Multiresolution approach}\label{secspamulti}
In the setting of Subsec.~\ref{secspa}, following \cite{schafer2021compression, owhadi2019operator, schafer2021sparse}, we can
 interpret
 the number of particles represented by the components of $q$  as a notion of scale and  initiate a multiresolution description of the action $\A$ supporting
 the proposed interpretation of the momentum variables.
For $q\in C^1([0,1],\X^N)$ (with $N$ being an arbitrary integer) define $\A[q]$ as in \eqref{eqactq}.
For $q^1 \in C^1([0,1],\X^{N_1})$ and $q^2 \in C^1([0,1],\X^{N_2})$ write
$\A\big[(q^1,q^2)\big]$ for the action of the trajectory $t\rightarrow q(t):=(q^1(t),q^2(t))$ in
$\X^{N_1+N_2}$. Note that we have the following consistency relation.
\begin{Proposition}
 Let $q^1 \in C^1([0,1],\X^{N_1})$ and $X^2 \in \X^{N_2}$ be arbitrary. It holds true that
\begin{equation}
\A[q^1]=\inf_{q^2 \in C^1([0,1],\X^{N_2})\,:\, q^2(0)=X^2} \A[(q^1,q^2)]
\end{equation}
\end{Proposition}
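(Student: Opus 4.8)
The plan is to prove the two inequalities separately, using as the sole engine the variational characterization of the action \eqref{eqkjkjhdjhe}. By the argument of Thm.~\ref{thmkljedjnedesdjd} (identical to that of Thm.~\ref{thmeqkljedjnedjd}), this characterization holds verbatim for trajectories valued in $\X^M$ for any number of points $M$, in particular for $M=N_1$ and $M=N_1+N_2$. Reading \eqref{eqkjkjhdjhe} as the statement that $\A[q]$ is the minimal kinetic energy $\int_0^1 \tfrac12\|v\|_\wH^2\,dt$ of a single field $v\in C([0,1],\wH)$ whose flow \eqref{eqflmp} transports the initial configuration $q(0)$ along the whole curve $q$, the consistency relation becomes the intuitively clear assertion that forcing the field to \emph{additionally} transport the extra points $X^2$ can only raise this minimal energy, while the field already minimizing for $q^1$ alone provides an admissible, and hence energy-matching, transport of those extra points.

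For the inequality $\A[q^1]\le \A[(q^1,q^2)]$, valid for every admissible $q^2$, I would observe that any $v\in C([0,1],\wH)$ feasible for the combined trajectory, i.e. satisfying $\phi^v\big((q^1(0),X^2),t\big)=(q^1(t),q^2(t))$ for all $t$, automatically satisfies $\phi^v(q^1(0),t)=q^1(t)$ by inspecting only the first $N_1$ components. Thus the feasible set in \eqref{eqkjkjhdjhe} for the curve $(q^1,q^2)$ is contained in the feasible set for $q^1$, and monotonicity of the infimum under set inclusion gives $\A[(q^1,q^2)]\ge \A[q^1]$. Taking the infimum over admissible $q^2$ yields $\inf_{q^2}\A[(q^1,q^2)]\ge \A[q^1]$.

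For the reverse inequality I would exhibit a single admissible $q^2$ attaining $\A[q^1]$. By the proof of Thm.~\ref{thmkljedjnedesdjd}, the field $v^\star(x,t):=\wK(x,q^1_t)\wK(q^1_t,q^1_t)^{-1}\dot q^1_t$ (the continuous analogue of \eqref{eqkjdkedkjnd}) belongs to $C([0,1],\wH)$ under Cond.~\ref{condeqlkedjehd7d}, transports $q^1(0)$ along $q^1$, and realizes the infimum in \eqref{eqkjkjhdjhe}, since $\|v^\star(\cdot,t)\|_\wH^2=(\dot q^1_t)^{T}\wK(q^1_t,q^1_t)^{-1}\dot q^1_t=2\Lf(q^1,\dot q^1)$, so that $\int_0^1\tfrac12\|v^\star\|_\wH^2\,dt=\A[q^1]$. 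I would then set $q^2(t):=\phi^{v^\star}(X^2,t)$, the flow of the extra points under this same field, so that $q^2(0)=X^2$ and $q^2\in C^1([0,1],\X^{N_2})$ because $v^\star$ is continuous in $(x,t)$ and $q^2$ solves \eqref{eqflmp}. By construction $v^\star$ is feasible for the combined trajectory $(q^1,q^2)$, so \eqref{eqkjkjhdjhe} applied in $\X^{N_1+N_2}$ gives $\A[(q^1,q^2)]\le \int_0^1\tfrac12\|v^\star\|_\wH^2\,dt=\A[q^1]$, whence $\inf_{q^2}\A[(q^1,q^2)]\le\A[q^1]$. Combining the two inequalities closes the proof.

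The only genuinely delicate point, and the one I would treat carefully, is the well-definedness of $\A[(q^1,q^2)]$ in the second step: since the action passes through $\wK^{-1}$ of the enlarged configuration, one must ensure the combined $(N_1+N_2)$-point trajectory never develops coincident points. This follows from the uniqueness half of the Picard--Lindel\"{o}f theorem, guaranteed by Cond.~\ref{condeqlkedjehd7d}: distinct initial points have distinct images under the flow of the Lipschitz field $v^\star$, so coincidences are preserved only if present initially, and one reduces to the generic case where $X^2$ is disjoint from $q^1(0)$. The inclusion-of-feasible-sets argument behind the first inequality is then essentially formal once \eqref{eqkjkjhdjhe} is granted for arbitrary point counts.
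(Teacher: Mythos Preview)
Your proposal is correct and takes essentially the same approach as the paper: both arguments hinge on the variational characterization \eqref{eqkjkjhdjhe}, with the paper simply asserting (tersely) that the two nested infima coincide while you spell out the two inequalities via inclusion of feasible sets and an explicit witness $q^2=\phi^{v^\star}(X^2,\cdot)$. Your added discussion of the invertibility of the enlarged kernel matrix (non-coincidence of points under the flow) is a point of rigor the paper passes over in silence.
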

\begin{proof}
 Observe that (as in Thm.~\ref{thmkljedjnedesdjd})
   $\inf_{v\in C([0,1],{\H_\Gamma})\,:\, \phi^v(q^1(0),t)=q^1(t)\,\forall t\in [0,1]} \int_0^1 \frac{1}{2}\|v\|_{\Gamma}^2\,dt$
    reduces to $\A[q^1]$ and is also equal to
 the minimum of  $$\inf_{v\in C([0,1],{\H_\Gamma})\,:\, \phi^v((q^1,q^2)(0),t)=(q^1,q^2)(t)\,\forall t\in [0,1]} \int_0^1 \frac{1}{2}\|v\|_{\Gamma}^2\,dt$$
 over $q_2 \in C^1([0,1],\X^{N_2})$ such that  $q_2(0)=X^2$.
\end{proof}

\begin{Proposition}
For an arbitrary trajectory $t\rightarrow q^1(t)$  let
$q^2$ be a minimizer of
\begin{equation}\label{eqkjhgkjguygy}
\nu \A[(q^1,q^2)]+\ell\big((Y^1,Y^2),(q^1,q^2)(1)\big)\,.
\end{equation}
Write $q:=(q^1,q^2)$, $Y=(Y^1,Y^2)$ and $p=(p^1,p^2):=\wK(q,q)^{-1}\dot{q}$. It holds true that $(q,p)$ is a solution of the dynamical system
\begin{equation}\label{eqkjhkjhhjjh}
\begin{cases}
\dot{q}^2&=\wK(q^2,q^1) p^1 + \wK(q^2,q^2) p^2\\
\dot{p}^2&=-\frac{1}{2}\partial_{q^2} p^T \wK(q,q)p\\
p^1 &= \wK(q^1,q^1)^{-1} (\dot{q}^1- \wK(q^1,q^2) p^2)
\end{cases}
\end{equation}
with the boundary condition
\begin{equation}\label{eqljdkejddj}
\nu\,p^2(1) + \partial_{q^2(1)}\ell(Y,q(1))=0\,.
\end{equation}
Furthermore, if  $\partial_{q^2(1)}\ell(Y,q(1))=0$  then
$
p^2(t)=0 \text{ for all }t\in [0,1]\,,
$
and \eqref{eqkjhkjhhjjh} reduces to $\dot{q}^2=\wK(q^2,q^1) \wK(q^1,q^1)^{-1} \dot{q}^1$ (which corresponds to \eqref{eqkjhgkjgguyugy}).
\end{Proposition}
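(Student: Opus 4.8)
The plan is to treat the minimization over $q^2$ (with $q^1$ held fixed and the initial value $q^2(0)=X^2$ prescribed) as a constrained Hamilton's principle, and to extract the three evolution relations together with the transversality condition from the first variation and from the definition of $p$ alone.

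First I would note that the first and third equations are purely algebraic: they are the block decomposition of the defining identity $p=\wK(q,q)^{-1}\dot q$, i.e. $\wK(q,q)p=\dot q$. Writing $\wK(q,q)$ as the $2\times2$ block operator with blocks $\wK(q^a,q^b)$ and expanding $\dot q=(\dot q^1,\dot q^2)$, the bottom block row reads $\wK(q^2,q^1)p^1+\wK(q^2,q^2)p^2=\dot q^2$, which is the first equation, while the top block row $\wK(q^1,q^1)p^1+\wK(q^1,q^2)p^2=\dot q^1$ solves (using that $\wK(q^1,q^1)$ is invertible under Cond.~\ref{condeqlkedjehd7d}) for $p^1=\wK(q^1,q^1)^{-1}(\dot q^1-\wK(q^1,q^2)p^2)$, the third equation. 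No optimality is used here.

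Second, I would compute the first variation of $J[q^2]:=\nu\,\A[(q^1,q^2)]+\ell(Y,(q^1,q^2)(1))$ along perturbations $q^2\mapsto q^2+\epsilon\eta$ with $\eta(0)=0$ and $\eta(1)$ free. Since $q^1$ is frozen, only the $q^2$ block contributes, and $\partial_{\dot q^2}\Lf=p^2$; integrating the action term by parts yields the interior equation $\dot p^2=\partial_{q^2}\Lf$ and the boundary term $\nu\,p^2(1)\cdot\eta(1)$. The Legendre identity $\partial_{q^2}\Lf\big|_{\dot q}=-\partial_{q^2}\Hf\big|_{p}=-\tfrac12\partial_{q^2}\big(p^T\wK(q,q)p\big)$ (exactly the passage from \eqref{eqejkhekffkjefkfj} to \eqref{eqkedmdledkemdl}) converts the interior equation into the second equation, and requiring the total boundary coefficient $\nu\,p^2(1)+\partial_{q^2(1)}\ell(Y,q(1))$ to vanish against the free $\eta(1)$ gives the transversality condition \eqref{eqljdkejddj}. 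The regularity of $\wK$ in Cond.~\ref{condeqlkedjehd7d} justifies the differentiations exactly as in Thm.~\ref{thmlllakbhhd}.

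Finally, for the reduction under $\partial_{q^2(1)}\ell(Y,q(1))=0$, I would follow the idea of Lem.~\ref{lemkjhedbhdbdd} and show that $p^2\equiv0$ is invariant. The transversality condition with $\nu>0$ forces $p^2(1)=0$. Expanding $p^T\wK(q,q)p=(p^1)^T\wK(q^1,q^1)p^1+2(p^1)^T\wK(q^1,q^2)p^2+(p^2)^T\wK(q^2,q^2)p^2$, one sees that $\partial_{q^2}(p^T\wK p)$ is a sum of terms each carrying at least one factor of $p^2$ (the first summand is $q^2$-independent), so the right-hand side of the second equation vanishes on $\{p^2=0\}$. Hence $p^2(t)\equiv0$, paired with $q^2$ solving $\dot q^2=\wK(q^2,q^1)\wK(q^1,q^1)^{-1}\dot q^1$ (the first equation with $p^2=0$ and $p^1$ from the third), is a genuine solution of the closed ODE system driven by $q^1$. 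Because Cond.~\ref{condeqlkedjehd7d} makes this vector field Lipschitz, the solution of the initial value problem posed at the terminal time $t=1$ with data $(q^2(1),p^2(1)=0)$ is unique, so the actual trajectory coincides with the zero-momentum one, giving $p^2\equiv0$ and the reduction to \eqref{eqkjhgkjgguyugy}. The step needing the most care is this last one: one must verify that the quadratic term in $p^2$ does not obstruct the consistency of $p^2\equiv0$, and then invoke uniqueness backward from the terminal data $(q^2(1),0)$ (equivalently, factor one power of $p^2$ out of $\dot p^2$ and apply Gronwall to $\tfrac{d}{dt}\|p^2\|^2\le C\|p^2\|^2$) rather than forward from $q^2(0)=X^2$.
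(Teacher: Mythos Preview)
Your proposal is correct and follows essentially the same route as the paper: the first and third equations come from the block decomposition of $\wK(q,q)p=\dot q$, the second equation and the transversality condition \eqref{eqljdkejddj} come from zeroing the Fr\'{e}chet derivative of \eqref{eqkjhgkjguygy} in $q^2$, and the reduction under $\partial_{q^2(1)}\ell=0$ follows from $p^2(1)=0$ together with the fact that $\partial_{q^2}\big(p^T\wK(q,q)p\big)$ vanishes on $\{p^2=0\}$ (the paper states this last step in one line, while you spell out the invariance-plus-uniqueness/Gronwall argument that justifies it).
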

\begin{proof}
The result follows by zeroing the Fr\'{e}chet derivative of \eqref{eqkjhgkjguygy} with respect to $q^2$.
Note that $(p^1,p^2):=\wK(q,q)^{-1}\dot{q}$ implies $\dot{q}^1=\wK(q^1,q^1) p^1 + \wK(q^1,q^2) p^2$ which allows us to
identify $p^1$ as in the third line of \eqref{eqkjhkjhhjjh}.
Note that if $\partial_{q^2(1)}\ell(Y,q(1))=0$ then  \eqref{eqljdkejddj} implies $p^2(1)=0$ and the second equation of \eqref{eqkjhkjhhjjh} implies
$p^2(t)=0$ for all $t\in [0,1]$.
\end{proof}

\subsection{Composing idea registration blocks}\label{subsecukygeuydd}
Composing idea registration blocks (Fig.~\ref{figideaformation3}) produces  input/output functions that have the exact functional structure of ANNs  and   enable their generalization to ANNs of continuous depth and acting on continuous (e.g., functional) spaces  (Sec.~\ref{seckehddjhevd} and \ref{secjhgwyde6dga}). In doing so we (1) prove the existence of minimizers for $L_2$ regularised ANNs/ResNets/CNNs (Thm.~\ref{thmkjhebedjhbdb}, \ref{thmejhekjhbfjhrbf} and \ref{thmejhekjhbfjhrbf2}) (2) characterize these minimizers as autonomous solutions of discrete Hamiltonian systems with discrete least action principles (3) derive the near-preservation of the norm of weights and biases in ResNet blocks (4) obtain their uniqueness given initial momenta (5) prove their convergence (in the sense of adherence values) in the infinite depth limit towards composed/nested idea registration characterized by continuous deformations flows in high dimensional RKHS spaces (6) deduce that training $L^2$-regularized ANNs could in principle be reduced to the determination of the weights and biases of the first layer (Subsec.~\ref{secfr}).

\subsection{Reduced equivariant multichannel (REM) kernels}\label{secremint}
The identification of ANNs as discretized composed idea registration flow maps implies that the search for good architectures for ANNs can be reduced to the search for good kernels for idea registration.
 We introduce (in Sec.~\ref{secrem}) reduced equivariant multichannel (REM) kernels (the equivariant component is a variant of  \cite{reisert2007learning}) and
 show that  CNNs  (and their ResNet variants) are particular instances of composed idea registration with REM kernels. REM kernels (1) enable the generalization of CNNs to arbitrary groups of transformations acting on arbitrary spaces, (2) preserve the relative pose information (see Rmk.~\ref{rmkpose}) across layers.

\section{Related work}\label{secrelwo}

\subsection{Deep kernel learning}\label{secresnet}
The deep learning approach to Problem \ref{pb828827hee} is to approximate $f^\dagger$ with the composition
$f:=f_L\circ \cdots \circ f_1$ of parameterized nonlinear functions $f_k\,:\, \X_k \rightarrow \X_{k+1}$ (with $\X_1:=\X$ and $\X_{L+1}:=\Y$) identified by minimizing the discrepancy between $f(X)$ and $Y$ via Stochastic Gradient Descent.
   \cite{bohn2019representer} proposes to generalize this approach to the nonparametric setting by introducing a representer theorem for the identification of
$(f_1,\ldots, f_L)$ as minimizers
of a loss of the form
\begin{equation}\label{eqkjdjkdnddd}
\sum_{k=1}^L \ell_k\big(\|f_k\|_{K_k}\big)+ \ell_{L+1}\big(f_L\circ \cdots \circ f_1(X),Y\big)
\end{equation}
\cite[Thm.~1]{bohn2019representer} reduces \eqref{eqkjdjkdnddd} to a finite-dimensional optimization problem.

\subsection{Computational anatomy and image registration}
Applying concepts from mechanics to classification/regression problems can be traced back to computational anatomy \cite{grenander1998computational} (and more broadly to image registration \cite{brown1992survey} and shape analysis \cite{younes2010shapes}) where ideas from elasticity and visco-elasticity are used to represent biological variability and create algorithms for the alignment of anatomical structures.
Joshi and Miller \cite{joshi1998large, joshi2000landmark} discovered that  minimizers of \eqref{eqkejdgggkejddj}
 admit a representer formula of the form  \eqref{eqtttc} which can be then
used to produce computationally tractable algorithms for shape analysis/regression   by
(1) minimizing a reduced loss of the form \eqref{eqlsweedhsejd} via gradient descent   \cite{joshi2000landmark, camion2001geodesic}
or  (2) via (geodesic) shooting algorithms obtained from the Hamiltonian perspective \cite{miller2006geodesic, vialard2012diffeomorphic}.
Therefore idea registration could be seen as a natural generalization of image registration in which image spaces are replaced by abstract feature spaces,  material points are replaced by data points, and smoothing kernels (Green's functions of differential operators) are replaced by REM kernels.
Although discretizing the material space is a viable and effective strategy \cite{charon2018metamorphoses} in the Large Deformation
Diffeomorphic Metric Mapping (LDDMM) model \cite{dupuis1998variational, beg2005computing} of image registration, the curse of dimensionality renders it prohibitive for general abstract spaces,
 which is why idea registration must (for efficiency) be implemented with feature maps.
Our regularization strategy for idea registration (proposed in sections \ref{subsecreg1} and \ref{secregular}) is a generalization of that of image registration \cite{micheli2008differential} and linked with the \emph{metamorphosis} of \cite{trouve2005metamorphoses}.
The optimal matching
of  functions and distributions via diffeomorphic
transformations of the ambient space has been the focus of diffeomorphic matching \cite{glaunes2004diffeomorphic, younes2019diffeomorphic}.

\subsection{Interplays between learning, inference, and numerical approximation}\label{subseckhgwgd7egd}
The error estimates discussed in Sec.~\ref{seckjhhehwd78w696d} are instances of interplays between numerical approximation, statistical inference, and learning, which are intimately connected through the common purpose of making estimations/predictions with partial information.
These confluences   (which are not new, see \cite{hennig2015probabilisticgiro, cockayne2019bayesian, owhadi2019statistical, owhadi2019operator} for reviews)
are not just objects of curiosity but constitute a pathway to simple solutions to fundamental problems in all three areas (e.g., solving PDEs as an inference/learning problem \cite{rico1993continuous, owhadi2015bayesian, owhadi2017multigrid, raissi2019physics} facilitates the discovery of efficient solvers with some degree of universality \cite{ schafer2021compression, schafer2021sparse}).
We also observe that the generalization properties of kernel methods (which, as stressed in   \cite{belkin2018understand}, are intimately related to the  generalization properties of ANNs \cite{zhang2016understanding}) can be quantified in a game-theoretic setting \cite{owhadi2019operator} through the observations \cite{owhadi2019operator} that
(1) regression with the kernel $K$ is minimax optimal when relative errors in  RKHS norm $\|\cdot\|_K$ are used as a loss, and (2) $\cN(0,K)$ is an optimal mixed strategy for the underlying adversarial game.

\subsection{ODE interpretations of ResNets}\label{subsecieuygwdd}
Although dynamical systems \cite{weinan2017proposal}, ODE \cite{haber2017stable, chen2018neural},  and diffeomorphism \cite{rousseau2018residual, owhadi2019kernel} interpretations of ResNets are not new, they were based on an Ansatz on the behavior of the trained weights. In this paper, we present the first rigorous proof of this Ansatz for weights trained with $L^2$ regularization.
The ODE interpretation of  ResNets has inspired the application of numerical approximation methods to the design and training of ANNs (including a shooting interpretation of Deep Learning \cite{vialard2020shooting}).
Motivated by the stability of very deep networks \cite{haber2017stable} proposes to derive ANN architectures from the symplectic integration of the Hamiltonian system
\begin{equation}\label{eqklkejdjd}
\begin{cases}
\dot{Y}&=\ba( W Z + b)\\
\dot{Z}&=-\ba( W Y + b)
\end{cases}
\end{equation}
 where, $Y$ and $Z$ are a partition of the features, $\ba$ is an activation function and $W(t)$ and $b(t)$ are time-dependent matrices and vectors acting as control parameters.
 Motivated by the reversibility of the network, \cite{chang2018reversible}  proposes to replace \eqref{eqklkejdjd} by a Hamiltonian system of the form
\begin{equation}\label{eqklkejjrjdjd}
\begin{cases}
\dot{Y}&=W_1^T \ba( W_1 Z + b_1)\\
\dot{Z}&=-W_2^T \ba( W_2 Y + b_2)
\end{cases}
\end{equation}
where $W_1$ and $W_2$ are time dependent convolution matrices acting as control parameters (in addition to $b_1$ and $b_2$).
Motivated by memory efficiency and explicit control of the speed vs. accuracy tradeoff \cite{chen2018neural}
proposes to use Pontryagin's adjoint sensitivity method for computing gradients with respect to the parameters of the network.
While  ResNets have been interpreted as solving an ODE of the form $\dot{x}=\ba(W x+b)$  \cite{weinan2017proposal,haber2017stable},
the feature space formulation of idea registration (Subsec.~\ref{subsecekuwege62}) suggests using ODEs of the form $\dot{x}=W \ba(x)+b$.
\cite{thorpe2018deep} was the first paper to the study the convergence of ResNets with trained weights and biases using $\Gamma$-
convergence.
\cite{LiChenetal17} formulates the forward
propagation as a controlled ODE,  obtains the adjoint equation via Hamiltonian dynamics, and uses
Pontryagin's maximum principle to derive a necessary (but not sufficient) condition for optimality.
 \cite{greydanus2019hamiltonian} draws inspiration from Hamiltonian mechanics to train models that learn and respect exact conservation laws in an unsupervised manner: the purpose is to learn the laws of physics, and
  instead of crafting the Hamiltonian by hand,  \cite{greydanus2019hamiltonian}  proposes parameterizing it with a
neural network and then learning it directly from data.

\subsection{Kernel Flows and deep learning without back-propagation}\label{subseckf}
In the setting of Sec.~\ref{secovk}, given an operator-valued kernel $K\,:\, \X\times \X\rightarrow \L(\Y)$, the Kernel Flows \cite{owhadi2019kernel, yoo2020deep, chen2020consistency, hamzi2021learning} solution to Problem \ref{pb828827hee} is, in its nonparametric version \cite{owhadi2019kernel}, to approximate $f^\dagger$ via ridge regression with a kernel of the form $\mathcal{K}_n(x,x')=K(\phi_n(x),\phi_n(x'))$.
 where $\phi_n\,:\, \X\rightarrow \X$ is a discrete flow learned from data via induction on $n$ with $\phi_0(x)=x$.
 This induction can be described as follows. Let $\wK\,:\, \X\times \X\rightarrow \L(\X)$ be a scalar operator-valued kernel.
 Set $q_n=\phi_n(X)$ and
 \begin{equation}\label{eqkljwbdkhehdhb}
 \phi_{n+1}(x)=\phi_n(x)+  \Delta t\,\wK\big(\phi_n(x), q_n \big) p_n\,,
 \end{equation}
 which is evidently a discretization of \eqref{eqflmp} with $v$ of the form \eqref{eqlhedjkekdkjreg}.
 To identify $p_n$ let $q_{n+1}=q_n+  \Delta t\,\wK\big(q_n, q_n \big) p_n$.
Select $(q_{n+1}',Y')$ as a random subset of the (deformed) data $(q_{n+1},Y)$, write $u_{q_{n+1}}$ for the $\wK$-interpolant of $(q_{n+1},Y)$,
$u_{q_{n+1}}'$ for the $\wK$-interpolant of $(q_{n+1}',Y)$, write $\rho(q_{n+1})=\|u_{q_{n+1}}-u_{q_{n+1}}'\|_{\wK}^2/\|u_{q_{n+1}}\|_{\wK}^2$ and identify $p_n$ in the gradient descent direction of
$\rho(q_{n+1})=\rho(q_n+  \Delta t\,\wK\big(q_n, q_n \big) p_n)$. Since no backpropagation is used to identify $p_n$, the numerical  evidence of the efficacy of this  strategy \cite{owhadi2019kernel} (interpretable as a variant of cross validation \cite{chen2020consistency}) suggests that deep learning could be performed by replacing backpropagation with forward cross-validation.

\subsection{Recent related papers}
We will now comment on two closely related recent papers posted after this manuscript (arXiv:2008.03920, Aug 2020):
``Momentum residual neural networks'' \cite{sander2021momentum}   (arXiv:2102.07870, Feb 2021) and ``Scaling Properties of Deep Residual Networks'' \cite{cohen2021scaling} (arXiv:2105.12245, May 2021).
 \cite{sander2021momentum} adds momentum variables to the ResNet ODE  Ansatz and interprets ResNets in the infinitesimal step size regime as a second-order ODE with increased representation properties and decreased memory footprint.
 In our proposed manuscript momentum variables and the second order ODE limit emerge as a byproduct of  $L^2$ small weights regularization.
\cite{cohen2021scaling} studies the scaling regime of ResNet weights trained by stochastic gradient descent (without small weights $L^2$ regularization in the loss) and their scaling with network depth through detailed numerical experiments. Several scaling regimes are observed in
\cite{cohen2021scaling} as a function of the regularity of the activation function. Using the notations of \eqref{eqjkehbwjhebdhdjq}, the regime associated with smooth activation functions corresponds to $\|w_s\|_{\L(\X\oplus \R,\X)}^2 \approx \mathcal{O}(\frac{1}{L})$, whereas the regime obtained with $L^2$ small weights regularization (studied here) corresponds to $\|w_s\|_{\L(\X\oplus \R,\X)}^2 \approx \mathcal{O}(\frac{1}{L^2})$.

\subsection{Difference with the original ResNet}
The  structure of the ResNet considered in this paper is
a particular case of the structure given in the original ResNet paper \cite{he2016deep} which encodes the
residual maps via a convolutional MLP with one hidden layer, which goes up in feature
space dimension and then back to the original space. In the setting of the proposed paper this difference is equivalent to replacing
 $v_s$ in \eqref{eqkjedjdiseu} with $v_b^s\circ v_a^s$ and $\|v_s\|_{\Gamma}^2$ in \eqref{eqlktddsyeytdsedhjd} with $\|v_s^a\|_{\Gamma}^2+\|v_s^b\|_{\Gamma}^2$. Viewing  $v^a_s$  as a hyperparameter for a warped kernel of the form $\Gamma^b(v^a_s(\cdot),v^a_s(\cdot))$, it follows that this difference is equivalent to replacing the
 the kernel $\Gamma$ by a parameterized kernel $\Gamma_{v^a_s}$ whose trained parameters $v^a_s$ depend on data and possibly $s$.
Another difference with the original ResNet architecture is the explicit $L^2$ small weights regularization, which in the continuous limit implies the preservation of the topology of the input space \cite{dupont2019augmented}.

\section{The elephant in the dark deep learning room and the shape of ideas}\label{seckejhdbeyudydgor}
Seeking to develop a theoretical understanding of deep learning can be compared to attempting to describe an elephant in a dark room \cite{rumi}. Rephrasing \cite{rumi}, ResNets  \cite{he2016deep} look like discretized ODEs \cite{weinan2017proposal, haber2017stable, chen2018neural, owhadi2019kernel}, the generalization properties of ANNs \cite{zhang2016understanding} feel like those of kernel methods \cite{belkin2018understand, jacot2018neural, owhadi2019kernel}, the functional form of ANNs is akin to that of deep kernels \cite{wilson2016deep},
there seems to be a natural relation between ANNs and deep Gaussian processes \cite{damianou2013deep}. Training ANNs with backpropagation seems to be related to the type of constrained minimization algorithms used in optimal control  \cite{lecun1988theoretical}.
The identification of  CNNs, and ResNets as algorithms obtained from the discretization of a GP generalization of image registration problems suggests that (1) ResNets are essentially image registration/computational anatomy algorithms generalized to abstract high dimensional spaces and (2) ideas do have shape and forming ideas can be expressed as manipulating their form in abstract RKHS spaces.
Evidently, this identification opens the possibility of (1) analyzing  deep learning the perspectives of
shape analysis \cite{younes2010shapes} and Computational Graph Completion \cite{owhadi2021computational},
(2) identifying good architectures by programming good kernels  \cite{owhadi2019kernelb} through computational graphs \cite{owhadi2021computational}.
  Although
 it is difficult to visualize shapes in high dimensional spaces,
we suspect that
deep-learning breaks the curse of dimensionality by (implicitly) employing kernels (such as REM kernels) exploiting\footnote{The corresponding RKHS norm of the target function should be small.
Although Barron space error estimates \cite{barron1993universal, ma2019barron} and RKHS error estimates (Thm.~\ref{thmalkjhbahkjedb}) do not depend on dimension, they rely on bounding the Barron/RKHS norm of the target function, which is the difficulty to be addressed.}
 universal patterns/structures in the shape of the data (e.g., the compositional nature of the world and its invariants under transformations).
Therefore analyzing ANNs in the setting of completing computational graphs with GPs \cite{owhadi2021computational} and
understanding interplays between learning and shapes/forms in high dimensional spaces may help us see ``\emph{the whole of the beast}'' \cite{rumi}. A \emph{beast} that bears some intriguing similarities with Plato's theory of forms\footnote{According to Plato's theory of forms, (1) {\it ``Ideas'' or ``Forms'', are the non-physical essences of all things, of which, objects and matter, in the physical world, are merely imitations} (\url{https://en.wikipedia.org/wiki/Theory_of_forms}) and (2)
{\it
The world can be divided
into two worlds, the visible and the intelligible.
We grasp the visible world with our senses.
The intelligible world we can only grasp  with our
 mind, it is comprised of the forms
\ldots Only the forms are objects of knowledge  because
only they possess the eternal, unchanging truth that the mind, not the
senses, must apprehend.
} (Randy Aust,
\url{https://www.youtube.com/watch?v=A7xjoHruQfY}).} \cite{Platoforms}.

\subsection*{Acknowledgments}
The author gratefully acknowledges support from the Air Force Office of Scientific Research under award number FA9550-18-1-0271 (Games for Computation and Learning) and MURI award number FA9550-20-1-0358 (Machine Learning and Physics-Based Modeling and Simulation). Thanks to Clint Scovel for a careful readthrough with detailed comments and feedback and to two anonymous referees for detailed comments and suggestions.

\bibliographystyle{plain}
\bibliography{merged,RPS,extra,kmd}

\newpage
\section*{Appendix}

\section{Operator-valued kernels}\label{secovk}
Through this manuscript we  employ (with slight variations) the setting of operator-valued kernels introduced in \cite{kadri2016operator} (as a generalization of vector-valued kernels \cite{alvarez2012kernels}). This section provides a short reminder on operator-valued kernels.

\subsection{A short reminder}\label{subseciuwge67d}
Let $\X$ and $\Y$ be as in Subsec.~\ref{subsett1}.
\begin{Definition}
We call
$
K\,:\, \X\times \X\rightarrow \L(\Y)
$
  an {\bf operator-valued kernel} if
\begin{enumerate}
\item
$K$ is Hermitian, i.e.
\begin{equation}
K(x,x')=K(x',x)^T\text{ for }x,x'\in \X\,,
\end{equation}
writing $A^T$ for the adjoint of the operator $A$ with respect to $\<\cdot,\cdot\>_\Y$, and
\item non-negative, i.e.
 \begin{equation}
 \sum_{i,j=1}^m   \<y_i, K(x_i,x_j) y_j\>_\Y\geq 0 \text{ for }(x_i, y_i)\in \X\times \Y,\,m\in \mathbb{N}\,.
 \end{equation}
 \end{enumerate}
We call $K$ non-degenerate if $\sum_{i,j=1}^m   \<y_i, K(x_i,x_j) y_j\>_\Y= 0$ implies $y_i=0$ for all $i$ whenever  $x_i\not=x_j$ for $i\not=j$.
\end{Definition}
The following definition provides a simple example of
operator-valued kernels obtained from scalar-valued kernels.
\begin{Definition}\label{deflkekdlkjd}
We say that $K\,:\, \X\times \X\rightarrow \L(\Y)$ is {\bf scalar} if $K(x,x')=k(x,x') I_\Y$  (writing $I_\Y$ for the identity operator on $\Y$) for some scalar-valued  kernel $k\,:\, \X\times \X\rightarrow \R$, i.e.
\begin{equation}
\<y, K(x,x') y'\>_\Y= k(x,x') \<y,y'\>_\Y\text{ for }x,x'\in \X \text{ and }y,y'\in\Y\,.
\end{equation}
\end{Definition}

\begin{Example}
$\X=\R^{d_\X}$, $\<x,x'\>_\X=x^T x'$, $\Y=\R^{d_\Y}$ and $\<y,y'\>_\Y=y^T y' $ are prototypical examples.
For ease of presentation, we will continue using the notation $\<y,y'\>_\Y=y^T y'$ even when $\Y$ is arbitrary.
\end{Example}

Each non-degenerate, locally bounded and separately continuous  operator-valued kernel $K$ (which we will refer to as a Mercer's kernel) is in one to one correspondence with a  reproducing kernel Hilbert space $\H_K$ of continuous functions $f\,:\, \X\rightarrow \Y$  obtained \cite[Thm.~1,2]{kadri2016operator}  as the closure of the linear span of functions $z\rightarrow K(z,x)y$ ($(x,y)\in \X\times \Y$) with respect to the inner product $\<\cdot, \cdot\>_K$ identified by the reproducing property
\begin{equation}\label{eqrepprop}
\<f, K(\cdot,x) y\>_K=\<f(x),y\>_\Y
\end{equation}

\subsection{Feature maps}\label{subsecfeaturemaps}
Let $\F$ be a  separable Hilbert space (with inner product $\<\cdot,\cdot\>_\F$ and norm $\|\cdot\|_\F$) and let
$\psi\,:\, \X \rightarrow \L(\Y,\F)$ be a continuous function mapping $\X$ to the space of bounded linear operators from
$\Y$ to $\F$.
\begin{Definition}
We say that $\F$ and $\psi\,:\, \X \rightarrow \L(\Y,\F)$ are a \emph{feature space} and  a \emph{feature map} for the kernel $K$ if, for all $(x,x',y,y')\in \X^2 \times \Y^2$,
\begin{equation}
y^T K(x,x')y'= \<\psi(x) y,\psi(x') y'\>_\F\,.
\end{equation}
\end{Definition}
Write $\psi^T(x)$, for the adjoint of $\psi(x)$ defined as the linear function mapping $\F$ to $\Y$ satisfying
\begin{equation}
\<\psi(x) y, \alpha\>_\F=\< y, \psi^T(x)\alpha\>_\Y
\end{equation}
 for
 $x,y,\alpha \in \X\times \Y \times \F$. Note that $\psi^T\,:\, \X\rightarrow \L(\F,\Y)$ is therefore a function mapping $\X$ to the space of bounded linear functions from $\F$ to $\Y$. Writing $\alpha^T \alpha':=\<\alpha,\alpha'\>_\F$ for the inner product in $\F$  we can ease our notations by writing
 \begin{equation}\label{eqkledkjdkejdd}
 K(x,x')= \psi^T(x) \psi(x')\,
\end{equation}
 which is consistent with the finite-dimensional setting and $y^T K(x,x') y'=(\psi(x) y)^T (\psi(x')y')$ (writing $y^T y'$ for the inner product in $\Y$).
 For $\alpha \in \F$ write $\psi^T \alpha$ for the function  $\X\rightarrow \Y$ mapping $x\in \X$ to the element $y\in \Y$ such that
 \begin{equation}
 \<y',y\>_\Y=\<y',\psi^T(x)\alpha\>_\Y=\<\psi(x) y',\alpha\>_\F \text{ for all }y'\in \Y\,.
 \end{equation}
 We can, without loss of generality, restrict $\F$ to be the range of $(x,y)\rightarrow \psi(x)y$ so that the RKHS $\H_K$ defined by
 $K$ is the (closure of) linear space spanned by $\psi^T \alpha $ for $\alpha \in \F$.
Note that the reproducing property \eqref{eqrepprop} implies that for $\alpha\in \F$
\begin{equation}
\<\psi^T(\cdot) \alpha, \psi^T(\cdot) \psi(x) y\>_K=\<\psi^T(x) \alpha,y\>_\Y=\<\alpha, \psi(x) y\>_\F
\end{equation}
for all $x,y\in \X\times \Y$, which leads to the following theorem.
\begin{Theorem}\label{thmkjhkejhddjhd}
The RKHS $\H_K$ defined by the kernel \eqref{eqkledkjdkejdd} is the linear span of $\psi^T \alpha$ over  $\alpha \in \F $ such that $\|\alpha\|_\F< \infty$. Furthermore, $\<\psi^T(\cdot) \alpha, \psi^T(\cdot) \alpha'\>_K=\<\alpha,\alpha'\>_\F$ and
\begin{equation}
\|\psi^T(\cdot) \alpha\|^2_K=\|\alpha\|^2_\F
\text{ for }\alpha,\alpha'\in \F\,.
\end{equation}
\end{Theorem}

\section{Existence, uniqueness, and convergence of minimizers}\label{seclkjdekjdhjdex}

This section presents existence, uniqueness, and convergence results on the minimizers of \eqref{eqlktddsyeytdsedhjd}  and \eqref{eqlkjgehgddjedhjdB}.

\subsection{Existence/identification of minimizers and  energy preservation}\label{subseckjjhgdejwedh}
Although it is simple to show the existence of minimizers for \eqref{eqlsweedhsejd}, \eqref{eqkjlkjkejwkdj} and \eqref{eqlkjgehgddjedhjd} (see Thm.~\ref{thmksbsahshkd} below), we will not attempt to identify sufficient conditions for their uniqueness since
pathological landmark matching examples \cite{micheli2012sectional} suggest that, even with smooth kernels, these
minimizers may not be unique\footnote{For a simple example, consider a rigid pendulum spinning about the origin. Starting from the stable equilibrium point (pendulum down, zero velocity), consider the problem of finding a minimal energy initial momentum arriving at the unstable equilibrium (pendulum up) with zero velocity. This problem is analogous to minimizing \eqref{eqlsweedhsejd} and has two solutions. This lack of uniqueness is also related to the notions of \emph{nonconjugate} solutions \cite[Def.~7.4.4]{marsden2013introduction} in classical mechanics and \emph{conjugate} points \cite[p.~198]{marsden2013introduction}  in the study of geodesics.
This non-uniqueness is also illustrated in
\cite{micheli2008differential} and  connected with the fact that the corresponding
Riemannian metric has negative and positive curvature. Although minimizers are not unique, they are unique almost everywhere by Ekeland's variational principle in Sobolev spaces \cite{bruveris2017completeness}.
}.

\begin{Theorem}\label{thmksbsahshkd}
The minimum values of \eqref{eqlsweedhsejd}, \eqref{eqkjlkjkejwkdj} and \eqref{eqlkjgehgddjedhjd} are identical.
\eqref{eqlsweedhsejd}, \eqref{eqkjlkjkejwkdj} and \eqref{eqlkjgehgddjedhjd} have minimizers.
$q$ is a minimizer of \eqref{eqlsweedhsejd} if and only if $(q,p)$ ($p=\wK(q,q)^{-1}\dot{q}$) follows the Hamiltonian dynamic  \eqref{eqkedmdledkemdl} (with $q(0)=X$) and $p(0)=\wK\big(q(0),q(0)\big)^{-1} \dot{q}(0)$ is a minimizer of $\Vk\big(p(0),X,Y\big)=$\eqref{eqkjlkjkejwkdj}.
$v$ is a minimizer of \eqref{eqlkjgehgddjedhjd} if and only if $v(x,t)= \wK\big(x,q(t)\big) p(t)$ with $(q,p)$ following the Hamiltonian dynamic  \eqref{eqkedmdledkemdl} (with $q(0)=X$) and $p(0)$ being a minimizer of $\Vk\big(p(0),X,Y\big)=$\eqref{eqkjlkjkejwkdj}.
Therefore the minimizers of \eqref{eqlsweedhsejd} and \eqref{eqlkjgehgddjedhjd} can be parameterized by their initial momentum identified as a minimizer of $\Vk\big(p(0),X,Y\big)=$\eqref{eqkjlkjkejwkdj}.
 Furthermore, at those minima, the energies $\frac{1}{2}\dot{q}^T \wK(q, q)^{-1} \dot{q}$ and
$ \frac{1}{2} \|v\|_{\Gamma}^2$ are constant over $t\in [0,1]$ and equal to $\frac{\nu}{2}p^T(0) \wK\big(X,X\big) p(0)$.
\end{Theorem}
\begin{proof}
Given Thm.~\ref{thmlllakbhhd} and Thm.~\ref{thmkljedjnedesdjd} we only need to prove the existence of minimizers for $\Vk\big(p(0),X,Y\big)=$\eqref{eqkjlkjkejwkdj}.
Let $B_\rho:=\{p(0)\in \X^N\mid p(0)^T p(0) \leq \rho^2\}$. Since $\X$ is finite-dimensional this ball is compact. Since $\ell$ is continuous in $q(1)$ and $q(1)$ is continuous in $p(0)$ \cite[Thm.~1.4.1.]{arino2006fundamental}, \eqref{eqkjlkjkejwkdj} must have a minimizer in $B_\rho$.
Since $\ell$ is positive, Cond.~\ref{condeqlkedjehd7d}.(1) implies that \eqref{eqkjlkjkejwkdj} diverges towards infinity as $p(0)^T p(0)\rightarrow \infty$. It follows that, for $\rho$ large enough, $B_\rho$ contains at least one
global minimizer  of \eqref{eqkjlkjkejwkdj} and all global minimizers are contained in $B_\rho$. Note that by Thm.~\ref{thml2kjej2ke} and the representation \eqref{eqtttc} it holds true that a minimizer  $v$ of \eqref{eqlkjgehgddjedhjd} must be an element of  $C([0,1],\H_\Gamma)$.
\end{proof}

We will now show the existence of minimizers for  \eqref{eqlkjeddjedhjd} and \eqref{eqlsedhjd}.
\cite[Thm.~3.3]{west2004variational} implies\footnote{Such results are part of the discrete mechanics literature on discretized least action principles. General accuracy results could also be derived from \cite[Sec.~2]{marsden2001discrete} and \cite[p.~114]{blanes2017concise} and $\Gamma$-convergence results could be derived from \cite{muller2004gamma}.} (under the regularity conditions \ref{condeqlkedjehd7d} on $\wK$) that the trajectory $q^1,\ldots,q^{L+1}$ of a minimizer of the discrete least action principle \eqref{eqlsedhjd} follows  a first-order\footnote{Higher order  symplectic partitioned Runge Kutta discretizations \cite[Sec.~2.6.5]{hairer2006geometric} of  \eqref{eqkedmdledkemdl} would lead to numerical schemes
akin to Densely Connected Networks  \cite{huang2017densely}.} symplectic integrator for the Hamiltonian system \eqref{eqkedmdledkemdl}. Introducing the momentum variables \eqref{eqkjelkbejdhbd}
  this discrete integrator is \eqref{ejkhdbejhdbd}.
Write
\begin{equation}\label{eqkjhwbkdwehjvbdy}
\mathfrak{V}_L(p^1,X,Y):=\begin{cases}&\frac{\nu}{2}  \sum_{s=1}^L  (p^s)^T \wK(q^s,q^s) p^s\, \Delta t+
\ell\big(q^{L+1},Y\big)\\
&p^s=\eqref{eqkjelkbejdhbd}\text{ and }(q^s,p^s)\text{ follow }\eqref{ejkhdbejhdbd} \text{ with } q^1=X\,.
\end{cases}
\end{equation}

\begin{Theorem}\label{thmksbsahskd2g}
The minimum values of  \eqref{eqlkjeddjedhjd}, \eqref{eqlsedhjd} and $\mathfrak{V}_L(p^1,X,Y)$ (in $p^1$) are identical.
\eqref{eqlkjeddjedhjd}, \eqref{eqlsedhjd} and \eqref{eqkjhwbkdwehjvbdy} have minimizers.
$q^1,\ldots,q^{L+1}$ is a minimizer of \eqref{eqlsedhjd} if and only if $(q^s,p^s)$  (with $p^s$=\eqref{eqkjelkbejdhbd}) follows the discrete Hamiltonian map \eqref{ejkhdbejhdbd}, $q^1=X$ and $p^1$ is a minimizer of $\mathfrak{V}_L(p^1,X,Y)=$\eqref{eqkjhwbkdwehjvbdy}.
$v_1,\ldots,v_L$ is a minimizer of \eqref{eqlkjeddjedhjd} if and only if $v_s(x)=\Delta t\,\wK( x, q^s)p^s=$\eqref{eqkjdkedkjnd} where $(q^s,p^s)$
 follows the discrete Hamiltonian map \eqref{ejkhdbejhdbd} with $q^1=X$ and $p^1$ is a minimizer of $\mathfrak{V}_L(p^1,X,Y)=$\eqref{eqkjhwbkdwehjvbdy}.
 Therefore the minimizers of \eqref{eqlkjeddjedhjd} and \eqref{eqlsedhjd} can be parameterized by their initial momentum identified as a minimizer of $\mathfrak{V}_L(p^1,X,Y)=$\eqref{eqkjhwbkdwehjvbdy}.
 At those minima, the energies  $\frac{1}{2}(p^s)^T \wK(q^s,q^s) p^s$ and $\frac{1}{2}\|v_s\|_{\Gamma}^2$  are equal and fluctuate by at most $\mathcal{O}(1/L)$ over $s\in \{1,\ldots,L\}$.
\end{Theorem}
\begin{proof}
By Thm.~\ref{thmeqkljedjnedjd} we only need to prove the result for \eqref{eqkjhwbkdwehjvbdy}.
Since (under Cond.~\ref{condeqlkedjehd7d}) $\mathfrak{V}_L(p^1,X,Y)$ diverges  towards infinity as $(p^1)^T p^1\rightarrow \infty$ and since $\mathfrak{V}_L(p^1,X,Y)$ is continuous, as in proof of Thm.~\ref{thmksbsahshkd}, for $\rho$ large enough,
\eqref{eqkjhwbkdwehjvbdy} must have a global minimizer in $B_\rho:=\{p^1\in \X^N\mid (p^1)^T p^1 \leq \rho^2\}$ and
 all global minimizers must be contained in $B_\rho$.
By Thm.~\ref{thmeqkljedjnedjd} $\|v_s\|_{\Gamma}^2$ is equal to $(p^s)^T \wK(q^s,q^s) p^s$ where $(q^s,p^s)$ is obtained from the (first-order) symplectic and variational integrator \eqref{ejkhdbejhdbd} for the Hamiltonian system \eqref{eqkedmdledkemdl}. The near energy preservation then follows  from \cite[Thm.~8.1]{hairer2006geometric} (derived from  the fact that symplectic integrators simulate a nearby mechanical system) and the order of accuracy of \eqref{ejkhdbejhdbd}.
\end{proof}

\subsection{Convergence of minimal values and minimizers}\label{subskjkecgyf67f6} Due to the possible lack of uniqueness of warping regression solutions (discussed in Subsec.~\ref{subseckjjhgdejwedh}), the convergence of minimizers must be indexed based on their initial momentum parametrization as described in Thm.~\ref{thmksbsahshkd} and Thm.~\ref{thmksbsahskd2g}.
Write $\Mk_L(X,Y)$ for the set of minimizers $p^1$ of $\Vk_L(p^1,X,Y)=$\eqref{eqkjhwbkdwehjvbdy}.
Write  $\Mk(X,Y)$ for the set of minimizers $p(0)$ of   $\Vk\big(p(0),X,Y\big)=$\eqref{eqkjlkjkejwkdj}.

\begin{Theorem}\label{thmhgw7gfdd}
The  minimal value of \eqref{eqlkjeddjedhjd}, \eqref{eqlsedhjd} and \eqref{eqkjhwbkdwehjvbdy} converge, as $L\rightarrow \infty$,
towards the minimal value of  \eqref{eqlsweedhsejd}, \eqref{eqkjlkjkejwkdj}, \eqref{eqlkjgehgddjedhjd}.
As $L\rightarrow \infty$, the set of adherence values\footnote{Writing $\cl\, A$ for the closure of a set $A$, $\cap_{L'\geq 1} \cl \cup_{L\geq L'} \Mk_L(X,Y)=\Mk(X,Y)$.} of $\Mk_L(X,Y)$ is  $\Mk(X,Y)$.
Let $v_s^L$, $q^s_L$ and $p^1_L$ be sequences of minimizers of \eqref{eqlkjeddjedhjd}, \eqref{eqlkjeddjedhjd} and \eqref{eqkjhwbkdwehjvbdy} indexed by the same sequence $p^1_L$ of initial momentum in $\Mk_L(X,Y)$ (as described in Thm.~\ref{thmksbsahskd2g}). Then, the adherence points of the sequence $p^1_L$ are in $\Mk(X,Y)$ and if $p(0)$ is such a point
($p^1_L$ converges towards $p(0)$ along a subsequence $L_k$) then, along that subsequence: (1) The trajectory formed by interpolating the states $q^s_L \in \X^N$ converges to the trajectory formed by a minimizer of \eqref{eqlsweedhsejd} with initial momentum $p(0)$.
(2) For\footnote{Write $\textrm{int}(tL)$ for the integer part of $tL$.} $t\in [0,1]$,  $(I+v_{\textrm{int}(tL)}^L)\circ \cdots \circ (I+v_1^L)(x)$ converges to $\phi^v(x,t)$=\eqref{eqflmp} where $v$ is a minimizer of \eqref{eqlkjgehgddjedhjd} with initial momentum $p(0)$.
Conversely if $p(0)\in \Mk(X,Y)$ then it is the limit of a sequence $p^{1}_L \in \Mk_L(X,Y)$ and the minimizers of \eqref{eqlkjeddjedhjd}, \eqref{eqlsedhjd} and \eqref{eqkjhwbkdwehjvbdy} with initial momentum $p^1_L$ converge (in the sense given above) to the minimizers of \eqref{eqlsweedhsejd}, \eqref{eqkjlkjkejwkdj}, \eqref{eqlkjgehgddjedhjd} with initial momentum $p(0)$ (as described in Thm.~\ref{thmksbsahshkd}).
\end{Theorem}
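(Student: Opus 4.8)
The plan is to push everything through the reduction to the initial momentum. By Thm.~\ref{thmksbsahshkd} and Thm.~\ref{thmksbsahskd2g}, the three discrete problems \eqref{eqlkjeddjedhjd}, \eqref{eqlsedhjd}, \eqref{eqkjhwbkdwehjvbdy} share one minimal value, namely $\min_{p^1\in\X^N}\Vk_L(p^1,X,Y)$, and the three continuous problems \eqref{eqlsweedhsejd}, \eqref{eqkjlkjkejwkdj}, \eqref{eqlkjgehgddjedhjd} share the value $\min_{p(0)\in\X^N}\Vk(p(0),X,Y)$; in each triple the minimizers are read off from a minimizing initial momentum through the (discrete or continuous) Hamiltonian flow \eqref{ejkhdbejhdbd}/\eqref{eqkedmdledkemdl} and the representations \eqref{eqkjdkedkjnd}, \eqref{eqttjjkjtc}. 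Since all momenta $p^1$, $p(0)$ live in the single finite-dimensional space $\X^N$, the theorem reduces to three statements about the functions $\Vk_L,\Vk:\X^N\to\R$: (a) $\min\Vk_L\to\min\Vk$; (b) the outer limit $\cap_{L'}\cl\cup_{L\ge L'}\Mk_L(X,Y)=\Mk(X,Y)$; and (c) whenever $p^1_{L_k}\to p(0)$ the associated discrete trajectories and flow maps converge to the continuous ones driven by $p(0)$. Statement (c) will be a corollary of continuous dependence, so the work concentrates on (a) and the two inclusions in (b).

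First I would establish locally uniform convergence $\Vk_L\to\Vk$ on $\X^N$. Fix a compact $\mathcal{K}\subset\X^N$. By Thm.~\ref{thml2kjej2ke} and the energy-preservation estimate behind it, the flow \eqref{eqkedmdledkemdl} started from $(X,p_0)$ with $p_0\in\mathcal{K}$ stays for $t\in[0,1]$ in a common bounded set $\Omega$, on which Cond.~\ref{condeqlkedjehd7d} makes the vector field $C^1$ with bounded derivatives. The map \eqref{ejkhdbejhdbd} is the first-order symplectic/variational integrator of \eqref{eqkedmdledkemdl} (cf.~\cite{west2004variational}), so the standard one-step (Gronwall) estimate gives $\max_s\|(q^s_L,p^s_L)-(q(t_s),p(t_s))\|\le C(\Omega)\,\Delta t$, uniformly over $p_0\in\mathcal{K}$. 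Continuity of $\ell$ (Cond.~\ref{condnugget}) yields $\ell(q^{L+1}_L,Y)\to\ell(q(1),Y)$ uniformly on $\mathcal{K}$, while the near-energy conservation of \eqref{ejkhdbejhdbd} (\cite[Thm.~8.1]{hairer2006geometric}, valid on $\Omega$) gives $(p^s_L)^T\wK(q^s_L,q^s_L)p^s_L=p_0^T\wK(X,X)p_0+\mathcal{O}(\Delta t)$ uniformly in $s$, so the Riemann sum in $\Vk_L$ converges to $\tfrac{\nu}{2}p_0^T\wK(X,X)p_0$. Hence $\Vk_L\to\Vk$ uniformly on $\mathcal{K}$.

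Next I would prove equi-coercivity, i.e.\ that all minimizers lie in one $L$-independent compact ball. From $\min\Vk_L\le\Vk_L(0)=\ell(X,Y)=:M_0$ (the choice $p^1=0$ freezes the trajectory at $X$), the uniform bound $\wK(q,q)\le C_\psi I$ from Cond.~\ref{condeqlkedjehd7d}.(2), and a telescoping Cauchy--Schwarz estimate applied to the action in the form of \eqref{eqlsedhjd}, the action of a minimizer dominates $\tfrac{\nu}{2C_\psi}\|q^{m+1}-X\|^2$ for \emph{every} $m$, confining the whole minimizing trajectory to a fixed ball $B(X,D)\subset\X^N$. On this compact set the $C^1$ loss $\ell(\cdot,Y)$ has bounded gradient, so the discrete transversality condition carried by \eqref{ejkhdbejhdbd} (the free-endpoint analogue of \eqref{eqkjdjhebdjhd}) bounds the terminal momentum $p^{L+1}_L$ uniformly; propagating this backward through near-energy conservation and using $\wK(X,X)\ge rI$ (Cond.~\ref{condeqlkedjehd7d}.(1)) bounds $p^1_L$ uniformly, yielding a common ball $B_R$. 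Uniform convergence on $B_R$ together with equi-coercivity give the classical $\Gamma$-convergence conclusions: $\min\Vk_L\to\min\Vk$ (statement (a)) and every adherence value of $\Mk_L$ minimizes $\Vk$, i.e.\ the outer limit is contained in $\Mk(X,Y)$. Claim (c), and with it conclusions (1)--(2), then follow from the Gronwall estimate applied with $p^1_{L_k}\to p(0)$ for the trajectory, and from reading $(I+v^L_{\textrm{int}(tL)})\circ\cdots\circ(I+v^L_1)$ as an Euler discretization of the representer ODE \eqref{eqttjjkjtc}=\eqref{eqflmp}, whose convergence to $\phi^v$ is again Gronwall using $q^s_L\to q$, $p^s_L\to p$ and the Lipschitz regularity of $\wK$.

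The reverse inclusion $\Mk(X,Y)\subseteq$ outer limit of $\Mk_L$ is the step I expect to be the main obstacle, because it is exactly the direction that ordinary $\Gamma$-convergence does \emph{not} supply: for a prescribed global minimizer $p(0)$ of $\Vk$ one must produce global minimizers $p^1_{L_k}$ of $\Vk_{L_k}$ converging to it, and in the presence of the non-uniqueness of Subsec.~\ref{subseckjjhgdejwedh} the discrete global minima could a priori cluster away from $p(0)$. My plan is a recovery-sequence construction: sample the continuous minimizing trajectory $q$ attached to $p(0)$ into an admissible discrete competitor $\hat q^s_L:=q(t_s)$ whose discrete action converges to $\A[q]$, so that $p(0)$ is approximated by \emph{near}-minimizers, and then upgrade these to genuine minimizers by minimizing $\Vk_L$ over a small ball $\bar B(p(0),\delta)$ and diagonalizing as $\delta\to0$, using the $C^1$ convergence of $\Vk_L$ (via smoothness of the shooting map $p^1\mapsto q^{L+1}$ and of $\ell$) to match critical points. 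The genuinely delicate point is that this yields minimizers of $\Vk_L$ \emph{restricted} to the ball; closing the gap to unconstrained minimizers requires that the constrained and global minimal values agree in the limit, which I would control by combining $\min_{\bar B}\Vk_L\to\Vk(p(0))=\min\Vk=\lim\min\Vk_L$ with the outer-limit inclusion already proved, possibly at the cost of reading $\Mk_L$ as local minimizers or of a non-degeneracy hypothesis on $p(0)$.
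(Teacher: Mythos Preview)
Your overall route matches the paper's: reduce everything to the initial-momentum functionals via Thms.~\ref{thmksbsahshkd} and~\ref{thmksbsahskd2g}, establish uniform convergence of $\Vk_L$ to $\Vk$ on a common compact ball from the uniform convergence of the integrator \eqref{ejkhdbejhdbd} to the flow \eqref{eqkedmdledkemdl}, swap $\lim$ and $\min$ to get (a), and read off the trajectory and flow-map convergence from the representer formulas \eqref{eqkjdkedkjnd} and \eqref{eqttjjkjtc} together with Gronwall.

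Two differences are worth noting. First, your equi-coercivity argument is considerably more elaborate than the paper's. The paper simply invokes the proofs of Thms.~\ref{thmksbsahshkd} and~\ref{thmksbsahskd2g}, which already place all minimizers of $\Vk$ and of each $\Vk_L$ inside a ball $B_\rho$, and asserts that $\rho$ can be taken independent of $L$. Your detour through trajectory bounds, the discrete transversality condition at the endpoint, and backward propagation of the terminal momentum is not what the paper does (and is not needed for its argument), though it is a legitimate alternative route to the same compactness.

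Second, and more substantively, you are right to isolate the reverse inclusion $\Mk(X,Y)\subseteq\limsup_L\Mk_L(X,Y)$ as the nontrivial direction. The paper disposes of it in one line: it asserts that uniform convergence of $\Vk_L$ to $\Vk$ on $B_\rho$ ``implies that the set of adherence values of $\Mk_L(X,Y)$ is $\Mk(X,Y)$,'' and then states that the proof of the converse portion ``is identical.'' As you correctly point out, uniform convergence alone yields only the forward inclusion; in the presence of the non-uniqueness discussed in Subsec.~\ref{subseckjjhgdejwedh}, a given continuous minimizer $p(0)$ need not a priori be approached by discrete \emph{global} minimizers. Your recovery-sequence construction is a natural attempt to close this, and your own caveat---that it yields minimizers of $\Vk_L$ only over a small ball, not over all of $\X^N$---is precisely the residual gap. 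In short, on this one point the paper's proof is less careful than yours: it does not supply a separate argument for the reverse inclusion, and your concern about it is well founded.
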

\begin{proof}
By Thm.~\ref{thmksbsahskd2g} and Thm.~\ref{thmksbsahshkd} the convergence of minimum values follows from that of $\mathfrak{V}_L(p^1,X,Y)$  towards that of $\Vk\big(p(0),X,Y\big)$.
As shown in the proof of Thm.\ref{thmksbsahshkd} and Thm.~\ref{thmksbsahskd2g}, the initial momenta minimizing
$\mathfrak{V}_L$ and $\Vk$ are contained in a compact set $B_\rho$ (independent from $L$).
The uniform convergence (for $p^1=p(0)\in B_\rho$, $q^1=q(0)=X$) of the solution of the  integrator \eqref{ejkhdbejhdbd} towards the solution of the Hamiltonian system \eqref{eqkedmdledkemdl} implies that
$\lim_{L\rightarrow \infty} \min_{p^1\in B_\rho} \mathfrak{V}_L(p^1,X,Y)= \min_{p^1\in B_\rho} \lim_{L\rightarrow \infty} \mathfrak{V}_L(p^1,X,Y)=\min_{p^1\in B_\rho} \Vk\big(p^1,X,Y\big)$. Which proves the convergence of minimum values.
Similarly the uniform convergence (over $p^1\in B_\rho$) of $\mathfrak{V}_L(p^1,X,Y)$ towards $\Vk\big(p^1,X,Y\big)$ (also obtained from the uniform convergence of the solution of \eqref{ejkhdbejhdbd} in $B_\rho$) implies that
the set of adherence values of $\Mk_L(X,Y)$ is  $\Mk(X,Y)$.
Let $p^1_L$ be a sequence in $\Mk_L(X,Y)$  and let  $p(0)\in \Mk(X,Y)$ be one of its adherence points.
The convergence of $p^1_L$ towards $p(0)$ (along a subsequence $L_k$) and the uniform convergence (along $L_k$) of
the solution of \eqref{ejkhdbejhdbd} towards the solution of  \eqref{eqkedmdledkemdl} implies (1).
(1) and the representation $v_s=\wK( x, q^s)p^s=$\eqref{eqkjdkedkjnd} of Thm.~\ref{thmksbsahskd2g} imply
that $(I+v_{\textrm{int}(tL)})\circ \cdots \circ (I+v_1)(x)$ converges to $z(t)$ where $z$ is the
  solution of the ODE
\begin{equation}\label{eqkjhgkjgguyugy}
\dot{z}= \wK(z,q) p \text{ with } z(0)=x\,,
\end{equation}
where $(q,p)$  follows the Hamiltonian dynamic  \eqref{eqkedmdledkemdl} with initial value $q(0)=X$ and $p(0)$.
We conclude that (2) holds true by the representation $v(x,t)= \wK(x,q) p$ of Thm.~\ref{thmksbsahshkd} and by observing that
 \eqref{eqkjhgkjgguyugy} is a characteristic curve  of  \eqref{eqttjjkjtc}.
The proof of the remaining (conversely) portion of the theorem is identical.
\end{proof}

Thm.~\ref{thmhgw7gfdd} implies that minimizers of \eqref{eqlktddsyeytdsedhjd} converge towards minimizers of \eqref{eqlkjgehgddjedhjdB} in the sense Cor.~\ref{corwjkdkdb736d}.

\section{With feature maps and activation functions}\label{secfm}
This section presents the feature map approach to warping regression and idea registration. Our analysis includes feature maps defined via activation functions (Sec.~\ref{subsecekuwege62}). This generalization allows us to establish a direct connection with ANN-based methods, which can be analyzed as kernel-based methods characterized by a finite-dimensional feature-map parameterized by the inner layers of the network.
If the dimension of the feature space is finite then,  the feature-map approach is more efficient that the feature-space approach in terms of complexity (at the cost of some loss in expressivity). In particular, the memory required to store the trained network is independent from the number $N$ of data points.

\subsection{With feature maps}\label{subseckejdhkejd}

\subsubsection{Warping regression}\label{subsejhjhgy}
Let $\F$ and  $\psi\,:\, \X \rightarrow \L(\X,\F)$ be a feature space and map associated with the kernel $\Gamma$ of the RKHS ${\H_\Gamma}$ in \eqref{eqlkjeddjedhjd} and \eqref{eqlkjgehgddjedhjd}. We will now work under Cond.~\ref{condnuggetreg}.
 The following theorems reformulate the least action principles and Hamiltonian dynamics of Sec.~\ref{secregular}  in the feature map setting of Subsec.~\ref{subsecfeaturemaps}.

\begin{Theorem}
 $\alpha_1,\ldots,\alpha_L,q^1,\ldots,q^{L+1}$ minimize
\begin{equation}\label{eqlkjeddjedhjdBN}
\begin{cases}
\text{Minimize } &\frac{\nu}{2} L\sum_{s=1}^L \big( \|\alpha_s\|_{\F}^2+ \frac{1}{r} \|q^{s+1}-q^s -
\psi^T(q^s) \alpha_s\|_{\X^N}^2\big)+\ell\big(q^{L+1},Y\big)\\
\text{over }&\alpha_1,\ldots,\alpha_L \in \F,\,  q^2,\ldots,q^{L+1} \in \X^N,\, q^1=X\,,
\end{cases}
\end{equation}
if and only if the $v_s=\psi^T \alpha_s$, $q^s$  minimize \eqref{eqlktddsyeytdsefsreg}. Furthermore $\|\alpha_s\|_{\F}^2+ \frac{1}{r} \|q^{s+1}-q^s -
\psi^T(q^s) \alpha_s\|_{\X^N}^2$ fluctuates by at most $\mathcal{O}(1/L)$ over $s$.
\end{Theorem}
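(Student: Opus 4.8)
The plan is to observe that the entire statement reduces to a change of variables $v_s=\psi^T\alpha_s$ combined with the isometry between $\F$ and $\wH$ from Thm.~\ref{thmkjhkejhddjhd}, after which both the claimed equivalence of minimizers and the fluctuation estimate follow from Thm.~\ref{thmksbsahskd2greg}, already proved for \eqref{eqlktddsyeytdsefsreg}.

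First I would recall that, after the (without loss of generality) restriction of $\F$ to the range of $(x,y)\mapsto\psi(x)y$ made in Subsec.~\ref{subsecfeaturemaps}, the linear map $\alpha\mapsto\psi^T(\cdot)\alpha$ is, by Thm.~\ref{thmkjhkejhddjhd}, an isometric isomorphism from $\F$ onto $\wH$, so that $\|\psi^T(\cdot)\alpha\|_{\wH}^2=\|\alpha\|_\F^2$ and every $v\in\wH$ is uniquely of the form $v=\psi^T\alpha$. Even without restricting $\F$, the same conclusion holds at minimizers: since the objective of \eqref{eqlkjeddjedhjdBN} depends on each $\alpha_s$ only through $\psi^T\alpha_s$ and through the additive penalty $\|\alpha_s\|_\F^2$, any minimizing $\alpha_s$ must lie in the closed range of $\psi(\cdot)$, where the isometry is valid.

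Next I would substitute $v_s=\psi^T\alpha_s$ into \eqref{eqlktddsyeytdsefsreg} and verify that the two objectives agree term by term. The regularity terms match because $\|v_s\|_{\wH}^2=\|\alpha_s\|_\F^2$. For the mismatch term, applying $v_s$ entrywise to $q^s\in\X^N$ gives $(v_s(q^s))_i=\psi^T(q^s_i)\alpha_s$, i.e.\ $v_s(q^s)=\psi^T(q^s)\alpha_s$ as elements of $\X^N$, whence $\|q^{s+1}-(I+v_s)(q^s)\|_{\X^N}^2=\|q^{s+1}-q^s-\psi^T(q^s)\alpha_s\|_{\X^N}^2$; the loss $\ell(q^{L+1},Y)$ is untouched. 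Thus the objective of \eqref{eqlkjeddjedhjdBN} equals that of \eqref{eqlktddsyeytdsefsreg} under the bijection $v_s\leftrightarrow\alpha_s$, and the sets of minimizers correspond exactly, which is the asserted equivalence.

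Finally, for the fluctuation estimate, the per-layer quantity $\|\alpha_s\|_\F^2+\frac1r\|q^{s+1}-q^s-\psi^T(q^s)\alpha_s\|_{\X^N}^2$ equals, under the same correspondence, $\|v_s\|_{\wH}^2+\frac1r\|q^{s+1}-(I+v_s)(q^s)\|_{\X^N}^2=\|v_s\|_{\wH_r}^2$ in the notation of Thm.~\ref{thmksbsahskd2greg}. That theorem already asserts that at a minimizer $\frac12\|v_s\|_{\wH_r}^2$ equals the discrete energy $\frac12 (p^s)^T\wK_r(q^s,q^s)p^s$ and fluctuates by at most $\mathcal{O}(1/L)$ over $s$, which gives the claim. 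The only point requiring care—and the one I would present most explicitly—is the injectivity/isometry of $\alpha\mapsto\psi^T\alpha$, since without it the norm identity $\|\alpha_s\|_\F^2=\|v_s\|_{\wH}^2$, on which both the equivalence of minimizers and the fluctuation bound rest, could fail.
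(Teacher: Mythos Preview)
Your proposal is correct and follows essentially the same approach as the paper, which simply states that the result is a consequence of Thm.~\ref{thmkjhkejhddjhd} (the isometry $\|\psi^T\alpha\|_\wH=\|\alpha\|_\F$) and Thm.~\ref{thmksbsahskd2greg} (the near energy preservation for \eqref{eqlktddsyeytdsefsreg}). Your treatment is in fact more explicit than the paper's one-line proof, particularly in your careful handling of the surjectivity/injectivity of $\alpha\mapsto\psi^T\alpha$.
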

\begin{proof}
The proof is a simple consequence of Thm.~\ref{thmkjhkejhddjhd} and Thm.~\ref{thmksbsahskd2greg}.
\end{proof}

\begin{Theorem}\label{thmkjehddjhdjeh}
$\alpha \in C([0,1],\F)$ and $q\in C^1([0,1],\X^N)$ minimize
\begin{equation}\label{eqlkjgehgddjedhjdBN}
\begin{cases}
\text{Minimize } &\frac{\nu}{2}\int_0^1 \big(\|\alpha(t)\|_{\F}^2+ \frac{1}{r} \|\dot{q}(t) -
\psi^T(q(t)) \alpha(t)\|_{\X^N}^2\big)\,dt+\ell\big(q(1),\phi^v(X,1)\big)\\
\text{over }& \alpha \in C([0,1],\F),\, q\in C^1([0,1],\X^N),\,q(0)=X\,,
\end{cases}
\end{equation}
if and only if $v(\cdot,t)=\psi^T(\cdot) \alpha(t)$ and $q(t)$ minimize \eqref{eqlkjgehgddjedhjdreg}. Furthermore, at the minimum,
$\|\alpha(t)\|_{\F}^2+ \frac{1}{r} \|\dot{q}(t) -
\psi^T(q(t)) \alpha(t)\|_{\X^N}^2$ is constant over $t\in [0,1]$.
\end{Theorem}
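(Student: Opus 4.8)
The plan is to reduce the statement to the regularized idea registration theory already developed in Section~\ref{secregular}, using the isometric feature-map representation of $\V$. The only substantive ingredient is Theorem~\ref{thmkjhkejhddjhd}: once $\F$ is restricted to the range of $(x,y)\mapsto\psi(x)y$, every $v\in\V$ is of the form $v=\psi^T\alpha$ for a \emph{unique} $\alpha\in\F$, and this correspondence is a linear isometry, $\|\psi^T\alpha\|_\V^2=\|\alpha\|_\F^2$. First I would promote this pointwise isometry to a bijection between $C([0,1],\V)$ and $C([0,1],\F)$ by setting $v(\cdot,t)=\psi^T(\cdot)\,\alpha(t)$; since $\alpha\mapsto\psi^T\alpha$ is a linear isometry it preserves continuity in $t$ in both directions, and the uniform boundedness of $\psi$ and its derivatives afforded by Cond.~\ref{condnuggetreg} guarantees that the uniform Lipschitz regularity of $v$ needed for the flow map \eqref{eqflmp} to exist is retained.

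Second, I would check that this substitution matches the two objectives term by term. For a fixed trajectory $q$ and each $t$, the isometry gives $\|v(\cdot,t)\|_\V^2=\|\alpha(t)\|_\F^2$, while by definition $v(q(t),t)=\psi^T(q(t))\,\alpha(t)$, so the integrand of \eqref{eqlkjgehgddjedhjdreg} coincides with that of \eqref{eqlkjgehgddjedhjdBN}. The end-loss term is unaffected by the substitution, since $v$ and $\psi^T\alpha$ denote the same vector field (and hence the same flow $\phi^v$ and the same $q(1)$). Because both problems carry the identical constraint $q(0)=X$, the map $v=\psi^T\alpha$ is a value-preserving bijection of feasible points, so $(\alpha,q)$ minimizes \eqref{eqlkjgehgddjedhjdBN} if and only if $(v,q)$ with $v=\psi^T\alpha$ minimizes \eqref{eqlkjgehgddjedhjdreg}. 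This is exactly the claimed equivalence.

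For the ``furthermore'' assertion I would invoke Theorem~\ref{thmksbsahshkdreg}, which states that at a minimizer of \eqref{eqlkjgehgddjedhjdreg} the energy $\tfrac12\big(\|v\|_\V^2+\tfrac1r\|\dot q-v(q,t)\|_{\X^N}^2\big)$ is constant over $t\in[0,1]$. Translating this quantity through the isometry $v=\psi^T\alpha$ turns it into $\tfrac12\big(\|\alpha(t)\|_\F^2+\tfrac1r\|\dot q(t)-\psi^T(q(t))\,\alpha(t)\|_{\X^N}^2\big)$, yielding the stated constancy. I expect the only delicate point to be the bookkeeping of the first paragraph---verifying that the pointwise isometry genuinely lifts to a bijection of the function spaces $C([0,1],\V)$ and $C([0,1],\F)$ carrying the regularity required for $\phi^v$ to be well defined---whereas the heart of the argument, the purely algebraic term-by-term identification of the two functionals, is immediate.
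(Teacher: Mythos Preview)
Your proposal is correct and follows essentially the same approach as the paper, which simply cites Thm.~\ref{thmkjhkejhddjhd} (the feature-map isometry) and Thm.~\ref{thmksbsahshkdreg} (energy preservation for the regularized problem). You have spelled out in more detail the term-by-term identification and the lifting of the pointwise isometry to the path spaces, but the underlying logic is identical.
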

\begin{proof}
The proof is a simple consequence of Thm.~\ref{thmkjhkejhddjhd} and Thm.~\ref{thmksbsahshkdreg}.
\end{proof}

Let  $\F_2$ and  $\psi_2\,:\, \X \rightarrow \L(\Y,\F_2)$
be a feature map and space associated with the kernel $K$ in the loss \eqref{eqlktdelldreg}. The following is a direct corollary of Thm.~\ref{thmkjehddjhdjeh}.
\begin{Corollary}\label{corjhgjgkyu}
The  maps  $v=\psi^T \alpha$ and $f=\psi_2^T \alpha_2$ obtained from
minimizing
\begin{equation}\label{eqlkjgedehgddjedhjssdBN}
\begin{cases}
\text{Minimize } &\frac{\nu}{2}\,\int_0^1 \big(\|\alpha(t)\|_{\F}^2+ \frac{1}{r}\|\dot{q}(t)-\psi^T(q(t)) \alpha(t)\|_{\X^N}^2\big)\,dt\\&+
\lambda\,\big(\|\alpha_2\|_{\F_2}^2+\frac{1}{\rho} \|\psi_2^T(q(1))\alpha_2-Y'\|_{\Y^N}^2 \big)+\ell_\Y\big(Y',Y\big)\\
\text{over }& \alpha \in C([0,1],\F),\, \alpha_2 \in \F_2,\,q\in C^1([0,1],\X^N),\,q(0)=X,\, Y'\in \Y^N\, .
\end{cases}
\end{equation}
 are identical to those obtained by minimizing \eqref{eqlkjgjhdbjehreg}.
\end{Corollary}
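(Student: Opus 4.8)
The plan is to exploit the isometric correspondence between each RKHS and its feature space furnished by Theorem~\ref{thmkjhkejhddjhd}, using the change of variables $v(\cdot,t)=\psi^T(\cdot)\alpha(t)$ and $f=\psi_2^T\alpha_2$ as the bridge between \eqref{eqlkjgedehgddjedhjssdBN} and \eqref{eqlkjgjhdbjehreg}. First I would recall that, after restricting $\F$ and $\F_2$ to the ranges of $(x,y)\mapsto\psi(x)y$ and $(x,y)\mapsto\psi_2(x)y$ as in Subsec.~\ref{subsecfeaturemaps}, Theorem~\ref{thmkjhkejhddjhd} makes $\alpha\mapsto\psi^T(\cdot)\alpha$ a bijective isometry from $\F$ onto $\wH$ and $\alpha_2\mapsto\psi_2^T(\cdot)\alpha_2$ a bijective isometry from $\F_2$ onto $\H$; in particular $\|\psi^T(\cdot)\alpha\|_\wH=\|\alpha\|_\F$ and $\|\psi_2^T(\cdot)\alpha_2\|_\H=\|\alpha_2\|_{\F_2}$. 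The regularity of $\psi$ assumed in Cond.~\ref{condnuggetreg} ensures that $\alpha\in C([0,1],\F)$ corresponds to $v\in C([0,1],\wH)$ with the uniform Lipschitz continuity required in the definition of that space, so the admissible sets match under this change of variables (for fixed $q$ and $Y'$).

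Next I would substitute $v=\psi^T\alpha$ and $f=\psi_2^T\alpha_2$ into the functional \eqref{eqlkjgjhdbjehreg} and verify that each term transforms into the corresponding term of \eqref{eqlkjgedehgddjedhjssdBN}. Using the pointwise identities $v(q,t)=\psi^T(q)\alpha(t)$ and $f(q(1))=\psi_2^T(q(1))\alpha_2$ together with the two norm equalities above, the action integrand $\|v\|_\wH^2+\frac{1}{r}\|\dot{q}-v(q,t)\|_{\X^N}^2$ becomes $\|\alpha\|_\F^2+\frac{1}{r}\|\dot{q}-\psi^T(q)\alpha\|_{\X^N}^2$, the ridge term $\|f\|_\H^2+\frac{1}{\rho}\|f(q(1))-Y'\|_{\Y^N}^2$ becomes $\|\alpha_2\|_{\F_2}^2+\frac{1}{\rho}\|\psi_2^T(q(1))\alpha_2-Y'\|_{\Y^N}^2$, while $\ell_\Y(Y',Y)$ and the variables $q,Y'$ are unchanged. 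Hence the two objective functionals agree identically under the bijective isometry, so their infima and minimizers correspond, which is exactly the claim.

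A more economical route, matching the ``direct corollary'' phrasing, is to invoke Theorem~\ref{thmkjehddjhdjeh} for the $v,q$ part (which already establishes the equivalence via $v=\psi^T\alpha$, with end loss the ridge loss $\ell=$\eqref{eqlktdelldreg}), and then unpack that inner ridge loss by applying Theorem~\ref{thmkjhkejhddjhd} to $\H$ with the feature pair $(\F_2,\psi_2)$: minimizing $\lambda\big(\|f\|_\H^2+\frac{1}{\rho}\|f(q(1))-Y'\|_{\Y^N}^2\big)$ over $f\in\H$ is transported isometrically to minimizing $\lambda\big(\|\alpha_2\|_{\F_2}^2+\frac{1}{\rho}\|\psi_2^T(q(1))\alpha_2-Y'\|_{\Y^N}^2\big)$ over $\alpha_2\in\F_2$, with $f=\psi_2^T\alpha_2$ at the minimizer. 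Composing the two correspondences yields the stated identity of minimizing maps.

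The main obstacle is bookkeeping around the isometry rather than any genuine analytic difficulty: one must ensure the \emph{equality} $\|\psi^T\alpha\|_\wH=\|\alpha\|_\F$ holds (not merely an inequality), which is precisely why the restriction of $\F$ and $\F_2$ to the ranges of the feature maps is essential, since otherwise distinct $\alpha$ could map to the same $v$ and the substituted $\F$-penalty would overcount. Confirming that the time-regularity classes $C([0,1],\F)$ and $C([0,1],\wH)$ genuinely correspond under $\alpha\mapsto\psi^T\alpha$, using the continuity and boundedness of $\psi$ from Cond.~\ref{condnuggetreg}, is the only point requiring a little care.
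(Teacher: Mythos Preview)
Your proposal is correct and matches the paper's approach: the paper offers no proof beyond the single line ``The following is a direct corollary of Thm.~\ref{thmkjehddjhdjeh},'' and your second (``more economical'') route is precisely that derivation made explicit, with the additional invocation of Thm.~\ref{thmkjhkejhddjhd} on the $\H$/$\F_2$ side to unpack the inner ridge loss $\ell=$\eqref{eqlktdelldreg} in feature coordinates. Your first route is simply the same argument done in one pass rather than by citing Thm.~\ref{thmkjehddjhdjeh}, and your remarks about restricting $\F,\F_2$ to the ranges of the feature maps (so that $\alpha\mapsto\psi^T\alpha$ is a genuine bijective isometry) correctly identify the only point requiring care.
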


\subsection{With feature maps of scalar operator-valued kernels}
We will now describe warping regression with the feature maps of scalar operator-valued kernels.

\subsubsection{Feature maps of scalar operator-valued kernels}\label{secscalfeatmap}
We will first describe the feature spaces and maps of scalar operator-valued kernels in the setting of Sec~\ref{secovk}. Let $K(x,x')=k(x,x') I_\Y$ be as in Definition \ref{deflkekdlkjd} and write  $\Fk$ and $\varphi\,:\,\X\rightarrow \Fk$ for a feature space and map associated with the scalar-valued kernel $k$.
Write $\L(\Fk,\Y)$ for the space of bounded linear operators from  $\Fk$ to $\Y$.
For $\beta \in \Fk$ and $y\in \Y$ write $  y\beta^T \in \L(\Fk,\Y)$  for the outer product between $y$ and $\beta$  defined as the linear function mapping $\beta'\in \Fk$ to $y \<\beta,\beta'\>_\Fk \in \Y$.

\begin{Theorem}
A feature space of the operator-valued kernel $K$ is  $\F:=\L(\Fk,\Y)$ and its feature map is defined by
\begin{equation}\label{eqlkjehdlkjeh}
\psi(x) y=y \varphi^T(x) \text{ for }(x,y)\in \X\times \Y\,.
\end{equation}
Furthermore,
\begin{equation}
\psi^T(x)\alpha= \alpha \varphi(x) \text{ for }x\in \X \text{ and }\alpha \in \L(\Fk,\Y)\,,
\end{equation}
and,
\begin{equation}\label{knkejdnkjd}
\|\alpha \varphi(\cdot)\|_K^2=\|\alpha\|^2_\F=\|\alpha\|^2_{\L(\Fk,\Y)}=\Tr[\alpha^T \alpha]\,.
\end{equation}
where $\Tr$ is the trace operator.
\end{Theorem}
\begin{proof}
Write  $\psi$ and $\F$ for a feature map/space associated with $K$.
The identity
\begin{equation}
y^T K(x,x') y'=\< \psi(x) y, \psi(x') y'\>_\F=  \<\varphi(x),\varphi(x')\>_\Fk \<y,y'\>_\Y\,,
\end{equation}
implies \eqref{eqlkjehdlkjeh} and the identification of $\F$ with $\L(\Fk,\Y)$ endowed with inner product
\begin{equation}\label{eqlklkjwndd}
\< \sum_{i,j} c_{i,j} y_i \beta_j^T  , \sum_{i',j'} c_{i',j'}' y_{i'} (\beta_{j'}')^T  \>_\F=  \sum_{i,i',j,j'} c_{i,j}c_{i',j'}' \<\beta_j,\beta_{j'}'\>_\Fk \<y_i,y_{i'}'\>_\Y\,.
\end{equation}
 Thm.~\ref{thmkjhkejhddjhd} implies the first identity in  \eqref{knkejdnkjd}.
\eqref{eqlklkjwndd} combined with the matrix representation of $\alpha\in \L(\Fk,\Y)$ over bases of $\Y$ and $\Fk$ imply the last equality in \eqref{knkejdnkjd}.
\end{proof}

\subsubsection{Warping regression}

Consider the setting of \eqref{eqlkjgedehgddjedhjssdBN} in the situation where $\Gamma$ and $K$  are scalar, i.e. $\Gamma(z,z')=k(z,z') I_\X$ and $K(z,z')=k_2(z,z') I_\Y$ and write
$\Fk$, $\Fk_2$, $\varphi$ and $\varphi_2$ for  feature spaces and maps associated with $k$ and $k_2$. The following proposition follows from Subsec.~\ref{secscalfeatmap}.

\begin{Proposition}
The maps $v(\cdot,t)=\alpha(t)\varphi(\cdot) $ and $f=\alpha_2\varphi_2$ obtained by minimizing
\begin{equation}\label{eqlkjwwqehgddjedhjssdBN}
\begin{cases}
\text{Minimize } &\frac{\nu}{2}\,\int_0^1 \big(\|\alpha(t)\|_{\L(\Fk,\X)}^2+ \frac{1}{r}\|\dot{q}(t)-\alpha(t)\varphi(q(t)) \|_{\X^N}^2\big)\,dt\\&+
\lambda\,\big(\|\alpha_2\|_{\F_2}^2+\frac{1}{\rho} \|\alpha_2\varphi_2(q(1))-Y'\|_{\Y^N}^2 \big)+\ell_\Y\big(Y',Y\big)\\
\text{over }& \alpha \in C([0,1],\F),\, \alpha_2 \in \F_2,\,q\in C^1([0,1],\X^N),\,q(0)=X,\, Y'\in \Y^N\, .
\end{cases}
\end{equation}
are identical to those obtained by minimizing \eqref{eqlkjgedehgddjedhjssdBN}.
\end{Proposition}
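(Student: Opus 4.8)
The plan is to derive this proposition as a direct specialization of the general feature-map formulation to the scalar case, by substituting the explicit feature-space/feature-map data for scalar operator-valued kernels (established in the theorem of Subsec.~\ref{secscalfeatmap}) into \eqref{eqlkjgedehgddjedhjssdBN}, and then observing that the resulting functional is term-by-term identical to \eqref{eqlkjwwqehgddjedhjssdBN}. Since Cor.~\ref{corjhgjgkyu} already identifies the minimizers of \eqref{eqlkjgedehgddjedhjssdBN} with those of \eqref{eqlkjgjhdbjehreg} (via $v=\psi^T\alpha$ and $f=\psi_2^T\alpha_2$), it suffices to show that, for scalar kernels, \eqref{eqlkjgedehgddjedhjssdBN} \emph{is} \eqref{eqlkjwwqehgddjedhjssdBN}.

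First I would apply the scalar-kernel feature theorem to $\Gamma(z,z')=k(z,z') I_\X$ (which plays the role of $K$ in that theorem with $\Y$ replaced by $\X$). This identifies the feature space of $\V$ as $\F=\L(\Fk,\X)$ with feature map $\psi(x)y=y\varphi^T(x)$, supplies the adjoint identity $\psi^T(x)\alpha=\alpha\varphi(x)$ for $\alpha\in\L(\Fk,\X)$, and yields the norm identity $\|\alpha\|_\F^2=\|\alpha\|_{\L(\Fk,\X)}^2$ from \eqref{knkejdnkjd}. Applying the same theorem to $K(z,z')=k_2(z,z') I_\Y$ gives $\F_2=\L(\Fk_2,\Y)$, the feature map $\psi_2(x)y=y\varphi_2^T(x)$, and the adjoint identity $\psi_2^T(x)\alpha_2=\alpha_2\varphi_2(x)$ for $\alpha_2\in\L(\Fk_2,\Y)$.

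With these identities in hand, the three places where the two functionals differ reduce to one another: the regularizer $\|\alpha(t)\|_\F^2$ becomes $\|\alpha(t)\|_{\L(\Fk,\X)}^2$; the propagation residual $\psi^T(q(t))\alpha(t)$ becomes $\alpha(t)\varphi(q(t))$ (applying $\psi^T(x)\alpha=\alpha\varphi(x)$ with $x=q(t)$ under the time integral); and the output term $\psi_2^T(q(1))\alpha_2$ becomes $\alpha_2\varphi_2(q(1))$ (applying the second adjoint identity with $x=q(1)$). Because $\ell_\Y$, the nugget penalties $r,\rho$, the slack variable $Y'$, and the admissible set for $q$ are unchanged, the two objective functionals and their domains are literally identical, so $(\alpha,\alpha_2,q,Y')$ minimizes one if and only if it minimizes the other; the recovered maps are then $v(\cdot,t)=\psi^T(\cdot)\alpha(t)=\alpha(t)\varphi(\cdot)$ and $f=\psi_2^T\alpha_2=\alpha_2\varphi_2$, which is the claim. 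I do not anticipate a genuine obstacle, as this is a bookkeeping substitution; the only point requiring care is the correct role assignment in the two invocations of the theorem (the operator $\Gamma$ acts on $\X$, so its feature operators land in $\L(\Fk,\X)$, whereas $K$ acts on $\Y$, so its feature operators land in $\L(\Fk_2,\Y)$).
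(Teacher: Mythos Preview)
Your proposal is correct and follows exactly the approach the paper takes: the proposition is stated as a direct consequence of the scalar-kernel feature-map theorem of Subsec.~\ref{secscalfeatmap}, and your term-by-term substitution of $\psi^T(x)\alpha=\alpha\varphi(x)$, $\psi_2^T(x)\alpha_2=\alpha_2\varphi_2(x)$, and $\|\alpha\|_\F=\|\alpha\|_{\L(\Fk,\X)}$ into \eqref{eqlkjgedehgddjedhjssdBN} is precisely what that one-line justification unpacks. Your care in distinguishing the two invocations (with target spaces $\X$ versus $\Y$) is appropriate and nothing further is needed.
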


 \begin{figure}[h!]
	\begin{center}
			\includegraphics[width= \textwidth]{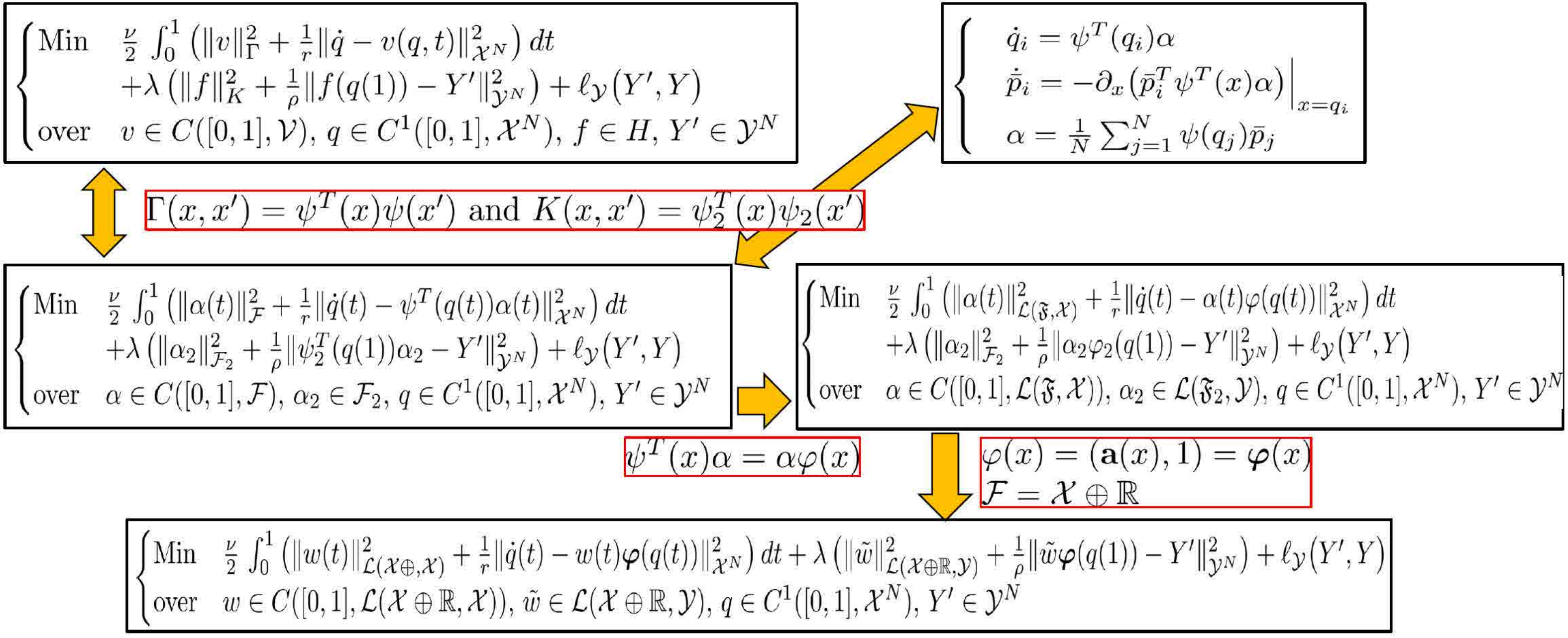}
		\caption{Warping regression with feature maps and activation functions.}\label{figmr2}
	\end{center}
\end{figure}

\subsection{With activation functions}\label{subsecekuwege62}
Let $\Fk=\Fk' \oplus \Fk''$ where $\Fk'$ and $\Fk''$ are $\<\cdot,\cdot\>_\Fk$-orthogonal separable Hilbert sub-spaces of $\Fk$.
Let $A\in \L(\X,\Fk')$ be a bounded linear operator from $\X$ to $\Fk'$ such that\footnote{ $\operatorname{dim}(\Fk')\geq \operatorname{dim}(\X)$ suffices for the existence of such an $A$.} $A^T A=I$, $c\in \Fk''$ such that $c^T c=1$, and  $\varphi\,:\, \X\rightarrow \Fk' \oplus \Fk''$ defined by
\begin{equation}\label{eqkjdekjddkjsdor}
\varphi(x)= A \ba (x) +c\,,
\end{equation}
where $\ba\,:\,\X\rightarrow \X$, is an arbitrary nonlinear {\bf activation function}.
\begin{Proposition}\label{propactfunct}
The operator-valued kernel defined by \eqref{eqkjdekjddkjsdor} is $\Gamma(x,x') = (\ba(x)^T \ba(x')+1) I$.  In particular $\Gamma$ satisfies Cond.~\ref{condnuggetreg} if $x\rightarrow \ba(x)$ and its first and second order partial derivatives are continuous and uniformly bounded.
\end{Proposition}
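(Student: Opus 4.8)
The plan is to identify the scalar-valued kernel $k$ whose feature map is $\varphi=$\eqref{eqkjdekjddkjsdor}, and then read off $\Gamma=k\,I$ using the correspondence of Subsec.~\ref{secscalfeatmap}. By definition of a feature map, $k(x,x')=\<\varphi(x),\varphi(x')\>_\Fk$, so I would first expand
\[
k(x,x')=\<A\ba(x)+c,\,A\ba(x')+c\>_\Fk
\]
by bilinearity into four terms. Because $A\ba(x),A\ba(x')\in\Fk'$ while $c\in\Fk''$, and $\Fk'$, $\Fk''$ are $\<\cdot,\cdot\>_\Fk$-orthogonal, the two cross terms $\<A\ba(x),c\>_\Fk$ and $\<c,A\ba(x')\>_\Fk$ vanish. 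The remaining diagonal terms collapse using $A^TA=I$, which gives $\<A\ba(x),A\ba(x')\>_\Fk=\<\ba(x),A^TA\ba(x')\>_\X=\ba(x)^T\ba(x')$, together with $\<c,c\>_\Fk=c^Tc=1$. Summing yields $k(x,x')=\ba(x)^T\ba(x')+1$, hence $\Gamma(x,x')=\big(\ba(x)^T\ba(x')+1\big)I$, as claimed.

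For the second assertion I would verify the portion of Cond.~\ref{condnuggetreg} that pertains to $\Gamma$ (part (3): continuity and uniform boundedness of $\Gamma$ together with its first and second order partial derivatives) by differentiating the scalar $k(x,x')=\ba(x)^T\ba(x')+1$. Since $\partial_{x_i}k=(\partial_{x_i}\ba(x))^T\ba(x')$, $\partial_{x'_j}k=\ba(x)^T\partial_{x'_j}\ba(x')$, and every mixed and higher partial up to order two is a finite sum of products of components of $\ba$, $\partial\ba$ and $\partial^2\ba$, the hypotheses that $\ba$ and its first and second partial derivatives are continuous and uniformly bounded transfer directly, as finite sums of products of bounded continuous functions, to $k$ and therefore to $\Gamma=k\,I$.

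This is a direct computation with no genuine obstacle; the only point requiring care is the orthogonal-decomposition bookkeeping that eliminates the cross terms, and the use of $A^TA=I$ to collapse the quadratic form to the plain inner product $\ba(x)^T\ba(x')$. The boundedness claim is then immediate from the product structure of the derivatives of $k$.
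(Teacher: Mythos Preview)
Your proposal is correct and is precisely the direct computation the paper leaves implicit (the paper states the proposition without proof). The orthogonal decomposition kills the cross terms, $A^TA=I$ and $c^Tc=1$ collapse the diagonal ones, and the regularity of $\Gamma$ follows from the product/chain-rule structure of $\partial k$ and $\partial^2 k$ exactly as you describe.
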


\begin{Remark}\label{rmkkjhevgcvcg78}
We call $\ba$ an {\bf elementwise nonlinearity} if $\ba(x)=\sum_{i=1}^{d_\X} e_i \bar{\ba}(x_i)$  for $x=\sum_{i=1}^{d_\X} x_i e_i \in \X$ where $e_1,\ldots,e_{d_\X}$ is some basis of $\X$ and $\bar{\ba}$ is a scalar-valued (nonlinear) function. To satisfy the
regularity requirements of Prop.~\ref{propactfunct} we must then assume that $z\rightarrow \bar{\ba}(z), \partial_z \bar{\ba}(z), \partial_z^2 \bar{\ba}(z)$  are continuous and uniformly bounded.
 Observe that the sigmoid nonlinearity $\bar{\ba}(z)=\tanh(z)$ satisfies these regularity conditions.
  Although
 ReLU ($\bar{\ba}(z)=\max(z,0)$) is not bounded and lacks the required regularity one could use a bounded and smoothed version such as the following variant of  softplus $\bar{\ba}(z)=  \ln(1 + e^z)/\big(1+ \epsilon \ln(1 + e^z) \big) $ which behaves like ReLU for $z\in (-\infty,1/\epsilon)$ and $0<\epsilon\ll 1 $. For ease of presentation, we will also write $\ba$ for $\bar{\ba}$ when $\ba$ is an elementwise nonlinearity.
\end{Remark}

Prop.~\ref{propactfunct} implies that, as long as $A$ and $c$ are unitary, their particular choice has no influence on the kernel $\Gamma$. We will therefore from now on, in the setting of activation functions, select $\F=\X\oplus \R$ ($\Fk'=\X$ and $\Fk''=\R$) and use the identity matrix/vector for $A$ and $c$. \eqref{eqkjdekjddkjsdor} can then be written
\begin{equation}\label{eqkjdekjddkjsd}
\varphi(x)= \bvarphi (x) \text{ with } \bvarphi(x)=(\ba(x),1)\,,
\end{equation}
and $\ba$ is, from now on, assumed  to satisfy the regularity conditions of Prop.~\ref{propactfunct}.
Similarly we select $\Fk_2=\X \oplus \R$ and
\begin{equation}\label{eqkjdekjddkjsd2}
\varphi_2(x)=  \bvarphi (x) \text{ with } \bvarphi(x)=(\ba(x),1)\,.
\end{equation}
Note that the operator-valued kernel $K\,:\, \X\times \X\rightarrow \L(\Y)$ defined by \eqref{eqkjdekjddkjsd2} is
\begin{equation}\label{eqkj2lehdbljebd}
K(x,x')=\bvarphi^T(x) \bvarphi(x')=(\ba^T(x) \ba(x')+1) I_\Y\,.
\end{equation}
Also note that for $\tilde{w}\in \L(\X \oplus \R,\Y)$ we have $\tilde{w} \bvarphi(x)= W \ba(x)+ b$ where the {\bf weight} $W\in \L(\X,\Y)$ is defined by
 $W z=\tilde{w} (z,0) $ for $z\in \X$ and the {\bf bias} $b\in \Y$ is defined by  $b=\tilde{w}(0,1)$.
 Therefore \eqref{eqkjdekjddkjsd2} allows us to incorporate weights and biases into a single variable $\tilde{w}$.

 Write $\|\cdot\|_{\L(\X,\Y)}$ for the Frobenius norm on $\L(\X,\Y)$. The following theorem shows that warping regression with feature maps  \eqref{eqkjdekjddkjsd} and \eqref{eqkjdekjddkjsd2} can be expressed as a ResNet with scaled/strong $L_2$ regularization on weights and biases.
The following two theorems are straightforward, and Fig.~\ref{figmr2} summarises the results of this section.

\begin{Theorem}\label{thmidjheyd88h2dd}
If $\varphi$ and $\varphi_2$ are as in  \eqref{eqkjdekjddkjsd} and \eqref{eqkjdekjddkjsd2} then
    $v(\cdot,t)=w(t) \bvarphi(\cdot)$, $f=\tilde{w} \bvarphi(\cdot)$ and $q,Y'$  obtained by minimizing
\begin{equation}\label{eqlkjgedehgddjedddsdddsshjssdBN}
\begin{cases}
\text{Min} &\frac{\nu}{2}\,\int_0^1 \big(\|w(t)\|_{\L(\X\oplus \R,\X)}^2+ \frac{1}{r}\|\dot{q}(t)-
w(t) \bvarphi(q(t)) \|_{\X^N}^2\big)\,dt+\\&
\lambda\,\big(\|\tilde{w}\|_{\L(\X\oplus \R,\Y)}^2+\frac{1}{\rho} \|\tilde{w}\bvarphi(q(1))-Y'\|_{\Y^N}^2 \big)+\ell_\Y\big(Y',Y\big)\\
\text{over}& w\in C\big([0,1],\L(\X\oplus\R,\X)\big),\, \tilde{w}\in \L(\X\oplus\R,\Y),\\&
q\in C^1([0,1],\X^N),\,q(0)=X,\, Y'\in \Y^N\, ,
\end{cases}
\end{equation}
are identical to those obtained by
 minimizing \eqref{eqlkjwwqehgddjedhjssdBN}. Therefore \eqref{eqlkjgedehgddjedddsdddsshjssdBN} has minimizers and
   if $w, q$ are minimizers of \eqref{eqlkjgedehgddjedddsdddsshjssdBN} then the energy $\frac{1}{2}\big(\|w(t)\|_{\L(\X\oplus\R,\X)}^2+ \frac{1}{r}\|\dot{q}(t)-
w(t) \bvarphi(q(t)) \|_{\X^N}^2\big)$ is constant over $t\in [0,1]$.
\end{Theorem}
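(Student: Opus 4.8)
The plan is to observe that, once the specific feature maps \eqref{eqkjdekjddkjsd} and \eqref{eqkjdekjddkjsd2} are substituted, the variational problem \eqref{eqlkjgedehgddjedddsdddsshjssdBN} becomes literally the same as \eqref{eqlkjwwqehgddjedhjssdBN}; the entire argument is therefore a change of variables together with one norm identity. First I would record that \eqref{eqkjdekjddkjsd} and \eqref{eqkjdekjddkjsd2} fix $\Fk=\Fk_2=\X\oplus\R$ and $\varphi=\varphi_2=\bvarphi$, so that the feature spaces of $\Gamma$ and $K$ are $\F=\L(\X\oplus\R,\X)$ and $\F_2=\L(\X\oplus\R,\Y)$. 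Under the identification $\alpha(t)=w(t)\in\L(\X\oplus\R,\X)$ and $\alpha_2=\tilde{w}\in\L(\X\oplus\R,\Y)$ one has $\alpha(t)\varphi(q(t))=w(t)\bvarphi(q(t))$ and $\alpha_2\varphi_2(q(1))=\tilde{w}\bvarphi(q(1))$ term by term.

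The one nonroutine point is the norm identity, which I would supply by invoking \eqref{knkejdnkjd} twice --- once for $\Gamma=kI_\X$ and once for $K=k_2I_\Y$ --- to get $\|\alpha\|_\F=\|w\|_{\L(\X\oplus\R,\X)}$ and $\|\alpha_2\|_{\F_2}=\|\tilde{w}\|_{\L(\X\oplus\R,\Y)}$, both being Frobenius norms. Substituting these equalities into \eqref{eqlkjwwqehgddjedhjssdBN} reproduces \eqref{eqlkjgedehgddjedddsdddsshjssdBN} verbatim, over the same admissible sets and the same constraint $q(0)=X$. Hence the two minimization problems have the same value and the same minimizers under $v=w\bvarphi$, $f=\tilde{w}\bvarphi$, which is the first assertion.

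For existence I would chain the equivalences already in place: the proposition preceding \eqref{eqlkjwwqehgddjedhjssdBN} identifies its minimizers with those of \eqref{eqlkjgedehgddjedhjssdBN}, Cor.~\ref{corjhgjgkyu} identifies those with the minimizers of \eqref{eqlkjgjhdbjehreg}, and the latter has minimizers by Thm.~\ref{thmksbsahshkdreg}; transporting back through the identification above yields minimizers of \eqref{eqlkjgedehgddjedddsdddsshjssdBN}. For the energy statement I would invoke the constancy in $t$ of $\tfrac12\|v(\cdot,t)\|_{\wH_r}^2=\tfrac12\big(\|v(\cdot,t)\|_\wH^2+\tfrac1r\|\dot{q}(t)-v(q(t),t)\|_{\X^N}^2\big)$ proved in Thm.~\ref{thmksbsahshkdreg}, then rewrite $\|v(\cdot,t)\|_\wH^2=\|w(t)\|_{\L(\X\oplus\R,\X)}^2$ by Thm.~\ref{thmkjhkejhddjhd} and \eqref{knkejdnkjd}, together with $v(q(t),t)=w(t)\bvarphi(q(t))$, giving exactly the asserted constancy of $\tfrac12\big(\|w(t)\|_{\L(\X\oplus\R,\X)}^2+\tfrac1r\|\dot{q}(t)-w(t)\bvarphi(q(t))\|_{\X^N}^2\big)$.

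I do not expect a genuine obstacle: the content is entirely the bookkeeping of the feature-map substitution, consistent with the remark that the theorem is straightforward. The only points that require a moment's care are (i) applying the scalar-kernel norm identity \eqref{knkejdnkjd} symmetrically, with range $\X$ for $\Gamma$ and range $\Y$ for $K$, and (ii) checking that the feature space and map tacitly used in \eqref{eqlkjwwqehgddjedhjssdBN} are precisely those generated by \eqref{eqkjdekjddkjsd}--\eqref{eqkjdekjddkjsd2}, so that no regularity hypothesis beyond Cond.~\ref{condnuggetreg} --- guaranteed for $\bvarphi$ by Prop.~\ref{propactfunct} --- is needed.
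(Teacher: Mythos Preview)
Your proposal is correct and follows essentially the same approach as the paper, which simply records that the proof is straightforward and invokes Thm.~\ref{thmksbsahshkdreg} for existence and energy preservation. You have merely spelled out the feature-map substitution and norm identity \eqref{knkejdnkjd} that make the two variational problems coincide, which is exactly the intended content.
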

\begin{proof}
The proof is straightforward. Use Thm.~\ref{thmksbsahshkdreg} for the existence of minimizers and energy preservation.
\end{proof}

\begin{Theorem}\label{thmidjheyd88h2ddde}
If $\varphi$ and $\varphi_2$ are as in  \eqref{eqkjdekjddkjsd} and \eqref{eqkjdekjddkjsd2} then
  $\phi=(I+v_L)\circ \cdots \circ(I+v_1)$, $v_s=w^s \bvarphi(\cdot)$, $f=\tilde{w} \bvarphi(\cdot)$ and $q^s, Y'$, obtained by minimizing
\begin{equation}\label{eqlkjgedehgddjededdeddddsdddsshjssdBN2}
\begin{cases}
\text{Min} &\frac{\nu L}{2} \sum_{s=1}^L \big(\|w^s\|_{\L(\X\oplus \R,\X)}^2
+\frac{1}{r} \|q^{s+1}-q^s -
w^s \bvarphi(q^s)\|_{\X^N}^2
\big)
+\\&
\lambda\,\big(\|\tilde{w}\|_{\L(\X\oplus \R,\Y)}^2+\frac{1}{\rho} \|\tilde{w}\bvarphi(q^{L+1})-Y'\|_{\Y^N}^2 \big)+\ell_\Y\big(Y',Y\big)\\
\text{over}& w^s\in\L(\X\oplus \R,\X),\, \tilde{w}\in \L(\X\oplus \R,\Y),\,q^s\in \X^N,\, Y'\in \Y^N\, ,q^1=X\,,
\end{cases}
\end{equation}
are identical to those obtained by
 minimizing \eqref{eqlkjeddjedhjdBN} and, as $L\rightarrow \infty$, converge (in the sense of the adherence values as in Subsec.~\ref{subskjkecgyf67f6reg}), towards those obtained by minimizing \eqref{eqlkjgehgddjedhjd}. Furthermore, \eqref{eqlkjgedehgddjededdeddddsdddsshjssdBN2} has minimizers and if
  the $w^s, q^s$ are minimizers of \eqref{eqlkjgedehgddjededdeddddsdddsshjssdBN2}  then the energy $\frac{1}{2}\big(\|w^s\|_{\L(\X\oplus \R,\X)}^2
+\frac{1}{r} \|q^{s+1}-q^s -
w^s \bvarphi(q^s)\|_{\X^N}^2\big)$ fluctuates by at most $\mathcal{O}(1/L)$ over $s\in \{1,\ldots,L\}$.
\end{Theorem}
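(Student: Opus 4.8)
The plan is to recognize \eqref{eqlkjgedehgddjededdeddddsdddsshjssdBN2} as nothing more than the feature-map discrete mechanical regression problem \eqref{eqlkjeddjedhjdBN} specialized to the activation-function feature maps \eqref{eqkjdekjddkjsd}--\eqref{eqkjdekjddkjsd2}, and then to import existence, near energy preservation, and convergence verbatim from Sec.~\ref{secregular}. Since every claim is a translation of an already established result through the feature-map dictionary, no new analysis is needed; the work is entirely bookkeeping.

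First I would record that, under \eqref{eqkjdekjddkjsd} and \eqref{eqkjdekjddkjsd2}, the associated operator-valued kernels are the scalar kernels $\Gamma(x,x')=(\ba^T(x)\ba(x')+1)I_\X$ and $K(x,x')=(\ba^T(x)\ba(x')+1)I_\Y$ of Prop.~\ref{propactfunct}, which satisfy Cond.~\ref{condnuggetreg} because $\ba$ and its first two derivatives are continuous and uniformly bounded; hence every result of Sec.~\ref{secregular} applies. Using the scalar feature-map description of Subsec.~\ref{secscalfeatmap} with $\varphi=\bvarphi$, the feature space for $\Gamma$ is $\L(\X\oplus\R,\X)$ with $\psi^T(q^s)\alpha_s=w^s\bvarphi(q^s)$ and $\|\alpha_s\|_\F^2=\|w^s\|_{\L(\X\oplus\R,\X)}^2$ by \eqref{knkejdnkjd}, while the $K$-block is represented by $f=\tilde w\bvarphi$ with $\|f\|_\H^2=\|\tilde w\|_{\L(\X\oplus\R,\Y)}^2$. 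Substituting these identities into \eqref{eqlkjeddjedhjdBN} and writing out the ridge-regression end-loss $\ell(q^{L+1},Y)$=\eqref{eqlktdelldreg} in feature-map form (its minimizer is $f=\tilde w\bvarphi$ by the representer theorem, cf.~Thm.~\ref{thmeqkljedjnedjdreg}, and minimizing over $\tilde w,Y'$ recovers the $\lambda$-block and $\ell_\Y$ term) turns \eqref{eqlkjeddjedhjdBN} \emph{verbatim} into \eqref{eqlkjgedehgddjededdeddddsdddsshjssdBN2}. This yields the asserted identity of minimizers; existence is then inherited from Thm.~\ref{thmksbsahskd2greg} via the equivalence \eqref{eqlkjeddjedhjdBN}$\Leftrightarrow$\eqref{eqlktddsyeytdsefsreg} established by the feature-map reduction theorem of Subsec.~\ref{subsejhjhgy}.

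The energy fluctuation is then immediate. By Thm.~\ref{thmksbsahskd2greg} the kernel energies $\tfrac12(p^s)^T\wK_r(q^s,q^s)p^s=\tfrac12\|v_s\|_{\wH_r}^2$ fluctuate by $\mathcal{O}(1/L)$ across $s$, and under the feature-map dictionary this quantity equals $\tfrac12\big(\|w^s\|_{\L(\X\oplus\R,\X)}^2+\tfrac1r\|q^{s+1}-q^s-w^s\bvarphi(q^s)\|_{\X^N}^2\big)$, which is precisely the claimed energy. For the convergence statement I would invoke Thm.~\ref{thmhgw7gfddreg} at the kernel level: as $L\to\infty$ the adherence values of the minimizers of \eqref{eqlktddsyeytdsefsreg} are the minimizers of the continuous idea-registration problem (in the sense of adherence values of Subsec.~\ref{subskjkecgyf67f6reg}), and transporting this through the feature-map correspondences of Subsec.~\ref{subsejhjhgy} and Cor.~\ref{corjhgjgkyu}/Thm.~\ref{thmkjehddjhdjeh} gives convergence of the $w^s,\tilde w,q^s,Y'$ in \eqref{eqlkjgedehgddjededdeddddsdddsshjssdBN2} to the minimizers of the continuous problem \eqref{eqlkjgehgddjedhjd}.

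Since each step is a direct substitution, there is no genuine analytic obstacle. The only point requiring care is verifying that the Frobenius-norm identity \eqref{knkejdnkjd} and the explicit expansion of the ridge-regression end-loss make \eqref{eqlkjgedehgddjededdeddddsdddsshjssdBN2} coincide \emph{exactly} with the feature-map problem \eqref{eqlkjeddjedhjdBN}---not merely up to a reparametrization---so that ``identical minimizers'' is literal and the energy, existence, and convergence statements transfer without loss.
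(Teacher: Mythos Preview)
Your proposal is correct and follows exactly the paper's approach: the paper's own proof simply declares the result ``straightforward'' and points to Thm.~\ref{thmhgw7gfddreg} for convergence and Thm.~\ref{thmksbsahskd2greg} for existence and near energy preservation, which is precisely the route you take, with the added (and welcome) bookkeeping that makes the feature-map dictionary explicit via Prop.~\ref{propactfunct}, \eqref{knkejdnkjd}, and Subsec.~\ref{secscalfeatmap}.
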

\begin{proof}
The proof of the is straightforward. Convergence follows from Thm.~\ref{thmhgw7gfddreg}.
Near energy preservation and existence follow from Thm.~\ref{thmksbsahskd2greg}.
\end{proof}

 \begin{figure}[h!]
	\begin{center}
			\includegraphics[width= \textwidth]{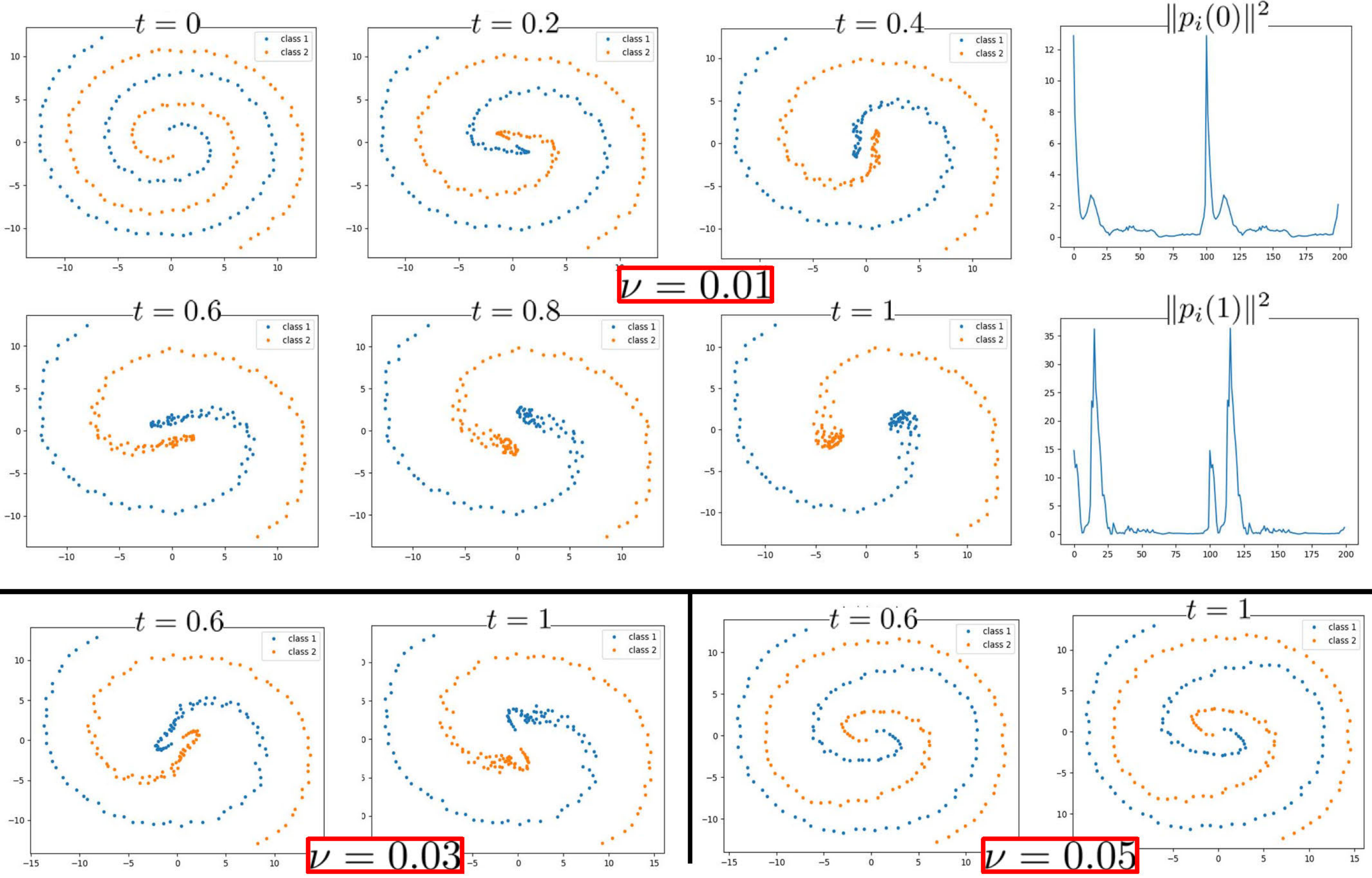}
		\caption{Swiss roll data set. Locations of the points $q_i(t)$ for $\nu=0.01$ (top) $\nu=0.03$ (bottom left) and $\nu=0.05$ (bottom right).
The sparsity of the momentum variable is theoretically explained in
Sec.~\ref{secspa}.}\label{figspiral2}
	\end{center}
\end{figure}

\section{Numerical experiments}\label{secnumexp}

This section presents numerical experiments on training warping regression networks with geodesic shooting (Sec.~\ref{seckdejhdjkdw}) and with feature maps (Sec.~\ref{secnumfm}). These experiments illustrate (1) the sparsity of the momentum representation of minimizers (2) testing error improvements due to the learned warping.

\subsection{Numerical implementation as geodesic shooting}\label{seckdejhdjkdw}
Given the picture depicted in Fig.~\ref{figmr}, the geodesic shooting solution to Problem \ref{pb828827hee}
is summarized in the pseudo-algorithm  \ref{alggtpshootsol}.
Except for the structure of the end loss $\ell$,
this method is similar to the one introduced for computational anatomy \cite{miller2006geodesic, allassonniere2005geodesic} (where the constraint $\dot{q}=\wK(q,q)p$ may also be relaxed \cite{camion2001geodesic}).
This section will implement this strategy on the Swiss roll dataset to illustrate the impact of the value of $\nu$  on the deformation of the space and the sparsity of momentum variables.

\begin{algorithm}[h]
\caption{Shooting solution to Problem \ref{pb828827hee}}\label{alggtpshootsol}
\begin{algorithmic}[1]
\STATE Define the loss $\ell$  via \eqref{eqhgvygvgyv} or \eqref{eqledhehdiudh}.
\STATE Discretize the Hamiltonian system \eqref{eqkedmdledkemdl} with a stable and accurate integrator.
\STATE Minimize \eqref{eqkjlkjkejwkdj} (via gradient descent and the discretized Hamiltonian system) to identify the initial momentum $p(0)$.
\STATE Approximate $f^\dagger$ with
$f^\ddagger(\cdot)=f\circ \phi(\cdot,1)$ where $\phi$ is obtained from the numerical integration of \eqref{eqttjjkjtc} (using the solution of the discretized Hamiltonian system with optimal initial momentum $p(0)$) and $f$ is obtained as
the minimizer of $\ell\big(\phi(X,1),Y\big)$ in  \eqref{eqhgvygvgyv} or \eqref{eqledhehdiudh}.
\end{algorithmic}
\end{algorithm}

\subsubsection{Geometric integration}\label{subsecgyf67hhbvdf6}

The Hamiltonian system \eqref{eqkedmdledkemdl} is characterized by structural and geometric invariants (the canonical symplectic form, volumes in the phase space, the energy, etc.)  \cite{marsden2013introduction}. Symplectic integrators \cite{hairer2006geometric} have been developed to approximate the continuous system while exactly (e.g., for the symplectic form) or nearly (e.g., for the energy) preserving these invariants. The main idea of these integrators is to simulate a nearby discrete mechanical system rather than a nearby discrete ODE. Within this class of symplectic integrators, explicit ones are preferred for their computational efficiency/tractability.
Since the Hamiltonian \eqref{eqkedmdledkemdl} is nonseparable, classical symplectic integrators \cite{hairer2006geometric} such as St\"{o}rmer-Verlet  are implicit \cite{hairer2003geometric}. Although the Euler-Lagrange scheme associated with the  discrete least action principle \eqref{eqlsedhjd} is symplectic (since it is variational \cite{marsden2001discrete}), it is also implicit and therefore difficult to simulate.
In this paper we will simply discretize the Hamiltonian system with the Leapfrog method \cite{hairer2006geometric} as follows:
\begin{equation}\label{eqkhkwjheddh}
\begin{cases}
p &\leftarrow p-\frac{h}{2}  \partial_q\big(\frac{1}{2}p^T \wK(q,q)p\big)\\
q &\leftarrow q + h\,\wK(q,q)p\\
p &\leftarrow p-\frac{h}{2}  \partial_q\big(\frac{1}{2}p^T \wK(q,q)p\big)\,.
\end{cases}
\end{equation}
For simplicity, although \eqref{eqkhkwjheddh} is not symplectic for our non-separable system \eqref{eqkedmdledkemdl}, it is explicit, time-reversible
and sufficiently stable for our example is.

\begin{Remark}
M. Tao has recently introduced \cite{tao2016explicit} an ingenious method for deriving explicit symplectic integrators for general non-separable Hamiltonian systems that could be employed for \eqref{eqkedmdledkemdl}.
Tao's idea is to consider the augmented Hamiltonian system
\begin{equation}\label{eqjkehdjedd}
\bar{\Hf}(q,p,\bar{q}.\bar{p})=\Hf(q,\bar{p})+\Hf(\bar{q},p)+\omega (\|q-\bar{q}\|_2^2+\|p-\bar{p}\|_2^2)
\end{equation}
 in which the first two terms are  copies of the original system with mixed-up
positions and momenta and the last term is  an artificial restraint ($\omega$ is a constant  controlling the binding of the two copies).
Discretizing \eqref{eqjkehdjedd} via Strang splitting leads to explicit symplectic integrators of arbitrary even order.
\end{Remark}

 \begin{figure}[h]
 \captionsetup{width=\linewidth}
	\begin{center}
			\includegraphics[width= \textwidth]{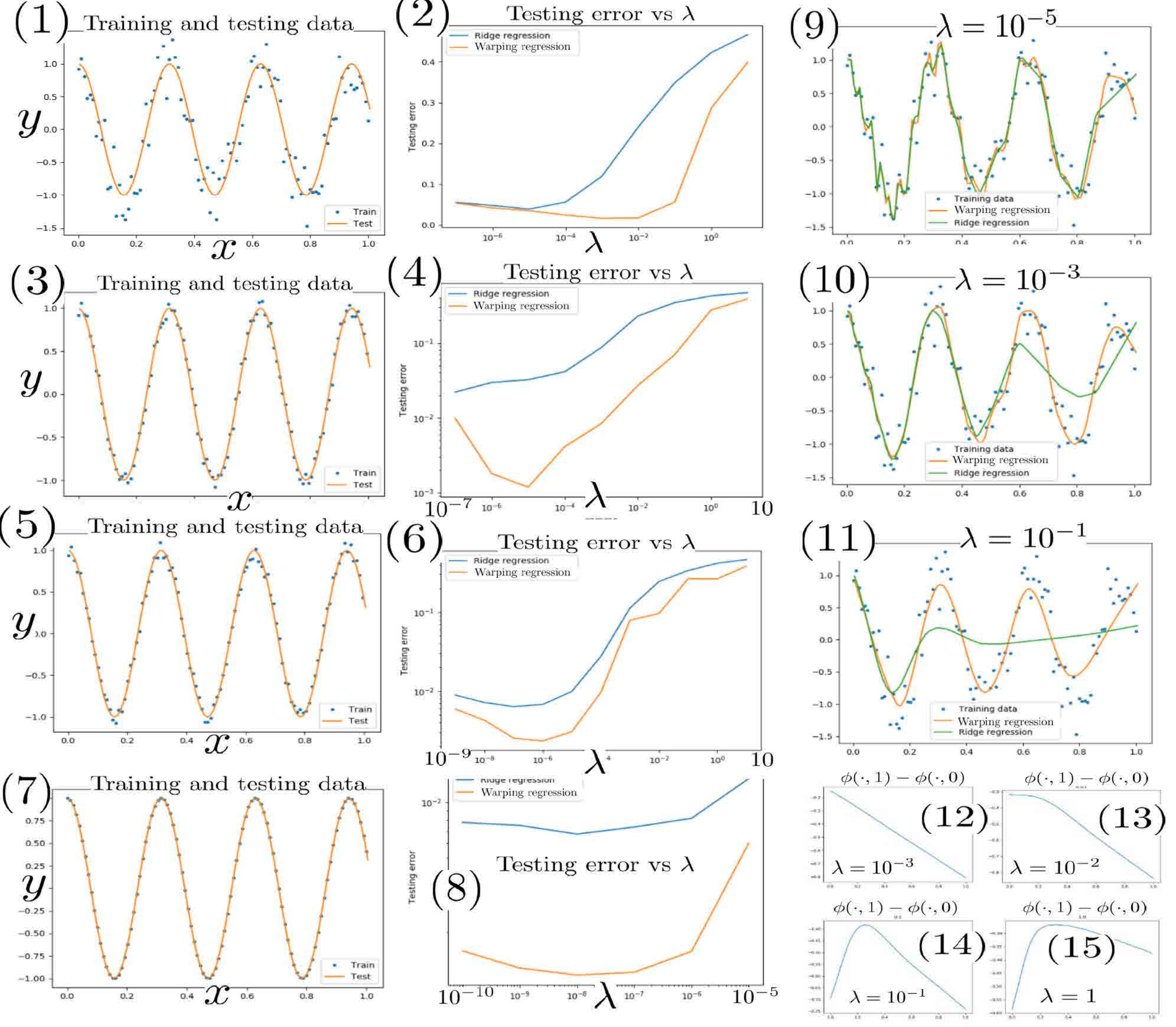}
		\caption{Warping regression vs. ridge regression. (1,3,5,7) Target function and noisy training data with $\sigma_z=1$ for (1), $\sigma_z=0.2$ for (3,5) and $\sigma_z=0$ for (7). (2,4,6,8) testing errors vs. $\lambda$ corresponding to the left column for ridge regression and warping regression. The $y$-axis of (2) is in linear scale. The $y$-axis of (4,6,8) is in log scale. (9,10,11) ridge and mechanical regressors corresponding to (1) for $\lambda=10^{-5}, 10^{-3}, 10^{-1}$. (12,13,14,15) $\phi(\cdot,1)-\phi(\cdot,0)$ corresponding to warping regression for (1) for $\lambda=10^{-3}, 10^{-2}, 10^{-1},1$.}\label{figmrgr1d}
	\end{center}
\end{figure}

 \begin{figure}[h]
 \captionsetup{width=\linewidth}
	\begin{center}
			\includegraphics[width= \textwidth]{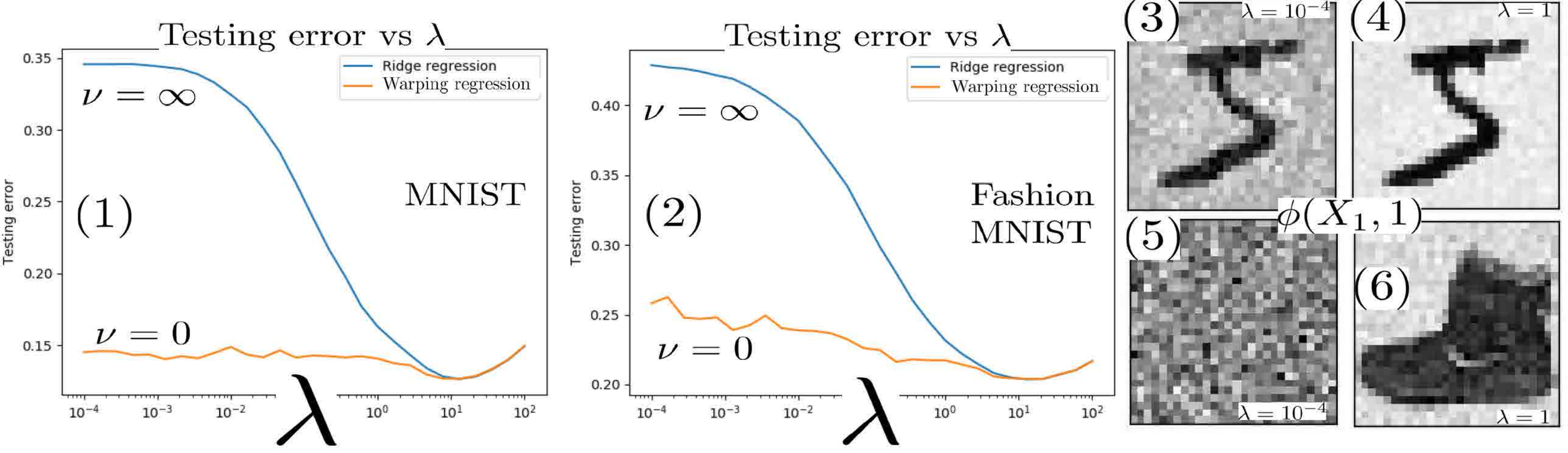}
		\caption{Warping regression vs. ridge regression. (1), (2) Testing errors vs. $\lambda$ for MNIST and Fashion-MNIST for ridge regression ($\nu=\infty$) and warping regression (with $\nu=0$). (3-6) $\phi(X_1,1)$ (3,4) MNIST (5,6) Fashion-MNIST (3,5) $\lambda=10^{-4}$ (4,6) $\lambda=1$. The larger than usual testing errors (around $12\%$ for MNIST and $20\%$ for Fashion-MNIST) are due to the fact that we are using $1000$ points (instead of the usual $60,000$) for training.
The architecture is presented in the introduction of Sec.~12.2. We use randomized feature maps and equivariant feature maps as done implicitly with convolutional networks (Sec.~\ref{subseciuigiwuegew6gd}), which explains the lower accuracy when compared to CNNs. We chose to train on a small subsample of MNIST to ensure low computational complexity (without sub-batching).
}\label{figmrmnist1}
	\end{center}
\end{figure}

\subsubsection{Swiss roll dataset}\label{subsefcfcdcgyf67f6}
We implement the pseudo-algorithm  \eqref{alggtpshootsol} for the
Swiss roll dataset illustrated in Fig.~\ref{figspiral2}. We use the optimal
 recovery loss \eqref{eqhgvygvgyv} to define $\ell$ and $f$.
 For this example $\X=\R^2$, $\Y=\R$, $N=200$, $Y_i=+1$ for the first 100 points, and $Y_i=-1$ for the remaining $100$ points.
$\wK$ is a separable Gaussian kernel (with a nugget $r$) of the form $\wK(z,z')=(k(z,z')+r) I$ with $k(z,z')=e^{-|z-z'|^2/s^2}$ for $z,z'\in \X$, $s=5$ and $r=0.1$. We  simply take $K$ to be the scalar kernel $k(z,z')+r$ with the same parameters as for $\wK$.
Fig. \ref{figspiral2} shows the locations of the points $q_i(t)$ for $i=1,\ldots,200$ which is a solution of the numerical discretization of the Hamiltonian system \eqref{eqkedmdledkemdl} with the Leapfrog method \eqref{eqkhkwjheddh}
and  $h=0.2$. This Hamiltonian system is initialized with the momentum $p(0)$ identified by minimizing  \eqref{eqkjlkjkejwkdj} via gradient descent for three different values of $\nu$ of the regularizing parameter balancing, in \eqref{eqlkjeddjedhjd}, the
RKHS norm of the deformation of the space with that of the regressor $f$. Note that as $\nu$ increases, a greater penalty is placed on that deformation, and the points $q_i(1)$ remain closer to their original position. On the other hand, for a small value of $\nu$, the space will deform to a greater degree to minimize the RKHS norm of the regressor. Fig.~\ref{figspiral2} is also showing the norm of the entries of initial momentum $p(0)$ and final momentum $p(1)$. As discussed in Subsec.~\ref{secspa}, the domination of a few entries supports the suggestion that momentum variables promote sparsity in the representation of the regressor.

\subsection{With feature maps}\label{secnumfm}
In the following experiments we use the variational formulation \eqref{eqlkjwwqehgddjedhjssdBN} with, $r=\rho=0$, $\ell_\Y(Y',Y)=\|Y'-Y\|_{\Y^N}^2$  and use  random features\footnote{Using random features improves computational complexity without incurring significant loss in accuracy, see \cite{rahimi2008random, nelsen2020random}.} to construct $\varphi$ and $\varphi_2$.
We select $\varphi(x)=\ba(W x+b)$ and $\varphi_2(x)=\ba(W^2 x+b^2)$
with $\ba(\cdot)=\max(\cdot,0)$, $W\in \R^{\dim(\Fk)\times \dim(\X)}$, $b\in \R^{\dim(\Fk)}$,
$W^2\in \R^{\dim(\Fk_2)\times \dim(\X)}$, $b^2\in \R^{\dim(\Fk_2)}$.
All the entries of $W, W^2, b, b^2$ are independent and we select
$W_{i,j}, W^2_{i,j}\sim (1.5/\sqrt{\dim(\X)}) \,\cN(0, 1)$ and $b_i, b_i^2\sim 0.1\, \cN(0,1)$.

\subsubsection{One dimensional regression}
To goal of this experiment is to approximate the function  $f^\dagger(x)=\cos(20 x)$ in the interval $[0,1]$ from the observation of $N=100$ data (training) points $(X_i,Y_i)$ where $X_i=i/100$, $Y_i=\cos(20 X_i)+\sigma_z Z_i$ and the $Z_i$ are i.i.d. random variables uniformly distributed in $[-0.5,0.5]$.
Here $\X=\Y=\R$ and we also use $100$ points $(x_i,y_i)_{1\leq i \leq 100}$ to compute testing errors (we take
$x_i=i/100-1/200$ and $y_i=f^\dagger(x_i)$.
We select $\Fk=\R^{200}$ and $\Fk_2=\R^{800}$.
Fig.~\ref{figmrgr1d} compares classical ridge regression ($\nu=\infty$) with warping regression with $\nu=0$.
Note that warping regression has significantly smaller testing errors than ridge regression over a broad range of values for $\lambda$.

\subsubsection{MNIST and Fashion MNIST}
For this experiment we use the MNIST and Fashion MNIST datasets. We use $N=1000$ points $(X_i,Y_i)$ for training and $10000$ points for testing. $f^\dagger$ maps a $28\times 28$ image $X_i \in \R^{28\times 28}$ to a one hot-vector $Y_i\in \R^{10}$ ($Y_{i,j}=1$ if the class of $X_i$ is $j$ and $Y_{i,j}=0$ otherwise).
We select $\Fk=\R^{784}$ and $\Fk_2=\R^{800}$.
Fig.~\ref{figmrmnist1} compares classical ridge regression ($\nu=\infty$) with warping regression with $\nu=0$.
Warping regression has significantly smaller testing errors than ridge regression over a broad range of values for $\lambda$, and the deformation of the space $\phi(\cdot,1)$ seems to regularize the classification problem.

\section{Reduced equivariant multi-channel (REM) kernels and feature maps}\label{secrem}
This section introduces the type of structured kernels implicitly associated with CNNs. These kernels preserve the relative pose information ( across layers and enable the generalization of CNNs to arbitrary groups of transformations acting on arbitrary spaces.

\subsection{Reduced kernels}\label{secohhdvk}
In the setting of Sec.~\ref{secovk},  let  $K\,:\,\X\times \X\rightarrow \L(\Y)$ be an operator-valued kernel and
 let  $P\,:\,\X\rightarrow \X$ and $R\,:\,\Y\rightarrow \Y$ be
  linear projections.
Note that $R K(P x,Px') R \,:\, \X\times \X\rightarrow   \L(R \Y) $ is also an operator-valued kernel.
The following proposition generalizes \eqref{eqhgvygvgyv} and \eqref{eqhgvygvgyv2} to partial measurements on the inputs and outputs of the unknown function $f^\dagger$ in Problem \ref{pb828827hee}.

\begin{Proposition}
Using the relative error in $\|\cdot\|_{K}$-norm as a loss, the minimax optimal recovery of an unknown function
$f^\dagger\in \H_K$ given $R f^\dagger(P X)=Z$ (with $X:=(X_1,\ldots,X_N)\in \X^N$ and $Z:=(Z_1,\ldots,Z_N)\in (R\Y)^N$) is the minimizer of
\begin{equation}\label{eqhgdsvygvgyvee}
\begin{cases}
\text{Minimize }&\|f\|_{K}\\
\text{subject to }&R f(P X)=Z \,,
\end{cases}
\end{equation}
which admits the representation
\begin{equation}\label{eqhgvydedsgdvgyv2}
 f(\cdot)= K(\cdot,P X) R  (R K (P X,P X) R)^{-1} Z\text{ with }\|f\|_K^2=Z^T (R K (P X,P X) R)^{-1} Z\,,
\end{equation}
where $R K (P X,P X) R$ is the $N\times N$ block-operator matrix with entries $R K (P X_i,P X_j) R$ and $K(\cdot,P X) R$ is the N-vector with entries $K(\cdot,P X) R$.
\end{Proposition}
\begin{proof}
The proof of minimax optimality of the minimizer of \eqref{eqhgdsvygvgyvee} is similar to that of \cite[Thm.~12.4,12.5]{owhadi2019operator}. The representation \eqref{eqhgvydedsgdvgyv2} follows by observing that
that $R f(P X_i)=Z_i$ and that $f$ is $\<\cdot,\cdot\>_K$-orthogonal to the set of $g\in \H_K$ such that $R g(P X_i)=0$ (since $f$ has the representation $f=\sum_i K(\cdot,P X_i) R V_i$ with $V_i\in \Y$ and
$\<K(\cdot,P X_i) R V_i, g\>_K=\<g(P X_i),R V_i\>_\Y=\<R g(P X_i),V_i\>_\Y=0$ via the reproducing identity).
\end{proof}

\subsection{Equivariant multi-channel kernels}\label{secqker}
We will now present a generalization of the equivariant kernels of \cite{reisert2007learning}.

\subsubsection{The unitary group of transformations on the base space}
Let $\Xf$ be a separable Hilbert space.
Let $\G$ be a (compact, possibly finite)
group of linear unitary transformations acting on $\Xf$: $g\in \G$ maps $\Xf$ to $\Xf$, $\G$ is closed under composition, $\G$ contains the identity map $i_d$, $g\in \G$ has an inverse $g^{-1}$ such that $g g^{-1}=g^{-1} g =i_d$, and
$\<g x, g x'\>_\Xf=\<x,x'\>_\Xf$ for $g\in \G$ and $x,x'\in \Xf$ (i.e. $g^T=g^{-1}$ where $g^T$ is the adjoint of $g$).
Write $dg$ for the Haar measure associated with $\G$ and $|\G|:=\int_\G dg$ for the volume of the group ($|\G|=\operatorname{Card}(\G)$ when the group is finite) and assume $\G$ to be unimodular ($dg$ is invariant under both the left and right action of the group, i.e. $\int_{\G}f(g)\,dg=\int_{ \G}f(gg')\,dg=\int_{\G}f(g'g)\,dg$ for $g'\in \G$). Write $\E_\G$ for the expectation with respect to the probability distribution induced by
$dg/|\G|$ on $\G$.

\subsubsection{Extension to multiple channels}
Let $c$ be a strictly positive integer called the number of channels. Let $\Xf^c$ be the $c$-fold product space of $\Xf$ endowed with the scalar product defined by $\<x,x'\>_{\Xf^c}:=\sum_{i=1}^c \<x_i,x_i'\>_\Xf$ for $x=(x_1,\ldots,x_c)\in \Xf^c$ and $x'\in \Xf^c$.
The action of the group $\G$ can be naturally diagonally be extended to $\Xf^c$ by
\begin{equation}
g (x_1,\ldots,x_c):=(g x_1,\ldots, g x_c) \text{ for } g\in \G\text{ and } (x_1,\ldots,x_c)\in \Xf^c\,.
\end{equation}
Note that $\G$ remains unitary on $\Xf^c$ ($\<g x, g x'\>_{\Xf^c}=\<x,x'\>_{\Xf^c}$).

\subsubsection{Equivariant multi-channel kernels}
We will now introduce equivariant multi-channel kernels in the setting of Subsec.~\ref{subseciuwge67d}.
\begin{Definition}\label{defnkjdhbejhdb}
Let
$\X=\Xf^{c_1}$ and $\Y=\Xf^{c_2}$ with  $c_1,c_2\in \mathbb{N}^*$.
We say that an operator-valued kernel $K\,:\, \X\times \X\rightarrow \L(\Y)$ is
 {\bf $\G$-equivariant} if
\begin{equation}
K( g x, g' x')=g K(x, x') (g')^T \text{ for all }g, g'\in \G\,.
\end{equation}
Similarly we say that a function $f\,:\, \X\rightarrow \Y$ is $\G$-equivariant if
\begin{equation}\label{eqhvhgvhddeeg}
f(g x)=g f(x) \text{ for all }(x,g)\in \X\times \G\,.
\end{equation}
\end{Definition}
Set $\X=\Xf^{c_1}$ and $\Y=\Xf^{c_2}$ as in Def.~\ref{defnkjdhbejhdb}.
\begin{Proposition}\label{propeqk}
Given a (possibly non-equivariant) kernel $K\,:\, \X\times \X\rightarrow \L(\Y)$,
 \begin{equation}\label{eqjhehdddebdshjdb}
 K^\G(x,x'):=\frac{1}{|\G|^2}\int_{\G^2} g^T K(g x,g'x') g'\,dg\,dg':=\E_{\G^2}\big[g^T K(g x,g'x') g'\big]\,,
 \end{equation}
is a  $\G$-equivariant kernel $K^\G\,:\, \X\times \X\rightarrow \L(\Y)$.
\end{Proposition}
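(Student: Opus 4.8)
The plan is to verify the two defining properties of a $\G$-equivariant kernel from Definition~\ref{defnkjdhbejhdb}, namely the equivariance relation $K^\G(hx,h'x')=h K^\G(x,x')(h')^T$, together with the underlying requirement that $K^\G$ is a legitimate operator-valued kernel (Hermitian and non-negative). The key tool throughout is the invariance of the Haar measure on the unimodular group $\G$, which permits changes of variables of the form $g\mapsto gh$ inside the integrals $\E_{\G^2}$.

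First I would establish equivariance. Fix $h,h'\in \G$ and compute
\begin{equation}\label{eqplanequi}
K^\G(hx,h'x')=\E_{\G^2}\big[g^T K(g h x,g' h' x') g'\big]\,.
\end{equation}
Now perform the substitution $\tilde{g}=gh$ and $\tilde{g}'=g'h'$ in the two Haar integrals. Since $\G$ is unimodular, $dg=d\tilde{g}$ and $dg'=d\tilde{g}'$ are invariant under this right translation, and $g=\tilde{g}h^{-1}=\tilde{g}h^T$ (using $h^{-1}=h^T$) so that $g^T=h\tilde{g}^T$, and similarly $g'=\tilde{g}'(h')^T$. Substituting into \eqref{eqplanequi} gives
\begin{equation}
K^\G(hx,h'x')=\E_{\G^2}\big[h\tilde{g}^T K(\tilde{g}x,\tilde{g}'x')\tilde{g}'(h')^T\big]=h\,\E_{\G^2}\big[\tilde{g}^T K(\tilde{g}x,\tilde{g}'x')\tilde{g}'\big]\,(h')^T=h K^\G(x,x')(h')^T\,,
\end{equation}
where $h$ and $(h')^T$ are pulled outside the expectation by linearity. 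This is exactly the required equivariance identity.

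Next I would check that $K^\G$ is itself an operator-valued kernel. For the Hermitian property, using $K(x,x')=K(x',x)^T$ and the fact that the adjoint commutes appropriately with the integral, one has
$K^\G(x,x')^T=\E_{\G^2}\big[(g^T K(gx,g'x')g')^T\big]=\E_{\G^2}\big[(g')^T K(gx,g'x')^T g\big]=\E_{\G^2}\big[(g')^T K(g'x',gx)g\big]$, which after relabeling the dummy variables $g\leftrightarrow g'$ equals $K^\G(x',x)$. For non-negativity, given points $(x_i,y_i)\in\X\times\Y$ I would write
\begin{equation}
\sum_{i,j}\<y_i,K^\G(x_i,x_j)y_j\>_\Y=\E_{\G^2}\Big[\sum_{i,j}\<g y_i, K(g x_i,g'x_j)g' y_j\>_\Y\Big]\,,
\end{equation}
and then observe that this double average factors: setting $u_i=g y_i$ and $u_j'=g' y_j$ and using the bilinear structure, the integrand is $\big\<\sum_i g y_i\otimes(g x_i)\,,\,\sum_j g' y_j\otimes(g' x_j)\big\>$ in the appropriate RKHS sense, so the double Haar average over $(g,g')$ is the squared norm of $\E_\G\big[\sum_i (gx_i,gy_i)\big]$ evaluated through $K$, hence non-negative. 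The cleanest route is to exploit a feature map $\psi$ for $K$ and write the quadratic form as $\big\|\E_\G[\sum_i \psi(gx_i)gy_i]\big\|_\F^2\ge 0$.

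I do not expect a serious obstacle here; the result is essentially the standard group-averaging (Reynolds operator) construction, and the only point demanding care is the bookkeeping of adjoints together with the unimodularity hypothesis that guarantees $dg$ is invariant under the right translations used in the change of variables. The mildly delicate step is the non-negativity verification: one must ensure the double average over $\G^2$ genuinely collapses to a single squared-norm rather than merely an average of non-negative terms, and the feature-map representation $K=\psi^T\psi$ from Subsec.~\ref{subsecfeaturemaps} makes this transparent by turning the quadratic form into $\|\E_\G[\psi(g\,\cdot)g\,\cdot]\|_\F^2$.
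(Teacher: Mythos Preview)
Your equivariance argument is correct and is exactly the change-of-variables computation the paper gives (the paper's proof is a single line using the substitution $g\mapsto g\bar g$, $g'\mapsto g'\bar g'$ and unimodularity). Your proposal is in fact more complete than the paper's, since the paper only displays the equivariance identity and defers the rest to \cite{reisert2007learning}, whereas you also verify the Hermitian and non-negativity axioms; your feature-map factorization $\sum_{i,j}\<y_i,K^\G(x_i,x_j)y_j\>_\Y=\|\E_\G[\sum_i\psi(gx_i)gy_i]\|_\F^2$ is correct and is the clean way to see non-negativity.
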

\begin{proof}
The proof  is similar to that of \cite[Prop.~2.2]{reisert2007learning}. Simply
observe that for $\bar{g}, \bar{g}'\in \G$,\\ $\E_{\G^2}\big[g^T K( g \bar{g}x,  g' \bar{g}'x') g'\big]=
\bar{g}\E_{\G^2}\big[(g \bar{g})^T K( g \bar{g}x,  g' \bar{g}'x') g' \bar{g}' \big] (\bar{g}')^T$ .
\end{proof}
We say that $K\,:\, \X\times \X\rightarrow \L(\Y)$ is {\bf  $\G$-invariant}\footnote{Given a non-invariant  kernel $K$ Haar integration can also be used \cite{haasdonk2005invariance} to the derive the invariant kernel
$\E_{\G^2}\big[ K(g x,g'x')\big]$.} if $K(g x,g' x')=K(x,x')$ for $(x,x',g,g')\in (\X)^2\times \G^2$.
We say that $K\,:\, \X\times \X\rightarrow \L(\Y)$ is {\bf weakly $\G$-invariant} if $K(g x,g x')=K(x,x')$ for $(x,x',g)\in \X^2\times \G$.

\begin{Remark}
If $K\,:\, \X\times \X\rightarrow \L(\Y)$ is scalar and weakly $\G$-invariant then
  \begin{equation}\label{eqkjddjddnedjkdn}
  K^\G(x,x')=\E_{\G}\big[ K( x,g'x') g'\big]\,,
  \end{equation}
 since $\E_{\G^2}\big[ g^T K(g x,g'x') g'\big] =  \E_{\G^2}\big[  K( x,g^T g'x') g^T g' \big] =\E_{\G}\big[  K( x,g'x') g'\big]$.
\eqref{eqkjddjddnedjkdn}  matches the construction of \cite{reisert2007learning} for $c_1=c_2=1$.
 \end{Remark}

The interpolant \eqref{eqhgvygvgyv2} of the data $X_i,Y_i$ with a  $\G$-equivariant kernel $K$ (1) is a equivariant function (satisfies $f(gx)=g f(x)$) and (2) is equal to the interpolant of the enriched data $(g X_i, g Y_i)_{g\in \G, 1\leq i \leq N}$ with $K$. However, although interpolating with an equivariant kernel implicitly enriches the data, interpolating the enriched data $(g X_i, g Y_i)_{g\in \G, 1\leq i \leq N}$ with a non-equivariant kernel $K$ does not guarantee  the equivariance \eqref{eqhvhgvhddeeg} of the interpolant \eqref{eqhgvygvgyv2}. Furthermore, we have the following variant of  \cite[Thm.~2.8]{reisert2007learning}.
\begin{Theorem}
If  $K$ is scalar and weakly $\G$-invariant then
the minimizer of \eqref{eqhgvygvgyv} with the constraint that $f$ must also be $\G$-equivariant is
$f^\G(\cdot):= K^\G(\cdot,X)  K^\G(X,X)^{-1} Y$.
\end{Theorem}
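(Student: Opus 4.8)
The plan is to recast the $\G$-equivariant interpolation problem as an \emph{unconstrained} optimal recovery problem of the type \eqref{eqhgvygvgyv}, but posed inside the closed subspace of $\H$ consisting of equivariant functions, and then to identify that subspace as the RKHS attached to $K^\G$ so that \eqref{eqhgvygvgyv2} applies verbatim.

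First I would introduce the action of $\G$ on $\H$ defined by $(\rho_g f)(x):=g^{-1}f(gx)$, so that $f$ is $\G$-equivariant in the sense of \eqref{eqhvhgvhddeeg} if and only if $\rho_g f=f$ for every $g\in\G$. The crucial point is that $\rho_g$ is a \emph{unitary} representation of $\G$ on $\H$. Acting on a representer, and using the weak $\G$-invariance $K(gz,x)=K(z,g^{-1}x)$ together with the scalar form $K=kI_\Y$ (which lets $g^{-1}$ pass through $K$), one finds
\begin{equation}
\rho_g\big(K(\cdot,x)y\big)=K(\cdot,g^{-1}x)\,g^{-1}y\,.
\end{equation}
The reproducing property \eqref{eqrepprop}, weak invariance once more, and unitarity ($g^{-1}=g^T$) then give $\langle \rho_g K(\cdot,x)y,\rho_g K(\cdot,x')y'\rangle_\H=\langle K(\cdot,x)y,K(\cdot,x')y'\rangle_\H$; since representers span a dense subset of $\H$, $\rho_g$ extends to a unitary operator. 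As $\G$ is compact and unimodular, the Haar average $S:=\E_\G[\rho_g]$ is then the orthogonal projection of $\H$ onto the closed subspace $\H_\G$ of $\G$-equivariant functions: idempotence follows from $\rho_g\rho_{g'}=\rho_{g'g}$ and the right-invariance of $dg$, self-adjointness from $\rho_g^*=\rho_{g^{-1}}$ and the inversion-invariance of $dg$, and the fact that the range of $S$ is exactly $\H_\G$ from $\rho_g S=S$ and $Sf=f$ for fixed points.

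Next I would identify the reproducing kernel of $\H_\G$ by projecting a representer. Invoking weak invariance, scalarity, and the inversion-invariance of the Haar measure,
\begin{equation}
\big(S\,K(\cdot,x)y\big)(z)=\E_\G\big[g^{-1}K(gz,x)y\big]=\E_\G\big[K(z,g'x)\,g'y\big]=K^\G(z,x)y\,,
\end{equation}
where the middle equality is precisely the simplification \eqref{eqkjddjddnedjkdn}. Hence for any $f\in\H_\G$, self-adjointness of $S$ and the reproducing property give $\langle f,K^\G(\cdot,x)y\rangle_\H=\langle Sf,K(\cdot,x)y\rangle_\H=\langle f(x),y\rangle_\Y$, so $\H_\G$ is exactly the RKHS of the equivariant kernel $K^\G$ of Prop.~\ref{propeqk}, with $\|\cdot\|_{\H_\G}=\|\cdot\|_\H$ on $\H_\G$.

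Finally I would close the argument by noting that an equivariant $f$ with $f(X)=Y$ is exactly an element of $\H_\G$ with $f(X)=Y$ (equivariance automatically enriches the constraints to the full orbits $(gX_i,gY_i)$), so minimizing $\|f\|_\H$ subject to equivariance and $f(X)=Y$ is the optimal recovery problem \eqref{eqhgvygvgyv} posed in the RKHS $\H_\G$ with kernel $K^\G$. Applying the representation \eqref{eqhgvygvgyv2} to $K^\G$ yields the minimizer $f^\G(\cdot)=K^\G(\cdot,X)K^\G(X,X)^{-1}Y$, as claimed (invertibility of $K^\G(X,X)$ being the standing non-degeneracy hypothesis). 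The main obstacle is the first step—showing $\rho_g$ is unitary on $\H$—since this is where both hypotheses are indispensable: scalarity lets $g^{-1}$ commute through $K$, and weak $\G$-invariance restores the kernel after translating its arguments; once $S$ is known to be an orthogonal projection, the remaining identifications are routine.
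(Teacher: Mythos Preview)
Your proof is correct and takes a genuinely different, more structural route than the paper. The paper argues directly: it checks that $f^\G$ is $\G$-equivariant, interpolates the data, and is $\langle\cdot,\cdot\rangle_\H$-orthogonal to every equivariant $u$ with $u(X)=0$, the last point following in two lines from the reproducing identity applied to $K(\cdot,g'X_i)g'V_i$ and the equivariance of $u$. You instead build the machinery: a unitary $\G$-action $\rho_g$ on $\H$, the Haar-averaged projector $S=\E_\G[\rho_g]$ onto the equivariant subspace $\H_\G$, and the identification of $\H_\G$ as the RKHS of $K^\G$, after which the formula drops out of \eqref{eqhgvygvgyv2}. What the paper's approach buys is brevity---it sidesteps showing $\rho_g$ is unitary or $S$ is a projection. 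What yours buys is a reusable structural statement (the equivariant subspace \emph{is} an RKHS with kernel $K^\G$ and the same norm), from which the paper's orthogonality computation is an immediate corollary; your framework would also transparently handle the ridge-regression variant \eqref{eqledhehdiudh} under the same constraint.
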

\begin{proof}
By construction $f^\G$ satisfies the constraints of \eqref{eqhgvygvgyv} and is $\G$-equivariant.
To show that $f^\G$ is the minimizer simply observe that $\<f^\G, u\>_K=0$
 if $u \in \H_K$ is $\G$-equivariant and satisfies $u(X)=0$. Indeed   (writing $V:=K^\G(X,X)^{-1} Y$)
$\<f^\G, u\>_K= \sum_i \E_{\G}\big[ \< K( \cdot,g'X_i) g' V_i, u\>_K\big]=0$ since (by the reproducing identity)
$\<  K( \cdot,g'X_i) g' V_i, u\>_K=\<u(g'X_i),g' V_i\>_\Y=\<g' u(X_i),g' V_i\>_\Y=0$.
\end{proof}

\begin{Remark}\label{rmkpose}
Let $(x,g^\dagger) \in \X\times \G$ and $y=g^\dagger x$ and consider the problem of recovering $g^\dagger$ (which we refer to as the relative pose between $x$ and $y$) from the observation of $K(x,x)$, $K(y,y)$ and $K(x,y)$.
Although this problem is impossible if $K$ is   $\G$-invariant (since $K(x,x)=K(y,y)=K(x,y)$), it remains solvable if $K$ is   $\G$-equivariant (since $K(x,y)=K(x,x)g^T$ and $g=(K(x,y))^T K(x,x)^{-1}$). Therefore, contrary to invariant kernels \cite{haasdonk2005invariance}, equivariant kernels preserve the relative pose  information between
objects \cite[Sec.~2.3]{reisert2007learning}. The notion of equivariance has been used in deep learning to
design convolutional neural networks \cite{lecun1999object} on non-flat manifolds \cite{cohen2016group} and for preserving
intrinsic  part/whole spatial relationship in image recognition
 \cite{ sabour2017dynamic}.
 \end{Remark}

 The following theorem shows that interpolants/regressors obtained from warping regression with an equivariant kernel are also equivariant.
 \begin{Theorem}
 Let $\H_\Gamma$ be the RKHS defined by a $\G$-equivariant kernel $\Gamma\,:\,\X\times \X \rightarrow \L(\X)$. Then for $v\in C([0,1],\H_\Gamma)$
 obtained as a minimizer of \eqref{eqlkjgehgddjedhjd} or \eqref{eqlkjgehgddjedhjdreg}, the solution $\phi^v$ of \eqref{eqflmp} is also $\G$-equivariant in the sense that
 \begin{equation}
 \phi^v(g z,t)=g\phi^v(z,t) \text{ for all }(z, g,t)\in \X\times \G \times [0,1]\,.
 \end{equation}
 \end{Theorem}
\begin{proof}
The proof follows from Thm.~\ref{thmkljedjnedesdjd} and Thm.~\ref{thmkljedjnedesdjdreg} by continuous induction on $t$. Indeed  \eqref{eqtttc} implies that
$\dot{\phi}^v( g z,t)=g  \dot{\phi}^v(  z,t)$ as long as $\phi^v( g z,t)=g  \phi^v(  z,t)$.
\end{proof}

 \begin{figure}[h!]
	\begin{center}
			\includegraphics[width= \textwidth]{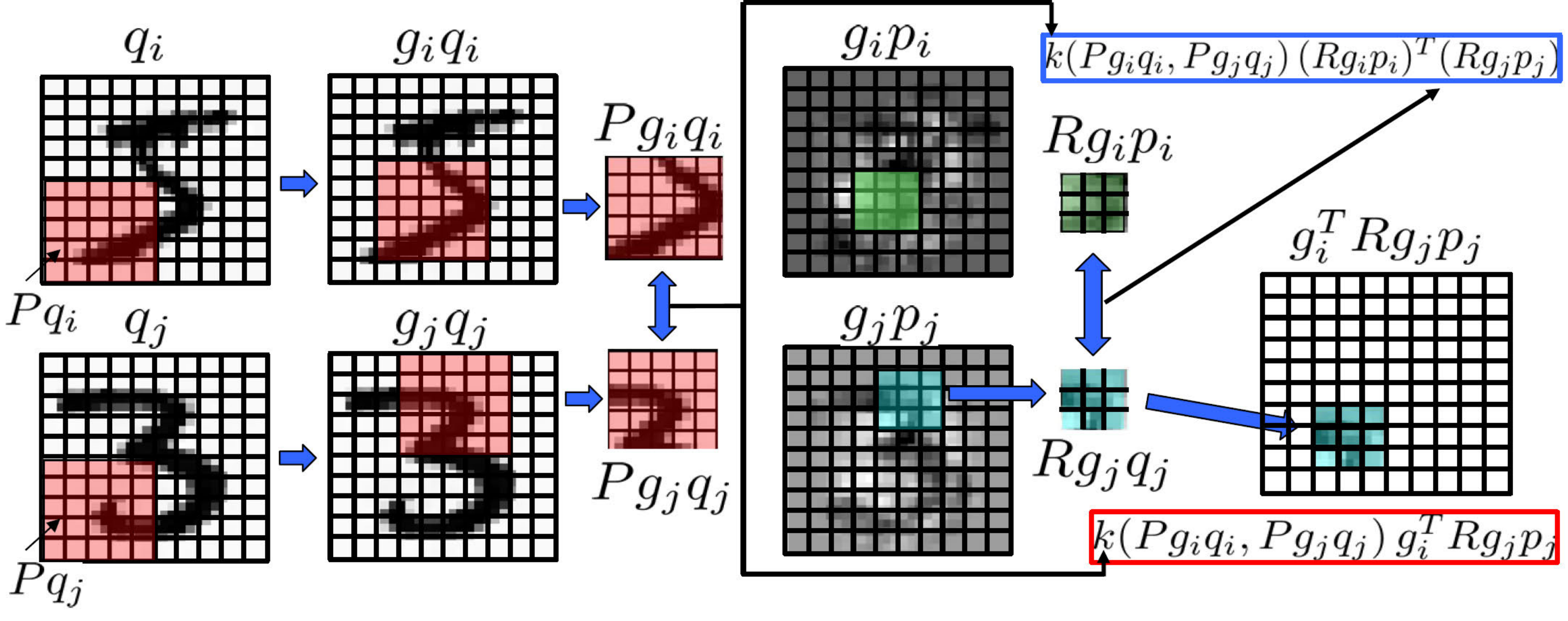}
		\caption{Warping regression with REM  kernels.}\label{figmr3}
	\end{center}
\end{figure}

\subsection{REM kernels}
In the setting of Sec.~\ref{secqker}, let $R$ and $P$ be linear projections from $\Xf$ onto closed linear subspaces of $\Xf$.
Extend the action of $P$  to $\X=\Xf^{c_1}$ by
$P(x_1,\ldots,x_{c_1})=(Px_1,\ldots, P x_{c_1})$. Similarly extend the action of  $R$ to $\Y=\Xf^{c_2}$.
Observe that, given an operator-valued kernel $K\,:\, P \X\times P\X\rightarrow \L(R \Y)$,
\begin{equation}\label{propeqked}
\bar{K}(x,x'):=R K(P x, P x') R
\end{equation}
is an operator-valued kernel\footnote{Note that $K$ can also be identified as the reduced kernel of $\bar{K}$. When the dimension of $P\X$ is low then the interpolation of functions mapping $P\X$ to  $R\X$ does not suffer from the curse of dimensionality.} $\bar{K}\,:\, \X\times \X\rightarrow \L( Y)$.
Prop.~\ref{propeqk} implies that
\begin{equation}\label{propddeeqked}
C(x,x'):=\bar{K}^\G(x,x')=\E_{\G^2}\big[g^T R K(P g x, P g' x')R g'\big]
\end{equation}
is an equivariant operator-valued kernel $C\,:\, \X\times \X\rightarrow \L( \Y)$. We call \eqref{propddeeqked} a REM (reduced equivariant multi-channel) kernel.

Fig.~\ref{figmr3} illustrates the Hamiltonian system \eqref{eqkedmdledkemdl} for $c_1=c_2=1$,
with
$\Gamma=C=$\eqref{propddeeqked} obtained from  a scalar kernel $K(x,x')=k(x,x') I_{R\Y}$ (where the arguments of $k$ are $5\times 5$ images) and the (finite) group of translations $\G$ on periodized $10\times 10$ images. Note that $P q_i$ projects the image $q_i$ to its bottom left $5\times 5$ sub-image and
$R p_i$ projects the image $p_i$ to its bottom left $3\times 3$ sub-image.
$P g_i q_i$  translates $q_i$ by $g_i$ before the projection $P$ which is equivalent to projecting $q_i$ onto the $g_i^T$ translation of the original $5\times 5$ patch. $\sum_j k(P g_i q_i, P g_j q_j) g_i ^T R g_j p_j$ creates a $10\times 10$ image adding (over $j$) the $q_i^T$ translates of sub-images $g_i ^T R g_j p_j$ weighted by $k(P g_i q_i, P g_j q_j)$. We will now show that this is equivalent to performing a weighted convolution, and  convolutional neural networks \cite{lecun1995convolutional} can be recovered as the feature map version of this algorithm.

 \begin{figure}[h!]
	\begin{center}
			\includegraphics[width= \textwidth]{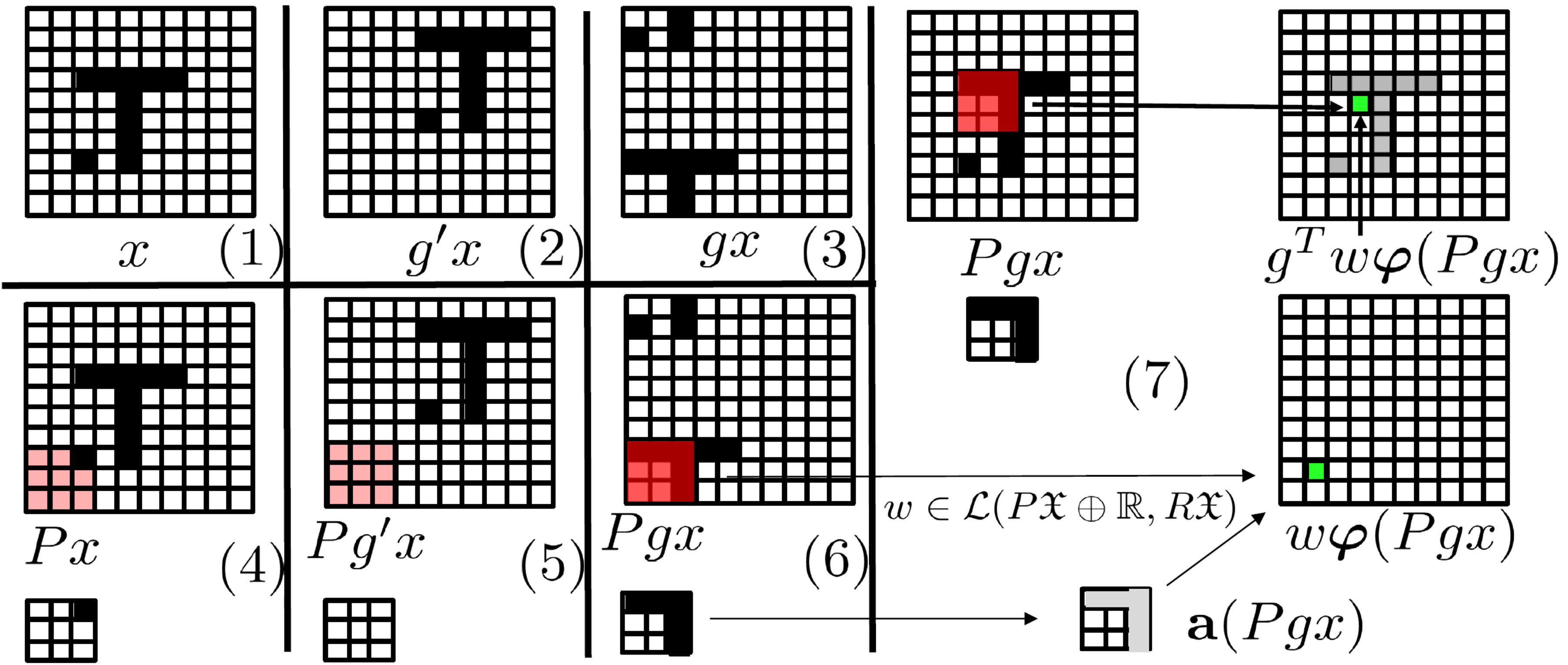}
		\caption{REM feature maps.}\label{figconv1}
	\end{center}
\end{figure}

\subsection{REM  feature maps}\label{subsechj23hdyudd}
Let $\F$ and $\psi\,:\, \X\rightarrow \L(\Y,\F)$ be a
 feature space and map associated with the kernel $K$ in \eqref{propddeeqked}. Then
$C(x,x')=\E_{\G^2}\big[g^T R \psi^T(P g x) \psi(P g' x')R g'\big]$
implies that $C$ has feature space $\F$ and feature map $\Psi$ defined by
\begin{equation}\label{eqkldejkdhlekhs}
\Psi(x) y=\E_\G\big[ \psi(P g x) R g y \big]\,.
\end{equation}
If $K$ is a scalar kernel as in Subsec.~\ref{secscalfeatmap} with feature space/map $\Fk$ and $\varphi\,:\, \X\rightarrow \Fk$, then
$\F=\L(\Fk,R\Y)$ and
 $\psi(x) R y=R y \varphi^T(x)$ imply $\Psi(x) y=\E_\G  \big[   R g y  \varphi^T(P g x) \big]$ and (for $\alpha \in \F$)
\begin{equation}\label{eqjhwbekjbdjhedbd}
\Psi^T(x) \alpha=\E_\G  \big[   g^T \alpha \varphi(P g x) \big]\,.
\end{equation}
If $\varphi$ is obtained from an elementwise nonlinearity activation function as in \eqref{eqkjdekjddkjsd} and Rmk.~\ref{rmkkjhevgcvcg78}
then  $\Fk=P\X\oplus \R$, and for $\alpha=w\in \L(P\X\oplus \R, R \Y)$ we have
\begin{equation}\label{eqjbhhebddbdbgw}
\Psi^T(x) \alpha=\E_\G  \big[   g^T (w \bvarphi(P g x)) \big]\,.
\end{equation}
We call  \eqref{eqkldejkdhlekhs}, \eqref{eqjhwbekjbdjhedbd} and \eqref{eqjbhhebddbdbgw} REM (reduced equivariant multi-channel) feature maps.

Fig.~\ref{figconv1} shows the action of \eqref{eqjbhhebddbdbgw}. In that illustration $c_1=c_2=1$, the elements of $\Xf$ are $10\times 10$ images, and $\G$ is the group of translations acting on $10\times 10$ images with periodic boundaries as shown in subimages (1-3). $P x$ projects the $10\times 10$ image $x$ onto the lower left $3\times 3$ sub-image (by zeroing out the pixels outside that left corner). The action of $P$ on $x$ and the translation of $x$ by $g'$ and $g$ are
  illustrated in subimages (4-6). Note that  translating $x$ by $g$ before applying $P$ is equivalent to translating the action of $P$ as illustrated in subimage (7) and commonly done in CNNs. $R x$ projects the $10\times 10$ image onto the green pixel at the bottom right of subimage (7). In the setting of CNNs $w\in \L(P \Xf\oplus \R, R\Xf )$ is one convolutional patch incorporating a $3\times 3$ weight matrix $W$ and a $1\times 3$ vector $b$ and  computing $g^T w \bvarphi(P g x)$ is equivalent to obtaining the value of the green pixel on the top right of subimage (7) by computing  $W \ba(P gx)+b$.

 \begin{figure}[h!]
	\begin{center}
			\includegraphics[width=0.8 \textwidth]{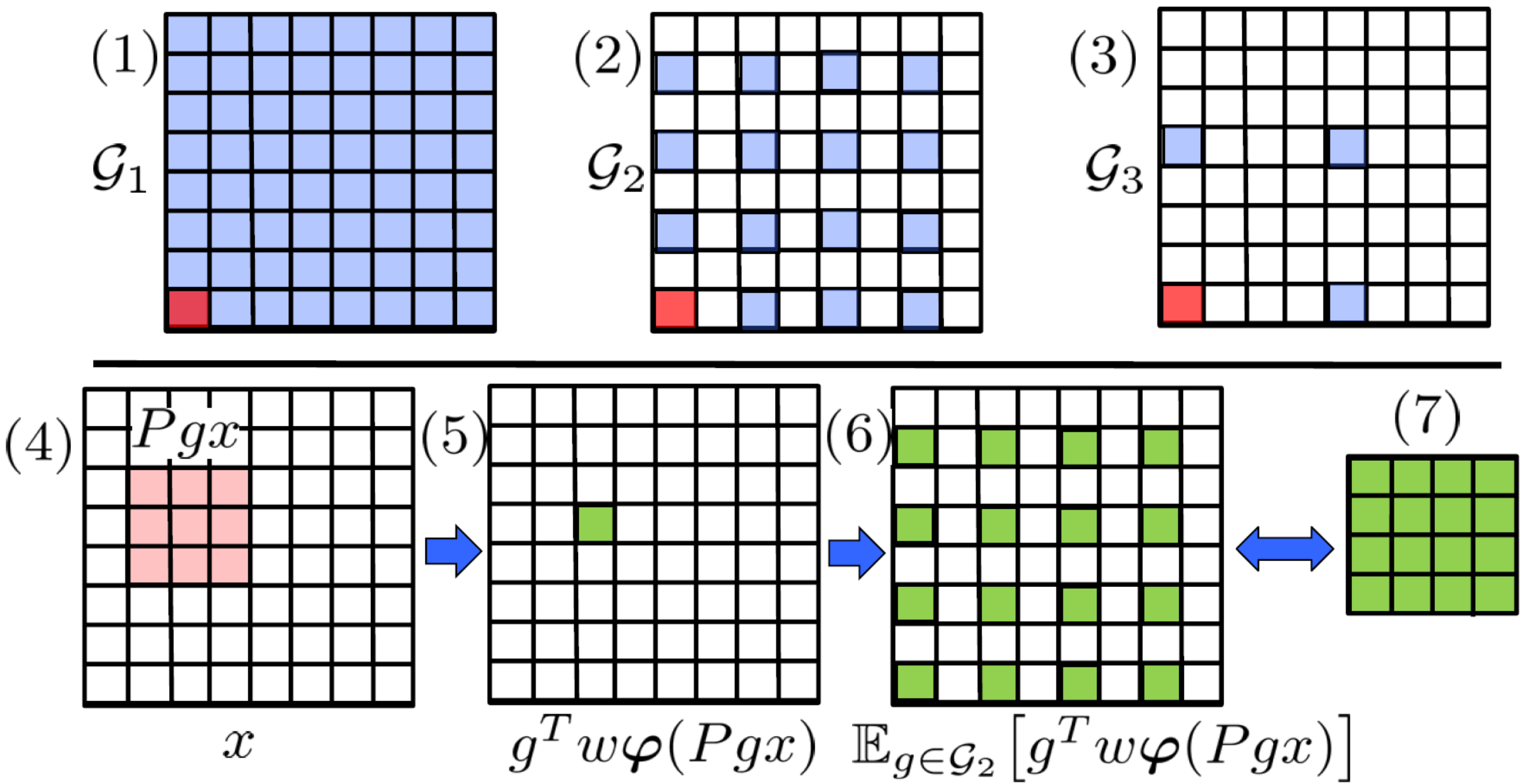}
		\caption{Downsampling with subgrouping.}\label{figds}
	\end{center}
\end{figure}

\subsection{Downsampling with subgrouping}
ANNs include downsampling operations such as pooling or striding.
Downsampling is incorporated in REM kernels and feature maps by employing sub-groups of $\G$ in the construction of
the REM feature maps. Note that \eqref{eqjhwbekjbdjhedbd} and \eqref{eqjbhhebddbdbgw} are contained in
\begin{equation}
\G R \X:=\oplus_{ \G} g R \X
\end{equation}
with $g R \X:=\{g R x\mid x\in \X\}$. Note that $\G R \X\subset \X$ and this inclusion can be a strict one when $\G$ is a proper subgroup of an overgroup.
 When $\G$ is a group of translations, then subgrouping is equivalent to striding.
Fig.~\ref{figds} illustrates the proposed downsampling approach for $\X=\Xf$. In that illustration, the elements of $\Xf$ are $8\times 8$ images (with periodic boundaries). (1)
$\G_1$ is the group of all $64$ possible translations. (2) $\G_2$ is a group of $16$ possible translations   obtained as a
 sub-group of $\G_1$    with a stride of $2$. (3) $\G_3$ is a group of $4$ possible translations  obtained as a
 sub-group of $\G_1$    with a stride of $4$ or as a  sub-group of $\G_2$    with a stride of $2$.
The bottom row shows the action of a REM feature map constructed from the sub-group $\G_2$. (4) shows $Pgx$ for a given $g\in \G_2$ ($Px$ is $3\times 3$ image).
(5) shows $g^T w \bvarphi(Pgx)$ ($R\Xf$ is a set of $1\times 1$ images and $w\in \L(P \Xf\oplus \R, R\Xf )$). (6) shows the average of $g^T w \bvarphi(Pgx)$ over $g\in \G_2$. The range of $\E_{ \G_2}[g^T w \bvarphi(Pgx)]$ is the set of $8\times 8$ images whose pixel values are zero outside the green pixels. (7) Ignoring the white pixels (whose values are zero), the  range $\G_2 R \Xf$ of $\E_{\G_2}[g^T w \bvarphi(Pgx)]$ (writing $\E_{\G_2}$  for the expectation with respect to the normalized Haar measure on $\G_2$)  can be identified with the set of $4\times 4$ images as it is done with CNNs.

\section{Composed idea registration}\label{seckehddjhevd}
We will now compose ridge regression, warping regression, and idea registration blocks across layers of abstraction between $\X$ and $\Y$.
The resulting input/output functions generalize ANNs, and deep kernel learning \cite{wilson2016deep}.
This section shows that the proposed framework is as expressive as standard deep learning and the results of the previous sections generalize to
the composition of idea registration blocks, which we call composed idea registration. In particular, in
 Sec.~\ref{subseciuigiwuegew6gd}, we will now show that  CNNs and ResNets can be recovered as particular instances of composed warping regression with reduced equivariant multi-channel (REM) kernels and feature maps introduced kernels introduced in Sec.~\ref{secrem}.

\subsection{Block diagram representation}\label{subseckjageejhgdy}
For ease of presentation and conceptual simplicity, we  will first summarize  Sec.~\ref{secregular} in block diagram representation.

\subsubsection{Warping regression}
Given $p_0\in \X^N$,  $X\in \X^N$, $x\in \X$, let $(q^s,p^s)$ be the solution of \eqref{ejkhdbejhdbdreg} with initial value
$(q^1,p^1)=(X,p_0)$, let $v_s=\wK( \cdot, q^s)p^s=$\eqref{eqkjhbdejdhbdh} and set
 $x'=(I+v_L)\circ \cdots \circ (I+v_1)(x)$, $X'=q^{L+1}$ and $\Vk=\frac{1}{2}  \sum_{s=1}^L  (p^s)^T \wK_r(q^s,q^s) p^s\, \Delta t$ with $\Delta t=1/L$.
 We represent the corresponding multivariate function
 $(x,X,p_0)\rightarrow (x',X',\Vk)$
  with the diagram
 \begin{equation}\label{eqjhbdjhbjehddd}
 \uset[1ex]{\uset[0.5ex]{\Vk}{\downarrow}}{\oset[0.5ex]{\oset[-0ex]{p_0}{\downarrow}}{\tensor*[^{X\rightarrow}_{\phantom{+}x\rightarrow}]{\boxed{\Gamma\mid r \mid L}}{^{\rightarrow X'}_{\rightarrow x'}}}}\,.
 \end{equation}
A deformation   $x'=\phi_L(x)=(I+v_L)\circ \cdots \circ (I+v_1)(x)$ obtained by minimizing \eqref{eqlktddsyeytdsefsreg} must be (Thm.~\ref{thmksbsahskd2greg}) of the form \eqref{eqjhbdjhbjehddd} where $\Vk$ is
$\frac{1}{2}\,L\sum_{s=1}^L \big(\|v_s\|_{\Gamma}^2+ \frac{1}{r} \|q^{s+1}-(I+v_s)(q^s)\|_{\X^N}^2\big)$.
Furthermore (by the proof of Thm.~\ref{thmwksjhgw76sgw}) \eqref{eqjhbdjhbjehddd} is uniformly continuous in $x,X,p_0$ if $p_0$ is restricted to a compact set, and $\Vk$ diverges uniformly towards $\infty$ as $p_0^T p_0\rightarrow \infty$.

\subsubsection{Idea registration} Given $p_0\in \X^N$,  $X\in \X^N$, $x\in \X$, let $(q(t),p(t))$ be the solution of \eqref{eqkedmdledkemdlreg} with initial value
$(q(0),p(0))=(X,p_0)$, let $v(\cdot,t)= \wK(\cdot,q) p=$\eqref{eqkjhwbkhjbjewhbd} and set
 $x'=\phi^v(x,1)$ (where $\phi^v$ is defined as the solution of \eqref{eqflmp}), $X'=q(1)$ and
  $\Vk= \frac{1}{2}p^T(0) \wK_r\big(X,X\big) p(0)$.
 We represent the corresponding multivariate function with the following diagram
  \begin{equation}\label{eqjhbdjhbjehddd2}
 \uset[1ex]{\uset[0.5ex]{\Vk}{\downarrow}}{\oset[0.5ex]{\oset[-0ex]{p_0}{\downarrow}}{\tensor*[^{X\rightarrow}_{\phantom{+}x\rightarrow}]{\boxed{\Gamma\mid r \mid \infty}}{^{\rightarrow X'}_{\rightarrow x'}}}}\,.
 \end{equation}
A deformation   $\phi^v$ obtained by minimizing \eqref{eqlkjgehgddjedhjdreg} must be (Thm.~\ref{thmksbsahshkdreg}) of the form \eqref{eqjhbdjhbjehddd2} and $\Vk$ is
$\frac{1}{2}\,\int_0^1 \big(\|v\|_{\Gamma}^2+ \frac{1}{r}\|\dot{q}-v(q,t)\|_{\X^N}^2\big)\,dt$.
 Furthermore, if  $p_0$ is restricted to a compact set then
\eqref{eqjhbdjhbjehddd2} is uniformly continuous in $x,X,p_0$ and
  \eqref{eqjhbdjhbjehddd} converges\footnote{By Thm.~\ref{thmksbsahskd2greg}, given same inputs, all the outputs of \eqref{eqjhbdjhbjehddd} converge to  the outputs of \eqref{eqjhbdjhbjehddd2}.}  uniformly towards \eqref{eqjhbdjhbjehddd2}.

 \subsubsection{Ridge regression}
Given $Z\in \X^N$,  $X\in \X^N$ and $x\in \X$, set $Y'=K(X,X)Z$, $y=K(x,X)Z$ and $\Wk=Z^T K_\rho(X,X)Z$.
 We represent the corresponding multivariate function with the following diagram
   \begin{equation}\label{eqjhbdjhbjehddd3}
 \uset[1ex]{\uset[0.5ex]{\Wk}{\downarrow}}{\oset[0.8ex]{\oset[0.4ex]{Z}{\downarrow}}{\tensor*[^{X\rightarrow}_{\phantom{+}x\rightarrow}]{\boxed{K\mid \rho}}{^{\rightarrow Y'}_{\rightarrow y}}}}\,.
 \end{equation}
 A function $f$ minimizing \eqref{eqlktdelldreg} must be of the form \eqref{eqjhbdjhbjehddd3} and $\Wk$ is the value of
 $\|f\|_{K}^2+\frac{1}{\rho} \|f(X)-Y'\|_{\Y^N}^2$.

 \subsubsection{Composing blocks}
Using
$\tensor*[^{Y'\rightarrow}_{Y\rightarrow}]{\boxed{\ell_\Y}}{_{\rightarrow \ell_\Y}}$ for the block diagram representation of the loss $\ell_\Y(Y',Y)$,  warping regression can be represented with the diagram
 \begin{equation}\label{eqjhbdjhbjehddddis}
 \uset[1ex]{\uset[0.5ex]{\Vk}{\downarrow}}{\oset[0.5ex]{\oset[-0ex]{p_0}{\downarrow}}{\tensor*[^{X\rightarrow}_{\phantom{+}x\rightarrow}]{\boxed{\Gamma\mid r \mid L}}{^{\rightarrow }_{\rightarrow }}}}\kern-1.9em
  \uset[1ex]{\uset[0.5ex]{\Wk}{\downarrow}}{\oset[0.8ex]{\oset[0.4ex]{Z}{\downarrow}}{\tensor*[^{\phantom{X\rightarrow}}_{\phantom{+x\rightarrow}}]{\boxed{K\mid \rho}}{^{\phantom{\rightarrow Y'}}_{\rightarrow y}}}} \kern-2.1em \tensor*[^{\xrightarrow{\hspace*{1.5cm}}}_{Y\rightarrow}]{\boxed{\ell_\Y}}{_{\rightarrow \ell_\Y}}
  \,,
 \end{equation}
 and idea registration can be represented with the diagram
  \begin{equation}\label{eqjhbdjhbjehddddis2}
 \uset[1ex]{\uset[0.5ex]{\Vk}{\downarrow}}{\oset[0.5ex]{\oset[-0ex]{p_0}{\downarrow}}{\tensor*[^{X\rightarrow}_{\phantom{+}x\rightarrow}]{\boxed{\Gamma\mid r \mid \infty}}{^{\rightarrow }_{\rightarrow }}}}\kern-1.9em
  \uset[1ex]{\uset[0.5ex]{\Wk}{\downarrow}}{\oset[0.8ex]{\oset[0.4ex]{Z}{\downarrow}}{\tensor*[^{\phantom{X\rightarrow}}_{\phantom{+x\rightarrow}}]{\boxed{K\mid \rho}}{^{\phantom{\rightarrow Y'}}_{\rightarrow y}}}} \kern-2.1em \tensor*[^{\xrightarrow{\hspace*{1.5cm}}}_{Y\rightarrow}]{\boxed{\ell_\Y}}{_{\rightarrow \ell_\Y}}
  \,.
 \end{equation}
 For both diagrams $p_0$ and $Z$ are identified as minimizers of the
  $\text{Total Loss}=\nu \Vk+\lambda \Wk+ \ell_\Y$ and are contained in a set that is closed and uniformly bounded (in $X$).
  As $L\rightarrow \infty$,  \eqref{eqjhbdjhbjehddddis} converges uniformly to \eqref{eqjhbdjhbjehddddis2}, the adherence values of the minimizers of
  \eqref{eqjhbdjhbjehddddis} are the minimizers of  \eqref{eqjhbdjhbjehddddis2}, the total loss of \eqref{eqjhbdjhbjehddddis} converges to that of \eqref{eqjhbdjhbjehddddis2}.

\subsection{Composed idea registration}\label{subseckhuyg777g}
 Let $\X_0,\ldots, \X_{D+1}$ be finite-dimensional Hilbert spaces, with $\X_0=\X$ and $X_{D+1}=\Y$.
 Let $\ell_\Y$ be a loss function on $\Y$ as in Subsec.~\ref{subsecoptrecrr2}.
Let $\nu_1,\ldots,\nu_{D}$ and $\lambda_{0},\ldots,\lambda_{D}$ be  strictly positive parameters.
Let  $\H_{\Gamma^m}$ and $\H_{K^m}$ be RKHS defined by operator-valued kernels $\wK^m\,:\, \X^m\times \X^m \rightarrow \L(\X^m)$ and
$K^m\,:\, \X^m \times \X^m \rightarrow \L(\X^{m+1})$ satisfying the regularity conditions \ref{condnuggetreg}.
Let $L_1,\ldots,L_D$ be strictly positive integers.
The discrete hierarchical warping regression solution to Problem \ref{pb828827hee} is to approximate $f^\dagger$ with
$F_{D+1}$ defined by inductive composition
\begin{equation}\label{eqinductive}
F_{m+1}=f_{m}\circ \phi^{m}(F_{m})\text{ with } \phi^m=(I+v^{m,L_m})\circ \cdots \circ (I+v^{m,1})\text{ and }F_{1}=f_0\,,
\end{equation}
 where
the $v_{m,j}$ are  $f_{m}$ are minimizers of
{\small
\begin{equation}\label{eqlkjgewqdhbjdhyghjeddB2w21reg}
\begin{cases}
\text{Min} &\lambda_{0} \big(\|f_{0}\|_{\H_{K^0}}^2+\frac{1}{\rho} \|f_0(X)-q^{1,1}\|_{\X_{1}^N}^2\big)\\
&+
\sum_{m=1}^{D}\Big(\frac{\nu_m L_m}{2}\sum_{j=1}^{L_m} \big(\|v_{m,j}\|_{\Gamma^m}^2
+\frac{1}{r} \|q^{m,j+1}-(I+v_{m,j})(q^{m,j})\|_{\X_m^N}^2
\big)\\&
+\, \lambda_{m} \big(\|f_{m}\|_{\H_{K^m}}^2+\frac{1}{\rho} \|f_m(q^{m,L_m+1})-q^{m+1,1}\|_{\X_{m+1}^N}^2\big)\Big)+ \ell_\Y \big(q^{D+1,1},Y\big)\\
\text{over}&v_{m,j} \in\H_{\Gamma^m},\, f_{m}\in \H_{K^m},\, q^{m,j}\in \X_m^N\,.
\end{cases}
\end{equation}
}
In the continuous  limit $\min_m L_m \rightarrow \infty$, the  hierarchical warping regression solution to Problem \ref{pb828827hee} is to approximate $f^\dagger$ with
$F_{D+1}$ defined by inductive composition
\begin{equation}\label{eqinductive2}
F_{m+1}=f_{m}\circ \phi^{v_m}(F_{m},1)\text{ with }F_{1}=f_0\,,
\end{equation}
 where
the $v_{m}$ are  $f_{m}$ are minimizers of
{\small
\begin{equation}\label{eqlkjgewqdhbjdhyghjeddB2w21reg2}
\begin{cases}
\text{Min} &\lambda_{0} \big(\|f_{0}\|_{K^0}^2+\frac{1}{\rho} \|f_0(X)-q^{1}(0)\|_{\X_{1}^N}^2\big)\\
&+
\sum_{m=1}^{D}\Big(\frac{\nu_m}{2}\int_0^1 \big(\|v_{m}(\cdot,t)\|_{\Gamma^m}^2
+\frac{1}{r} \|\dot{q}^{m}-v_m(q^m,t)\|_{\X_m^N}^2
\big)\,dt \\&
+\, \lambda_{m} \big(\|f_{m}\|_{K^m}^2+\frac{1}{\rho} \|f_m(q^{m}(1))-q^{m+1}(0)\|_{\X_{m+1}^N}^2\big)\Big)+ \ell_\Y \big(q^{D+1}(0),Y\big)\\
\text{over}&v_{m} \in C([0,1],\H_{\Gamma^m}),\, f_{m}\in \H_{K^m},\, q^{m}\in  C([0,1],\X_m^N)\,.
\end{cases}
\end{equation}
}

\begin{Theorem}\label{thmkjhebedjhbdb}
The map $y=F_{D+1}(x)$ obtained from \eqref{eqinductive} is equal to the output $y$ produced by the block diagram
  \begin{equation}\label{eqhierarch}
  \uset[1ex]{\uset[0.5ex]{\Wk_0}{\downarrow}}{\oset[0.8ex]{\oset[0.4ex]{Z^0}{\downarrow}}{\tensor*[^{{X\rightarrow}}_{{\phantom{+}x\rightarrow}}]{\boxed{K^0\mid \rho}}{^{\rightarrow \phantom{Y'}}_{\rightarrow \phantom{y}}}}} \kern-2.8em
 \uset[1ex]{\uset[0.5ex]{\Vk_1}{\downarrow}}{\oset[1.9ex]{\oset[1.3ex]{p_0^1}{\downarrow}}{\tensor*[^{\phantom{X}\rightarrow}_{\phantom{+x}\rightarrow}]{\boxed{\wK^1\mid r \mid L_1}}{^{\rightarrow }_{\rightarrow }}}}\kern-1.9em
  \uset[1ex]{\uset[0.5ex]{\Wk_1}{\downarrow}}{\oset[0.8ex]{\oset[0.4ex]{Z^1}{\downarrow}}{\tensor*[^{\phantom{X\rightarrow}}_{\phantom{+x\rightarrow}}]{\boxed{K^1\mid \rho}}{^{\rightarrow \phantom{Y'}}_{\rightarrow \phantom{y}}}}} \cdots
 \uset[1ex]{\uset[0.5ex]{\Vk_D}{\downarrow}}{\oset[1.9ex]{\oset[1.3ex]{p_0^D}{\downarrow}}{\tensor*[^{\phantom{X}\rightarrow}_{\phantom{+x}\rightarrow}]{\boxed{\wK^D\mid r \mid L_D}}{^{\rightarrow }_{\rightarrow }}}}\kern-1.9em
  \uset[1ex]{\uset[0.5ex]{\Wk_D}{\downarrow}}{\oset[0.8ex]{\oset[0.4ex]{Z^D}{\downarrow}}{\tensor*[^{\phantom{X\rightarrow}}_{\phantom{+x\rightarrow}}]{\boxed{K^D\mid \rho}}{^{\phantom{\rightarrow Y'}}_{\rightarrow y}}}} \kern-2.1em \tensor*[^{\xrightarrow{\hspace*{1.5cm}}}_{Y\rightarrow}]{\boxed{\ell_\Y}}{_{\rightarrow \ell_\Y}}
  \,.
 \end{equation}
where the initial momenta $p^m_0$ and $Z^m$ are identified as minimizers of the total loss
\begin{equation}\label{eqkhiuhiuhiuhiuh}
\text{Total loss}=\lambda_0\Wk_0+\sum_{m=1}^D (\nu_m \Vk_m+\lambda_m \Wk_m)+\ell_\Y\,.
\end{equation}
All the minimizers $p^m_0$ and $Z^m$  of \eqref{eqhierarch} are contained in a compact set.
The map $y=F_{D+1}(x)$ obtained from \eqref{eqinductive2} is equal to the output $y$ produced by the block diagram
  \begin{equation}\label{eqhierarch2}
  \uset[1ex]{\uset[0.5ex]{\Wk_0}{\downarrow}}{\oset[0.8ex]{\oset[0.4ex]{Z^0}{\downarrow}}{\tensor*[^{{X\rightarrow}}_{{\phantom{+}x\rightarrow}}]{\boxed{K^0\mid \rho}}{^{\rightarrow \phantom{Y'}}_{\rightarrow \phantom{y}}}}} \kern-2.8em
 \uset[1ex]{\uset[0.5ex]{\Vk_1}{\downarrow}}{\oset[1.9ex]{\oset[1.3ex]{p_0^1}{\downarrow}}{\tensor*[^{\phantom{X}\rightarrow}_{\phantom{+x}\rightarrow}]{\boxed{\wK^1\mid r \mid \infty}}{^{\rightarrow }_{\rightarrow }}}}\kern-1.9em
  \uset[1ex]{\uset[0.5ex]{\Wk_1}{\downarrow}}{\oset[0.8ex]{\oset[0.4ex]{Z^1}{\downarrow}}{\tensor*[^{\phantom{X\rightarrow}}_{\phantom{+x\rightarrow}}]{\boxed{K^1\mid \rho}}{^{\rightarrow \phantom{Y'}}_{\rightarrow \phantom{y}}}}} \cdots
 \uset[1ex]{\uset[0.5ex]{\Vk_D}{\downarrow}}{\oset[1.9ex]{\oset[1.3ex]{p_0^D}{\downarrow}}{\tensor*[^{\phantom{X}\rightarrow}_{\phantom{+x}\rightarrow}]{\boxed{\wK^D\mid r \mid \infty}}{^{\rightarrow }_{\rightarrow }}}}\kern-1.9em
  \uset[1ex]{\uset[0.5ex]{\Wk_D}{\downarrow}}{\oset[0.8ex]{\oset[0.4ex]{Z^D}{\downarrow}}{\tensor*[^{\phantom{X\rightarrow}}_{\phantom{+x\rightarrow}}]{\boxed{K^D\mid \rho}}{^{\phantom{\rightarrow Y'}}_{\rightarrow y}}}} \kern-2.1em \tensor*[^{\xrightarrow{\hspace*{1.5cm}}}_{Y\rightarrow}]{\boxed{\ell_\Y}}{_{\rightarrow \ell_\Y}}
  \,.
 \end{equation}
where the initial momenta $p^m_0$ and $Z^m$ are identified as minimizers of \eqref{eqkhiuhiuhiuhiuh}.
All the minimizers $p^m_0$ and $Z^m$  of \eqref{eqhierarch2} are contained in a compact set.
The multivariate input/output maps \eqref{eqhierarch} and \eqref{eqhierarch2} are uniformly  continuous (for $p^m_0$ and $Z^m$ in compact sets). As $\min_m L_m \rightarrow \infty$, (1) the multivariate input/output map \eqref{eqhierarch} converges uniformly (for $p^m_0$ and $Z^m$ in compact sets) to the
multivariate input/output map \eqref{eqhierarch2} (2) The minimal value of total loss of \eqref{eqhierarch} converges to the minimal value of total loss of \eqref{eqhierarch2} (3) The adherence values of the momenta ($p^m_0$ and $Z^m$) minimizing \eqref{eqhierarch} is the set of momenta minimizing \eqref{eqhierarch2} (4) The adherence values of $F_{D+1}$ obtained from \eqref{eqhierarch}  is the set of
 $F_{D+1}$ obtained from \eqref{eqhierarch2}.
\end{Theorem}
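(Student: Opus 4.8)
The strategy is to assemble the theorem from the single-block results of Sec.~\ref{secregular}, viewing \eqref{eqhierarch} and \eqref{eqhierarch2} as finite concatenations of the ridge-regression, mechanical-regression and idea-registration blocks analyzed there, and then to show that the four properties at issue---the block-diagram identity, compactness of minimizers, uniform continuity, and the $\min_m L_m\to\infty$ limit---propagate through the concatenation. The bulk of the work has already been done at the single-block level, so the theorem is largely an assembly argument; the care lies in keeping all estimates uniform on compact sets so that they survive composition.

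For the block-diagram identity (equality of $F_{D+1}$ from \eqref{eqinductive} with the output of \eqref{eqhierarch}) I would minimize \eqref{eqlkjgewqdhbjdhyghjeddB2w21reg} block by block in the within-block variables. For each $f_m$ the representer theorem and the reduction \eqref{eqkjhebdjehbdhd}--\eqref{eqledhewededswreg} of Thm.~\ref{thmeqkljedjnedjdreg} give $f_m=K^m(\cdot,\cdot)Z^m$ with relaxation cost $\Wk_m=(Z^m)^TK^m_\rho Z^m$; for each deformation family $(v_{m,j})_j$ Thm.~\ref{thmeqkljedjnedjdreg} and Thm.~\ref{thmksbsahskd2greg} give the representer \eqref{eqkjhbdejdhbdh} and the discrete Hamiltonian parametrization \eqref{ejkhdbejhdbdreg} by the initial momentum $p_0^m$, with cost $\Vk_m$. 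Substituting these forms and propagating $(x,X)$ forward through the blocks is exactly the evaluation of \eqref{eqhierarch}, and the objective collapses to \eqref{eqkhiuhiuhiuhiuh}; the continuous diagram \eqref{eqhierarch2} is obtained verbatim with Thm.~\ref{thmksbsahshkdreg} and Thm.~\ref{thmkljedjnedesdjdreg} in place of their discrete analogues. The one step that is not a verbatim copy of a single-block statement is the bookkeeping of the nugget/relaxation variables and the shared interface positions $q^{m,1}$, which must be tracked so that the chained forward pass reproduces the joint minimizer.

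Compactness of the minimizers is where the nuggets are essential. Each block cost is coercive in its own parameter, uniformly over its (arbitrary) input: $\Wk_m\ge\rho\,\|Z^m\|^2$ since $K^m_\rho\ge\rho I$, while energy preservation (Thm.~\ref{thmksbsahskd2greg}, \ref{thmksbsahshkdreg}) together with $r\,p^Tp\le p^T\wK^m_r p$ gives $\Vk_m\ge\tfrac{\nu_m r}{2}\|p_0^m\|^2$ (exactly $\tfrac{\nu_m}{2}(p_0^m)^T\wK^m_r(X^m,X^m)p_0^m$ in the continuous case, and with the same lower bound uniformly in $L$ in the discrete case). Since $\ell_\Y\ge0$, the total loss \eqref{eqkhiuhiuhiuhiuh} dominates a positive-definite quadratic in the stacked vector $(Z^0,p_0^1,\dots,p_0^D,Z^D)$, with constants independent of $L$ and of the inputs, so all minimizers lie in a fixed compact set $B$. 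Uniform continuity of both input/output maps then follows by composition: restricted to $B$, each ridge block is affine and each mechanical/idea block is uniformly continuous in its input and parameter (continuity of the discrete map \eqref{ejkhdbejhdbdreg} and of the Hamiltonian flow in their initial data, via \cite[Thm.~1.4.1]{arino2006fundamental} as in the proof of Thm.~\ref{thmwksjhgw76sgw}), and a finite composition of maps uniformly continuous on $B$ is uniformly continuous.

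For the limit $\min_m L_m\to\infty$ I would use that $B$ can be taken independent of the $L_m$, so the single-block convergence Thm.~\ref{thmhgw7gfddreg} applies on $B$ at every stage at once: each discrete block map and each discrete block cost converges uniformly on $B$ to its continuous counterpart. A finite composition of uniformly convergent sequences of uniformly continuous maps (sharing a modulus of continuity on $B$) converges uniformly, yielding claim~(1), the uniform convergence of \eqref{eqhierarch} to \eqref{eqhierarch2}, and the uniform convergence of the total loss \eqref{eqkhiuhiuhiuhiuh}. Uniform convergence of the coercive objective on $B$ then gives claim~(2) (convergence of minimal values) and the standard consequence that the adherence values of the minimizing parameters lie in the minimizer set of the limit, i.e.\ claim~(3); claim~(4) for $F_{D+1}$ follows by combining this parameter adherence with the uniform convergence of the input/output map. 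I expect the principal obstacle to be exactly this propagation of uniform-on-compacts estimates through the composition: one must check that the intermediate interface data produced by the earlier blocks stay in compact sets uniformly in $L$, so that the hypotheses of Thm.~\ref{thmhgw7gfddreg} hold at every block simultaneously---which is precisely what the $L$-independent compactness of $B$ and the uniform continuity above secure.
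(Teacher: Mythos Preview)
Your proposal is correct and follows essentially the same approach as the paper: the paper's proof simply says the result is ``a direct consequence of the results of Sec.~\ref{secregular} summarized in Subsec.~\ref{subseckjageejhgdy}'' together with the observation that the nuggets $r,\rho>0$ force the total loss \eqref{eqkhiuhiuhiuhiuh} to diverge as $\max_m (p^m_0)^Tp^m_0 +\max_m (Z^m)^TZ^m\rightarrow \infty$, so minimizers lie in a compact set. You have filled in exactly those details (block-by-block representer reductions, coercivity from the nuggets, composition of uniformly continuous maps, and propagation of Thm.~\ref{thmhgw7gfddreg} through the chain), with the only quibble being the stray factor $\nu_m$ in your bound on $\Vk_m$ (it belongs on $\nu_m\Vk_m$, not on $\Vk_m$ itself).
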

\begin{proof}
The proof is a direct consequence of the results of Sec.~\ref{secregular} summarized in Subsec.~\ref{subseckjageejhgdy}. Note that for $r,\rho>0$, \eqref{eqkhiuhiuhiuhiuh} diverges towards infinity as $\max_m (p^m_0)^Tp^m_0 +\max_m (Z^m)^TZ^m\rightarrow \infty$.
Therefore the search for minimizers can be restricted to a compact set.
\end{proof}

\subsection{Further reduction}\label{secfr}

Minimizing over $f_m$ and $v_m$, \eqref{eqlkjgewqdhbjdhyghjeddB2w21reg2} reduces (as in Sec.~\ref{secregular}) to
{\small
\begin{equation}\label{eqlkjgewqdhbjdhyghjeddB2w21reg2bis}
\begin{cases}
\text{Min} &\lambda_{0} (q^{1}(0))^T K_\rho^0(X,X)^{-1} q^{1}(0)+
\sum_{m=1}^{D}\Big(\frac{\nu_m}{2}\int_0^1 \dot{q}^{m}\Gamma^m_r(q^m,q^m)^{-1}\dot{q}^{m}\,dt \\&
+\, \lambda_{m} (q^{m+1}(0))^TK^m_\rho(q^{m}(1),q^{m}(1))^{-1}q^{m+1}(0) \Big)+ \ell_\Y \big(q^{D+1}(0),Y\big)\\
\text{over}& q^{m}\in  C([0,1],\X_m^N)\,.
\end{cases}
\end{equation}
}
 Introduce the momentum variables  $p^m=\Gamma_r(q^m,q^m)^{-1}\dot{q}^m$.
Taking the Fr\'{e}chet derivative of \eqref{eqlkjgewqdhbjdhyghjeddB2w21reg2bis} with respect to $q^m$, implies that (for $1\leq m\leq D$) $(q^m,p^m)$ satisfies the Hamiltonian dynamic \eqref{eqkedmdledkemdlreg} (with $\Gamma_r$ replaced by  $\Gamma^m_r$) and the boundary equations
\begin{equation}\label{eqlgugiuyggh}
\begin{cases}
2 \lambda_{m-1} K_\rho^{m-1}(q^{m-1}(1),q^{m-1}(1))^{-1} q^m(0)-\nu_m p^m(0)&=0\\
\nu_m p^m(1)+\lambda_{m} \partial_{q^m(1)}\big((q^{m+1}(0))^T K_\rho^{m}(q^m(1),q^m(1))^{-1} q^{m+1}(0) \big)&=0
\end{cases}
\end{equation}
We deduce (Prop.~\ref{prophghgfhgfvjhvjh7}) that,
in the search for minimizers of \eqref{eqhierarch2},
the initial momenta $p^m_0=p^m(0)$ can be expressed as explicit functions of the $Z^{m-1}$ and the $Z_m$ can be expressed as implicit functions of the $Z^{m-1}$. Therefore, the search for minimizers of \eqref{eqhierarch2} could, in theory, be reduced to a shooting method (the selection of the initial momentum $Z^0$).

\begin{Proposition}\label{prophghgfhgfvjhvjh7}
In the setting of Thm.~\ref{thmkjhebedjhbdb} the minimizers of \eqref{eqhierarch2} satisfy $q^1(0)=K^{0}(X,X)Z^{0} $
and (for $m\in \{1,\ldots, D\}$)
\begin{equation}\label{eqjhgugigrd3uygyu}
q^m(0)=K^{m-1}(q^{m-1}(1),q^{m-1}(1))Z^{m-1}\,,
\end{equation}
\begin{equation}\label{eqlgugiuyyuggh}
p^m(0) =2 \frac{\lambda_{m-1}}{\nu_m } K_\rho^{m-1}(q^{m-1}(1),q^{m-1}(1))^{-1} K^{m-1}(q^{m-1}(1),q^{m-1}(1))Z^{m-1}\,,
\end{equation}
{\small
\begin{equation}\label{eqlgugiuyyuggh2b}
 \partial_{x}\big( (Z^m)^T K^{m}(q^{m}(1),q^{m}(1)) K_\rho^{m}(x,x)^{-1}K^{m}(q^{m}(1),q^{m}(1))Z^{m}\big)\Big|_{x=q^{m}(1)}=-\frac{\nu_m}{\lambda_{m}} p^m(1)\,.
\end{equation}}
\end{Proposition}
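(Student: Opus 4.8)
The plan is to assemble the four identities from two ingredients that are already available: the block-diagram/representer description of minimizers furnished by Thm.~\ref{thmkjhebedjhbdb}, which pins down the \emph{forward} (position) variables, and the Euler--Lagrange boundary relations \eqref{eqlgugiuyggh}, obtained just above the proposition by zeroing the Fr\'{e}chet derivative of the reduced functional \eqref{eqlkjgewqdhbjdhyghjeddB2w21reg2bis} with respect to each trajectory $q^m$, which pin down the \emph{momentum} variables. The first ingredient delivers the position identities ($q^1(0)=K^0(X,X)Z^0$ and \eqref{eqjhgugigrd3uygyu}); substituting these into the two lines of \eqref{eqlgugiuyggh} delivers the momentum identities \eqref{eqlgugiuyyuggh} and \eqref{eqlgugiuyyuggh2b}.

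First I would establish the position identities. By Thm.~\ref{thmkjhebedjhbdb} a minimizer of \eqref{eqlkjgewqdhbjdhyghjeddB2w21reg2bis} is realized by the block diagram \eqref{eqhierarch2}, in which the ridge-regression block $K^{m-1}$ takes as its input data the output $q^{m-1}(1)$ of the preceding deformation block and emits, through its representer coefficient $Z^{m-1}$, the value initializing the next deformation block. Reading off the output of the ridge-regression block (cf.\ the representer form \eqref{eqkjhebdjehbdhd} and the convention $Y'=K(X,X)Z$ of the block \eqref{eqjhbdjhbjehddd3}) gives $q^m(0)=K^{m-1}\big(q^{m-1}(1),q^{m-1}(1)\big)Z^{m-1}$ for $1\le m\le D$, which is \eqref{eqjhgugigrd3uygyu}; for $m=1$ the same reading, with the input data $X$ playing the role of $q^0(1)$ and $K^0$ the role of $K^{m-1}$, gives $q^1(0)=K^0(X,X)Z^0$.

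Next I would feed these into \eqref{eqlgugiuyggh}. The first line of \eqref{eqlgugiuyggh} is the $t=0$ boundary relation: the action integral contributes the term $-\nu_m p^m(0)$ through $\partial\Lf_r/\partial\dot q=\wK_r(q,q)^{-1}\dot q=p^m$, and this is balanced against the derivative $2\lambda_{m-1}K_\rho^{m-1}(\cdots)^{-1}q^m(0)$ of the $\lambda_{m-1}$-penalty, which is the only other term depending on $q^m(0)$. Inserting \eqref{eqjhgugigrd3uygyu} and dividing by $\nu_m$ yields \eqref{eqlgugiuyyuggh}. For \eqref{eqlgugiuyyuggh2b} I would start from the second line of \eqref{eqlgugiuyggh}, in which $q^{m+1}(0)$ enters as the \emph{target} of the $\lambda_m$-penalty and is therefore an \emph{independent} unknown held fixed while one differentiates the explicit $q^m(1)$-dependence of $K_\rho^m(q^m(1),q^m(1))^{-1}$. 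After this differentiation I would substitute the frozen value $q^{m+1}(0)=K^m(q^m(1),q^m(1))Z^m$: since that value is frozen, the two outer factors $K^m(q^m(1),q^m(1))$ become constants and only the inner $K_\rho^m(x,x)^{-1}$ carries the differentiation variable $x$, which produces exactly the left-hand side of \eqref{eqlgugiuyyuggh2b} and hence the right-hand side $-\tfrac{\nu_m}{\lambda_m}p^m(1)$ after dividing by $\lambda_m$.

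The step demanding the most care — and the one I would treat as the main obstacle — is the bookkeeping of dependent versus independent quantities in the derivative defining \eqref{eqlgugiuyyuggh2b}. In the reduced problem \eqref{eqlkjgewqdhbjdhyghjeddB2w21reg2bis} the target $q^{m+1}(0)$ and the trajectory $q^m$ are \emph{separate} unknowns, so $\partial_{q^m(1)}$ acts only on the kernel evaluation $K_\rho^m(q^m(1),q^m(1))^{-1}$ with $q^{m+1}(0)$ held fixed; the substitution $q^{m+1}(0)=K^m(q^m(1),q^m(1))Z^m$ must be applied \emph{after} this differentiation, so the diagonal $K^m$ prefactors are not differentiated a second time. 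Keeping this order of operations straight is precisely what distinguishes \eqref{eqlgugiuyyuggh2b} from the spurious expression one would obtain by also differentiating the composite map $q^m(1)\mapsto K^m(q^m(1),q^m(1))Z^m$. The two boundary conditions decouple cleanly because the variations $\delta q^m(0)$ and $\delta q^m(1)$ are independent, and the interior variation has already been absorbed into the Hamiltonian flow \eqref{eqkedmdledkemdlreg} recorded in \eqref{eqlgugiuyggh}, so no further interior analysis is required.
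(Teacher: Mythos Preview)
Your proposal is correct and follows essentially the same route as the paper's proof: the position identities are read off from the ridge-regression block \eqref{eqjhbdjhbjehddd3} (the paper cites only this, you additionally invoke Thm.~\ref{thmkjhebedjhbdb} and \eqref{eqkjhebdjehbdhd}), and the momentum identities \eqref{eqlgugiuyyuggh}--\eqref{eqlgugiuyyuggh2b} are then obtained by substituting \eqref{eqjhgugigrd3uygyu} into the two boundary relations \eqref{eqlgugiuyggh}. Your extended discussion of the differentiate-then-substitute order for \eqref{eqlgugiuyyuggh2b} is accurate and more explicit than the paper, which simply records ``Combining \eqref{eqlgugiuyggh} with \eqref{eqjhgugigrd3uygyu} implies \eqref{eqlgugiuyyuggh} and \eqref{eqlgugiuyyuggh2b}.''
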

\begin{proof}
\eqref{eqjhgugigrd3uygyu} follows from \eqref{eqjhbdjhbjehddd3}.
Combining \eqref{eqlgugiuyggh} with \eqref{eqjhgugigrd3uygyu} implies \eqref{eqlgugiuyyuggh} and \eqref{eqlgugiuyyuggh2b}.
\end{proof}
Let us now consider the discrete setting.
Minimizing over $v_{m,j}$ and $f_{m}$ \eqref{eqlkjgewqdhbjdhyghjeddB2w21regbis} reduces to
{\small
\begin{equation}\label{eqlkjgewqdhbjdhyghjeddB2w21regbis}
\begin{cases}
\text{Min} &\lambda_{0} (q^{1,1})^TK^0_\rho(X,X)^{-1}q^{1,1}+
\sum_{m=1}^{D}\Big(\frac{\nu_m L_m}{2}\sum_{j=1}^{L_m} (q^{m,j+1}-q^{m,j})^T \Gamma^m_r(q^{m,j},q^{m,j})^{-1}\\& (q^{m,j+1}-q^{m,j})
+\, \lambda_{m} (q^{m+1,1})^TK^m_\rho(q^{m,L_m+1},q^{m,L_m+1})^{-1}q^{m+1,1} \Big)+ \ell_\Y \big(q^{D+1,1},Y\big)\\
\text{over}& q^{m,j}\in \X_m^N\,.
\end{cases}
\end{equation}
}
Introduce the discrete momenta $p^{m,j}=L_m \Gamma^m_r(q^{m,j},q^{m,j})^{-1} (q^{m,j+1}-q^{m,j})$.
Taking the Fr\'{e}chet derivative of \eqref{eqlkjgewqdhbjdhyghjeddB2w21regbis} with respect to $q^{m,j}$ implies that  $(q^{m,j},p^{m,j})$ satisfies the discrete Hamiltonian dynamic \eqref{ejkhdbejhdbdreg} (with $\Gamma_r$ replaced by  $\Gamma^m_r$ and $\Delta t=1/L_m$) and the boundary equations presented in the following proposition (that is the  analogue of Prop.~\ref{prophghgfhgfvjhvjh7}).

\begin{Proposition}\label{propejkhgeuhgde7d6}
In the setting of Thm.~\ref{thmkjhebedjhbdb} the minimizers of \eqref{eqhierarch} satisfy $q^{1,1}=K^{0}(X,X)Z^{0}$
and (for $m\in \{1,\ldots, D\}$)
\begin{equation}\label{eqjhgugigrd3uygyuedw}
q^{m,1}=K^{m-1}(q^{m-1,L_{m-1}+1},q^{m-1,L_{m-1}+1})Z^{m-1}
\end{equation}
{\footnotesize
\begin{equation}\label{eqlgugiuyggh2}
 p^{m,1}-
\frac{1}{2L_m} \partial_{q^m} (p^{m,1})^T \Gamma(q^{m,1},q^{m,1}) p^{m,1}=
2 \frac{\lambda_{m-1}}{\nu_m} K_\rho^{m-1}(q^{m-1,L_{m-1}+1},q^{m-1,L_{m-1}+1})^{-1} q^{m,1}
\end{equation}}
\begin{equation}\label{eqlgugiuyggh2B}
\nu_m  p^{m,L_m}+\lambda_{m} \partial_{q^{m,L_{m+1}}}\big((q^{m+1,1})^T K_\rho^{m}(q^{m,L_m+1},q^{m,L_m+1})^{-1} q^{m+1,1} \big)=0
\end{equation}
\end{Proposition}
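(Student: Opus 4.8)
The plan is to treat Prop.~\ref{propejkhgeuhgde7d6} as the exact discrete counterpart of Prop.~\ref{prophghgfhgfvjhvjh7}, reproducing that short argument while keeping track of the discretization-induced boundary terms. The statement rests on two independent ingredients. The first is the inter-block coupling \eqref{eqjhgugigrd3uygyuedw}, which is purely structural. The second is first-order optimality: since \eqref{eqlkjgewqdhbjdhyghjeddB2w21regbis} is an unconstrained minimization over the nodes $q^{m,j}\in\X_m^N$, a minimizer must annihilate the Fr\'echet (here, partial) derivative with respect to every $q^{m,j}$; for interior indices $2\le j\le L_m$ this reproduces the discrete Hamiltonian map \eqref{ejkhdbejhdbdreg} (as already recorded in the paragraph preceding the proposition), and at the two endpoint indices $j=1$ and $j=L_m+1$ it produces \eqref{eqlgugiuyggh2} and \eqref{eqlgugiuyggh2B} respectively.

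First I would dispatch the coupling relation. The reduction producing \eqref{eqlkjgewqdhbjdhyghjeddB2w21regbis} eliminates each $f_m$ through the representer theorem, so that $f_m(\cdot)=K^m(\cdot,q^{m,L_m+1})Z^m$ for a coefficient vector $Z^m$, and the inductive composition \eqref{eqinductive} makes the initial node of block $m{+}1$ equal to the ridge output evaluated at the data, namely $K^m(q^{m,L_m+1},q^{m,L_m+1})Z^m$ --- this is exactly the output relation $Y'=K(X,X)Z$ of the ridge-regression block \eqref{eqjhbdjhbjehddd3}. Re-indexing yields \eqref{eqjhgugigrd3uygyuedw}, and the $m=1$ instance with input data $X$ gives $q^{1,1}=K^0(X,X)Z^0$. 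This is the verbatim discrete analogue of the single sentence ``\eqref{eqjhgugigrd3uygyu} follows from \eqref{eqjhbdjhbjehddd3}'' in the proof of Prop.~\ref{prophghgfhgfvjhvjh7}.

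Next I would differentiate \eqref{eqlkjgewqdhbjdhyghjeddB2w21regbis} in the nodes, writing $p^{m,j}=L_m\,\Gamma^m_r(q^{m,j},q^{m,j})^{-1}(q^{m,j+1}-q^{m,j})$. The only two places a given node can appear are the two adjacent difference terms of its own action sum --- where the kernel $\Gamma^m_r$ is always frozen at the \emph{left} node of each difference --- and, for the endpoints only, a neighbouring cross-layer penalty. For interior $j$, the Legendre identities $\dot q^T G^{-1}\dot q = p^T G\, p$ and $\partial_q(\dot q^T G(q)^{-1}\dot q)\big|_{\dot q}=-\partial_q(p^T G(q)\,p)\big|_{p}$ collapse $D_2L_d(q^{m,j-1},q^{m,j})+D_1L_d(q^{m,j},q^{m,j+1})=0$ into the momentum update of \eqref{ejkhdbejhdbdreg}. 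At $j=L_m+1$ the node sits inside no kernel argument of the action (only in the final difference) and inside the penalty $\lambda_m(q^{m+1,1})^TK^m_\rho(q^{m,L_m+1},q^{m,L_m+1})^{-1}q^{m+1,1}$; differentiating gives $\nu_m p^{m,L_m}$ from the action and the cross-term gradient from the penalty, which is precisely \eqref{eqlgugiuyggh2B}. At $j=1$ the node additionally sits inside the kernel $\Gamma^m_r(q^{m,1},q^{m,1})$ and inside the penalty $\lambda_{m-1}(q^{m,1})^TK^{m-1}_\rho(\cdots)^{-1}q^{m,1}$; the action contributes $-\nu_m p^{m,1}$ together with the half-step gradient-of-kernel term $\pm\tfrac{\nu_m}{2L_m}\partial_{q^{m,1}}\big((p^{m,1})^T\Gamma^m_r(q^{m,1},q^{m,1})\,p^{m,1}\big)$, while the penalty contributes $2\lambda_{m-1}K^{m-1}_\rho(\cdots)^{-1}q^{m,1}$; dividing the resulting identity by $\nu_m$ gives \eqref{eqlgugiuyggh2}.

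The hard part will be the $j=1$ endpoint. Because the discrete action evaluates every kernel at the left node of its difference, the two boundaries are genuinely asymmetric: $q^{m,L_m+1}$ never enters a kernel argument of the action, so \eqref{eqlgugiuyggh2B} comes out clean, whereas $q^{m,1}$ does, and this produces the extra half-step correction $\tfrac{1}{2L_m}\partial_{q^{m,1}}\big((p^{m,1})^T\Gamma^m_r\, p^{m,1}\big)$ that distinguishes \eqref{eqlgugiuyggh2} from the continuous boundary condition \eqref{eqlgugiuyyuggh} (and that vanishes as $L_m\to\infty$, in agreement with the convergence asserted in Thm.~\ref{thmkjhebedjhbdb}). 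The one point demanding care is fixing the sign of this correction and matching it to the staggering of $p^{m,j}$ relative to $q^{m,j}$ built into the variational integrator \eqref{ejkhdbejhdbdreg}; everything else is a routine transcription of the two-line proof of Prop.~\ref{prophghgfhgfvjhvjh7}.
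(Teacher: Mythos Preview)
Your proposal is correct and follows exactly the paper's approach. The paper does not even supply a separate proof environment for this proposition; its entire justification is the sentence immediately preceding it, namely that taking the Fr\'echet derivative of the reduced functional \eqref{eqlkjgewqdhbjdhyghjeddB2w21regbis} in $q^{m,j}$ yields the discrete Hamiltonian map \eqref{ejkhdbejhdbdreg} at interior indices and the stated boundary equations at the endpoints, while \eqref{eqjhgugigrd3uygyuedw} comes from the ridge-block output relation \eqref{eqjhbdjhbjehddd3}. Your write-up spells this out in appropriate detail, and your flag about the sign and staggering of the $\tfrac{1}{2L_m}$ correction is the right place to be careful (it is indeed the only nontrivial bookkeeping point).
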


\subsection{Continuous limit of ResNets}\label{secjhgwyde6dga}
Consider the setting of Subsec.~\ref{subseckhuyg777g}.
Let $\bvarphi(x)=(\ba(x),1)$ where $\ba$ is an activation function obtained as an
 elementwise nonlinearity satisfying the regularity conditions of Rmk.~\ref{rmkkjhevgcvcg78}.
The ANN  solution to Problem \ref{pb828827hee} is to approximate $f^\dagger$ with
$F_{D+1}$ defined by inductive composition
\begin{equation}\label{eqinductive3}
F_{m+1}=\tilde{w}^{m}\bvarphi\big( \phi^{m}(F_{m})\big)\text{ with }\phi^m=(I+w^{m,L_m}\bvarphi(\cdot))\circ \cdots \circ (I+w^{m,1}\bvarphi(\cdot)) \text{ and }F_{1}=\tilde{w}^{0}\bvarphi(\cdot)\,,
\end{equation}
 where
the $\tilde{w}^{m}$ and  $w^{m,j}$ are minimizers of
{\small
\begin{equation}\label{eqlkjgewqdhbjdhyghjeddB2h}
\begin{cases}
\text{Min} &\lambda_{0} \big(\|\tilde{w}^{0}\|_{\L(\X_0\oplus \R,\X_{1})}^2+\frac{1}{\rho} \|\tilde{w}^{0}\bvarphi(X)-q^{1,1}\|_{\X_{1}^N}^2\big)\\
&+
\sum_{m=1}^{D}\Big(\frac{\nu_m L_m}{2}\sum_{j=1}^{L_m} \big(\|w^{m,j}\|_{\L(\X_m \oplus \R,\X_m)}^2+
\frac{1}{r} \|q^{m,j+1}-q^{m,j}-w^{m,j}\bvarphi(q^{m,j})\|_{\X_m^N}^2
\big)\\&
+\, \lambda_{m} \big(\|\tilde{w}^{m}\|_{\L(\X_m \oplus \R,\X_{m+1})}^2+\frac{1}{\rho} \|\tilde{w}^{m}\bvarphi(q^{m,L_m+1})-q^{m+1,1}\|_{\X_{m}^N}^2\big)
\Big)+ \ell_\Y \big(q^{D+1,1},Y\big)\\
\text{over}&w^{m,j} \in \L(\X_m \oplus \R,\X_m),\, \tilde{w}^m \in \L(\X_m \oplus \R,\X_{m+1}),\,q^{m,j}\in \X_m^N \,.
\end{cases}
\end{equation}
}
Note that in the limit $\nu_m \rightarrow \infty$ we have
$F_{m+1}= \tilde{w}^m \bvarphi(F_{m})$, whereas the traditional way is to use $h_m=\bvarphi(F_m)$ as variables and represent ANNs as $h_{m+1}= \bvarphi(\tilde{w}^m h_{m})$.  Furthermore the  $\phi^m$ represent concatenation of ResNet  blocks \cite{he2016deep}.
In the continuous ($\min_m {L_m}\rightarrow \infty$)  limit, the composed idea registration solution
to Problem \ref{pb828827hee} is to approximate $f^\dagger$ with
$F_{D+1}$ defined by inductive composition
\begin{equation}\label{eqinductive4}
F_{m+1}=\tilde{w}^{m}\bvarphi\big(  \phi^{v_m}(F_{m},1)\big)\text{ with } v_{m}(x,t)=w^{m}(t)\bvarphi(x)\text{ and }F_{1}=\tilde{w}^{0}\bvarphi(\cdot)\,,
\end{equation}
 where the    $\tilde{w}^{m}$ and $w^{m}$ are minimizers of

{\small
\begin{equation}\label{eqlkjgewqdhbjdhyghjeddB2h2}
\begin{cases}
\text{Min} &\lambda_{0} \big(\|\tilde{w}^{0}\|_{\L(\X_0\oplus \R,\X_{1})}^2+\frac{1}{\rho} \|\tilde{w}^{0}\bvarphi(X)-q^{1}(0)\|_{\X_{1}^N}^2\big)\\
&+
\sum_{m=1}^{D}\Big(\frac{\nu_m}{2}\int_0^1 \big(\|w^{m}(\cdot,t)\|_{\L(\X_m\oplus \R,\X_m)}^2
+\frac{1}{r} \|\dot{q}^{m}-w^{m}(t)\bvarphi(q^{m}(t))\|_{\X_m^N}^2
\big)\,dt\\&
+\, \lambda_{m} \big(\|\tilde{w}^{m}\|_{\L(\X_m\oplus \R,\X_{m+1})}^2+\frac{1}{\rho} \|\tilde{w}^{m}\bvarphi(q^{m}(1))-q^{m+1}(0)\|_{\X_{m}^N}^2\big)
\Big)+ \ell_\Y \big(q^{D+1}(0),Y\big)\\
\text{over}&w^{m} \in  C^1([0,1],\L(\X_m\oplus \R,\X_m)),\, \tilde{w}^m \in \L(\X_m\oplus \R,\X_{m+1}),\,q^{m}\in C^1([0,1], \X_m^N)\,.
\end{cases}
\end{equation}
}

The following theorem is a direct consequence of the equivalence established in Subsec.~\ref{subsecekuwege62} and Thm.~\ref{thmkjhebedjhbdb}.

\begin{Theorem}\label{thmejhekjhbfjhrbf}
\eqref{eqlkjgewqdhbjdhyghjeddB2h} and \eqref{eqlkjgewqdhbjdhyghjeddB2h2} have minimizers.
Minimal values of \eqref{eqlkjgewqdhbjdhyghjeddB2h} and \eqref{eqlkjgewqdhbjdhyghjeddB2h2} are continuous in $(X,Y)$.
Minimal values and minimizers $F_{D+1}$=\eqref{eqinductive3} of \eqref{eqlkjgewqdhbjdhyghjeddB2h} converge (in the sense of adherence values of Subsec.~\ref{subskjkecgyf67f6reg}), as $\min_m L_m\rightarrow \infty$ towards minimal values and minimizers $F_{D+1}$=\eqref{eqinductive4} of \eqref{eqlkjgewqdhbjdhyghjeddB2h2}.
At the minima, the $\big(\|w^{m}(\cdot,t)\|_{\L(\X_m\oplus \R,\X_m)}^2
+\frac{1}{r} \|\dot{q}^{m}-w^{m}(t)\bvarphi(q^{m}(t))\|_{\X_m^N}^2
\big)$ are constant over $t\in [0,1]$ and $\big(\|w^{m,j}\|_{\L(\X_m\oplus \R,\X_m)}^2
+\frac{1}{r} \|q^{m,j+1}-q^{m,j}-w^{m,j}\bvarphi(q^{m,j})\|_{\X_m^N}^2
\big)$ fluctuates by at most $\mathcal{O}(1/L_m)$ over $j\in \{1,\ldots,L_m\}$.
Let $\wK^m(x,x')=\bvarphi(x)^T \bvarphi(x') \, I_{\X_m}$, $K^m(x,x')=\bvarphi(x)^T \bvarphi(x') \, I_{\X_{m+1}}$ be the kernels defined by the activation function $\bvarphi$ as in \eqref{eqkj2lehdbljebd}.
The maps $y=F_{D+1}(x)$ obtained from \eqref{eqlkjgewqdhbjdhyghjeddB2h} and \eqref{eqlkjgewqdhbjdhyghjeddB2h2} are equal to the output $y$ produced by the block diagrams \eqref{eqhierarch} and \eqref{eqhierarch2}, where the initial momenta $p^m_0$ and $Z^m$ are identified as minimizers of the total loss \eqref{eqkhiuhiuhiuhiuh}. In particular the results of Thm.~\ref{thmkjhebedjhbdb}, Prop.~\ref{prophghgfhgfvjhvjh7} and Prop.~\ref{propejkhgeuhgde7d6} hold true for \eqref{eqlkjgewqdhbjdhyghjeddB2h} and \eqref{eqlkjgewqdhbjdhyghjeddB2h2}.
\end{Theorem}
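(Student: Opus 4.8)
The plan is to prove nothing new and instead recognize both ANN formulations as the idea \emph{form}ation problems of Thm.~\ref{thmkjhebedjhbdb} specialized to activation-function kernels, so that every assertion transfers verbatim. First I would fix at each level $m$ the feature map $\bvarphi(x)=(\ba(x),1)$ of \eqref{eqkjdekjddkjsd}--\eqref{eqkjdekjddkjsd2} and invoke Prop.~\ref{propactfunct} to identify the induced operator-valued kernels as $\wK^m(x,x')=\bvarphi(x)^T\bvarphi(x')\,I_{\X_m}$ and $K^m(x,x')=\bvarphi(x)^T\bvarphi(x')\,I_{\X_{m+1}}$ as in \eqref{eqkj2lehdbljebd}. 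Since $\ba$ is an elementwise nonlinearity satisfying the regularity conditions of Rmk.~\ref{rmkkjhevgcvcg78}, Prop.~\ref{propactfunct} certifies that every $\wK^m$ and $K^m$ obeys Cond.~\ref{condnuggetreg}, so the standing hypotheses of Thm.~\ref{thmkjhebedjhbdb} are met at each layer.

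Next I would make the identification of the two objectives precise. By the feature-space norm identity \eqref{knkejdnkjd} (itself a consequence of Thm.~\ref{thmkjhkejhddjhd}), for any $w\in\L(\X_m\oplus\R,\X_m)$ the map $v=w\bvarphi(\cdot)$ lies in $\V_m$ with $\|v\|_{\V_m}^2=\|w\|_{\L(\X_m\oplus\R,\X_m)}^2$, and likewise $\|\tilde{w}^m\bvarphi(\cdot)\|_{\H_m}^2=\|\tilde{w}^m\|_{\L(\X_m\oplus\R,\X_{m+1})}^2$. Substituting $v_{m,j}=w^{m,j}\bvarphi(\cdot)$ and $f_m=\tilde{w}^m\bvarphi(\cdot)$ term by term turns the regularizers, the propagation penalties $\frac1r\|q^{m,j+1}-(I+v_{m,j})(q^{m,j})\|_{\X_m^N}^2$, and the data-fidelity terms of \eqref{eqlkjgewqdhbjdhyghjeddB2w21reg} and \eqref{eqlkjgewqdhbjdhyghjeddB2w21reg2} into exactly \eqref{eqlkjgewqdhbjdhyghjeddB2h} and \eqref{eqlkjgewqdhbjdhyghjeddB2h2}; this is the per-block equivalence of Thm.~\ref{thmidjheyd88h2dd} and Thm.~\ref{thmidjheyd88h2ddde} applied layer-by-layer, so minimizers correspond bijectively under $v_{m,j}\leftrightarrow w^{m,j}$, $f_m\leftrightarrow\tilde{w}^m$. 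Under the same substitution the compositional map $F_{D+1}$ of \eqref{eqinductive3} coincides with that of \eqref{eqinductive} (and \eqref{eqinductive4} with \eqref{eqinductive2}), since $\phi^m=(I+w^{m,L_m}\bvarphi)\circ\cdots$ is precisely $(I+v_{m,L_m})\circ\cdots\circ(I+v_{m,1})$ and $\tilde{w}^m\bvarphi(\phi^m(F_m))=f_m\circ\phi^m(F_m)$.

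With the equivalence in hand, every conclusion follows by transport through Thm.~\ref{thmkjhebedjhbdb}: existence of minimizers, confinement of all minimizing momenta $p^m_0,Z^m$ to a compact set, the block-diagram representations \eqref{eqhierarch}--\eqref{eqhierarch2}, the uniform convergence of the discrete input/output map to the continuous one as $\min_m L_m\to\infty$, and the convergence of minimal values together with the adherence values of the momenta and of $F_{D+1}$. Continuity of the minimal total loss in $(X,Y)$ I would obtain by the argument of Thm.~\ref{thmwksjhgw76sgw}: the total loss \eqref{eqkhiuhiuhiuhiuh} is jointly continuous in the momenta and in $(X,Y)$ (this uses $r,\rho>0$), the minimizing momenta stay in a fixed compact set, so \cite[Lem.~5.3,5.4]{still2018lectures} gives continuity of the value. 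Prop.~\ref{prophghgfhgfvjhvjh7} and Prop.~\ref{propejkhgeuhgde7d6} hold immediately, being stated for the generic kernels of Thm.~\ref{thmkjhebedjhbdb}. The energy claims---constancy of $\|w^m(\cdot,t)\|_{\L(\X_m\oplus\R,\X_m)}^2+\frac1r\|\dot q^m-w^m\bvarphi(q^m)\|_{\X_m^N}^2$ over $t$ and its $\mathcal{O}(1/L_m)$ fluctuation in the discrete case---follow by applying the per-level energy clauses of Thm.~\ref{thmidjheyd88h2dd} and Thm.~\ref{thmidjheyd88h2ddde} (ultimately Thm.~\ref{thmksbsahshkdreg} and Thm.~\ref{thmksbsahskd2greg}) to each $m$ separately, since under the identification that quantity is exactly $\|v_m\|_{\wH_r}^2$.

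The only place where anything must be checked rather than cited is the regularity hypothesis: the reduction to Thm.~\ref{thmkjhebedjhbdb} is legitimate precisely because Prop.~\ref{propactfunct} certifies Cond.~\ref{condnuggetreg} for the activation kernels, which forces $\ba$ to be bounded with bounded first and second derivatives. Thus plain ReLU is excluded and a smoothed, bounded surrogate (as in Rmk.~\ref{rmkkjhevgcvcg78}) must be used, and one must confirm these bounds hold at every layer; I expect this to be the main---indeed essentially the sole nontrivial---obstacle. Everything else is the bookkeeping of matching the Frobenius-norm and feature-space representations across the $D$ composed blocks.
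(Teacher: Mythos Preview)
Your proposal is correct and follows essentially the same approach as the paper: the paper's proof is a single sentence stating that the theorem is a direct consequence of the equivalence established in Subsec.~\ref{subsecekuwege62} and Thm.~\ref{thmkjhebedjhbdb}, which is precisely the reduction you spell out in detail (identifying the activation-function kernels via Prop.~\ref{propactfunct} and \eqref{knkejdnkjd}, matching \eqref{eqlkjgewqdhbjdhyghjeddB2h}/\eqref{eqlkjgewqdhbjdhyghjeddB2h2} with \eqref{eqlkjgewqdhbjdhyghjeddB2w21reg}/\eqref{eqlkjgewqdhbjdhyghjeddB2w21reg2}, and then transporting every conclusion through Thm.~\ref{thmkjhebedjhbdb} and the per-block results Thm.~\ref{thmidjheyd88h2dd}, Thm.~\ref{thmidjheyd88h2ddde}, Thm.~\ref{thmwksjhgw76sgw}). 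Your explicit flagging of the regularity hypothesis on $\ba$ as the only nontrivial checkpoint is accurate and more careful than the paper's own presentation.
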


 \begin{figure}[h!]
	\begin{center}
			\includegraphics[width=\textwidth]{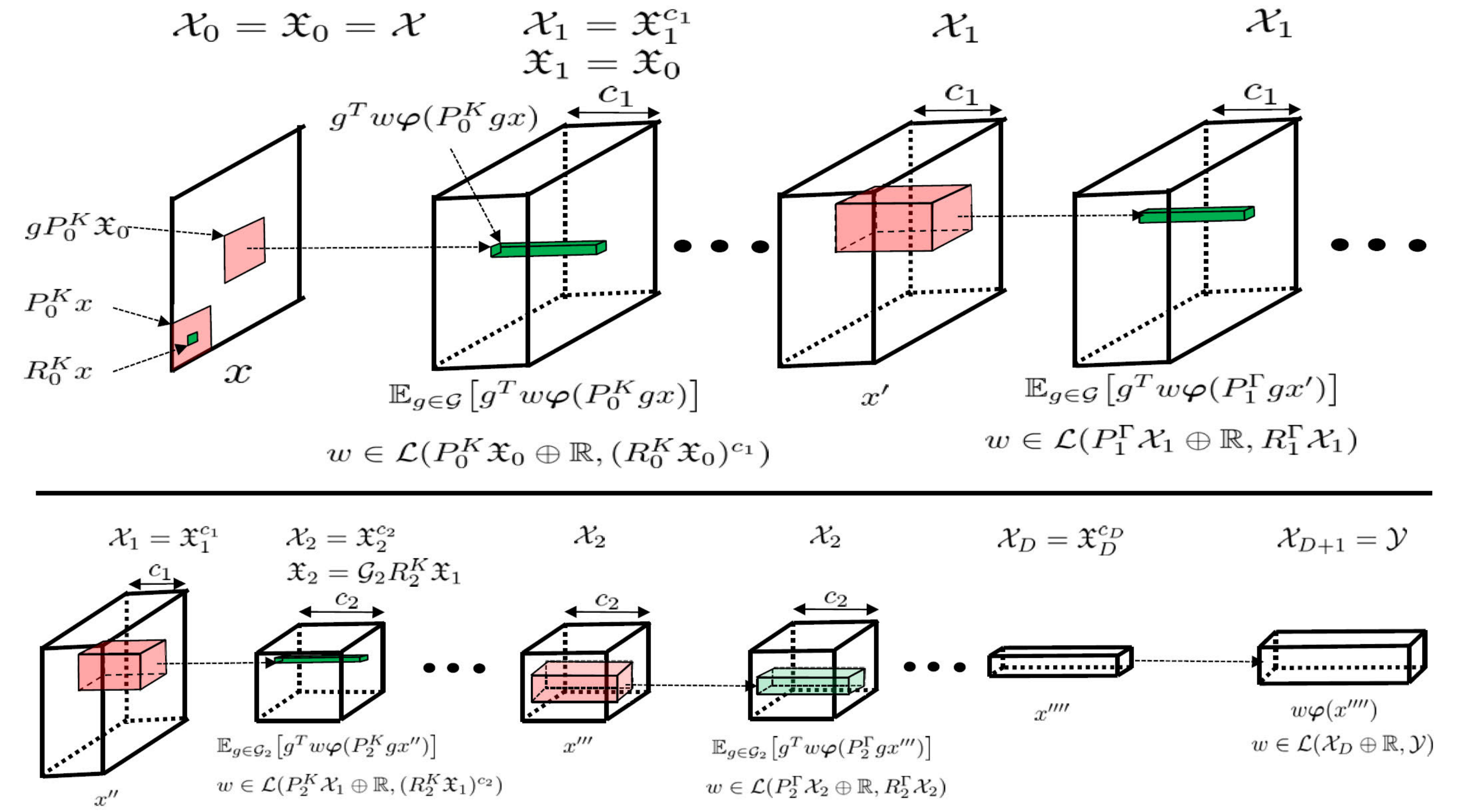}
		\caption{CNNs and ResNets as composed warping regression with REM kernels.}\label{figifcnn}
	\end{center}
\end{figure}

\subsection{Composed warping regression with REM kernels}\label{subseciuigiwuegew6gd}
We will now show that {\bf CNNs and ResNets are particular instances of composing warping regression blocks (composed discretized idea registration with REM kernels} as illustrated in Fig.~\ref{figifcnn}.
Consider the setting of Sec.~\ref{subseckhuyg777g} and \ref{secjhgwyde6dga}.
Let $\bvarphi(x)=(\ba(x),1)$ where $\ba$ is an activation function obtained as an
 elementwise nonlinearity satisfying the regularity conditions of Rmk.~\ref{rmkkjhevgcvcg78}.
Given $D \geq 1$, let $\Xf_0,\ldots,\Xf_D$ and $\X_0,\ldots, \X_{D+1}$ be finite-dimensional Hilbert spaces constructed as follows.
Set $\Xf_0=\X_0=\X$ and $\X_{D+1}=\Y$.
For $m\in \{1,\ldots,D\}$ let $P_m^{\Gamma}\,:\, \Xf_m\rightarrow \Xf_m$, $R_m^{\Gamma}\,:\, \Xf_m\rightarrow \Xf_m$,
 be linear projections, and for
  $m\in \{0,\ldots,D-1\}$ let
 $P_m^{K}\,:\, \Xf_{m}\rightarrow \Xf_{m}$, $R_m^{K}\,:\, \Xf_{m}\rightarrow \Xf_{m}$ be linear projections.
Let $c_1,\ldots,c_D\in \mathbb{N}^*$.
For $m\in \{1,\ldots,D\}$, let $\X_m=\Xf^{c_m}$. Let $\G$ be a unitary unimodular group on $\X$, write $\G_0=\G$ and for $m\in \{0,\ldots,D\}$ let $\G_{m+1}$ be a subgroup of $\G_{m}$ and let $\Xf_{m+1}=\G_{m} R_{m}\Xf_m$.
For $m\in \{0,\ldots,D-1\}$ let  $K^m\,:\, \X_m\times \X_m \rightarrow \L(\X_{m+1})$ be the REM kernel defined (as in \eqref{eqjbhhebddbdbgw}) by
$P^K_m$, $R^K_m$, $\G_m$. Let $K^D\,:\, \X_D\times \X_D\rightarrow \L(\X_{D+1})$ be the REM kernel with feature map defined by $\Psi^T(x) \alpha= w \bvarphi( x)$ with $w\in \L(\X^D\oplus \R, \Y)$.
For $m\in \{1,\ldots,D\}$ let  $\Gamma^m\,:\, \X_m\times \X_m \rightarrow \L(\X_{m})$ be the REM kernel defined by the feature map \eqref{eqjbhhebddbdbgw} using
$P^\Gamma_m$, $R^\Gamma_m$, $\G_m$.

The corresponding composed warping regression solution to Problem \ref{pb828827hee} is to approximate $f^\dagger$ with
$F_{D+1}$ defined by inductive composition \eqref{eqinductive}
 with
\begin{equation}\label{eqjwgde8dge6d762s}
f_m=\E_{\G_m}\big[g^T \tilde{w}^{m}\bvarphi\big( P^K_{m} g \cdot \big]\text{ and }
v_{m,j}=\E_{\G_{m}}\big[g^T w^{m,j}\bvarphi(P^\Gamma_m g \cdot)\big]\,,
\end{equation}
  where
the $\tilde{w}^{m}$ are  $w^{m,j}$ are minimizers of
{\small
\begin{equation}\label{eqlkjgewqdhbjdhyghjeddB2hT}
\begin{cases}
\text{Min} &\lambda_{0} \big(\|\tilde{w}^{0}\|_{\L(P^K_0\Xf_0\oplus \R,(R^K_0\Xf_0)^{c_1})}^2+\frac{1}{\rho} \|f_0(X)-q^{1,1}\|_{\X_{1}^N}^2\big)
+
\sum_{m=1}^{D}\Big(\frac{\nu_m L_m}{2}  \sum_{j=1}^{L_m} \\&\big(\|w^{m,j}\|_{\L(P^\Gamma_m\X_m \oplus \R,R^\Gamma_m\X_m)}^2+
\frac{1}{r} \|q^{m,j+1}-q^{m,j}-v_{m,j}(q^{m,j})\|_{\X_m^N}^2
\big)
+\lambda_{m}\cdot \\\big(\|\tilde{w}^{m}&\|_{\L(P^K_m \Xf^{c_m} \oplus \R,R^\Gamma_m\Xf^{c_{m+1}})}^2+\frac{1}{\rho} \|f_m(q^{m,L_m+1})-q^{m+1,1}\|_{\X_{m}^N}^2\big)
\Big)+ \ell_\Y \big(q^{D+1,1},Y\big)\\
\text{over}&w^{m,j} \in \L(P^\Gamma_m\X_m \oplus \R,R^\Gamma_m\X_m),\,\tilde{w}^{m}\in \L(P^K_m \Xf^{c_m} \oplus \R,R^\Gamma_m\Xf^{c_{m+1}}),\,q^{m,j}\in \X_m^N \,.
\end{cases}
\end{equation}
}
In the continuous ($\min_m {L_m}\rightarrow \infty$)  limit, the composed idea registration solution
to Problem \ref{pb828827hee} is to approximate $f^\dagger$ with
$F_{D+1}$ defined by inductive composition \eqref{eqinductive2} with $f_m=\E_{\G_m}\big[g^T \tilde{w}^{m}\bvarphi\big( P^K_{m} g \cdot\big) \big]$ and
 $v_{m}(x,t)=\E_{\G_{m}}\big[g^T w^{m}(t)\bvarphi(P^\Gamma_m g \cdot)\big]$ where the    $\tilde{w}^{m}$ and $w^{m}$ are minimizers of

 {\tiny
\begin{equation}\label{eqlkjgewqdhbjdhyghjeddB2h2T}
\begin{cases}
\text{Min} &\lambda_{0} \big(\|\tilde{w}^{0}\|_{\L(P^K_0\Xf_0\oplus \R,(R^K_0\Xf_0)^{c_1})}^2+\frac{1}{\rho} \|f_0(X)-q^{1}(0)\|_{\X_{1}^N}^2\big)
+
\sum_{m=1}^{D}\Big(\frac{\nu_m }{2}  \int_0^1 \\&\big(\|w^{m}(t)\|_{\L(P^\Gamma_m\X_m \oplus \R,R^\Gamma_m\X_m)}^2+
\frac{1}{r} \|\dot{q}^{m}-v_{m}(q^{m},t)\|_{\X_m^N}^2\,dt
\big)
+ \\ \lambda_{m} \big(\|\tilde{w}^{m}&\|_{\L(P^K_m \Xf^{c_m} \oplus \R,R^\Gamma_m\Xf^{c_{m+1}})}^2+\frac{1}{\rho} \|f_m(q^{m}(1))-q^{m+1}(0)\|_{\X_{m}^N}^2\big)
\Big)+ \ell_\Y \big(q^{D+1}(0),Y\big)\\
\text{over}&w^{m} \in C^1\big([0,1], \L(P^\Gamma_m\X_m \oplus \R,R^\Gamma_m\X_m)\big),\,\tilde{w}^{m}\in\L(P^K_m \Xf^{c_m} \oplus \R,R^\Gamma_m\Xf^{c_{m+1}}),\,q^{m}\in C^1\big([0,1],\X_m^N\big) \,.
\end{cases}
\end{equation}
}

 The following theorem is a direct consequence of the equivalence established in Subsec.~\ref{subsecekuwege62} and Thm.~\ref{thmkjhebedjhbdb}.

\begin{Theorem}\label{thmejhekjhbfjhrbf2}
\eqref{eqlkjgewqdhbjdhyghjeddB2hT} and \eqref{eqlkjgewqdhbjdhyghjeddB2h2T} have minimizers.
Minimal values of \eqref{eqlkjgewqdhbjdhyghjeddB2hT} and \eqref{eqlkjgewqdhbjdhyghjeddB2h2T} are continuous in $(X,Y)$.
Minimal values  and $F_{D+1}$ determined by minimizers  of \eqref{eqlkjgewqdhbjdhyghjeddB2hT}  converge (in the sense of adherence values of Subsec.~\ref{subskjkecgyf67f6reg}), as $\min_m L_m\rightarrow \infty$ towards minimal values and  $F_{D+1}$ determined by minimizers of \eqref{eqlkjgewqdhbjdhyghjeddB2h2T}.
At minima, the $\big(\|w^{m}(t)\|_{\L(P^\Gamma_m\X_m \oplus \R,R^\Gamma_m\X_m)}^2+
\frac{1}{r} \|\dot{q}^{m}-v_m(q^{m},t)\|_{\X_m^N}^2\,dt
\big)$ are constant over $t\in [0,1]$ and $\big(\|w^{m,j}\|_{\L(P^\Gamma_m\X_m \oplus \R,R^\Gamma_m\X_m)}^2+
\frac{1}{r} \|q^{m,j+1}-q^{m,j}-v^{m,j}(q^{m,j})\|_{\X_m^N}^2
\big)$ fluctuates by at most $\mathcal{O}(1/L_m)$ over $j\in \{1,\ldots,L_m\}$.
The maps $y=F_{D+1}(x)$ obtained from \eqref{eqlkjgewqdhbjdhyghjeddB2hT} and \eqref{eqlkjgewqdhbjdhyghjeddB2h2T} are equal to the output $y$ produced by the block diagrams \eqref{eqhierarch} and \eqref{eqhierarch2}, and the initial momenta $p^m_0$ and $Z^m$ are identified as minimizers of the total loss \eqref{eqkhiuhiuhiuhiuh}. In particular the results of Thm.~\ref{thmkjhebedjhbdb}, Prop.~\ref{prophghgfhgfvjhvjh7} and Prop.~\ref{propejkhgeuhgde7d6} hold true for \eqref{eqlkjgewqdhbjdhyghjeddB2hT} and \eqref{eqlkjgewqdhbjdhyghjeddB2h2T}.
\end{Theorem}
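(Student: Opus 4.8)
The plan is to reduce Theorem~\ref{thmejhekjhbfjhrbf2} entirely to already-established results, exploiting the fact that REM kernels and feature maps are legitimate operator-valued kernels satisfying the standing regularity conditions. The whole strategy rests on recognizing that the variational problems \eqref{eqlkjgewqdhbjdhyghjeddB2hT} and \eqref{eqlkjgewqdhbjdhyghjeddB2h2T} are nothing but the specializations of the generic idea \emph{form}ation problems \eqref{eqlkjgewqdhbjdhyghjeddB2h} and \eqref{eqlkjgewqdhbjdhyghjeddB2h2} in which the feature maps $\psi,\psi_2$ are taken to be the REM feature maps of Subsec.~\ref{subsechj23hdyudd}. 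Consequently, once I verify that the REM construction fits the hypotheses of Theorem~\ref{thmkjhebedjhbdb} (and of the underlying Theorem~\ref{thmejhekjhbfjhrbf}), every claimed conclusion --- existence of minimizers, continuity in $(X,Y)$, convergence of minimal values and of $F_{D+1}$ in the adherence sense, constancy/near-constancy of the per-layer energies, and the block-diagram representation \eqref{eqhierarch}/\eqref{eqhierarch2} with momenta minimizing \eqref{eqkhiuhiuhiuhiuh} --- follows verbatim.

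\textbf{Step 1: Verify the regularity conditions.} First I would check that each REM kernel $\Gamma^m$ and $K^m$ satisfies Condition~\ref{condnuggetreg}. Since these kernels are built by Haar-averaging (formula \eqref{propddeeqked}) reduced compositions of a scalar base kernel with the elementwise activation $\bvarphi(x)=(\ba(x),1)$, the key point is that $\ba$ is assumed to satisfy the regularity conditions of Rmk.~\ref{rmkkjhevgcvcg78} (continuous and uniformly bounded up to second order), exactly as in Prop.~\ref{propactfunct}. The projections $P,R$ are bounded linear maps and the group $\G$ is compact and unimodular, so integration against the normalized Haar measure preserves continuity and uniform boundedness of the kernel and its first two derivatives. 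Thus the finite-dimensional spaces $\X_0,\ldots,\X_{D+1}$ together with $\Gamma^m,K^m$ meet Condition~\ref{condnuggetreg}, which is precisely the hypothesis required to invoke the hierarchical results.

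\textbf{Step 2: Identify the variational problems with the feature-map form.} Using the REM feature-map identities \eqref{eqjbhhebddbdbgw}, the maps $v_{m,j}=\E_{\G_m}[g^T w^{m,j}\bvarphi(P^\Gamma_m g\,\cdot)]$ and $f_m=\E_{\G_m}[g^T\tilde w^m\bvarphi(P^K_m g\,\cdot)]$ are exactly $\Psi^T\alpha$ for $\alpha=w^{m,j}$, resp.\ $w$; and by the norm identity \eqref{knkejdnkjd} the Frobenius norms $\|w^{m,j}\|$, $\|\tilde w^m\|$ appearing in \eqref{eqlkjgewqdhbjdhyghjeddB2hT} coincide with the RKHS/feature-space norms $\|v_{m,j}\|_{\V_m}$, $\|f_m\|_{\H_m}$. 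This is the same reduction already used in Subsec.~\ref{subsecekuwege62} and Thm.~\ref{thmidjheyd88h2ddde} to pass between weight-space and kernel-space formulations; here I simply run it layer by layer through the hierarchy. Having matched the objective functionals and the constraints term by term, \eqref{eqlkjgewqdhbjdhyghjeddB2hT}=\eqref{eqlkjgewqdhbjdhyghjeddB2h} and \eqref{eqlkjgewqdhbjdhyghjeddB2h2T}=\eqref{eqlkjgewqdhbjdhyghjeddB2h2} with the REM kernels substituted.

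\textbf{Step 3: Quote Theorem~\ref{thmkjhebedjhbdb}.} With the identification in hand, every listed conclusion is an instance of Thm.~\ref{thmkjhebedjhbdb} (for existence, compactness of minimizers, continuity, the block-diagram representation, and convergence) combined with Thm.~\ref{thmksbsahskd2greg} and Thm.~\ref{thmksbsahshkdreg} (for the exact constancy of the continuous per-layer energies and the $\mathcal O(1/L_m)$ fluctuation of the discrete ones). The block-diagram equivalence with \eqref{eqhierarch}/\eqref{eqhierarch2} and the identification of the $p^m_0,Z^m$ as minimizers of \eqref{eqkhiuhiuhiuhiuh} transfer directly, since the diagrams were defined abstractly in terms of kernels $\Gamma^m,K^m$ and do not depend on whether those kernels arise from a REM construction. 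The continuity in $(X,Y)$ is Thm.~\ref{thmwksjhgw76sgw} applied to each block.

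\textbf{The main obstacle} I anticipate is a subtle one lurking in Step~1: ensuring that the reduced equivariant kernels are genuinely \emph{non-degenerate} (or at least that the regularized versions $\Gamma^m+rI$, $K^m+\rho I$ are), so that Condition~\ref{condnuggetreg} really applies and the block-operator matrices $\wK_r(q,q)$, $K_\rho(q,q)$ remain invertible. The averaging in \eqref{propddeeqked} and the projections $P,R$ can in principle collapse directions, so the bare REM kernel need not be non-degenerate even if the base kernel is. Fortunately, because we work throughout the regularized setting with strictly positive nuggets $r,\rho>0$, the inequalities $Z^T\wK_r(q,q)Z\ge r\,Z^TZ$ and $Z^T K_\rho(q,q)Z\ge\rho\,Z^TZ$ hold automatically, which is exactly the coercivity that Thm.~\ref{thmksbsahshkdreg} and Thm.~\ref{thmksbsahskd2greg} require; this is why the theorem statement presumes the regularized functionals. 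I would therefore make explicit that the nuggets render non-degeneracy unnecessary, so that the only genuine verification is the smoothness and uniform boundedness of $\Gamma^m,K^m$ up to second order --- which, as noted, is inherited from the regularity of $\ba$ through Haar integration and the boundedness of $P,R,\G$.
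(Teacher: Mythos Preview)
Your proposal is correct and follows exactly the paper's approach: the paper states that the theorem ``is a direct consequence of the equivalence established in Subsec.~\ref{subsecekuwege62} and Thm.~\ref{thmkjhebedjhbdb},'' which is precisely your Steps~2 and~3 (identifying the REM feature-map problems \eqref{eqlkjgewqdhbjdhyghjeddB2hT}, \eqref{eqlkjgewqdhbjdhyghjeddB2h2T} with the generic hierarchical problems and then invoking Thm.~\ref{thmkjhebedjhbdb}). Your Step~1 and the discussion of the nugget resolving potential degeneracy of the averaged kernel are more explicit than the paper, but fully consistent with it.
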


\subsection{The algorithm}\label{subsecalg}
The practical minimization of \eqref{eqlkjgewqdhbjdhyghjeddB2h2T} is as follows.
Introduce the slack variables  $\tilde{z}^0=q^{1,1}-f_0(X)$, $z^{m,j}=q^{m,j+1}-q^{m,j}-v_{m,j}(q^{m,j})$
and $\tilde{z}^m=q^{m+1,1}-f_m(q^{m,L_m+1})$. Let  $\ell_\Y$ be an arbitrary empirical loss (e.g., $\ell_\Y(Y',Y)=\|Y'-Y\|_{\Y^N}^2$).
Replace the minimization over the variables $q^{m,j}$ by the minimization over the slack variables (note that $q^{D+1,1}$ is a function of $X$, weights, biases, and slack variables).
 Use minibatching (as commonly practiced in ML) to form an unbiased estimate of the gradient of the total loss with respect to the weights, biases, and slack variables.
 The exact averages $\E_{ \G_m}$ in \eqref{eqjwgde8dge6d762s} can be replaced\footnote{Since REM feature maps are expressed as expected values with respect to a randomization of the action of the group, their simulation can be randomized as in \cite{chan2015pcanet}.}  by Monte-Carlo averages (by sampling the Haar measure over $\G_m$). Modify the weights, biases, and slack variables in the gradient descent direction (note that the only slack variables impacted are those indexed by the minibatch). Repeat.

\end{document}